\pgfplotsset{compat=1.18}
\newcolumntype{L}[1]{>{\raggedright\arraybackslash}p{#1}}
\newcolumntype{C}[1]{>{\centering\arraybackslash}p{#1}}
\newcolumntype{R}[1]{>{\raggedleft\arraybackslash}p{#1}}
\definecolor{darkblue}{rgb}{0.3, 0.3, 1}
\title{Algebraic Machine Learning: Learning as computing an algebraic decomposition of a task}
\author[1,2]{Fernando Martin-Maroto \thanks{\href{mailto:fmmaroto@gmail.com}{fmmaroto@gmail.com}}}
\author[2]{Nabil Abderrahaman}
\author[1]{David Méndez}
\author[1,2]{Gonzalo G. de Polavieja  \thanks{\href{mailto:gonzalo.depolavieja@gmail.com}{gonzalo.depolavieja@gmail.com}}}
\affil[1]{Champalimaud Research, Champalimaud Centre for the Unknown, Lisbon, Portugal} 
\affil[2]{Algebraic AI}
\begin{document}

\maketitle 
\date{}

\begin{abstract}

 Statistics and Optimization are foundational to modern Machine Learning. Here, we propose an alternative foundation based on Abstract Algebra, with mathematics that facilitates the analysis of learning. In this approach, the goal of the task and the data are encoded as axioms of an algebra, and a model is obtained where only these axioms and their logical consequences hold. Although this is not a generalizing model, we show that selecting specific subsets of its breakdown into algebraic “atoms” obtained via subdirect decomposition gives a model that generalizes. We validate this new learning principle on standard datasets such as MNIST, FashionMNIST, CIFAR-10, and medical images, achieving performance comparable to optimized multilayer perceptrons. Beyond data-driven tasks, the new learning principle extends to formal problems, such as finding Hamiltonian cycles from their specifications and without relying on search. This algebraic foundation offers a fresh perspective on machine intelligence, featuring direct learning from training data without the need for validation dataset, scaling through model additivity, and asymptotic convergence to the underlying rule in the data. 
\end{abstract}
\begin{center}
\href{https://github.com/Algebraic-AI/Open-AML-Engine}{https://github.com/Algebraic-AI/Open-AML-Engine}
\end{center}
\newpage
 \newpage

\section*{Introduction}

Algebraic methods are widely used in Machine Learning \cite{algebra_in_ml_1, algebra_in_ml_2, algebra_in_ml_3}; however, the learning mechanism is based primarily on Statistics and Optimization \cite{bengio2017deep}. We propose Algebraic Machine Learning (AML) as an approach that uses Abstract Algebra as the foundation for learning itself, rather than in a supporting role. An advantage of taking an algebraic approach lies in its mathematical transparency and conceptual simplicity, offering new opportunities to analyze and understand learning.

AML differs from other Machine Learning methods. One difference is that its mathematics are closer to those of symbolic systems, yet it can learn from high-dimensional data like a connectionist system. This makes AML depart from AI's historical divide between symbolic methods \cite{symbolic_feigenbaum1973, symbolic_michie1984, symbolic_newell1959, symbolic_sowa2000} and learning methods \cite{connectionist_rumelhart1986, deep_learning_goodfellow2016, deep_learning_lecun2015}. It is also different from approaches in neurosymbolic AI that either combine a symbolic and a learning system to work together \cite{neurosymbolic_hybrid_1, neurosymbolic_hybrid_2} or the role of learning and the symbolic part are both done using gradients \cite{neurosymbolic_gradients_1, neurosymbolic_gradients_2}.

Another property of AML is that it generalizes directly from training data. Unlike statistical learning \cite{bengio2017deep}, no validation data is needed to determine hyperparameters or to stop training before overfitting. AML can also learn to solve formal problems, such as finding a Hamiltonian cycle or resolving Sudokus from the problem specification, without using training data or search. AML was introduced in a preliminary arxiv report \cite{Maroto}, followed by three reports with an analysis of its mathematics \cite{SecondPaperArX, ThirdPaperArX, InfX}. 

\begin{figure}[hbt!]
    \centering
	\includegraphics[width=1\linewidth]{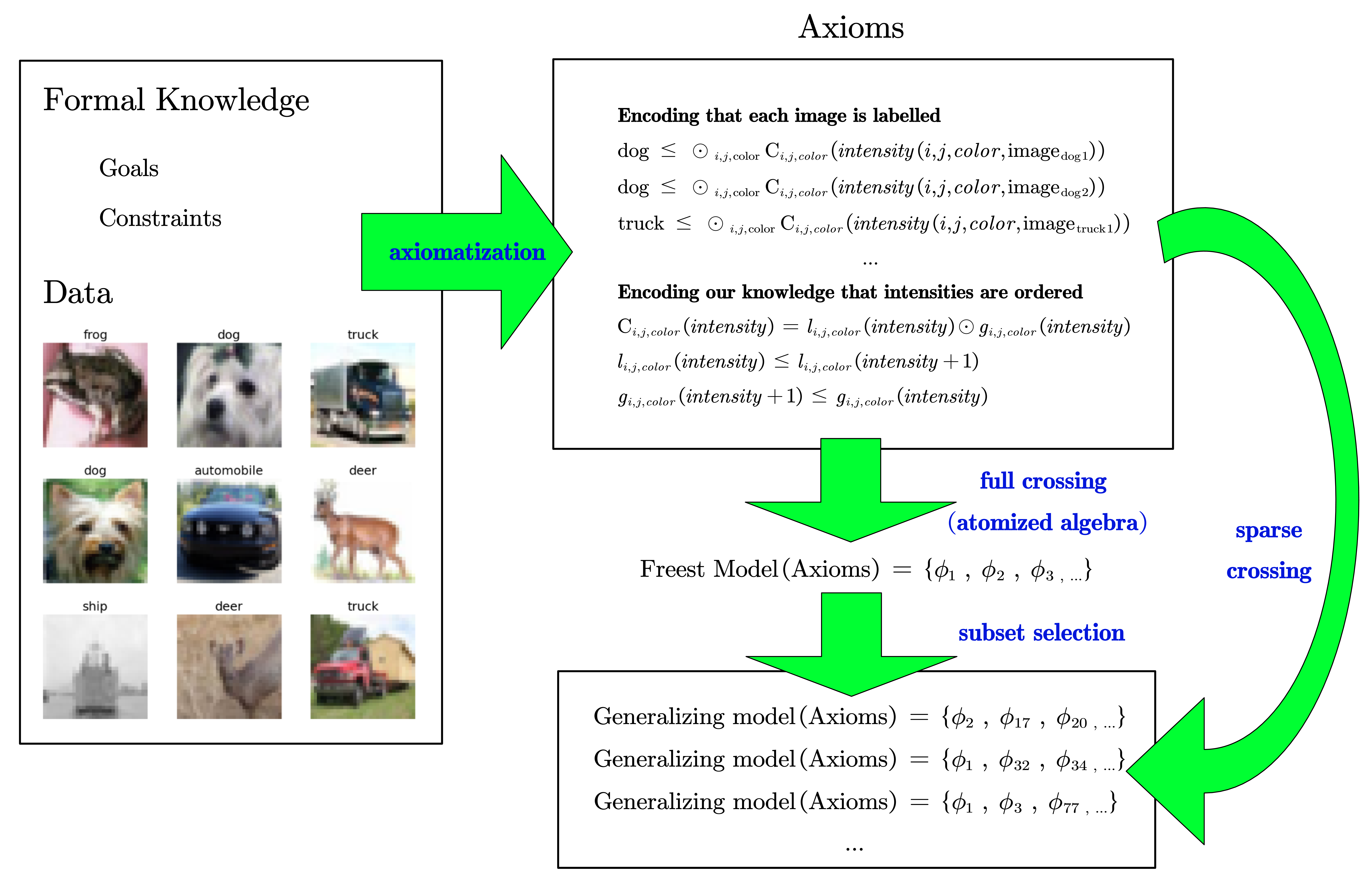}
	\caption{\textbf{Schematic representation of the Algebraic Machine Learning pipeline}. The process begins with axiomatization, where the problem, defined by data, goals, and prior knowledge, is encoded as a set of axioms. Then, we apply the Full Crossing procedure to obtain a specific model of the axioms, the freest model, a model in which the only true statements are the axioms and their logical consequences. Furthermore, the model is given explicitly as a subdirect product, expressed as basic atomic components (the atoms). Generalizing models are obtained by selecting certain subsets of atoms that collectively satisfy the axioms. In practical implementations, computing all atoms of the freest model is unnecessary; instead, a sparse variant of the Full Crossing procedure is used to directly calculate generalizing subsets of atoms.}
    \label{figure:visual_summary}
\end{figure}

\section*{Results}

\textbf{Figure \ref{figure:visual_summary}} provides a schematic representation of our approach. We start with axiomatization, where the problem, defined by data, goals, and prior knowledge, is encoded as a set of algebraic axioms. Then we use a procedure we call Full Crossing to obtain a model of the axioms. The specific model we obtain has two characteristics. First, it is the freest model, meaning the model in which only the axioms and their logical consequences are true. Second, it is expressed as a subdirect decomposition $\{ \phi_{1}, \phi_{2}, ...\}$, a decomposition known in Abstract Algebra \cite{Birkhoff} that we propose to find the fundamental building blocks, or atoms, of a problem. Of these atoms, specific subsets are each a generalizing model. In practice, we use Sparse Crossing, a version of Full Crossing that directly obtains the generalization subsets from the axioms. In this paper, we describe each of these steps, demonstrate how they produce generalization properties, and present results for standard datasets.

\subsection*{Encoding a task as axioms of an algebraic structure}

An \textit{algebraic structure} is a set $S$ with one or more operations that satisfy some axioms \cite{Burris, denecke2018universal,pinter2010book}. Specifically, we use a \textit{semilattice} algebra, which has a single binary operation $\odot$ that is commutative, associative and idempotent (i.e.\ $a \odot a = a$) \cite{Burris}. The semilattice provides a simple yet expressive enough framework that can effectively represent a broad range of tasks. 

The set $S$ contains certain special elements that we call \textit{constants}. These constants, $C$, are the primitives that we use to describe the specific task and data. For instance, in an image classification problem, a constant might represent a pixel in a particular color, while in a board game, a constant might represent a specific position or piece. 

In addition to these constants, $S$ includes all possible \textit{terms}, which are sets of constants formed using the operation $\odot$. For example, given the constants $\{c_1, c_2, \dots, c_n\}$, a possible term is $T = c_2 \odot c_8 \odot c_9$, where the component constants of the term $T$ are $\{c_2, c_8, c_9\}$.

To encode a machine learning task in the algebra, we introduce additional axioms. Each of these axioms asserts a relationship between two terms in the following way: a term, say $T_R$, has a property characterized by another term, say $T_L$, when $T_R \odot T_L = T_R$. This expression is saying that $T_L$ is already contained in or implied by $T_R$. To make this clear, we express $T_R \odot T_L = T_R$ with the more compact notation

\begin{equation}
T_L \leq T_R,
\label{eq:duple}
\end{equation}
We refer to this expression as a \textit{duple} because it can be represented as an ordered pair of terms, $ r \equiv (T_L, T_R) $. A task is thus expressed as a set of positive duples $T_{i} \leq T_{j}$ and negative duples $T_{i} \not\leq T_{j}$. 

As an example, consider the task of expressing that some binary sequences of length $4$ share some property. We can start by assigning a constant $p$ to the property. For the sequence we could use $2$ constants for each position, one for digit $1$ and another for digit $0$, giving a total of $8$ constants. For example, the constant $c_{31}$ could represent that the third position in the sequence is $1$. To express that the sequence $0100$ belongs to class $p$, we write the duple $T_L \leq T_R$, where $T_L = p$ and $T_R = c_{10} \odot c_{21} \odot c_{30} \odot c_{40}$. The task could then be encoded as a set of such duples, one for each sequence in the class.

The example illustrates a simple case of task encoding using a semilattice. This encoding technique is known as semantic embedding. It was introduced by mathematical logicians as encodings of algebraic structures within other algebraic structures, such as describing a group within a graph. For example, semantic embeddings have been extensively used in the study of undecidability \cite{Burris}. We have studied different types of semilattice embeddings with examples in \cite{ThirdPaperArX}.

\subsection*{Atomized models of the task}

Once the task is expressed as a set of duples, each of the form $T_{i} \leq T_{j}$ or $T_{i} \not\leq T_{j}$, the next step is to build a \textit{model}. A model is a specific semilattice structure in which these duples hold true.

Instead of building a semilattice, we compute an \textit{atomized semilattice} model \cite{SecondPaperArX}. An atomized semilattice has an idempotent operation $\odot$ and a binary, reflexive, and transitive order relation $<$. In semilattices, the idempotent operation $\odot$ defines an order relation $\leq$ while in atomized semilattices it is the other way around: the order relation $<$ defines the idempotent operator $\odot$ (see \textbf{Supplementary Section \ref{suppSection:atomizedStimlattices}, Theorem \ref{axiomatizationProperties}}).

An atomized semilattice has two sorts of elements: the regular elements (the terms) and the \textit{atoms}, which gives two disjoint sets, $S$ and $A$. We use Latin letters for regular elements, and Greek letters for atoms. Every atomized semilattice is a semilattice with respect to the regular elements, the set $S$, and a partial order with respect to all the elements, $S \cup A$. The idempotent operation $\odot$ acts only on elements of $S$ while the order relation $<$ acts on both, regular elements and atoms. 

An atomized semilattice satisfies an extended set of axioms that go beyond the commutative, associative and idempotent properties of a semilattice. The extended set of axioms describe the relationship between regular elements, atoms and constants (\textbf{Supplementary Section \ref{suppSection:atomizedStimlattices}, Definition \ref{definition:finiteAtomizedSemilattice}} and \cite{SecondPaperArX}). Here we mention some of the axioms and some of their consequences more directly related to how we build a model. One axiom is that for each atom $\phi$ there is at least one constant $c$ in its upper segment, that is, $\phi < c$. Also, each regular element $T$ has at least one atom $\phi$ in its lower segment, that is, $\phi < T$. However, no regular element is in the lower segment of an atom. 

One consequence of the axioms is that a duple, say $T_L \leq T_R$, is satisfied in the model if the atoms in the lower segment of $T_L$ are a subset of the atoms in the lower segment of $T_R$ (\textbf{Supplementary Section \ref{suppSection:atomizedStimlattices}, Theorem \ref{atomicSegmentFromTermTheorem} (vi)}):
\begin{equation}
\label{eq:atoms_in_a_duple}
       T_{L} \leq T_{R} \Leftrightarrow \{\phi| \phi< T_{L}\} \subseteq \{\phi| \phi<T_{R}\}.
\end{equation}

To make a practical use of \textbf{Equation \ref{eq:atoms_in_a_duple}}, we still need to know how to compute the lower segment of a term. For this we use that another consequence of the axioms is that the lower segment of a term  $T=c_{1} \odot c_{2}\odot...\odot c_{n}$ is the union of the lower segments of its component constants (\textbf{Supplementary Section \ref{suppSection:atomizedStimlattices}, Theorem \ref{atomicSegmentFromTermTheorem}(v)}):
\begin{equation}
\label{eq:atoms_in_a_term}
       \{\phi| \phi< T \} = \{\phi| \phi< c_{1} \} \cup \{\phi| \phi< c_{2} \} \cup ...\cup  \{\phi| \phi< c_{n} \}.
\end{equation}

To check if a duple $T_{L} \leq T_{R}$ holds in an atomized semilattice model, we must then verify that the atoms present in the model satisfy \textbf{Equation \ref{eq:atoms_in_a_duple}}, for which we need the atoms in the lower segments of $T_L$ and $T_R$ that can be obtained using \textbf{Equation \ref{eq:atoms_in_a_term}}.
\newline

Atomized semilattices have the following properties: 
\begin{itemize}
    \item An atom $\phi$ is fully characterized by the constants in its upper segment, i.e.\ those that satisfy  $\phi < c$ (\textbf{Supplementary Section \ref{suppSection:atomizedStimlattices}, Theorem \ref{atomicSegmentFromTermTheorem} (iv)}). This suggests a natural notation for atoms, e.g.\ $\phi[c_{3}, c_{4}]$ representing an atom with $c_{3}$ and $c_{4}$ in its upper segment and no other constants.
    
    \item An atomized semilattice model can be constructed from its atoms alone, so a model can be fully described as a set of atoms, each atom equal to a set of constants. A model $M$ can then be represented as:
        \begin{equation}
        \label{eq:example_model}
        M=\{ \phi[c_{1}, c_{2}, c_{3}], \phi[c_{2}, c_{5}], \phi[c_{1}, c_{6}],
        \phi[c_{3}], \phi[c_{3}, c_{4}],  \phi[c_{2}, c_{3}, c_{5}] \}.
        \end{equation}
    
    \item Since atoms are sets of constants, they have a universal meaning not associated to a particular atomized semilattice model. For example, according to \textbf{Equation \ref{eq:atoms_in_a_duple}}, an atom $\phi$ in a model $M$ that satisfies $\phi < T_1$ and $\phi \not< T_2$ causes $T_1 \not\leq T_2$ in the model $M$. Then, any model that has $\phi$ present will also satisfy $T_1 \not\leq T_2$ (\textbf{Supplementary Section \ref{suppSection:atomizedStimlattices}, Theorem \ref{atomIndependentFromTheRestTheorem}}). 
    
    \item  If the set of constants in the upper segment of an atom, for example  $\phi[c_{2}, c_{3}, c_{5}]$ above, can be written as the union of the constants in the upper segments of other different atoms of a model, e.g. $\phi[c_{3}]$ and $\phi[c_{2}, c_{5}]$, then the atom $\phi[c_{2}, c_{3}, c_{5}]$ is called ``redundant''. Redundant atoms can be eliminated from the model $M$ without altering which duples the model obeys (\textbf{Supplementary Section \ref{suppSection:atomizedStimlattices}, Theorem \ref{redundantAtom}}). Eliminating the redundant atoms in the model in \textbf{Equation \ref{eq:example_model}}, we have
        \begin{equation}
        \label{eq:example_model_nr}
        M=\{ \phi[c_{1}, c_{2}, c_{3}], \phi[c_{2}, c_{5}], \phi[c_{1}, c_{6}],
        \phi[c_{3}], \phi[c_{3}, c_{4}] \}.
        \end{equation}

    \item Non-redundant atoms of a model act as generators of the set of all atoms of a model
    (\textbf{Supplementary Section \ref{suppSection:atomizedStimlattices}, Theorem \ref{compositionTheorem}}).
    
    \item If the terms in the axioms are all concatenations of constants from the set $C$, any semilattice model of the axioms can be found as an atomized semilattice over $C$ (\textbf{Supplementary Section \ref{suppSection:atomizedStimlattices}, Theorem \ref{atomizationExistsTheorem}}).
    
    \item Each atom, redundant or non-redundant, of an atomized semilattice maps to a subdirectly irreducible component \cite{Burris} of the semilattice it atomizes. An atomized model is thus identifying the irreducible algebraic components of the task's model \cite{SecondPaperArX, InfX}.
\end{itemize}

\subsection*{Freest atomized model}

The \textit{freest model} of the task is the one for which the axioms of the task and its logical consequences are the only true statements. The logical consequences of the axioms are the positive and negative duples that are true in every model of the axioms. Any other model of the axioms satisfies a greater number of positive duples than the freest model and we say that it is less free than the freest model. 

\textit{Full Crossing} is a procedure to compute, step by step, the freest model of a set of axioms 
(\textbf{Supplementary Section \ref{suppSection:atomizedStimlattices}, Theorem \ref{fullCrossingIsFreestTheorem}}). It works in the following way. Let $X$ be the set of positive task duples already satisfied by a model $M$. We want to make positive a task duple that, according to $M$, is negative, $T_L \not\leq T_R$. Full Crossing operates over the model $M$ and produces the freest model of the task duples $X \cup \{ (T_L \leq T_R) \}$. For the task duple $T_L \leq T_R$ to be true, the atoms in the lower segment of $T_L$ must also be in the lower segment of $T_R$ (\textbf{Equation \ref{eq:atoms_in_a_duple}}). Let $R$ denote the set of atoms in the lower segment of $T_R$ and $n = \vert R \vert$. Let the discriminant $D$ be the set of atoms that are in the lower segment of $T_L$ but not in the lower segment $T_R$. Full Crossing replaces each atom in the discriminant, $\phi \in D$, by $n$ atoms, each given by a set of constants that is the union of the constants in the upper segment of $\phi$ and the constants in the upper segment of one atom in $R$.

To illustrate the Full Crossing procedure, consider the model $M$ given in \textbf{Equation \ref{eq:example_model_nr}} and suppose that we want to enforce the duple $T_L \leq T_R$ in $M$, where $T_L = c_1 \odot c_2$ and $T_R = c_3 \odot c_4$. In this case, $R = \{ \phi[c_1, c_2, c_3], \phi[c_3], \phi[c_3, c_4] \}$ and the discriminant is $D = \{ \phi[c_2, c_5], \phi[c_1, c_6] \}$. Each atom in $D$ is then substituted by $n=3$ new atoms. This process can be visualized in \textbf{Table \ref{table:full-crossing}}, where the atoms in $D$ are arranged in a column on the left and the atoms in $R$ as a row at the top. The new atoms are shown on a gray background in the table. Each new atom has an upper segment that is the union of the upper constant segment of the atom in $D$ in the same row and the upper constant segment of the atom in $R$ in the same column.

\begin{table}[hbt!]
\begin{center}

\begin{tabular}{|c|c|c|c|}
  \hline
  \backslashbox{$\,D\,$}{$R$} & $\phi[c_{1}, c_{2}, c_{3}]$ & $\phi[c_{3}]$ & $\phi[c_{3},c_{4}]$ \\ 
  \hline
  $\phi[c_{2},c_{5}]$ &  \cellcolor{gray!25} $\phi[c_{1},c_{2},c_{3},c_{5}]$ &  \cellcolor{gray!25} $\phi[c_{2},c_{3},c_{5}]$ &  \cellcolor{gray!25} $\phi[c_{2},c_{3},c_{4},c_{5}]$ \\
  \hline
  $\phi[c_{1},c_{6}]$ &  \cellcolor{gray!25} $\phi[c_{1},c_{2},c_{3},c_{6}]$ &  \cellcolor{gray!25} $\phi[c_{1},c_{3},c_{6}]$ &  \cellcolor{gray!25} $\phi[c_{1},c_{3},c_{4},c_{6}]$ \\
  \hline
\end{tabular}
\end{center}
 \caption{\textbf{Example of a full-crossing table}. To enforce the duple $T_L \leq T_R$ in the model given in \textbf{Equation \ref{eq:example_model_nr}}, we can build the following table: place on the top row the atoms in $R$, that is, the atoms in the lower segment of $T_R$, and on the left column the atoms in the discriminant $D$, i.e.\ those that are in the lower segment of $T_L$ and not in the lower segment of $T_R$. The procedure replaces each atom in the discriminant by the atoms in its row. Notice that each atom in the grayed area is the union of the atom in the top row and the atom in the left column. }
\label{table:full-crossing}
\end{table}

If we replace in $M$ the atoms in the discriminant by the atoms with gray background in the crossing \textbf{Table \ref{table:full-crossing}}, we obtain a model $N$ atomized as:
\begin{equation}
        \label{eq:example_model_afetrXs}
        N =\{ \phi[c_{1}, c_{2}, c_{3}], \phi[c_{1},c_{2},c_{3},c_{5}], 
        \phi[c_{2},c_{3},c_{5}],
        \phi[c_{1},c_{2},c_{3},c_{6}],
        \phi[c_{1},c_{3}, c_{6}],
        \phi[c_{3}], \phi[c_{3}, c_{4}] \},
        \end{equation}
which obeys $T_L = c_1 \odot c_2 \leq c_3 \odot c_4 = T_R$, and where $\phi[c_{2},c_{3},c_{4}, c_{5}]$ and $\phi[c_{1},c_{3},c_{4},c_{6}]$ are redundant and have been removed from $N$.

To compute the freest model of the task's axioms we can compute the Full Crossing procedure for all duples in the task, in any order (\textbf{Supplementary Section \ref{suppSection:atomizedStimlattices}, Theorem \ref{fullCrossingIsCommutative}}). To start with the sequence of crossings, we need an initial model that satisfies no positive duple besides those that are true on any semilattice. This freest semilattice can be atomized with as many atoms as constants, each atom with a single constant in its upper segment 
(\textbf{Supplementary Section \ref{suppSection:atomizedStimlattices}, Theorem \ref{freestModelTheorem}})
, e.g. $\{ \phi[c_{1}], \phi[c_{2}],..., \phi[c_m]\}$ where $m = \vert C \vert$. 

\subsection*{Freest atomized model of the task's axioms}

To build our intuition about the freest model of a task, consider the task of characterizing with a property $p$ the following set of $3{,}375$ black and white $4 \times 4$ images. The first column of each image is black, while the other three columns have pixels that are either black or white but without an entire black column. \textbf{Figure \ref{figure:full_crossing_example}a} displays $16$ of the images that meet this criterion.

\begin{figure}[hbt!]
    \centering
	\includegraphics[width=1\linewidth]{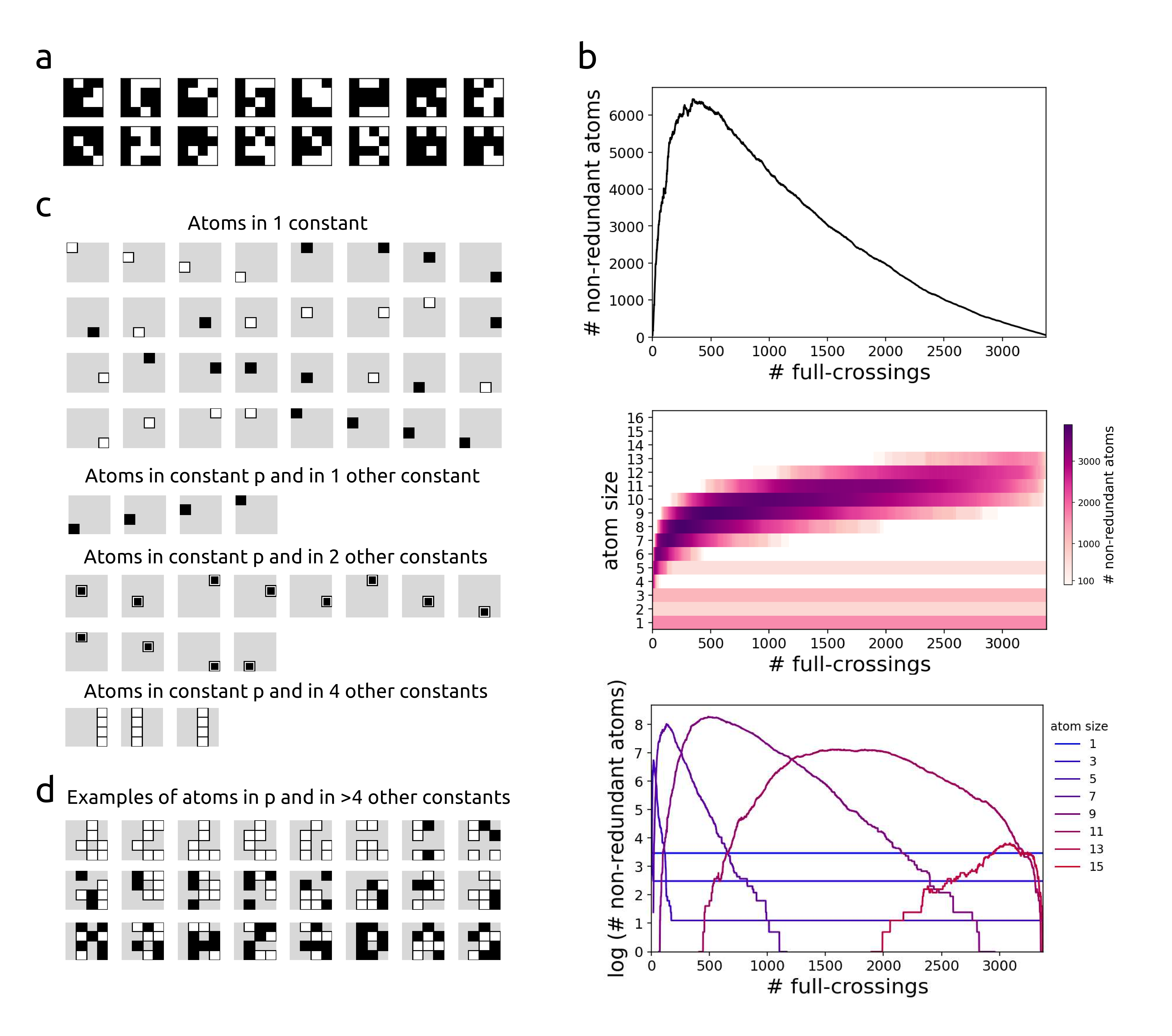}
	\caption{\textbf{Freest models using images with the only first column in black}.
	\textbf{(a)} $16$ of the possible $3{,}375$ images with only the first column in black. Training examples are of the form $p < T_i$ with $T_i$ the term representing an image. \textbf{(b)} Top: Number of non-redundant atoms of the model obtained after a number of full-crossings. Middle: Number of non-redundant atoms of a given atom size for the model obtained after a number of full-crossings. Bottom: Same as Middle bur represented by several curves, each for a different atom size. \textbf{(c)} Atoms of the final model. \textbf{(d)} Example of large atoms that are part of the models at intermediate number of full-crossings.}
\label{figure:full_crossing_example}
\end{figure}

For this problem, we can use $32$ constants for the $16$ pixels in black or in white, and one constant for the property $p$, a total of 33 constants. Our initial model is the freest semilattice atomized by $\{ \phi[c_{1}], \phi[c_{2}],..., \phi[c_ {32}], \phi[c_p]\}$. Starting from this model and using Full Crossing, we can enforce, one by one, $3{,}375$ duples, each of the form $p \leq T_i$, with $T_i$ a term of $16$ component constants representing the image. As the Full Crossing procedure progresses, the number of non-redundant atoms initially increases to approximately $6{,}000$ and then decreases to $51$ (\textbf{Figure \ref{figure:full_crossing_example}b}, top). When atoms are grouped by size (number of constants in its upper segment), we see that the number of non-redundant atoms with $1$, $2$, $3$, and $5$ constants quickly stabilizes to $32$, $4$, $12$, and $3$ atoms, respectively (\textbf{Figure \ref{figure:full_crossing_example}b}, middle). Larger atoms appear early on, increase in number, and then get removed from the model with more full-crossings (\textbf{Figure \ref{figure:full_crossing_example}b}, middle and bottom).

Let us look at the final model, \textbf{Figure \ref{figure:full_crossing_example}c}. It has a total of $51$ atoms. $32$ of these atoms are each in one of the $32$ constants representing a pixel in a color. These $32$ atoms were already in the initial model so they existed before any task duple was full-crossed. There are also $4$ atoms in two constants: constant $p$ and one of the four constants representing a black pixel in the first column of the image. These atoms capture that all images contain a black vertical bar in the first column. There are also $12$ atoms, one for each position in the last three columns, with $3$ constants: constant $p$ and the black and white constants of the same pixel. These atoms capture the fact that each pixel in the last three columns can be either black or white. There are $3$ atoms in $5$ constants: constant $p$ and the $4$ white constants of one of the three last columns of the image. These atoms capture that each of the last three columns is never completely black. 

It is also instructive to look at an intermediate model early on the crossing sequence, say after $200$full-crossings. This model already contains all the atoms of the final model, \textbf{Figure \ref{figure:full_crossing_example}c}. It also has larger atoms (some examples in \textbf{Figure \ref{figure:full_crossing_example}d}), which will all eventually be removed in later crossings.

\subsection*{Generalizing models} 

In the previous section, we considered the task of assigning a property $p$ to the set of images with the hidden rule that every image had the first column entirely black and the other columns with at least one white pixel, \textbf{Figure \ref{figure:full_crossing_example}}. The final freest model revealed this rule explicitly in its non-redundant atoms, \textbf{Figure \ref{figure:full_crossing_example}c}.

This result is general, as we can see in the following. Let $P$ be the set of duples that define the hidden rules of the task. Let $Q$ be the set of all duples that are the logical consequence of $P$ (the duples that are valid in all possible models of the task duples), with $Q$ excluding $P$.
We can prove that the non-redundant atoms of the freest model of $Q$ are the same as the non-redundant atoms of the freest model of $P$ (\textbf{Supplementary Section \ref{suppSection:discoveryOfRulesInData}, Theorem \ref{causalTheorem}}). The theorem then says that if we provide enough task duples, i.e.\ a large enough subset of $Q$, the freest model of the task duples becomes equivalent to the model of the rule duples $P$. 

Although this is true in the limit where all the consequences $Q$ are known, non-redundant atoms of the final model, or an approximation to them, must be created much earlier. In our example of the black bar, the final model required $3{,}375$ crossings, but its non-redundant atoms, 
\textbf{Figure \ref{figure:full_crossing_example}c}, are already present before $200$ crossings. Extracting those non-redundant atoms at $200$ crossings would give us a perfect generalizing model. In this section, we argue why this generalization, the early convergence to the rules of the task in some subset of the atoms, is a general phenomenon. We start studying it algebraically and then by using the expectation of the probability of false positive and false negative in a test dataset. 

First, we need to understand how atoms evolve as the positive task duples $r_1, r_2, \dots, r_n$ are enforced, where usually $n$ is much smaller that the number of consequences of the underlying rule in the data, $n << \vert Q \vert$, with $\vert Q \vert$ usually a very large number. Starting with the freest semilattice model as initial model, $N_0$, which does not yet satisfy the first task duple $r_1$, the Full Crossing procedure can be applied to enforce $r_1$, producing the model $N_1$. This process is applied to each duple, creating a chain of models $N_1, N_2, \dots, N_n$. For each atom $\phi$ in the final model $N_n$, there is an \textit{inward chain}  of atoms $\lambda_0, \lambda_1, \dots, \lambda_n$, with $\lambda_i \in N_i$ for $i \in \{0,...,n\}$, and $\lambda_{n} = \phi$ (\textbf{Supplementary Section \ref{suppSection:atomizedStimlattices}, Theorem \ref{inwarsOutwardSequenceTheorem}}). If the atom $\lambda_{i - 1}$ is not in the discriminant of $r_i$ then $\lambda_i = \lambda_{i - 1}$ while if it is, $\lambda_i$ has more constants in its upper segment than $\lambda_{i - 1}$ and we say the atom ``grows'' or becomes ``wider''. {\bf Figure \ref{figure:chains}} depicts the evolution of atoms from model $N_0$ to model $N_3$ formed after three crossing operations. 

\begin{figure}[hbt!]
    \centering
	\includegraphics[width=1\linewidth]{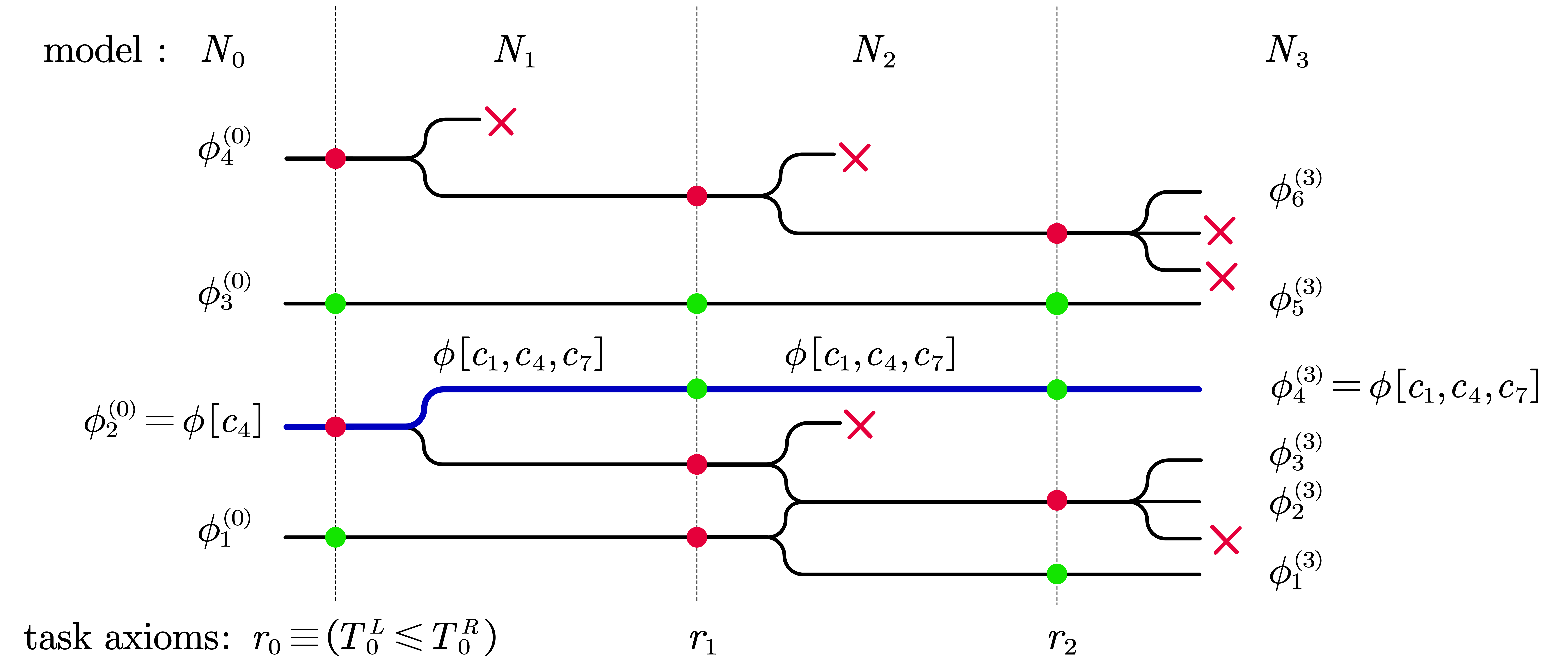}
	\caption{\textbf{Evolution of atoms during learning}. Starting with an initial model $N_0$, the crossing of duples $r_0, r_1$ and $r_2$ produces a sequence of four models $N_0,N_1,N_2, N_3$. An atom $\phi$ in the final model $N_3$ can be tracked to an atom in each of the models $N_2, N_1$ and $N_0$ forming at least one ``inward chain'' of four atoms $\lambda_i \in N_i$ and $\lambda_3 = \phi$. Along the chain, the atoms either grow, i.e.\ the number of constants in the upper segment of $\lambda_{i}$ is larger than the number of constants in $\lambda_{i - 1}$ (red nodes), or stays the same $\lambda_{i} = \lambda_{i - 1}$ (green nodes). Some atoms, marked with a red cross, are redundant and can be discarded. The blue line indicates an inward chain from a final atom to an initial atom. In this chain, there is one atom growth, $g(\phi_{4}^{(3)})=1$, and the final atom has been successful twice since the last growth, $h(\phi_{4}^{(3)})=2$.}
\label{figure:chains}   
\end{figure}

There are some quantities that help us characterize how atoms change during training. Given an inward chain for an atom in the final model, $\phi \in N_n$, let $g(\phi)$ be the number of times in which we find $\lambda_i \not= \lambda_{i - 1}$, i.e.\ the number of times the atoms in its chain grow. Let $k(\phi) \in \{0,...,n\}$, be the index $k$ of the first model in the sequence $N_1, N_2, \dots, N_n$ such that the atom $\phi$ is in model $N_k$, and let the ``success'' of atom $\phi$ be the number of consecutive crossings in which $\phi$ has remained unchanged, from its creation until the end of the crossing sequence,  $h(\phi) = n - k(\phi)$.

Using these quantities, we can express how each atom matures during training. Since the set of constants in the upper segment of an atom cannot be larger than the total number of constants, $\vert C \vert$, there is a finite number of times an atom can grow. As a result, after the $n$ crossing operations, even when $n << \vert Q \vert$, an atom $\phi$ present in the model may have grown to is final size and matured. A mature atom causes $0$ false negatives, but if the atom is not yet mature, at least we know that $\phi$ has grown $g(\phi)$ times and it has been consistent with the training duples $h(\phi)$ times since the last growth. These two quantities are what we need to compute the Probability of a False Negative ($\operatorname{PFN}$) in the test set, that is, the probability that the atom $\phi$ causes a test duple that should be positive to be negative in the model $N_n$. The expected $\operatorname{PFN}$, making the standard assumption that training and test distributions are the same, is
(\textbf{Supplementary Section \ref{suppSection:statistical_model}}):

\begin{equation}
\label{eq:fnr_of_atom}
\overline{\operatorname{PFN}}(\phi) = \min \left( \frac{1}{h(\phi) + 2},  \frac{g(\phi) + 1}{n + 2} \right).
\end{equation}

At the beginning of the training, $\frac{g(\phi) + 1}{n + 1}$ dominates due to the low success $h(\phi)$. After training with more positive training examples, $\frac{1}{h(\phi) + 1}$ becomes dominant as the atoms mature, producing lower (or even zero) probability of false negative. As an example, for the MNIST dataset of hand-written digits \cite{Lecun1998}, the number of training examples is $n = 50{,}000$, and most atoms have ten constants in its upper segment (\textbf{Figure \ref{figure:MNIST}c}), so they grow ten times during training, $g(\phi) \approx 10$. Ten growth events in $50{,}000$ examples imply that an average atom is successful $h(\phi) = 50{,}000 / 10 = 5{,}000$ times, giving a low individual $\overline{\operatorname{PFN}}$ of $0.0002$.

So far, we have characterized how a single atom matures during training, and now we are interested in subsets of atoms. Suppose that we extract a subset of $Z$ atoms of the freest model $N_n$. Each atom $\phi_i$ of this subset, with $i = 1,2,..,Z$, has undergone $g(\phi_i)$ stages of growth along its inward chain, and since it was created, it has been successful (i.e.\ consistent with the positive task duples) $h(\phi_i)$ times. The Probability of a False Negative ($\operatorname{PFN}$) in the test set is the probability that one or more of the $Z$ atoms causes a test duple that must be positive to be negative in the model $N_n$. After $n$ positive training examples, the expected test $\operatorname{PFN}$ can be approximated as (\textbf{Supplementary Section \ref{suppSection:statistical_model}}):
\begin{equation}
\label{eq:fnr_of_subsets}
\overline{\operatorname{PFN}}(\phi_1,\dots,\phi_Z) \approx \sum_{i=1}^{Z} \frac{1}{h(\phi_i) }.
\end{equation}
From this expression, it follows that the test $\operatorname{PFN}$ is reduced by lowering the number of atoms in $Z$ and by using atoms with a high success $h(\phi)$. 

The test Probability of False Positive ($\operatorname{PFP}$) is the probability that a test duple that must be negative is assigned positive in $N_n$. To have a false positive, every atom in the subset should fail to discriminate the duple, so the larger $Z$ is the less likely is to have a false positive. If we assume the probability of causing a false positive of individual atoms independent of each other, the collective $\operatorname{PFP}$ is given by the product of the individual $\operatorname{PFP}$s of each of the $Z$ atoms: 
\begin{equation}
\label{eq:fpr_of_subsets}
\overline{\operatorname{PFP}}(\phi_1,\dots,\phi_Z) = \prod_{i=1}^{Z} \overline{\operatorname{PFP}}(\phi_i).
\end{equation}

Since the negative duples of the training dataset play no role in the calculation of the freest model (every training duple $r_1, r_2, \dots, r_n$ is positive), the $\operatorname{PFP}$ of individual atoms can be obtained empirically using the negative examples of the training dataset as long as the training and test distributions are the same. The more effective an atom is at discriminating duples of the training set, the lower its probability of false positive.

In the formula above, we assumed that the individual $\overline{\operatorname{PFP}}(\phi_i)$ are independent of each other. If there are correlations, the lower the correlations between these individual probabilities are, the smaller the expected $\overline{\operatorname{PFP}}(\phi_1,\dots,\phi_Z)$ of the subset. Therefore, to obtain a good generalizing model, the atoms should be selected to be discriminative and with low mutual correlation.  

\textbf{Equations \ref{eq:fnr_of_subsets}} and \textbf{\ref{eq:fpr_of_subsets}} provide a way to extract a generalizing model from the freest atomized model. To minimize the test $\operatorname{PFN}$, the number of atoms selected should be as few as possible and highly successful during training (with high $h(\phi_i)$ values, which depend upon the positive duples of the training set). To minimize the test $\operatorname{PFP}$, the atoms in the subset should be selected to be effective at discriminating negative duples (with low $\operatorname{PFP}(\phi_i)$), have low mutual correlation, and a sufficient number to render every negative duple in the training set negative.

If we apply this method to the example of \textbf{Figure \ref{figure:full_crossing_example}}, we can isolate some of the atoms of the rule given in \textbf{Figure \ref{figure:full_crossing_example}c} before 200 crossings. For this purpose, we can use a training set of negative duples corresponding to counterexample images that do not adhere to the hidden rule. The method then extracts the $4$ atoms that are in the lower segment of $p$ and in the lower segment of another constant, as well as the $3$ atoms that are in the lower segment of $p$ and in the lower segments of $4$ white pixel constants. The method does not obtain the atoms in the lower segment of $p$ and in the black and white constants of the same pixel location. These atoms encode that every positive example contains either the black or the white pixel constant at each location of the last three columns of the image. Since the counterexamples used are also images, these atoms are not discriminative and are therefore not obtained using this method. In general, the method finds atoms that correspond to the rules satisfied by the positive examples but not by the negative examples of the training set. In this case, the subset of atoms extracted is a generalization model with zero error.

\subsection*{Practical computation of generalizing subsets with Sparse Crossing}

The freest model of a set of task duples is usually too large to calculate in practice. Since we are interested in its generalizing subsets, we devised a method to directly obtain, from the axioms, generalizing subsets of the freest model through a sparse version of the Full Crossing procedure.

The Sparse Crossing algorithm operates as follows: Every subset of atoms of the freest model satisfies all the positive task tuples. Regarding negative task tuples, the presence of a single atom in a model is sufficient for the model to satisfy a negative duple; indeed, the condition for a duple to be positive in a model is given by \textbf{Equation \ref{eq:atoms_in_a_duple}}. Consequently, there always exist subsets of atoms from the freest model that satisfy all positive and negative duples with cardinality less than or equal to the number of negative task duples. To identify a small subset of atoms that satisfies all the negative duples, we enforce the positive duples sequentially in a series of crossing steps. Instead of retaining all atoms in the full-crossing table, we selectively choose the atoms needed to discriminate the negative duples and discard the rest, as illustrated in \textbf{Table \ref{table:sparse-crossing}}. However, simply selecting atoms that satisfy the negative duples at a given crossing step does not work, as these atoms may not generate a discriminating subset after subsequent crossing steps. To address this issue, atoms are selected based on an invariance condition: the preservation of a quantity we call the trace. This condition allows us to discard atoms while ensuring that every negative task duple will be satisfied after the crossing of all positive task tuples (see \textbf{Supplementary Section \ref{suppSection:SparseCrossingInDepth}}).

\begin{table}[hbt!]
\begin{center}
\begin{tabular}{|c|c|c|c|}
  \hline
  \backslashbox{$\,D\,$}{$R$}  & $\phi[c_{1}, c_{2}, c_{3}]$ & $\phi[c_{3}]$ & $\phi[c_{3},c_{4}]$ \\ 
  \hline
  $\phi[c_{2},c_{5}]$ &  \cellcolor{gray!25} 
  $        $ &  \cellcolor{gray!25} 
  $\phi[c_{2},c_{3},c_{5}]$ &  \cellcolor{gray!25} 
  $  $ \\
  \hline
  $\phi[c_{1},c_{6}]$ &  \cellcolor{gray!25} 
  $           $ &  \cellcolor{gray!25} 
  $\phi[c_{1},c_{3},c_{6}]$ &  \cellcolor{gray!25} $\phi[c_{1},c_{3},c_{4},c_{6}]$ \\
  \hline
\end{tabular}
\end{center}
\caption{\textbf{Example of a sparse-crossing table}. A subset of the atoms of the full-crossing \textbf{Table \ref{table:full-crossing}} that suffice to preserve the trace of all the terms mentioned in the axioms.}
\label{table:sparse-crossing}
\end{table}

With Sparse Crossing, positive and negative task tuples are processed in batches selected among the task tuples with replacement. The initial model of a batch is the output model of the previous batch. Additionally, Sparse Crossing allows the atoms produced in all previous batches, not just the immediately preceding one, to influence the process of discarding atoms by means of the \textit{pinning terms} (\textbf{Supplementary Section \ref{suppSection:atomizedStimlattices}, Definition \ref{definition:pinning_tem}}). The pinning terms provide an effect similar to augmenting the set of negative axioms and accelerate the discovery of atoms of the freest model that are building blocks of other atoms (i.e., atoms whose set of constants in their upper segment is a subset of that of various other atoms (see \textbf{Supplementary Section \ref{SparseCrossing:smarter}} and \textbf{Theorem \ref{SparseCrossing:indicatorsTheorem}}). Since the non-redundant atoms are the building blocks of all the atoms, the presence of pinning terms increases the likelihood of discovering non-redundant atoms. Moreover, because every duple discriminated by an atom is also discriminated by at least one non-redundant atom, the non-redundant atoms of the model are often among the most effective at satisfying the negative tuples, which further increases their likelihood of discovery.

The result of applying Sparse Crossing to a batch of positive and negative duples is a subset of atoms of the freest model that satisfies the positive and negative duples of the batch, that has small cardinality (smaller than the number of negative duples), and with all its atoms very successful for the positive task duples. Small subsets of atoms that collectively discriminate every negative duple tend to be highly discriminative while having low correlation with each other. Sparse Crossing is thus obtaining subsets with all the characteristics needed for generalization.

Sparse-crossing is a stochastic algorithm, so it is possible to compute several different models of a given set of positive and negative axioms. Since the union of models (as set union of atoms) is also a model of the axioms, it is possible to use (embarrassingly) parallel computation to calculate larger models. For the complete details of the Sparse-Crossing, including various theorems and pseudocode, see \textbf{Supplementary Section \ref{suppSection:SparseCrossingInDepth}}.

\subsection*{Learning from data}

Black and white images can be classified using the same embedding strategy we applied to the toy example in \textbf{Figure \ref{figure:full_crossing_example}}. At each pixel location, one constant represents the pixel in black and another represents it in white. Each image is then encoded in a term resulting from the idempotent summation of its pixel constants. The handwritten digit recognition dataset (MNIST) \cite{Lecun1998} is ideal for testing this embedding as there is variability in how digits are written, the training set contains some mislabeled images \cite{mislabels}, and the images were originally black and white. In this case, we have a total of $2 \times 28 \times 28$ constants for the pixels and constants $digit_i$, with $i=0,1,...,9$, for the $10$ classes. The grayscale values in these images resulted from centering the digits, so we binarized them back by thresholding pixel values. We applied Sparse-Crossing to the $50{,}000$ MNIST training examples, each encoded as a task duple $digit_i \leq image_k$. Additionally, we have a set of $450{,}000$ negative task duples, each of the form $digit_{j \not= i} \not\leq image_k$.

A test image is classified as digit $i$ when the atoms in the lower segment of constant $digit_i$ are a subset of the atoms in the lower segment of the term representing the test image, as in \textbf{Equation \ref{eq:atoms_in_a_duple}}. After training, about $70\%$ of the test images have a digit assigned in this way. This is because the training set is not large enough to obtain a model that gives assignations for every example of the test set. However, we can give ``best guess'' assignations for each test example. One simple method is to classify a test image as belonging to the class that more closely obeys the subset condition \textbf{Equation \ref{eq:atoms_in_a_duple}}. We use the word ``misses'' to refer to the atoms in the lower segment of the left-hand side of a duple, in this case $digit_i$, that are not in the lower segment of the right-hand side, in this case the term that represents the test image. A test image can then be classified as the digit with the fewest misses. 

\textbf{Figure \ref{figure:MNIST}a} shows the frequency of the number of misses for queries of whether test images corresponds to digit $7$, both for test examples of digit $7$ (\textbf{Figure \ref{figure:MNIST}a}, green) with mean $11$ and for the other digits (\textbf{Figure \ref{figure:MNIST}a}, red), with mean $1{,}192$. The two distributions have very small overlap, explaining why the simple method of selecting the class with fewer misses gives a good separation between positive and negative test examples. The resulting test classification accuracy is $97.63\%$ (\textbf{Table \ref{table:performance_comparison}}), ``AML fewest misses'' column). When trained only with the first $1{,}000$ examples of the training set, the test accuracy is $90.24\%$. Importantly, Since the algebra grows organically as it learns, it is not necessary to specify an architecture, so no validation dataset is used to select architecture and other hyperparameters. Also, we do not need to use a validation dataset to stop training, as both our theoretical analysis and the empirical results show no overfitting (\textbf{Figure \ref{figure:MNIST}b}).

As an alternative to the ``fewest misses'' method, we also used logistic regression as a very simple way to include statistical information. The input to the logistic regression is the output of AML, given in the following way. The atoms that are in the lower segment of the image term are given a value of $+1$ and the atoms that are not are given a value of $-1$. For each image, the input to the logistic regression is then a sequence of $+1$ and $-1$ values. Each element of the sequence connects with a linear weight to each of 10 $\operatorname{softmax}$ outputs. This method then decides which class corresponds to an input using a single linear hyperplane per class. We trained the linear weights using only the training dataset, Adam optimizer \cite{adam} and cross-entropy loss \cite{deep_learning_goodfellow2016}, and obtained a test accuracy of $98.43\%$ for the $50{,}000$ training examples and $91.56\%$ for $1{,}000$ training examples (see column ``AML log.\ reg.'' in \textbf{Table \ref{table:performance_comparison}}). 

The embedding strategy used is generally applicable to classification problems as it is not limited to images. We therefore compared our results with Multilayer Perceptrons (MLPs), which are also free of image-specific biases. MLPs have multiple hyperparameters that require optimization. We ran 360 MLPs with different hyperparameter configurations and two to four hidden layers (see \textbf{Methods}). A validation dataset of $10{,}000$ examples was used to stop training before overfitting and to select the best of the 360 models, which achieved a test accuracy of $98.46\%$ (column ``MLP best'' in \textbf{Table \ref{table:performance_comparison}}). The best MLP trained only with the first $1{,}000$ examples of the training set reached $88.70\%$ test accuracy.

\begin{figure}[hbt!]
    \centering
        \includegraphics[width=1\linewidth]{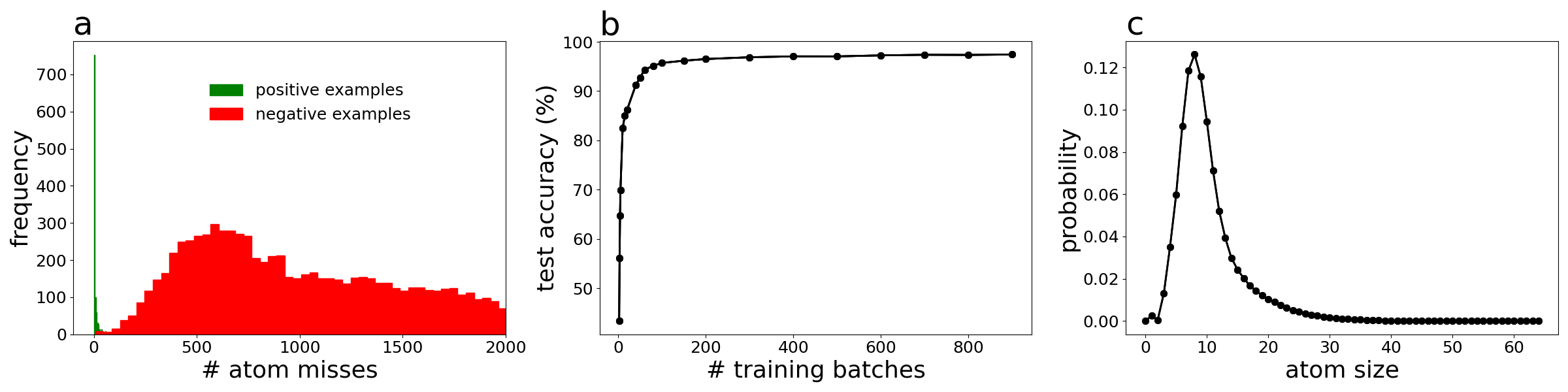}
	\caption{\textbf{Sparse Crossing of hand-written digits (MNIST dataset)} \textbf{(a)} Frequency of the number of misses for a query of whether a test image is a $7$ for test examples of digit $7$ (green) and for the other digits (red). \textbf{(b)} Test accuracy increases during training. \textbf{(c)} Distribution of atom sizes, with atom size the number of constants in the upper segment of the atom).}
\label{figure:MNIST}
\end{figure}

\begin{table}[htbp!]
    \centering    \small
    \renewcommand{\arraystretch}{1.2}  %
     \resizebox{0.9\textwidth}{!}{
    \begin{tabular}{|L{0.23\textwidth}|C{0.12\textwidth}|C{0.12\textwidth}|C{0.20\textwidth}|C{0.18\textwidth}|}
        \hline
        \rowcolor[HTML]{E6E6E6} 
        \textbf{Dataset} & 
        \makecell{\textbf{AML} \\ \scriptsize{fewest misses}} & 
        \makecell{\textbf{AML} \\ \scriptsize{log.\ reg.}} & 
        \makecell{\textbf{MLP} \\ \scriptsize{best}} & 
        \makecell{\textbf{MLP} \\ \scriptsize{mean $\pm$ std.}} \\ 
        \hline
        
        MNIST \newline \scriptsize{$28 \times 28$, 10, 50000/10000/10000}  
        & \makecell{$97.63\%$} & \makecell{$98.43\%$} 
        & \makecell{$98.46\%$ \\ \scriptsize{(2048, 1024, 128)}} 
        & \makecell{$98.03\% \pm 0.21\%$} \\ 
        \hline

        MNIST \newline \scriptsize{$28 \times 28$, 10, 1000/10000/10000}  
        & \makecell{$90.24\%$} & \makecell{$91.56\%$} 
        & \makecell{$88.70\%$ \\ \scriptsize{(4096, 256, 128)}} 
        & \makecell{$87.73\% \pm 1.54\%$} \\ 
        \hline

        fashionMNIST \newline \scriptsize{$28 \times 28$, 10, 50000/10000/10000}  
        & \makecell{$87.27\%$} & \makecell{$89.47\%$} 
        & \makecell{$89.52\%$ \\ \scriptsize{(4096, 256)}} 
        & \makecell{$88.40\% \pm 0.45\%$} \\ 
        \hline

        fashionMNIST \newline \scriptsize{$28 \times 28$, 10, 1000/10000/10000}  
        & \makecell{$79.62\%$} & \makecell{$81.89\%$} 
        & \makecell{$80.73\%$ \\ \scriptsize{(2048, 256)}} 
        & \makecell{$79.52\% \pm 1.57\%$} \\ 
        \hline

        CIFAR-10 \newline \scriptsize{$32 \times 32$, 10, 50000/5000/5000}  
        & \makecell{$48.56\%$} & \makecell{$53.60\%$} 
        & \makecell{$54.58\%$ \\ \scriptsize{(4096, 256)}} 
        & \makecell{$54.12\% \pm 0.79\%$} \\ 
        \hline

        CIFAR-10 \newline \scriptsize{$32 \times 32$, 10, 1000/5000/5000}  
        & \makecell{$36.33\%$} & \makecell{$38.49\%$} 
        & \makecell{$36.23\%$ \\ \scriptsize{(4096, 256, 512)}} 
        & \makecell{$35.58\% \pm 0.63\%$} \\ 
        \hline
        
        dermaMNIST \newline \scriptsize{$28 \times 28$, 7, 7007/1003/2005}  
        & \makecell{$73.47\%$} & \makecell{$74.21\%$} 
        & \makecell{$65.94\%$ \\ \scriptsize{(4096, 2048, 128, 256)}} 
        & \makecell{$55.97\% \pm 3.28\%$} \\ 
        \hline
        
        pneumoniaMNIST \newline \scriptsize{$28 \times 28$, 2, 4708/524/624}  
        & \makecell{$84.62\%$} & \makecell{$85.90\%$} 
        & \makecell{$87.66\%$ \\ \scriptsize{(2048, 256, 512)}} 
        & \makecell{$87.63\% \pm 1.19\%$} \\ 
        \hline
        
        pneumoniaMNIST \newline \scriptsize{$64 \times 64$, 2, 4708/524/624}  
        & \makecell{$84.13\%$} & \makecell{$84.93\%$} 
        & \makecell{$87.50\%$ \\ \scriptsize{(512, 256, 256)}} 
        & \makecell{$87.35\% \pm 1.29\%$} \\ 
        \hline
        
        organCMNIST \newline \scriptsize{$28 \times 28$, 11, 12975/2392/8216}  
        & \makecell{$81.28\%$} & \makecell{$86.75\%$} 
        & \makecell{$76.44\%$ \\ \scriptsize{(4096, 2048, 128)}} 
        & \makecell{$75.04\% \pm 1.33\%$} \\ 
        \hline
        
        bloodMNIST \newline \scriptsize{$28 \times 28$, 8, 11959/1712/3421}  
        & \makecell{$85.26\%$} & \makecell{$90.93\%$} 
        & \makecell{$85.30\%$ \\ \scriptsize{(4096, 1024, 256)}} 
        & \makecell{$84.87\% \pm 1.56\%$} \\ 
        \hline
        
        bloodMNIST \newline \scriptsize{$64 \times 64$, 8, 11959/1712/3421}  
        & \makecell{$87.55\%$} & \makecell{$92.90\%$} 
        & \makecell{$89.16\%$ \\ \scriptsize{(2048, 256, 256)}} 
        & \makecell{$86.76\% \pm 1.30\% $} \\ 
        \hline
    \end{tabular}}
    \captionsetup{width=.9\textwidth}
         \caption{\textbf{Test accuracy of AML and MLP models on various image datasets}. Dataset details include image dimensions, class count, and training/validation/test sample sizes. For each dataset, a single algebraic model was computed using only training data and evaluated via fewest misses method and also via logistic regression on the AML output. MLP results show test accuracy of the best-performing configuration on validation data (neurons per layer shown) from 360 configurations and mean ± std of test accuracies across all configurations. MLP configurations were obtained using grid search over learning rates and architectures with up to $4$ hidden layers of varying neuron counts (see \textbf{Methods}).
 }
    \label{table:performance_comparison} %
\end{table}

We also evaluated models obtained with Sparse Crossing in several medical datasets (MEDMNIST, \cite{yang2023medmnist}), as well as in fashionMNIST \cite{xiao17} and CIFAR-10 \cite{Kri09}. These datasets have grayscale images, and CIFAR, bloodMNIST and dermaMNIST also in color. In order to embed the color and grayscale values of the images, instead of using two constants, one for black and another for white, we use two sets of constants with as many constants as grayscale intensities. These sets are structured as intensity-ordered chains, one ascending and the other descending. For the pixel located at position $i, j$ in the image matrix and color channel $k$, we define the chains
\begin{equation}
\begin{aligned}
l_{i,j, k}(intensity) &\leq l_{i,j,k}(intensity + 1) \\
g_{i,j, k}(intensity + 1) &\leq g_{i,j,k}(intensity),
\end{aligned}
\end{equation}
where $l_{i,j,k}(intensity)$ and $g_{i,j,k}(intensity)$ are constants. An individual intensity value is then embedded as an idempotent summation of two constants: 
\begin{equation}
\begin{aligned}
l_{i,j, k}(intensity) \odot  g_{i,j, k}(intensity),  
\end{aligned}
\end{equation}
and an image is represented by a term equal to the idempotent summation along all pixel locations and color channels:
\begin{equation}
\begin{aligned}
term(\textnormal{image}) = \odot_{i,j, k}\, \big(\,l_{i,j, k}(intensity(i,j, k, \textnormal{image})) \odot  g_{i,j, k}(intensity(i,j, k, \textnormal{image}))\,\big).  
\end{aligned}
\end{equation}
For images with three color channels, each pixel is encoded as the idempotent summation of six constantans, three in ascending chains and three in descending chains. This embedding uses $2 \times resolution$ constants and $2 \times (resolution - 1)$ positive duples for each pixel and color channel. The original intensity resolution of $256$ gray levels per channel was retained for some of the datasets while others were downsized to $20$ gray levels per channel to reduce computational load (\textbf{Methods}). 

\textbf{Table \ref{table:performance_comparison}} shows that the test accuracy of a single algebraic model using logistic regression on top is comparable to the best performing MLP. Note that to obtain the AML model we use only training data, whereas for MLPs we also use validation data to select the hyperparameters of the best-performing model out of 360 configurations and for early stopping of training to prevent overfitting (\textbf{Methods}). 

\section*{Learning without data}

So far we have seen that the algebraic embedding approach and the subdirect decomposition of its models into atoms can be used to learn from data. This method extends beyond data-driven learning to axiom sets that describe a problem without containing any data. For example, it is possible to train an algebra to learn how to solve Sudoku puzzles or to form complete Sudoku boards starting from an empty grid (see \cite{SecondPaperArX} and \textbf{Methods}). In this case, learning occurs without providing any examples, with the axioms describing the constraints of a correct Sudoku board and the goal of the game.

As with data-driven tasks, learning for these problems consists of discovering discriminative atoms of the freest model of the axioms. Using Sparse Crossing, this process of discovery typically occurs gradually, after processing multiple batches, each containing the complete set of axioms. 
To better understand why Sparse Crossing also works in these problems, consider the following result. We proved in \cite{ThirdPaperArX} that for a type of embedding of a problem we call ``explicit embedding”, each solution of the problem has a model atomized by a subset of non-redundant atoms of the freest model of the axioms. For example, consider an explicit embedding for the Hamiltonian cycle problem. For this embedding, each solution model, i.e.\ each Hamiltonian cycle, is atomized by a subset of the non-redundant atoms of the freest model. Since most atoms are redundant, i.e.\ are unions of non-redundant atoms, this property severely restricts the size of the atom space that contains the solutions. As Sparse Crossing is designed to be effective at finding non-redundant atoms (see \textbf{Supplementary Section \ref{SparseCrossing:smarter}}), this may explain its effectiveness in solving these problems.

\begin{center}
\begin{tabular}{|c|c|c|c|c|}
\hline
Graph & First & Median & All & SLH Transforms \\
\hline
G1 & 9 & 773 & 3651 & 13356 \\
\hline
G2 & 12 & 124 & 471 & 5078 \\
\hline
G3 & 808   &  6798 & 46419  & 172316 \\
\hline
G4 & 11 & 1008 & 3379 & 266 \\
\hline
G5 & 1818 & 10492 & 64013 & 81571 \\
\hline
G6 & 28 & 437 & 887 & 370 \\
\hline
G7 & 3560 & 28202 & 292521 & 412275 \\
\hline
G8 & 207 & 1838 & 5823 & 666801 \\
\hline
G9 &  8282 & 130717  &   & 472180 \\
\hline
G10 & 434 & 768 & 2907 & 285 \\
\hline
\end{tabular}
\captionof{table}{Results of applying Sparse Crossing to graphs 1 to 10 of the FHCP challenge set \cite{FHCP1001}. Each graph was independently run $10$ times. Each run consists of a series of sparse-crossing batches. We make a single attempt to find a cycle after each sparse-crossing batch. Column ``First'': number of attempts needed to obtain the first Hamiltonian cycle in any of the runs. Column ``Median'': number of attempts needed so $5$ out of the $10$ runs find a first Hamiltonian cycle. Column ``All'': number of attempts needed so the $10$ runs find a Hamiltonian cycle. Column ``SLH Transforms'': number of graph transformations needed to find a first path following the Snakes and Ladders Heuristic algorithm \cite{SLH}, a state of the art algorithm for Hamiltonian cycles. }
\label{tab:tableG1G10}
\end{center}

\begin{figure}[h!]
\centering
\includegraphics[width=0.8\textwidth]{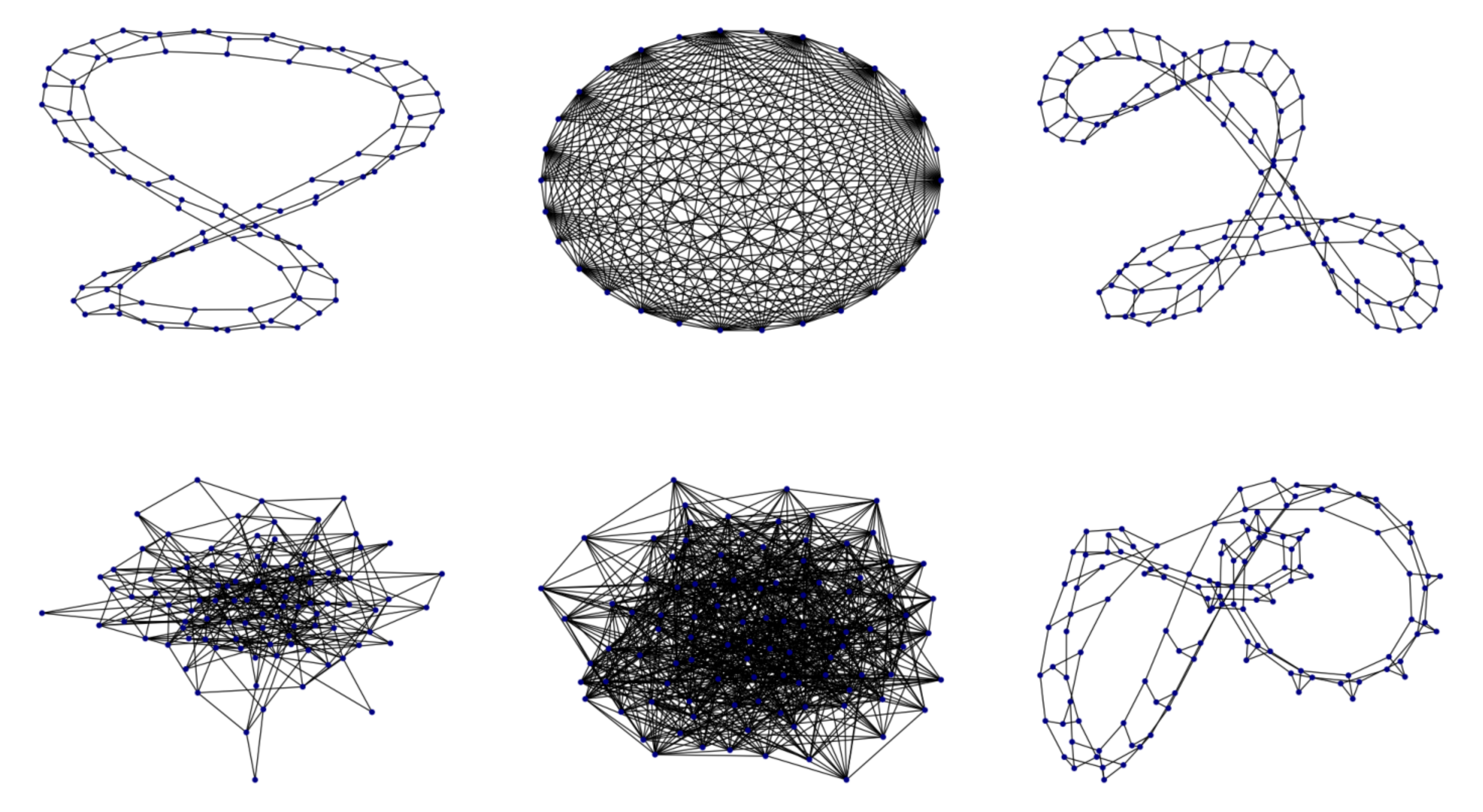}
\caption{Hamiltonian cycles obtained using AML for different graphs. Sparse Crossing obtains Hamiltonian cycles in randomly generated graphs of variable edge density (first two graphs of the bottom row), modified Flower Snarks (SNm\_124, bottom row, right), graph G7 of the FHCP set \cite{FHCP1001} (top row, left), Sheehan graphs of various sizes (top row, center) and generalized Peterson graphs (GPN\_122, top row, right).}
\label{fig:graphImages}
\end{figure}

We illustrate how AML can deal with formal problems in the case of Hamiltonian cycles. We need to explain as axioms that we want a closed loop path that visits each node of a graph exactly once. In \textbf{Methods}, we give a complete description of these axioms, and here we discuss a few of them. Assume we have a graph with $v$ nodes and $e$ edges. To find Hamiltonian cycles we can use an embedding with the following constants: a constant $V_i$ for each graph node, a constant $E_k$ for each edge, a constant $P$ to refer to the path we want to compute and a constant $W$ to encode constraints and allow for training with results obtained during the process of computing Hamiltonian cycles if we desire. In addition, we use constants: $nE_k$ for the absence of edge $k$, as many auxiliary constants $Z_k$ as graph edges and as many context constants $g_k$ and $h_k$ as graph edges. It is also possible to specify that we want a connected path, for which we use as many $id_i$ constants as graph nodes. This gives a total of $2v + 5e + 2$ constants where $v$ is the number of nodes and $e$ is the number of edges in the graph.

For example, we express for the topology of the graph with the set of positive axioms:  \[
V_{r(k)} \odot V_{s(k)} \leq E_k,
\]
where $r(k)$ and $s(k)$ are the indexes of the two nodes of edge $E_k$. To describe a path we use the following axioms. For each node $i$ and for each couple of edges $E_y$ and $E_z$,  we use: \[
P \odot ( \odot_{x; \, x \not\in \{y,z\}} nE_x) = E_y \odot E_z  \odot P
\]
where the idempotent summation $\odot_{x; \, x \not\in \{y,z\}} nE_x$ runs along the indexes of every edge of the node $i$, except edges $y$ and $z$. This axiom specifies that in the context of a path $P$, having two edges present, $E_y$ and $E_z$, that share the same node, is equivalent to having every other edge of the node absent, which follows from the fact that there cannot be more than two edges of $P$ incident to the same node.

There are other sets of positive and negative axioms needed, given a total of $2v + 1$ negative axioms and approximately $2 (e^2 /v) + 7e + 2v + 2$ positive axioms, as described in \textbf{Methods}.

Once a model $M$ of the embedding axioms is produced, we interpret that path $P$ has edge $E_k$ if and only if $(E_k \leq P)$ is valid in the model. In this way, it is possible to determine if a model contains a solution or not. In the experiments reported in \textbf{Table \ref{tab:tableG1G10}} the model produced after each sparse-crossing batch was interpreted in this manner, thereby resulting in an ``attempt'' per batch.

Optionally, we can add to the axioms information we find while computing Hamiltonian cycles. If a path is produced that cannot be completed with additional edges, we can add to the axioms:
\[
W \leq  \odot_u E_u.
\]
where the idempotent summation $\odot_u E_u$ sums along the edges of the unwanted path, and then we specify with another axiom that the path must not be like these unwanted paths:
\[
W \not \leq P.
\]
It should be understood that the constraints defined in our embedding are soft, in the sense that they are more an invitation than a hard constraint. For example, there are ``bad'' models of the embedding axioms for which $P$ contains every node but does not have enough edges to justify their presence. However, experimental results consistently show that with some training, ``good''  models are produced, and in fact, they are produced early even for hard graphs (see \textbf{Table \ref{tab:tableG1G10}}). The fact that good models are found, despite the potential existence of many more bad models, suggests that good models provide a simpler standard interpretation of the constraints compared to bad models. This simpler interpretation makes good models more likely to be discovered by Sparse Crossing. For example, for Sheehan graphs of any size (e.g. SH\_66 of the FHCP challenge set \cite{FHCP45}) our embedding always produces the only existing cycle in the first attempt, suggesting that non-standard interpretations of the embedding do not exist for Sheehan graphs. In fact, if the Hamiltonian cycle solution is discarded by adding $W \leq  \odot_u E_u$ to the embedding, where the summation runs along the edges of the cycle, the embedding becomes inconsistent. This makes sense, as Sheehan graphs have only one Hamiltonian cycle, and shows that there are no other interpretation of the constraints in this case.  

Sparse Crossing could find Hamiltonian cycles, using this embedding, in a wide range of random and hard graphs (see \textbf{Figure \ref{fig:graphImages}} and  \textbf{Table \ref{tab:tableG1G10}}). Although this method can find paths in very few attempts, each attempt is time consuming (every batch takes about 0.5 to 5 seconds depending on the graph in a regular desktop computer), making this method much slower than state of the art algorithms such as the Snakes and Ladders Heuristic algorithm \cite{SLH}. However, note that our axioms simply describe the graph and the goal of the task and do not encode any method to find the solution.

\section*{Discussion}

We have introduced Algebraic Machine Learning (AML) as a novel approach to automated learning that uses an algebraic decomposition as the basis for learning and generalization. It works by encoding tasks into axioms of an algebra and constructing atomized models of these axioms. Learning results from the cumulative discovery of certain atoms of the freest model. This process occurs gradually, using discovered atoms to find more and better atoms. Certain subsets of atoms from the freest model serve as generalizing models. We demonstrated the versatility of this method across problems of very different nature, using image classification and obtaining Hamiltonian cycles as examples.

We find that AML, without incorporating image-specific inductive biases, can classify images with accuracy comparable to the best multilayer perceptrons identified through grid hyperparameter search using a validation dataset. We also demonstrate that the same method finds Hamiltonian cycles in few attempts compared to state-of-the-art heuristics and in graphs known to be some of the hardest for the task. 

An advantage of AML is that the models grow autonomously, thereby eliminating the need to predefine an architecture. The inherent absence of overfitting, combined with the minimal set of hyperparameters (see \textbf{Methods}), renders the use of a validation dataset unnecessary. Another potential advantage of AML stems from the additivity of the atomized representation, which can be used to construct larger models from the union of the atom sets of independently computed models. 

We demonstrate that if the data can be explained by rules that can be expressed in the form of axioms in a semilattice, the algebraic model of the data shares all the discriminative atoms (those useful for generalization) with the freest model of the rules. Furthermore, based on simple probabilistic considerations and the fact that atoms cannot grow without limit, we expect to observe atoms of the freest model of the rules emerging from the embedding of small amounts of data. This ability that AML has to find the underlying rules in the data suggests a potential for model transparency and explainability.

AML provides a different basis for learning that does not use optimization or search and differs considerably from all other known methods and, particularly, from Statistical Learning approaches. This novel perspective could help enhance our understanding of learning and intelligence and potentially offer lessons applicable to improve other methods. For example, the role played by the freest model, understood as the model of what can be proven from the axioms, and the conceptualization of learning as a form of weakened deduction, offer unique insights that could be applicable to other methods.

Hybrid methods combining the algebraic approach and statistical learning show significant potential. For image datasets, the most effective approach combines logistic regression with the algebraic model, suggesting that data is separable into between algebraic and statistical components. Supporting evidence includes the lack of improvement when using validation data or replacing logistic regression with a multi-layer network. Furthermore, optimal performance occurs when the algebraic model achieves zero training error, possibly because this prevents the statistical layer from compensating for patterns that should be better captured algebraically.

In this work, we use atomized semilattices due to their simplicity and sufficient expressive power. However, we hypothesize that the underlying learning method relies primarily on the subdirect decomposition rather than on the particularities of the semilattice algebra. We expect that AML can be implemented with other algebras.

\subsection*{Code availability}

We have made available an open-source Python/C hybrid implementation of Sparse Crossing: \href{https://github.com/Algebraic-AI/Open-AML-Engine}{https://github.com/Algebraic-AI/Open-AML-Engine}. The dual-language approach allows for seamless instrumentation, enabling researchers to explore and easily modify the algorithm in Python while maintaining the performance advantages of C. A decorator ``@tryfast'' in every computationally intensive function provides a way to choose between running the function in Python or in C, facilitating code instrumentation and modification. The code can also compute the Full Crossing algorithm. The repository includes example embeddings for various tasks, including Hamiltonian cycle finding, Sudoku, and MNIST handwritten digit classification.

\section*{Acknowledgments} 

We are grateful for the support from Champalimaud Foundation (Lisbon, Portugal), from Portuguese national funding through FCT in the context of the project UIDB/04443/2020, and from the European Commission provided through projects H2020 ICT48 \emph{Humane AI; Toward AI Systems That Augment and Empower Humans by Understanding Us, our Society and the World Around Us} (grant $\# 820437$) and the H2020 ICT48 project \emph{ALMA: Human Centric Algebraic Machine Learning} (grant $\# 952091$). 

\newpage
\section*{Methods}

\numberwithin{equation}{section}

\subsection*{AML models} \label{Methods:AML_models}

\textbf{Images}. The smallest datasets from MEDMNIST \cite{yang2023medmnist} were kept in their original 256-level grayscale depth. For larger medical images, FashionMNIST, CIFAR-10, to speed up computations, the grayscale intensity resolution was reduced from the original 256-level depth to 20 equidistantly distributed levels.

\textbf{Training in Sparse Crossing}. All the datasets were processed following the same protocol. Batch size starts with $500$ images and increases linearly until reaching $2/3$ of the training set in batch $500$. Sparse-Crossing gives a model $M_{i}$ per batch $i$, which we call master model, and ``union models'' that take into account previous batches (see {\textbf{Supplementary Section \ref{SparseCrossing:iterative}}}). Training stops when the ``union model'' has $0$ error in the training set. A single AML model was obtained for each dataset.

\textbf{Hyperparameters in Sparse Crossing}. Sparse Crossing has $4$ hyperparameters. These hyperparameters were set manually and are fixed, i.e., they are not optimized for each individual dataset. The manual setting was carried out based on experience gathered from many synthetic datasets and in MNIST. All other datasets used in this study had no influence on the manual setting of the hyperparameters.\newline
1. {\textit{Simplification threshold $\gamma$}}: during the process of sparse-crossing the positive axioms, if the number of atoms of the master model (see \textbf{Supplementary Section \ref{SparseCrossing:iterative}}) grows from a size $N$ to a size larger than $\gamma N$, a call to a simplification routine triggers. The simplification consists of discarding atoms with the constraint of keeping the traces of all the constants invariant (see \textbf{Algorithm \ref{SparseCrossing:simplifyFromConstants}}). The simplification parameter has an impact on computation time and it may or may not have an impact on the quality of the models produced. The value $\gamma = 1.5$ was used for all the image datasets, while for Sudoku and Hamiltonian cycles the value $\gamma = 1.1$ was set. \newline
2. {\textit{Batch size}}: The batch size has an impact on computation time and model test accuracy. For image datasets, we used a policy of making the batch size grow linearly as training progresses, see \textbf{Training in Sparse Crossing}. For Sudoku or Hamiltonian Cycles, all the positive and negative duples are presented at each batch.\newline
3. {\textit{Union model fractioning parameter $\kappa$}}: the atoms of the dual are either associated to negative duples or to pinning terms. Let $D$ be the set of atoms of the dual, $D_N$ the set of atoms associated to pinning terms and $D_R$ the set of atoms associated to negative duples, so $\vert D  \vert = \vert D_N \vert  + \vert D_R \vert $. The fractioning parameter selects, at random and at each batch, a subset of $S \subseteq D_N$ such that $\vert D_R  \vert \geq \kappa (\vert D_R  \vert + \vert S \vert)$. In other words, $\kappa$ is the minimal proportion of atoms associated to negative duples that we want in the dual. Since the number of pinning terms increases with training, if this fractioning does not take place, the proportion of atoms associated to duples decreases. We found that ensuring a proportion $\kappa$ of atoms associated to duples helps increase atom variability, i.e.\ fractioning helps explore a larger volume of the atom space. For image datasets we used  $\kappa = 0.1$ while for Hamiltoinian cycles we observed that larger values, like  $\kappa = 0.5$, gave better results. We found this parameter to have a significant impact in model performance, particularly for smaller training sets. \newline
4. \textit{Model reduction parameter $\delta$}: Since the accuracy remains approximately constant for a wide range of atomization sizes (see \textbf{Supplementary Figure \ref{figure:mnistpfppfn}}), it is possible to reduce the size of the union model $N$. To extract a good generalizing model from the union model, a subset of its atoms with size $\delta \vert N \vert$ is extracted using the method described in \textbf{Subset selection}. Size reduction with parameter $\delta = 0.1$ was used before the logistic regression and the fewest misses evaluations for all image datasets. For Sudoku or Hamiltonian cycle problems, no reduction was applied, as each solution is extracted from the master model and not from the union model.

\textbf{Subset selection}. Out of the Sparse Crossing procedure we obtain a set of atoms, from which we extract the following subset. Good generalizing models need subsets of atoms that are individually discriminative, collectively discriminating the entire training set and with low correlation. To build a subset $S$ with these characteristics, we first randomly sort atoms. Starting with $S$ empty and reading the atoms in order, an atom $\phi$ is added to $S$ only if there is a negative duple of the training set discriminated by $\phi$ and by no other atom of $S$. This results in a subset of atoms that discriminates the entire training set, of cardinality smaller than the number of negative duples of the training set. We add various subsets of atoms selected in this manner until reaching a model of a size equal to $1/10$ of the initial model obtained from Sparse Crossing. The atoms that are not associated to labels (those which upper segment contain no label constants) are removed from the model, as they play no role in associating labels to term images. This protocol results in good generalizing models ten times smaller than the initial model. 

\textbf{Logistic regression on top of AML}. If we are interested in adding statistical information to AML, a simple way is to use the AML model as input to logistic regression in the following way. The atoms that are in the lower segment of the image term are given a value of $+1$ and the atoms that are not are given a value of $-1$. For each image, the input to the logistic regression is then a sequence of $+1$ and $-1$ values. Each element of the sequence connects with a linear weight to each of $N$ $\operatorname{softmax}$ outputs, one per class. Only the training dataset was used to find optimal parameters, with Adam optimizer \cite{adam} and cross-entropy loss \cite{deep_learning_goodfellow2016}.

\subsection*{Multi-layer perceptrons}

To build MLP models, we use the validation dataset to optimize architecture parameters and avoid overfitting by early stopping of training. We evaluated a family of two, three and four hidden layer multilayer perceptrons with ReLU activations. More concretely, we perform a grid search over the number of neurons in the first hidden layer (512, 2048 or 4096 hidden units) and the second hidden layer (256, 1024 or 2048 hidden units), using Ray Tune, \cite{liaw2018}, with the goal of minimizing validation loss. The third layer, when it exists, is allowed to have 128, 256 or 512 hidden units, and the fourth layer, when it exists, can have 128 or 256 hidden units. For the third and fourth layers, a random sample is performed for the sizes. We perform $90$ runs using two layers, $180$ runs using three, and an additional $90$ runs with four layers, for a total of $360$ train runs. Training runs for $240$ iterations or until the validation loss does not improve for 10 iterations. In each run, we uniformly sample the learning rate ($5\cdot10^{-4}$, $10^{-4}$, $5\cdot10^{-5}$ or $10^{-5}$) and the L2-regularization coefficient ($10^{-3}$, $5\cdot10^{-4}$, $10^{-4}$ or $0$). We use the ADAM optimizer to minimize cross-entropy loss. 

\subsection*{Semantic Embeddings} 

A detailed analysis of the concept of semantic embeddings as an axiomatic extension of the theory of semilattices can be found in \cite{SecondPaperArX}.

\subsubsection*{Embedding for Sudoku} \label{Methods:sudoku_embedding}

The embedding for Sudoku is presented in \cite{SecondPaperArX}, with a comprehensive study of its properties and the resulting atomized models. Additionally, within the open-source engine at \href{https://github.com/Algebraic-AI/Open-AML-Engine}{https://github.com/Algebraic-AI/Open-AML-Engine}, exemplary files ``example02\_Sudoku.py'' and ``embedding\_Sudoku.py'' are also provided. 

\subsubsection*{Embedding for Hamiltonian cycles} \label{Methods:hamiltonian_embedding}

Consider the following sets of constantans:

\begin{itemize}
\item $V_i$: A constant for each graph node
\item $E_k$: A constant for each edge
\item $P$: A constant to refer to the path we want to compute
\item $W$: A constant to encode constraints and allow for training
\item $nE_k$: A constant for the absence of edge $k$
\item $Z_k$: Auxiliary constants, as many as graph edges
\item $id_i$: a path ``id'' constant associated to node $i$ 
\item $g_k$ and $h_k$: Context constants, as many as graph edges

\end{itemize}

This gives a total of $2v + 5e + 2$ constants, where $v$ is the number of nodes and $e$ is the number of edges in the graph.

We start by embedding the topology of the graph.
Let $r(k)$ and $s(k)$ be the index of the two nodes of edge $E_k$. The edges are undirected so it does not matter which of the two nodes is $r(k)$ or $s(k)$. For each (undirected) edge $k$ joining nodes $V_{r(k)}$ and $V_{s(k)}$ we define a positive duple: \[
V_{r(k)} \odot V_{s(k)} \leq E_k.
\]

The embedding constant $Z_k$ represents either an edge or its absence and is defined with: \[
Z_k \leq E_k \odot nE_k,
\]
Think about $Z_k$ as a kind of weak variable that we wish to be equal to either $E_k$ or to $nE_k$ but that can take any value in between.
The path $P$ we want to find passes through every node and it is formed with edges, so we add:   \[
\odot_i V_i \leq P,
\]  \[
\odot_k Z_k = P.
\]
 
For the constant $W$, which we will use to learn the ``wrong paths'', we start with the following duples; for each edge $k$ it is a wrong path one that simultaneously has the constant of the edge and the constant for the absence of the edge:  \[
W \leq E_k \odot nE_k \odot P.
\]
Since we want our path not to be a wrong path we also impose the additional negative axiom:  \[
W \not\leq P.
\]

Then we describe the concept of path with the help of the constants $nE_k$. For each node $i$ and for each couple of edges $E_y$ and $E_z$ We use: 
\[
P \odot (\odot_{x; \, x \not\in \{y,z\}} nE_x) = E_y \odot E_z  \odot P
\]
where the idempotent summation $\odot_{x; \, x \not\in \{y,z\}} nE_x$ runs along the indexes of every edge of node $i$, except for $y$ and $z$. There are a variable number of these positive duples depending upon the graph, on the order of $ 2v \left(\frac{e}{v}\right)^2 $. 
 
We need some negative duples in the embedding (usually, the fewer the better). It is enough with one negative duple for each node $i$ establishing that the presence of node $i$ only depends upon the presence of edges incident to node $i$ and it is independent of everything else: \[
V_i \not\leq  (\odot_{j;\,j \not= i} (V_j \odot id_j))  \odot (\odot_k nE_k)  \odot  (\odot_{t;\, i \not\in \{r(t), s(t)\}} E_t)
\]
where the idempotent summation $\odot_{j;\,j \not= i} (V_j \odot id_j)$ sums along all the nodes except $i$, the idempotent summation $\odot_k nE_k$ sums along every edge of the graph, and $\odot_{t;\, i \not\in \{r(t), s(t)\}} E_t$ sums along every edge not incident to node $i$.   

To specify we want a connected path, we use the path identity constants $id_i$. These constants become equal for the nodes in the same path. The identities of adjacent nodes become equal in the presence of a connecting edge:
 \[
E_k \odot id_{r(k)} = id_{s(k)} \odot E_k.
\]
To require that the path connects every two nodes $i$ and $j$ we add: 
\[
P \odot id_{i} = id_{j} \odot P,
\]
which gives $v^2$ duples that are equivalent to just $2v$ duples. To convey the meaning of the node identity we add the following set of negative duples:
 \[
id_i \not\leq  (\odot_{j;\,j \not= i} (V_j \odot id_j))  \odot (\odot_k nE_k)  \odot  (\odot_{t;\, i \not\in \{r(t), s(t)\}} E_t),
\]
which has the same right-hand side than the negative duples above.

Using context constants (see \cite{ThirdPaperArX}) ensures the solution models are spawn by non-redundant atoms and increase the probability of finding a solution; for each edge $k$ we add a context in which  $Z_k$ is equal to $E_k$ and another context in which $nZ_k$ is equal to $nE_k$: \[
g_k \odot Z_k  =  E_k \odot g_k,
\] \[
h_k \odot Z_k  =  nE_k \odot h_k.
\]

The embedding theory has a total of $2v + 1$ negative duples and around $2 (e^2 /v) + 7e + 2v + 2$ positive duples.

Additionally it is possible to discard paths extracted from the attempts made; if a path is produced that cannot be completed with additional edges, add to the axioms:\[
W \leq  \odot_u E_u.
\]
where the idempotent summation $\odot_u E_u$ sums along the edges of the unwanted path.

Once a model $M$ of the embedding axioms is produced (we used the ``master'' model, see \textbf{Supplementary Section \ref{suppSection:SparseCrossingInDepth}}), we interpret that path $P$ has edge $E_k$ if and only if $E_k \leq P$ is valid in the model $M$.


%
\newpage

The Supplementary Information is divided in four sections: \textbf{Supplementary Section \ref{suppSection:atomizedStimlattices}} reviews the main results of atomized semilattices, included for completeness, and presents a few new results necessary to support this paper. \textbf{Supplementary Section \ref{suppSection:discoveryOfRulesInData}} is devoted to the discovery of underlying rules in data from an algebraic perspective. \textbf{Supplementary Section \ref{section:genSubsetsFreestModel}} presents a probabilistic analysis of the expected false positive and negative ratios. \textbf{Supplementary Section \ref{suppSection:SparseCrossingInDepth}} offers an in-depth analysis of the Sparse Crossing algorithm, including pseudocode. 

\section{Atomized Semilattices} \label{suppSection:atomizedStimlattices}

In this Supplementary Section we provide a review of the background on atomized semilattices taken from \cite{SecondPaperArX}, as well as a few new results (\textbf{Supplementary Section \ref{appendix:additional_ressults}}) needed to support the main text and other Supplementary Sections. For an in-depth analysis of finite and infinite atomized semilattices, see \cite{SecondPaperArX, InfX}.

\subsection{Definitions} \label{subsection:definitions}

\begin{definition} 
\label{definition:theory_of_semilattices}
A \textbf{semilattice} is an algebra with a single binary function $\odot$, that satisfies the commutative, associative and idempotent properties:
$\forall x \forall y \,\,[(x \odot y) =(y \odot x)],$ 
$ \forall x \forall y\forall z [x \odot (y \odot z) =(x \odot y) \odot z]$ and  
$\forall x \,\,[(x \odot x) =x]$ 
\end{definition}

\begin{definition}
\label{definition:component_constants}
The \textbf{component constants} of a single constant $c$ is defined as the constant itself, ${\bf{C}}(c)  = \{c\}$, and the component constants of a term $t = c_1 \odot  c_2 \odot... \odot c_n$ as the set ${\bf{C}}(t)  = \{c_1,c_2,...,c_n\}$.
\end{definition}

\begin{definition}
\label{definition:duple}
\textbf{Positive and negative duples in a model}. We use the word \emph{duple} to refer to an ordered pair of terms, $r \equiv (a, b)$. This notation is silent about whether it is valid or not in a model. We say that a duple is positive in a model $M$, and we denote it by $r^{+}$, if the duple is valid in the model, $M \models (a \leq b)$. Similarly, we say a duple is negative, and write $r^{-}$, if $M \not\models (a \leq b)$ or, equivalently, $M\models (a \not\leq b)$. For a set of positive duples we use the notation $R^{+}$, with $R^{+}=\{r_1^+, r_2^+, ..., r_N^+\}$ and for a set of negative duples of positive duples we use the notation $R^{-}$, with $R^{-}=\{r_1^-, r_2^-, ..., r_N^-\}$.
\end{definition}

\begin{definition}
\label{definition:theory_of_a_model}
\textbf{Theory of a model}. For any model $M$, its theory, denoted $Th(M)$, is the set of sentences (duples) satisfied by $M$.  The subscript ``0" in $Th_{0}(M)$ specializes sentences without quantifiers, i.e.\ atomic and negative atomic sentences. We use $Th_{0}(M^{+})$ and $Th_{0}(M^{-})$ when we want to refer to all the positive or negative duples satisfied by $M$, respectively.
\end{definition}

\begin{definition}
\label{definition:freer_or_as_free_model}
A semilattice $M_{1}$ is \textbf{freer than or as free} as the semilattice $M_{2}$ if for every duple $r$ for which $M_{2} \models r^{-}$ we also have $M_{1} \models r^{-}$. Equivalently, $Th_0^{-}(M_{2})  \subseteq Th_0^{-}(M_{1})$. 
\end{definition}

\begin{definition}
\label{definition:freest_model}
The \textbf{freest model} over the constants $C$ of the set of positive duples $R^{+}$, 
$F_{C}(R^{+})$, is the model such that if $F_{C}(R^{+}) \models r^{+}$ for any duple $r$ then all other models of $R^{+}$ also satisfy $r^{+}$. 
\end{definition}

\begin{definition} 
\label{definition:finiteAtomizedSemilattice}
An \textbf{atomized semilattice} over a set of constants $C$ is a structure $M$ with elements of two sorts, the regular elements in Latin letters  $\{a, b, c,...\}$ and the atoms in Greek letters $\{\phi, \psi,...\}$, with an idempotent, commutative and associative binary operator $\odot$ defined for regular elements and a partial order relation $<$ (i.e.\ a binary, reflexive, antisymmetric and transitive relation) that is defined for both regular elements and atoms, such that the regular elements are either constants or idempotent summations of constants, and $M$ satisfies the axioms of the operations and the additional:
\[
\forall \phi \exists c : (c \in C) \wedge (\phi < c), \tag{AS1} 
\]\[
\forall \phi \forall a \,(a \not\leq \phi),  \tag{AS2} 
\]\[
\forall a \forall b \, (  a \leq b \,  \Leftrightarrow  \neg \exists \phi : ( (\phi < a)  \wedge   (\phi \not< b))),   \tag{AS3} 
\]\[
\forall \phi  \forall a \forall b  \,  (\phi < a \odot b \, \Leftrightarrow  (\phi < a) \vee (\phi < b)),   \tag{AS4} 
\]\[
\forall c \in C \,\, ((\phi < c) \Leftrightarrow (\psi < c)) \,  \Rightarrow (\phi = \psi),   \tag{AS5} 
\]\[
\forall a \exists \phi : (\phi < a).   \tag{AS6} 
\]
\end{definition} 

\begin{definition} \label{definition:atomInAsAsetDefinition}  
An \textbf{atom} $\phi$ of an atomized semilattice $M$ over $C$ is determined by its upper constant segment, the set $U^c(\phi) \subseteq C$, as follows: $\phi <_M c$ if and only if $(\phi \in M) \wedge (c \in U^c(\phi))$. 
\end{definition} 

\begin{definition}
\label{definition:lower_atomic_segment}
The \textbf{lower atomic segment} of a regular element $x$ of a model $M$ atomized by a set of atoms $A$ is  $L^{a}_{M}(x) = \{\phi: (\phi < x) \wedge (\phi \in A)\}$.
\end{definition}

\begin{definition}
\label{definition:upper_constant_segment}
The \textbf{upper constant segment} of an atom $\phi$, in any model, is the constants in which the atom $\phi$ is in, ${\bf U}^{c}(\phi) = \{c: (\phi < c) \wedge (c \in C)\}$. Atoms are ``universally" defined independently of a particular model as a consequence of Theorem \ref{atomicSegmentFromTermTheorem} (iv) and Theorem \ref{atomicSegmentFromTermTheorem} (iv).
\end{definition}

\begin{definition}
\label{definition:atom_wider_than_another_atom} We say that an atom $\phi$ is \textbf{wider than} an atom $\eta$ if ${U^{c}}(\eta) \subsetneq {U^{c}}(\phi)$, with the upper constant segment $U^{c}$ as in Definition \ref{definition:upper_constant_segment}.
\end{definition}

\begin{definition}
\label{definition:redundant_atom}
\textbf{Redundant atom}. An atom $\phi$ is \emph{redundant} in model $M$ if and only if for each constant $c$ such that $\phi < c$ there is at least one atom $\eta < c$ in $M$ with $\phi$ larger than $\eta$, with the notion of larger as in Definition \ref{definition:atom_wider_than_another_atom}.
\end{definition}

\begin{definition}
\label{definition:discriminant}
The \textbf{discriminant} of terms $a$ and $b$ in the atomized model $M$, written as $dis_{M}(a,b)$, is the set of atoms in $a$ that are not in $b$. Using the definition of lower atomic segment in Definition \ref{definition:lower_atomic_segment}, we can write it as $dis_{M}(a,b)={L^{c}_{M}}(a) \setminus {L^{c}_{M}}(b)$.
\end{definition}

\begin{definition}
\label{definition:atom_union_of_atoms}
The \textbf{union of atoms} $\phi$ and $\psi$ is an atom represented as $\phi \bigtriangledown \psi$ with upper constant segment is $U^{c}(\phi \bigtriangledown \psi) = U^{c}(\phi) \cup U^{c}(\psi)$, with $U^{c}$ the upper constant segment in Definition \ref{definition:upper_constant_segment}.
\end{definition}

\begin{definition}
\label{definition:full_crossing}
\textbf{Full-crossing}. Let $M$ be our model and let $r \equiv (r_{L},r_{R})$ be a duple that is not valid in the model, $M \not \models r^{+} \equiv(r_{L} < r_{R})$. Let $H$ be the discriminant of the terms $r_{L}$ and $r_{R}$, $H=dis_{M}(r_{L},r_{R})$, with the discriminant in Definition \ref{definition:discriminant}. Let $B$ be the set of atoms of $M$ that are in $r_{R}$, $B=L^{a}_{M}(r_{R})$ with $L^{a}_{M}$ the lower atomic segment in Definition \ref{definition:lower_atomic_segment}. The \textit{full-crossing} of duple $r$ into model $M$ gives a new model $\square_{r} M$ (read as ``the full-crossing of $r$ into $M$") atomized by:

\begin{equation*}
    \square_{r} M=(M \setminus H) \cup (H \bigtriangledown B),
\end{equation*}
where $\,\square_{r}$ is the Full Crossing operator and $H \bigtriangledown B$ the set of atoms resulting from all pairwise unions of each atom of $H$ and each atom of $B$, 
\begin{equation*}H \bigtriangledown B\equiv \{\lambda \bigtriangledown \rho:(\lambda \in H) \wedge (\rho \in B)\},
\end{equation*}
with the atom union of atoms, $\lambda \bigtriangledown \rho$, in Definition \ref{definition:atom_union_of_atoms}.  
\end{definition}

\begin{definition}
\label{definition:causal_set}
\textbf{Causal set}. Let $P$ and $Q$ be two sets, each of positive and negative duples. $P$ is a causal set of $Q$ when every duple of $Q$ is a logical consequence of the duples in $P$.
\end{definition}

\begin{definition}
\label{definition:pinning_tem}
The \textbf{pinning term} of atom $\phi$ in a semilattice over the constants $C$ is the term with component constants $C-{U^{c}}(\phi)$.
\end{definition}

\subsection{Review of basic results} \label{supplementary:review}

For completeness, we include here a few results with their proofs extracted from \cite{SecondPaperArX}, so this text is self-contained.

\begin{theorem} \label{atomicSegmentFromTermTheorem}
Let $\,t,s \in F_C(\emptyset)$ be two terms that represent two regular elements $\nu_M(t)$ and $\nu_M(s)$ of an atomized model $M$ over a finite set of constants $C$. Let $\phi$ be an atom, $c$ a constant in $C$ and let $a$ be a regular element of $M$:
\begin{enumerate}[label=\roman*), leftmargin=3cm]
\item $\forall  t \forall c (c \in {\bf{C}}(t)  \, \Rightarrow \,  \nu_M(c) \leq \nu_M(t))$,
\item $\phi < \nu_M(t) \, \Leftrightarrow  \exists  c: ((c \in {\bf{C}}(t)) \wedge  (\phi < \nu_M(c)))$,
\item $(\phi < a) \, \Leftrightarrow  \exists  c : ((c \in C) \wedge (\phi < \nu_M(c) \leq a))$,   
\item $L^a_M(\nu_M(t))  = \{ \phi \in M: {\bf{C}}(t) \cap U^c(\phi) \not= \emptyset \},$
\item $L^a_M(\nu_M(s) \odot \nu_M(t)) \, = \,  L^a_M(\nu_M(t)) \cup L^a_M(\nu_M(s))$,
\item $\nu_M(t) \leq \nu_M(s) \, \Leftrightarrow \,  L^a_M(\nu_M(t))  \subseteq L^a_M(\nu_M(s)).$
\end{enumerate}
\end{theorem}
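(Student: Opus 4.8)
The plan is to prove the six items in the order stated, each from the previous ones together with the semilattice axioms for $\odot$ and the atomization axioms (AS3), (AS4). The one structural fact used throughout is that, by Definition \ref{definition:finiteAtomizedSemilattice}, every regular element $a$ of $M$ is either a constant or an idempotent summation of constants, so $a = \nu_M(t)$ for some term $t \in F_C(\emptyset)$; moreover $\nu_M$ is a semilattice homomorphism, so $\nu_M(c_1 \odot \cdots \odot c_n) = \nu_M(c_1) \odot \cdots \odot \nu_M(c_n)$. Throughout I read $\phi < c$ as $\phi < \nu_M(c)$, so that by Definition \ref{definition:upper_constant_segment} one has $\phi < \nu_M(c) \Leftrightarrow c \in U^c(\phi)$.

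For (i): if $c \in \mathbf{C}(t)$, write $t = c \odot t'$; then idempotency, commutativity and associativity give $\nu_M(c) \odot \nu_M(t) = \nu_M(c) \odot \nu_M(c) \odot \nu_M(t') = \nu_M(c) \odot \nu_M(t') = \nu_M(t)$, i.e.\ $\nu_M(c) \leq \nu_M(t)$. For (ii), the direction $\Leftarrow$ is transitivity of the order applied to $\phi < \nu_M(c) \leq \nu_M(t)$, using (i). For $\Rightarrow$, write $t = c_1 \odot \cdots \odot c_n$ and induct on $n$ using (AS4) in the form $\phi < x \odot y \Leftrightarrow (\phi < x) \vee (\phi < y)$: this forces $\phi < \nu_M(c_i)$ for some $i$, and $c_i \in \mathbf{C}(t)$. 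For (iii), again $\Leftarrow$ is transitivity; for $\Rightarrow$, use $a = \nu_M(t)$ for some term $t$, apply (ii) to get $c \in \mathbf{C}(t)$ with $\phi < \nu_M(c)$, and apply (i) to get $\nu_M(c) \leq \nu_M(t) = a$.

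For (iv), unwind the definition of the lower atomic segment: $\phi \in L^a_M(\nu_M(t))$ iff $\phi$ is an atom with $\phi < \nu_M(t)$; by (ii) this holds iff $\phi < \nu_M(c)$ for some $c \in \mathbf{C}(t)$, and by the identification $\phi < \nu_M(c) \Leftrightarrow c \in U^c(\phi)$ this is exactly $\mathbf{C}(t) \cap U^c(\phi) \neq \emptyset$. For (v), observe $\nu_M(s) \odot \nu_M(t) = \nu_M(s \odot t)$ with $\mathbf{C}(s \odot t) = \mathbf{C}(s) \cup \mathbf{C}(t)$, then apply (iv) and distribute the intersection with $U^c(\phi)$ over the union (alternatively apply (AS4) directly). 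Finally (vi) is a direct translation of (AS3): taking $a = \nu_M(t)$, $b = \nu_M(s)$, the axiom says $a \leq b$ iff there is no atom $\phi$ with $\phi < a$ and $\phi \not< b$, which is precisely $L^a_M(\nu_M(t)) \subseteq L^a_M(\nu_M(s))$.

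I do not expect a genuine obstacle here: every step is bookkeeping against the axiom list. The only points requiring care are (a) running the induction in (ii) cleanly on the number of component constants via (AS4), noting that the empty term does not arise since regular elements are nonempty idempotent summations; and (b) keeping the abbreviation $\phi < c \equiv \phi < \nu_M(c)$ consistent so that (iv) and the definition of $U^c(\phi)$ align. Everything else (i), (iii), (v), (vi) is then immediate from (i)/(ii), (AS4), and (AS3).
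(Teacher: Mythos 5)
Your proposal is correct and follows essentially the same route as the paper's own proof: (i) from idempotency and the homomorphism property, (ii) by transitivity in one direction and finitely many applications of (AS4) to the component constants in the other, (iii)–(v) as immediate consequences, and (vi) as a restatement of (AS3). The only cosmetic difference is that the paper derives (i) from $t = t\odot c$ directly rather than splitting off $t = c\odot t'$, which avoids the (harmless) edge case where $t$ is the single constant $c$.
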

\begin{proof}
(i) From $t = t \odot c$ and the natural homomorphism $\nu_M(t) = \nu_M(t \odot c) = \nu_M(t) \odot \nu_M(c)$ we get $\nu_M(c) \leq \nu_M(t)$.   \\
(ii) Right to left, $\phi < \nu_M(c) \leq \nu_M(t)$ follows from (i) and, from and the transitivity of the order relation, $\phi < \nu_M(t)$. Left to right can be proven from the fourth axiom of atomized models $\phi < a \odot b \, \Rightarrow  (\phi < a) \vee (\phi < b)$ applied to the component constants ${\bf{C}}(t)$ of $t$. The number of component constants of $t$ is at least 1 and at most $\vert C \vert$ so it is a finite number and we need to apply this axiom a finite number of times to get $\phi < \nu_M(c)$ for some component constant of $t$. This proves (ii), and (iii) is a consequence of (ii) that follows with just choosing any term $t$ of $F_C(\emptyset)$ to represent element $a = \nu_M(t)$.  \\
(iv) Consider an atom $\phi \in L^a_M(\nu_M(t))$ then $\phi < \nu_M(t)$ and from proposition (ii) there is a component constant $c \in {\bf{C}}(t)$ such that $\phi < \nu_M(c)$ which means that $c \in {\bf{C}}(t) \cap U^c(\phi)$. Conversely, if $c \in {\bf{C}}(t) \cap U^c(\phi)$ then $\phi < \nu_M(c) \leq \nu_M(t)$ and $\phi \in L^a_M(\nu_M(t))$.  \\
(v) Since $\nu_M$ is a homomorphism $\nu_M(s) \odot \nu_M(t) = \nu_M(s \odot t)$ and, using proposition (iv) $L^a_M(\nu_M(s \odot t))  = \{ \phi \in M: {\bf{C}}(s \odot t) \cap U^c(\phi) \not= \emptyset \} = \{ \phi \in M: ({\bf{C}}(s) \cup {\bf{C}}(t)) \cap U^c(\phi) \not= \emptyset \} = L^a_M(\nu_M(t)) \cup L^a_M(\nu_M(s))$. Note that this proposition is an alternative way to write axiom AS4.   \\
(vi) It is straightforward from proposition (v) and AS3. 
\end{proof}	
\bigskip

\begin{theorem} \label{axiomatizationProperties} Assume the axiom AS4 of the atomized semilattice and the antisymmetry of the order relation. \\
i) AS3 implies AS3b, with AS3 an axiom of the atomized semilattice and AS3b the partial order of the semilattice: \[
\forall a \forall b \, (  a \leq b \,  \Leftrightarrow \, a \odot b = b).   \tag{AS3b} \]
ii) Assume $\,\forall a \forall b  (\forall \phi ((\phi < a) \Leftrightarrow (\phi < b)) \, \Rightarrow (a = b))$. Then $AS3b \Rightarrow AS3.$  \\
iii) AS3 implies $\,\forall a \forall b  (\forall \phi ((\phi < a) \Leftrightarrow (\phi < b)) \, \Rightarrow (a = b))$.
\end{theorem}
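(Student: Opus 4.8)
\emph{Proof plan.} The plan is to rewrite all three statements purely in terms of lower atomic segments and then play the primitive order off against the operator $\odot$ using only AS4 and antisymmetry. Recall that on regular elements the relation written $\leq$ is just the primitive order $<$ (hence antisymmetric), and that AS4 says exactly $\phi < a\odot b \Leftrightarrow (\phi<a)\vee(\phi<b)$, i.e.\ $L^{a}(a\odot b)=L^{a}(a)\cup L^{a}(b)$, where $L^{a}(x)$ denotes the set of atoms below $x$. With this vocabulary, AS3 reads $a\leq b \Leftrightarrow L^{a}(a)\subseteq L^{a}(b)$ and AS3b reads $a\leq b \Leftrightarrow a\odot b=b$.

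I would dispatch (iii) first, as it is cleanest and exposes the role of the hypothesis in (ii). Assume AS3 and $\forall\phi\,(\phi<a\Leftrightarrow\phi<b)$; then no atom lies in $a$ but not $b$ and none in $b$ but not $a$, so AS3 gives $a\leq b$ and $b\leq a$, and antisymmetry forces $a=b$. For (i), I would prove $a\leq b\Leftrightarrow a\odot b=b$ directly. If $a\leq b$, then by AS3 $L^{a}(a)\subseteq L^{a}(b)$, so AS4 gives $L^{a}(a\odot b)=L^{a}(a)\cup L^{a}(b)=L^{a}(b)$; hence AS3 yields $a\odot b\leq b$ and $b\leq a\odot b$, whence $a\odot b=b$ by antisymmetry (or apply (iii) directly to $L^{a}(a\odot b)=L^{a}(b)$). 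Conversely, if $a\odot b=b$ then any $\phi<a$ has $\phi<a\odot b=b$ by AS4, so $L^{a}(a)\subseteq L^{a}(b)$ and AS3 gives $a\leq b$.

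For (ii), assume the extensionality hypothesis and AS3b; I would establish AS3 in two directions. If $a\leq b$, then $a\odot b=b$ by AS3b, and AS4 gives $\phi<a\Rightarrow\phi<a\odot b=b$, so there is no atom in $a$ outside $b$. Conversely, if there is no atom in $a$ outside $b$, i.e.\ $L^{a}(a)\subseteq L^{a}(b)$, then AS4 gives $L^{a}(a\odot b)=L^{a}(a)\cup L^{a}(b)=L^{a}(b)$; the extensionality hypothesis upgrades this equality of atomic segments to $a\odot b=b$, and AS3b then returns $a\leq b$.

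The one genuine obstacle is this converse direction of (ii): AS4 lets us compute the atoms below $a\odot b$, but AS3b alone says nothing about when two regular elements sharing the same atoms must coincide — which is exactly why (ii) needs the extra hypothesis while (i) does not (there, antisymmetry applied to the two $\leq$-comparisons does the job, which is also the content of (iii)). Everything else is bookkeeping once the statements are read through $L^{a}(\cdot)$ and AS4.
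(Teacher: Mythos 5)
Your proposal is correct and follows essentially the same route as the paper's own proof: all three parts are reduced to statements about lower atomic segments, AS4 is used to compute $L^{a}(a\odot b)=L^{a}(a)\cup L^{a}(b)$, and antisymmetry (respectively the extensionality hypothesis in (ii)) closes each direction. Your explicit remark that extensionality is exactly what upgrades $L^{a}(a\odot b)=L^{a}(b)$ to $a\odot b=b$ in the converse of (ii) matches the paper's own observation of where that hypothesis is indispensable.
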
 
\begin{proof}
(i) $AS3 \Rightarrow AS3b$: Assume $a \leq b$. AS3 implies $\neg \exists \phi:((\phi < a)  \wedge  (\phi \not< b))$ and then $(\phi < a) \vee (\phi < b) \Leftrightarrow  (\phi < b)$. From here, using AS4, follows $(\phi < a \odot b) \Leftrightarrow (\phi < b)$, and we can use AS3 (right to left this time) to get $(b \leq a \odot b)  \wedge (a \odot b \leq b)$, and from the antisymmetry of the order relation, i.e.\ $(x = y) \Leftrightarrow (x \leq y) \wedge (y \leq x)$, we obtain $a = a \odot b$. \\
Assume $a \odot b = b$. Then $(\phi < a \odot b) \Leftrightarrow (\phi < b)$, and using AS4: $(\phi < a) \vee (\phi < b) \Leftrightarrow  (\phi < b)$ which implies $\neg \exists \phi : ((\phi < a)  \wedge   (\phi \not< b))$ from which, using AS3, we get $a \leq b$.  \\
(ii) $AS3b \Rightarrow AS3$: Assume $a \leq b$. AS3b implies $a \odot b = b$. From AS4 we get that $(\phi < a) \vee (\phi < b) \Leftrightarrow  (\phi < a \odot b = b)$, so $\phi < a$ implies $\phi < b$ and then $\neg \exists \phi : ((\phi < a)  \wedge   (\phi \not< b))$. \\
Assume now that $\neg \exists \phi : ((\phi < a)  \wedge   (\phi \not< b))$. Then $(\phi < a) \vee (\phi < b) \Leftrightarrow  (\phi < b)$ and AS4 leads to $(\phi < a \odot b) \Leftrightarrow  (\phi < b)$. However, we cannot go from here to $a \leq b$ unless we add an axiom such as: $\forall \phi ((\phi < a) \Leftrightarrow (\phi < b)) \, \Rightarrow (a = b)$.  \\
(iii) From $\forall \phi ((\phi < a) \Leftrightarrow (\phi < b))$ follows, using AS3, that $(a \leq b) \wedge (b \leq a)$, and by the antisymmetry of the order relation we get $a = b$.
\end{proof}
\bigskip

\begin{theorem} \label{atomIndependentFromTheRestTheorem}
Let $t,s \in F_C(\emptyset)$ be two terms. If an atom $\phi$ of a model $M$ discriminates a duple $(t ,s)$ in $M$ then it discriminates $(t ,s)$ in any model that contains $\phi$.
\end{theorem}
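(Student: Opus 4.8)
The plan is to reduce the notion ``$\phi$ discriminates $(t,s)$'' to a condition that mentions only the syntactic data of the terms $t,s$ and the upper constant segment of $\phi$, and then observe that neither of these depends on the ambient model. First I would unpack the definitions: by Definition \ref{definition:discriminant}, saying that $\phi$ discriminates $(t,s)$ in $M$ means precisely $\phi \in L^a_M(\nu_M(t))$ and $\phi \notin L^a_M(\nu_M(s))$, i.e.\ $\phi < \nu_M(t)$ and $\phi \not< \nu_M(s)$.

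Next I would invoke Theorem \ref{atomicSegmentFromTermTheorem}(iv), which gives $L^a_M(\nu_M(t)) = \{\psi \in M : \mathbf{C}(t) \cap U^c(\psi) \neq \emptyset\}$, and likewise for $s$. Applying this to the atom $\phi$ yields the equivalences $\phi < \nu_M(t) \Leftrightarrow \mathbf{C}(t) \cap U^c(\phi) \neq \emptyset$ and $\phi \not< \nu_M(s) \Leftrightarrow \mathbf{C}(s) \cap U^c(\phi) = \emptyset$. Thus ``$\phi$ discriminates $(t,s)$ in $M$'' is equivalent to the single statement
\[
\big(\mathbf{C}(t) \cap U^c(\phi) \neq \emptyset\big) \wedge \big(\mathbf{C}(s) \cap U^c(\phi) = \emptyset\big),
\]
in which the model $M$ no longer appears. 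The component constants $\mathbf{C}(t)$ and $\mathbf{C}(s)$ are fixed by the terms themselves (Definition \ref{definition:component_constants}), and the upper constant segment $U^c(\phi)$ is, by Definition \ref{definition:atomInAsAsetDefinition} and AS5, the defining datum of the atom $\phi$, shared by every model that contains $\phi$. Hence the displayed condition holds in an arbitrary model $M'$ containing $\phi$ if and only if it holds in $M$; translating back through Theorem \ref{atomicSegmentFromTermTheorem}(iv) in $M'$, we conclude that $\phi$ discriminates $(t,s)$ in $M'$ as well.

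The argument is essentially bookkeeping, so I do not anticipate a genuine obstacle; the only point deserving care is the justification that ``$\phi$ in $M'$'' legitimately refers to the atom with the same upper constant segment $U^c(\phi)$ — that is, that the upper constant segment is a model-invariant label for an atom rather than a model-relative one. This is exactly the content guaranteed by Definition \ref{definition:atomInAsAsetDefinition} together with axiom AS5, and it is what makes the reduction above meaningful; once it is stated explicitly, the rest is immediate.
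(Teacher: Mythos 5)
Your proof is correct and rests on the same underlying fact as the paper's: that whether $\phi$ discriminates $(t,s)$ depends only on $\mathbf{C}(t)$, $\mathbf{C}(s)$ and the universally defined upper constant segment $U^c(\phi)$. The only cosmetic difference is the route: the paper transfers the property $M \to F_C(\emptyset) \to M'$ by applying Theorem~\ref{homomorphismTheorem}(ii) twice, whereas you extract the explicit model-free criterion $\bigl(\mathbf{C}(t) \cap U^c(\phi) \neq \emptyset\bigr) \wedge \bigl(\mathbf{C}(s) \cap U^c(\phi) = \emptyset\bigr)$ directly from Theorem~\ref{atomicSegmentFromTermTheorem}(iv); since Theorem~\ref{homomorphismTheorem}(ii) is itself proved from Theorem~\ref{atomicSegmentFromTermTheorem}, the two arguments are essentially the same, and your explicit remark that AS5 makes $U^c(\phi)$ a model-invariant label for the atom is exactly the point that makes either version work.
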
 
\begin{proof}
Consider two atomized models $M$ and $N$ that contain $\phi$. If $\phi$ discriminates a duple $(t ,s)$ in $M$ from part (ii) of Theorem \ref{homomorphismTheorem} follows that $\phi$ discriminates $(t,s)$ in $F_C(\emptyset)$ and then using part (ii) again $\phi$ also discriminates $(t,s)$ in $N$.
\end{proof}	
\bigskip

\begin{theorem} \label{homomorphismTheorem}
Let $t,s \in F_C(\emptyset)$ be two terms and $M$ an atomized model. Let $\phi$ be an atom and $\nu_M : F_C(\emptyset) \rightarrow M$ the natural homomorphism.
\begin{enumerate}[label=\roman*), leftmargin=3cm]
\item $(F_C(\emptyset) \models (t \leq  s)) \, \Rightarrow \,  M \models (\nu_M(t) \leq \nu_M(s))$
\item$(\phi \in M ) \wedge (F_C(\emptyset) \models (\phi < s)) \, \Leftrightarrow \,  M \models (\phi < \nu_M(s)) $
\end{enumerate}
\end{theorem}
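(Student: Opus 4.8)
The plan is to reduce both parts to the combinatorial description of lower atomic segments in Theorem~\ref{atomicSegmentFromTermTheorem}, after one preliminary observation: $F_C(\emptyset)$ is itself an atomized semilattice over $C$, and its atom set contains \emph{every} atom over $C$. Indeed, by AS1 the upper constant segment $U^c(\phi)$ of any atom is a nonempty subset of $C$; the non-redundant atoms of $F_C(\emptyset)$ are exactly the singletons $\phi[c]$ with $c\in C$ (Theorem~\ref{freestModelTheorem}); and these generate all atoms of the model by union (Theorem~\ref{compositionTheorem}), with $\phi[c_1]\bigtriangledown\cdots\bigtriangledown\phi[c_k]$ having upper constant segment $\{c_1,\dots,c_k\}$. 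Hence every atom is an element of $F_C(\emptyset)$, and by Theorem~\ref{atomicSegmentFromTermTheorem}(ii) applied with $M=F_C(\emptyset)$, the assertion $F_C(\emptyset)\models(\phi<s)$ is equivalent to $\mathbf{C}(s)\cap U^c(\phi)\neq\emptyset$.

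For part (i) I would use that $\nu_M$ is a semilattice homomorphism. By Theorem~\ref{axiomatizationProperties}(i), $F_C(\emptyset)\models(t\leq s)$ means $t\odot s=s$ in $F_C(\emptyset)$; applying $\nu_M$ gives $\nu_M(t)\odot\nu_M(s)=\nu_M(s)$, which by Theorem~\ref{axiomatizationProperties}(i) again is $M\models(\nu_M(t)\leq\nu_M(s))$. Equivalently, in the atomic language of part (ii): $F_C(\emptyset)\models(t\leq s)$ forces $\mathbf{C}(t)\subseteq\mathbf{C}(s)$ by testing against the singleton atoms, hence $L^a_M(\nu_M(t))\subseteq L^a_M(\nu_M(s))$ by Theorem~\ref{atomicSegmentFromTermTheorem}(iv), hence $\nu_M(t)\leq\nu_M(s)$ by Theorem~\ref{atomicSegmentFromTermTheorem}(vi).

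For part (ii) I would build on the identity in Theorem~\ref{atomicSegmentFromTermTheorem}(iv): for any atomized model $N$ over $C$, $L^a_N(\nu_N(s))=\{\phi\in N:\mathbf{C}(s)\cap U^c(\phi)\neq\emptyset\}$. With $N=M$ this reads $M\models(\phi<\nu_M(s))\Leftrightarrow(\phi\in M)\wedge(\mathbf{C}(s)\cap U^c(\phi)\neq\emptyset)$, where $\phi\in M$ is already built into membership of $L^a_M(\nu_M(s))$ (Definition~\ref{definition:lower_atomic_segment}), so the conjunct $\phi\in M$ on the left of the claimed bi-implication is not an extra hypothesis but exactly the matching clause. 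With $N=F_C(\emptyset)$, and using the preliminary observation that every atom lies in $F_C(\emptyset)$, it reads $F_C(\emptyset)\models(\phi<s)\Leftrightarrow\mathbf{C}(s)\cap U^c(\phi)\neq\emptyset$. Chaining these two equivalences yields $(\phi\in M)\wedge(F_C(\emptyset)\models(\phi<s))\Leftrightarrow M\models(\phi<\nu_M(s))$, which settles both implications simultaneously.

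I expect the only genuinely delicate point is the bookkeeping around this ``universality'' of atoms: one must be careful that the symbol $\phi$ is simultaneously an atom of $M$ and a (possibly redundant) atom of $F_C(\emptyset)$, so that the two lower-atomic-segment descriptions can be compared. Once Definition~\ref{definition:atomInAsAsetDefinition} (an atom is determined by its upper constant segment, uniformly across models) is invoked together with the fact that the singleton atoms of $F_C(\emptyset)$ generate all atoms, this identification is immediate, and everything else is a mechanical application of Theorem~\ref{atomicSegmentFromTermTheorem} and the homomorphism property of $\nu_M$.
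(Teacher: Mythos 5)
Your argument is correct and is essentially the paper's: both proofs reduce $\phi<\nu_N(s)$ in an arbitrary atomized model $N$ to the combinatorial condition $\mathbf{C}(s)\cap U^c(\phi)\neq\emptyset$ together with $\phi\in N$ (you via Theorem~\ref{atomicSegmentFromTermTheorem}(iv) applied in both $M$ and $F_C(\emptyset)$, the paper via part (iii), a witnessing constant, and transitivity), and part (i) is the standard homomorphism observation in both. One caveat: you justify the preliminary claim that every atom lies in $F_C(\emptyset)$ by citing Theorems~\ref{freestModelTheorem} and~\ref{compositionTheorem}, which in the paper's development are proved downstream of this very theorem (through Theorem~\ref{redundantAtom} and Theorem~\ref{atomIndependentFromTheRestTheorem}), so taken literally that citation chain is circular; the fact itself is exactly what the paper assumes with ``this is always true for any atom,'' and it can be obtained non-circularly from Theorem~\ref{unionProperties} alone, since any atom is the union of the singleton atoms of its upper constant segment and, by Theorem~\ref{unionProperties}(iii)--(iv), adjoining such a union to $F_C(\emptyset)$ alters no duple.
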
 
\begin{proof}
i) This proposition is a well-known fact that follows from the fact that $\nu_M$ is a homomorphism. Proposition (i) is provided here for comparison with proposition (ii). \\ 
ii) Note that we use here the same atom $\phi$ in the contexts of two atomized models, $F_C(\emptyset)$ and $M$. Left to right: using Theorem \ref{atomicSegmentFromTermTheorem} (iii), $F_C(\emptyset) \models (\phi < s)$ implies that there is some constant $c$ such that $F_C(\emptyset) \models (\phi < c \leq s)$ and then, from the natural homomorphism, $M \models (\nu_M(c) \leq \nu_M(s))$.  From Definition \ref{definition:atomInAsAsetDefinition}, $F_C(\emptyset) \models (\phi < c)$ requires $c \in U^c(\phi)$ and then, because we assume $\phi \in M$, the same definition allows us to write $M \models (\phi < \nu_M(c))$.  Using the transitive property of the order relation $M \models (\phi < \nu_M(s))$.  \\ 
Right to left is essentially the same proof as left to right except for the fact that we do not need to require $\phi \in F_C(\emptyset)$ as this is always true for any atom.
\end{proof}	
\bigskip

\begin{theorem} \label{redundantAtom}
An atom $\phi$ of an atomized model $M$ can be eliminated without altering the model if and only if $\phi$ is redundant.
\end{theorem}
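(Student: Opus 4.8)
The plan is to carry out everything in terms of atoms, using two facts from Theorem \ref{atomicSegmentFromTermTheorem}: for any term $t$ and any atomized model $X$ over $C$ one has $L^a_X(t)=\{\psi\in X:\mathbf{C}(t)\cap U^c(\psi)\neq\emptyset\}$ (part (iv)) and $X\models(a\leq b)\iff L^a_X(a)\subseteq L^a_X(b)$ (part (vi)). Write $M'=M\setminus\{\phi\}$ for the atom set obtained by deleting $\phi$; part (iv) gives $L^a_{M'}(t)=L^a_M(t)\setminus\{\phi\}$ for every term $t$, so every duple positive in $M$ stays positive in $M'$. Hence ``eliminating $\phi$ alters the model'' means precisely that some duple $(a,b)$ is negative in $M$ but positive in $M'$, i.e. that $\phi$ is the \emph{only} atom of $M$ discriminating that duple. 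The theorem thus reduces to the equivalence: $\phi$ is redundant $\iff$ every duple discriminated by $\phi$ in $M$ is also discriminated by some other atom of $M$ (for the ``$\Leftarrow$'' direction one must additionally confirm $M'$ is still an atomized model). I will also use that the redundancy of Definition \ref{definition:redundant_atom} is equivalent, via AS5, to saying $U^c(\phi)$ is the union of the upper constant segments of atoms of $M$ that are strictly narrower than $\phi$ (Definition \ref{definition:atom_wider_than_another_atom}).

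For the ``$\Leftarrow$'' direction (redundant $\Rightarrow$ removable), I would first check $M'$ satisfies the atomized-semilattice axioms; the only nontrivial one is AS6, and for a constant $c\in U^c(\phi)$ redundancy supplies an atom $\eta\in M$, $\eta\neq\phi$, with $c\in U^c(\eta)$, so $c$ still has an atom below it (constants outside $U^c(\phi)$ are untouched, and general regular elements inherit AS6 through part (v)). Then, given any duple $(a,b)$ discriminated by $\phi$ in $M$ — which by part (iv) means $\mathbf{C}(a)\cap U^c(\phi)\neq\emptyset$ and $\mathbf{C}(b)\cap U^c(\phi)=\emptyset$ — I pick $c\in\mathbf{C}(a)\cap U^c(\phi)$ and, by redundancy, an atom $\eta\in M$, $\eta\neq\phi$, with $c\in U^c(\eta)\subsetneq U^c(\phi)$. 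Since $c\in\mathbf{C}(a)\cap U^c(\eta)$ and $\mathbf{C}(b)\cap U^c(\eta)\subseteq\mathbf{C}(b)\cap U^c(\phi)=\emptyset$, the atom $\eta\in M'$ discriminates $(a,b)$ as well. Therefore $M$ and $M'$ satisfy exactly the same duples.

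For the ``$\Rightarrow$'' direction I would argue by contraposition, the essential device being the pinning term $p_\phi$ of $\phi$ (Definition \ref{definition:pinning_tem}), whose component constants are $C\setminus U^c(\phi)$; by part (iv), $L^a_M(p_\phi)=\{\psi\in M:U^c(\psi)\not\subseteq U^c(\phi)\}$, so in particular $\phi\notin L^a_M(p_\phi)$. Suppose $\phi$ is not redundant, witnessed by a constant $c\in U^c(\phi)$ such that no atom of $M$ lying under $c$ is strictly narrower than $\phi$; using AS5 to rule out equal upper segments, this says every $\eta\in M\setminus\{\phi\}$ with $c\in U^c(\eta)$ satisfies $U^c(\eta)\not\subseteq U^c(\phi)$, i.e. $\eta\in L^a_M(p_\phi)$. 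Now the duple $c\leq p_\phi$ is negative in $M$ because $\phi\in L^a_M(c)\setminus L^a_M(p_\phi)$, yet in $M'$ every atom of $L^a_{M'}(c)=L^a_M(c)\setminus\{\phi\}$ lies in $L^a_M(p_\phi)=L^a_{M'}(p_\phi)$, so $M'\models(c\leq p_\phi)$; deleting $\phi$ has changed the theory. The only degenerate case is $U^c(\phi)=C$, where $p_\phi$ is empty; but then non-redundancy forces $\phi$ to be the unique atom under $c$, so $M'$ violates AS6 and $\phi$ cannot be deleted at all, and the contrapositive holds vacuously.

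I expect the converse to be the only part with genuine content: one has to produce an explicit duple that becomes provable after removing $\phi$, and the pinning term is exactly what converts ``$\psi$ lies below $\phi$'s upper segment'' into a condition ($\psi\notin L^a(p_\phi)$) that a duple can test. Everything else is bookkeeping — the monotonicity of atom-deletion on theories, the axiom check for $M'$ (essentially only AS6), and repeated application of parts (iv)–(vi) of Theorem \ref{atomicSegmentFromTermTheorem}.
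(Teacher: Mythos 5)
Your proposal is correct and follows essentially the same route as the paper's proof: the forward direction replaces the discriminating atom $\phi$ by a strictly narrower atom $\eta$ below the same constant, and the converse uses the pinning duples $(c,T_\phi)$ to force, for each $c\in U^c(\phi)$, a narrower discriminating atom (you phrase this contrapositively, the paper directly). Your explicit treatment of the degenerate case $U^c(\phi)=C$ and the AS6 check for $M\setminus\{\phi\}$ are small points of extra care that the paper's proof leaves implicit.
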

\begin{proof}
Let $Th^{+}(M)$ be the set of all positive duples satisfied by the regular elements of $M$ (all elements are regular except the atoms). Since positive duples do not become negative when atoms are eliminated, taking out an atom $\phi$ from a model $M$ produces a model $N$ of $Th^{+}(M)$. Therefore, when removing an atom we only need to worry about negative duples that may become positive. 
To prove that a redundant atom can be eliminated let $a$ and $b$ be a pair of regular elements and $(a \not\leq b)$ a negative duple satisfied by $M$ and discriminated by an atom $\phi < c \leq a$ where $c$ is some constant. If $\phi$ is redundant there is an atom $\eta < c$ in $M$ such that $\phi$ is larger than $\eta$. Suppose $\eta < b$. There is a constant $e$ such that $\eta < e \leq b$. Because $\phi$ is larger than $\eta$ then $\phi < e \leq b$ should also hold contradicting the assumption that $\phi \in {dis_M}(a, b)$. We have proved that $N \models (\eta \not< b)$. Therefore, any negative duple of $M$ is also satisfied by $N$. If $N$ models the same positive and negative duples than $M$ then the subalgebras of $M$ and $N$ spawned by the regular elements are isomorphic. \\
Conversely, assume atom $\phi$ can be eliminated without altering $M$. The pinning term $T_{\phi}$ is the idempotent summation of all the constants in the set $C \setminus U^{c}(\phi)$.  For each constant $c$ such that $\phi < c$ we have $\phi \in {dis_M}(c, T_{\phi})$. That $\phi$ discriminates duple $(c, T_{\phi})$ follows from Theorem \ref{atomIndependentFromTheRestTheorem}, the fact that $c$ and $T_{\phi}$ are terms and $F_C(\emptyset) \models (\phi \not< T_{\phi})$. If $\phi$ can be eliminated without altering $M$, for each constant $c$ in $U^{c}(\phi)$ there should be some atom $\eta_c < c$ with $\eta_c \not= \phi$ in $M$ discriminating the duple $(c, T_{\phi})$ which implies $T_{\phi} < T_{\eta_c}$ or, equivalently, that $\phi$ is larger than $\eta_c$. Hence, for each constant such $\phi < c$ there is an $\eta_c \in M$ such that $\phi$ is larger than $\eta_c$ which proves $\phi$ is redundant.
\end{proof}	
\bigskip

\begin{theorem} \label{uniquedupleTheorem}
Let $M$ be a model and $\phi$ a non-redundant atom of $M$ such that there is at least one constant in $C$ that is not in the upper constant segment of $\phi$. There is at least one pinning duple that is discriminated by $\phi$ and only by $\phi$. 
\end{theorem}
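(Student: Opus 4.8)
The plan is to realize the required pinning duple as one of the duples $d_{c} \equiv (c, T_{\phi})$, where $T_{\phi}$ is the pinning term of $\phi$ and $c$ ranges over $U^{c}(\phi)$. Note first that the hypothesis $C \setminus U^{c}(\phi) \neq \emptyset$ guarantees that $T_{\phi}$ is a genuine term, so each $d_{c}$ is well defined. I would then check that $\phi$ discriminates every $d_{c}$ in $M$: the relation $\phi < c$ holds in $M$ by Definition \ref{definition:atomInAsAsetDefinition} since $c \in U^{c}(\phi)$, while $\phi \not< T_{\phi}$ holds because the component constants of $T_{\phi}$ are disjoint from $U^{c}(\phi)$, giving $F_{C}(\emptyset) \models (\phi \not< T_{\phi})$ by Theorem \ref{atomicSegmentFromTermTheorem}(ii), which transfers to $M$ by Theorem \ref{atomIndependentFromTheRestTheorem} (equivalently Theorem \ref{homomorphismTheorem}(ii)). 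Hence $\phi \in dis_{M}(c, T_{\phi})$ for all $c \in U^{c}(\phi)$.

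The key step is to characterize which atoms of $M$ discriminate a given $d_{c}$. An atom $\eta \in M$ lies in $dis_{M}(c, T_{\phi})$ iff $\eta < c$ and $\eta \not< T_{\phi}$; the first condition is equivalent to $c \in U^{c}(\eta)$, and the second, via Theorem \ref{atomicSegmentFromTermTheorem}(ii) applied to the component constants $C \setminus U^{c}(\phi)$ of $T_{\phi}$, is equivalent to $U^{c}(\eta) \cap (C \setminus U^{c}(\phi)) = \emptyset$, i.e.\ $U^{c}(\eta) \subseteq U^{c}(\phi)$. So the atoms of $M$ discriminating $d_{c}$ are exactly those $\eta \in M$ with $c \in U^{c}(\eta) \subseteq U^{c}(\phi)$; this set always contains $\phi$, and by axiom AS5 any other member $\eta$ must satisfy $U^{c}(\eta) \subsetneq U^{c}(\phi)$, i.e.\ $\phi$ is wider than $\eta$ in the sense of Definition \ref{definition:atom_wider_than_another_atom}.

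I would finish by contradiction: if no $d_{c}$ were discriminated by $\phi$ alone, then for each $c \in U^{c}(\phi)$ there would be an atom $\eta_{c} \neq \phi$ in $M$ with $\eta_{c} < c$ and $\phi$ wider than $\eta_{c}$. But this is precisely the defining property of $\phi$ being redundant in $M$ (Definition \ref{definition:redundant_atom}), contradicting the hypothesis. Hence some $d_{c} = (c, T_{\phi})$ is discriminated by $\phi$ and by no other atom of $M$. I do not anticipate a real obstacle; the only delicate points are transporting the relation $\phi \not< T_{\phi}$ between $F_{C}(\emptyset)$ and $M$, and the use of AS5 to make the inclusion $U^{c}(\eta) \subseteq U^{c}(\phi)$ strict, after which the argument is just the observation that ``every pinning duple of $\phi$ is discriminated by some strictly narrower atom of $M$'' is the definition of redundancy.
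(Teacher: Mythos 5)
Your proposal is correct and follows essentially the same route as the paper's own proof: show $\phi \in dis_{M}(c, T_{\phi})$ for every $c \in U^{c}(\phi)$, observe that any other discriminating atom must be strictly narrower than $\phi$, and conclude by contradiction with non-redundancy. You merely fill in more detail than the paper does (the characterization of $dis_{M}(c,T_{\phi})$ via Theorem \ref{atomicSegmentFromTermTheorem} and the appeal to AS5 to make the inclusion strict), all of which is sound.
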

\begin{proof}
As in the proof of Theorem \ref{redundantAtom}, for each constant $c \in U^{c}(\phi)$ we have $\phi \in {dis_M}(c, T_{\phi})$ where $T_{\phi}$ is the pinning term of $\phi$. This is a consequence of Theorem \ref{atomIndependentFromTheRestTheorem} and $F_C(\emptyset) \models (\phi \not< T_{\phi})$. If, for a pinning duple $(c, T_{\phi})$ there is another atom $\varphi$ of $M$ that discriminates $(c, T_{\phi})$ then $\phi$ is larger than $\varphi$ and, if the same is true for every pinning duple, then $\phi$ is redundant with $M$, which is against our assumptions. Therefore, there should be at least one constant $c$ such that $(c, T_{\phi})$ is discriminated only by $\phi$.
\end{proof}	
\bigskip

\begin{theorem} \label{unionProperties}
Let $x$ be a regular element of a model M and let $\phi$, $\psi$ and $\eta$ be atoms of M. The union of atoms has the properties:
\begin{enumerate}[label=\roman*), leftmargin=3cm]
\item $\phi \bigtriangledown \phi = \phi$, 
\item $\bigtriangledown \text{ is commutative and associative}$, 
\item $\phi < x  \Rightarrow  (\phi \bigtriangledown \psi < x)$, 
\item $(\phi \bigtriangledown \psi < t)  \Leftrightarrow (\phi < x) \vee  (\psi < x)$,
\item $(\phi \bigtriangledown \psi < x) \wedge (\phi \not< x)  \Rightarrow (\psi < x)$.
\item $\phi \text{ is larger or equal to } \eta \text{ if and only if }  \phi = \phi \bigtriangledown \eta.$
\end{enumerate}
\end{theorem}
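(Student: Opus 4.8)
The whole statement is a translation of elementary facts about finite sets through the dictionary ``atom $\longleftrightarrow$ its upper constant segment''. By Definition~\ref{definition:atom_union_of_atoms} we have $U^{c}(\phi\bigtriangledown\psi)=U^{c}(\phi)\cup U^{c}(\psi)$, and by axiom AS5 an atom is determined by its upper constant segment, so $\bigtriangledown$ is, under this identification, just $\cup$ on nonempty subsets of $C$. Items (i), (ii) and (vi) are then immediate: (i) is the set identity $U^{c}(\phi)\cup U^{c}(\phi)=U^{c}(\phi)$; (ii) is the commutativity and associativity of $\cup$; and (vi), once we recall from Definition~\ref{definition:atom_wider_than_another_atom} that ``$\phi$ larger or equal to $\eta$'' means $U^{c}(\eta)\subseteq U^{c}(\phi)$, is the equivalence $U^{c}(\eta)\subseteq U^{c}(\phi)$ iff $U^{c}(\phi)\cup U^{c}(\eta)=U^{c}(\phi)$ --- in each case followed by a single appeal to AS5 to turn equality of segments into equality of atoms.

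For (iii)--(v) the plan is first to record the combinatorial description of the order between an atom and a regular element: if $x=\nu_{M}(t)$ for a term $t$, then for any atom $\alpha$ the relation $\alpha<x$ holds precisely when $U^{c}(\alpha)\cap {\bf{C}}(t)\neq\emptyset$. This is Theorem~\ref{atomicSegmentFromTermTheorem}(iv) (ultimately a reformulation of AS4), and --- this being the content of ``atoms are universally defined'' --- the condition does not depend on the ambient model, by Theorem~\ref{homomorphismTheorem}(ii) together with Theorem~\ref{atomIndependentFromTheRestTheorem}. Granting this, (iv) is just the distributivity of $\cap$ over $\cup$: the statement $\phi\bigtriangledown\psi<x$ means $(U^{c}(\phi)\cup U^{c}(\psi))\cap {\bf{C}}(t)\neq\emptyset$, which holds iff $U^{c}(\phi)\cap {\bf{C}}(t)\neq\emptyset$ or $U^{c}(\psi)\cap {\bf{C}}(t)\neq\emptyset$, i.e.\ iff $\phi<x$ or $\psi<x$. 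Then (iii) is the instance of (iv) in which the first disjunct is assumed, and (v) is read off from (iv) by noting that if the disjunction holds while $\phi\not<x$, the remaining disjunct $\psi<x$ must hold.

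The only point that needs care --- and the closest thing to an obstacle here --- is bookkeeping which structure each atom inhabits: $\phi\bigtriangledown\psi$ need not itself be an atom of the model $M$ whose order carries $x$, so (iii)--(v) must be read via the model-independent atom/term order recorded above (equivalently, carry out the argument in $F_{C}(\emptyset)$, where it is literally about intersections of subsets of $C$ with ${\bf{C}}(t)$, and transport the conclusions back to $M$ along $\nu_{M}$ using Theorem~\ref{homomorphismTheorem}(ii)). Once this is pinned down, nothing remains beyond the set-theoretic identities listed above.
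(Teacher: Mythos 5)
Your proposal is correct and follows essentially the same route as the paper's proof: reduce (i), (ii), (vi) to set identities on upper constant segments, and handle (iii)--(v) by representing $x$ as $\nu_M(t)$ and invoking the component-constant characterization of $\alpha<x$ from Theorem~\ref{atomicSegmentFromTermTheorem}. Your derivation of (iii) and (v) as corollaries of (iv) via distributivity is a slightly tidier organization (the paper proves them separately and leaves (v) to the reader), but the key lemma and the overall argument are the same.
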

\begin{proof}
An atom is determined by the constants in its upper segment, therefore atom $\phi \bigtriangledown \psi$ is fully defined by $U^c(\phi \bigtriangledown \psi) = U^c(\phi) \cup U^c(\psi)$ and then (i) and (ii) follow from the idempotence, commutativity and associativity of the union of sets.  Choose a term $t \in F_C(\emptyset)$ such $x = \nu(t)$ with $\nu$ the natural homomorphism of $F_C(\emptyset)$ onto $M$. From Theorem \ref{atomicSegmentFromTermTheorem} we know
\[
 L^a_M(\nu(t))  = \{ \phi \in M: {\bf{C}}(t) \cap U^c(\phi) \not= \emptyset \},
\] 
which shows how to calculate the lower atomic segment of an element represented with any term by using the component constants of the term. We can use this duple to prove the other statements. $\phi < x$ implies that exists $c \in {\bf{C}}(t)$ such that $x = \nu(t)$ and $\phi < c$, hence, $c \in U^c(\phi \bigtriangledown \psi)\,$ so $\,\phi \bigtriangledown \psi  < c \leq x$ and (iii) follows. (iv) right to left says that there is a constant $c \in {\bf{C}}(t)$ and a term $t$ such that $x = \nu(t)$ and $(\phi < c) \vee (\psi < c)$ which implies $\phi \bigtriangledown \psi  < c \leq x$. To prove (iv) left to right, write the right side as $\exists t \exists c (x = \nu(t) \, \wedge \, c \in {\bf{C}}(t))$ such that $\phi \bigtriangledown \psi < c$, which implies $c \in U^c(\phi) \cup U^c(\psi)$ and then $(\phi < c) \vee (\psi < c)$ that, together with $c \leq x$, yields $(\phi < x) \vee (\psi < x)$. (v) can be proved in the same way then the others and is left to the reader.
(vi) $\phi$ is larger than, or equal to, atom $\eta$ if and only if for every constant $c$, $(\eta < c) \Rightarrow (\phi < c)$ or, in other words, ${U^{c}}(\eta) \subseteq {U^{c}}(\phi)$. It follows that $U^c(\phi \bigtriangledown \eta) = U^c(\phi) \cup U^c(\eta) = U^c(\phi)$. Hence, $\phi$ is larger than or equal to atom $\eta$ if and only if $\phi = \phi \bigtriangledown \eta$.  
\end{proof}
\bigskip

\begin{theorem} \label{redundantIsUnionOfAtomsTheorem}
Let $M$ be an atomized model over a finite set $C$ of constants. Let $\phi$ be an atom that may or may not be in the atomization of $M$. \\
i) $\phi$ is redundant with $M$ if and only if it is a union $\phi = \bigtriangledown_{i} \eta_{i}$ of atoms of $M$ such that $\forall i (\phi \not= \eta_{i})$. \\
ii) $\phi$ is redundant with $M$ if and only if it is a union of two or more non-redundant atoms of $M$.
\end{theorem}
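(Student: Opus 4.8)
The plan is to reduce everything to two facts that are already available: an atom is completely determined by its upper constant segment (Definition \ref{definition:atomInAsAsetDefinition}, cf.\ Theorem \ref{atomicSegmentFromTermTheorem}(iv)), and $U^{c}(\bigtriangledown_{i}\eta_{i})=\bigcup_{i}U^{c}(\eta_{i})$ (Definition \ref{definition:atom_union_of_atoms} together with the associativity of $\bigtriangledown$ in Theorem \ref{unionProperties}(ii), which makes finite unions well defined). I read ``$\phi$ is a union of atoms $\eta_{i}$'' as requiring every $\eta_{i}\neq\phi$; this is automatic, since $\phi=\bigtriangledown_{i}\eta_{i}$ forces $\bigcup_{i}U^{c}(\eta_{i})=U^{c}(\phi)$, so a summand equal to $\phi$ already accounts for the whole union, and hence every genuine decomposition of $\phi$ has at least two summands. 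For (i) forward: if $\phi$ is redundant then, by Definition \ref{definition:redundant_atom}, for each $c\in U^{c}(\phi)$ there is $\eta_{c}\in M$ with $c\in U^{c}(\eta_{c})\subsetneq U^{c}(\phi)$ (the strict inclusion being ``$\phi$ wider than $\eta_{c}$'', so $\eta_{c}\neq\phi$); the union $\psi=\bigtriangledown_{c\in U^{c}(\phi)}\eta_{c}$ is finite and non-empty because $C$ is finite and $U^{c}(\phi)\neq\emptyset$ by AS1, and $U^{c}(\psi)=\bigcup_{c}U^{c}(\eta_{c})=U^{c}(\phi)$ gives $\psi=\phi$. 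For (i) backward: given $\phi=\bigtriangledown_{i}\eta_{i}$ with each $\eta_{i}\in M$ and $\eta_{i}\neq\phi$, any $c\in U^{c}(\phi)$ lies in some $U^{c}(\eta_{i})$, and $U^{c}(\eta_{i})\subsetneq U^{c}(\phi)$ (equality would force $\eta_{i}=\phi$), so $\eta_{i}<c$ with $\phi$ wider than $\eta_{i}$; Definition \ref{definition:redundant_atom} then yields that $\phi$ is redundant with $M$.

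For (ii), the backward direction is just the backward direction of (i) applied to summands that happen to be non-redundant, with ``two or more'' automatic as above. For (ii) forward I would argue by strong induction on the width $w(\phi)=|U^{c}(\phi)|$, which is finite and at least $1$ by AS1. If $\phi$ is redundant, (i) writes $\phi=\bigtriangledown_{i}\eta_{i}$ with $\eta_{i}\in M$ and $U^{c}(\eta_{i})\subsetneq U^{c}(\phi)$, hence $w(\eta_{i})<w(\phi)$. For each $i$, keep $\eta_{i}$ if it is non-redundant; if $\eta_{i}$ is redundant, replace it by the decomposition into non-redundant atoms of $M$ provided by the induction hypothesis, all of width $<w(\eta_{i})$ and therefore $\neq\phi$. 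Collecting the resulting pieces (discarding repetitions) expresses $\phi$ as a union of non-redundant atoms of $M$, all distinct from $\phi$, and necessarily at least two of them since no single one equals $\phi$. The base case $w(\phi)=1$ is vacuous: a width-$1$ atom cannot be redundant, because redundancy would require a strictly narrower atom of $M$, i.e.\ one with empty upper constant segment, contradicting AS1.

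I expect the forward direction of (ii) to be the only real obstacle: passing from ``$\phi$ is a union of atoms of $M$'' (part (i)) to ``$\phi$ is a union of \emph{non-redundant} atoms of $M$'' requires an iterated decomposition, and its termination must be justified. Finiteness of $C$ is precisely what supplies the well-founded measure — the width strictly decreases at every decomposition step because each proper summand is strictly narrower than the atom it decomposes — and this is the one place the finiteness hypothesis is essential. The only other subtlety is the convention on whether summands may coincide with $\phi$; as explained, the intended reading forbids it, and in any case every genuine decomposition satisfies this automatically.
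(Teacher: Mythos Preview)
Your proof is correct and follows essentially the same approach as the paper: the forward direction of (i) collects the witnesses $\eta_c$ guaranteed by the definition of redundancy and takes their union, the backward direction reads redundancy directly off the strict inclusion $U^{c}(\eta_i)\subsetneq U^{c}(\phi)$, and part (ii) is obtained by iterating (i) on strictly narrower summands until non-redundant atoms are reached. Your version is slightly more careful than the paper's, which handles (ii) with an informal descent (``can be further expressed as unions of atoms of $M$ with ever smaller upper constant segments until reaching non-redundant atoms'') rather than an explicit induction, and does not spell out why the decomposition must involve at least two summands; your treatment of these points is a welcome clarification but not a different argument.
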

\begin{proof}
If $\phi$ is redundant, for each constant such that $\phi < c$ there is an atom $\eta_i$ of $M$ such that $\phi$ is larger than $\eta_i$ and $\eta_i < c$. Theorem \ref{unionProperties} assures us that if $\phi$ is larger than $\eta_i$ then $\phi = \phi \bigtriangledown \eta_i$ and, since for each constant $c \in {U^{c}}(\phi)$ there is some $\eta_i$ such that $\eta_i < c$ then ${U^{c}}(\phi) = \cup_{i}{U^{c}}(\eta_{i})$. It follows $\phi = \bigtriangledown_{i} \eta_{i}$. Conversely if $\phi$ is a union of atoms $\phi = \bigtriangledown_{i} \eta_{i}$ then for each constant such $\phi < c$ there is some atom $\eta_{i}$ that contains $c$ in its upper constant segment with $\phi$ larger than $\eta_{i}$, hence, $\phi$ is redundant. This proves (i). \\
Since $C$ is finite then $\vert {U^{c}}(\eta_i) \vert < \vert {U^{c}}(\phi) \vert$. If any of the atoms $\eta_i$ is redundant with $M$ it can be further expressed as unions of atoms of $M$ with ever smaller upper constant segments until reaching non-redundant atoms of $M$. Because $\bigtriangledown$ is associative there is at least one decomposition of $\phi$ as a union of non-redundant atoms of $M$. 
\end{proof}
\bigskip

\begin{theorem} \label{uniqueAtomization}
i) Two atomizations of the same model have the same non-redundant atoms. \\
ii) Any model has a unique atomization without redundant atoms. 
\end{theorem}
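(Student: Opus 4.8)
The plan is to reduce everything to three facts already in hand: that an atom is completely determined by its upper constant segment and behaves the same in every model containing it (Theorems~\ref{atomicSegmentFromTermTheorem}(iv) and~\ref{atomIndependentFromTheRestTheorem}); that redundant atoms are exactly the ones removable without changing the model (Theorem~\ref{redundantAtom}), equivalently the ones expressible as a union of strictly narrower atoms of the model (Theorem~\ref{redundantIsUnionOfAtomsTheorem}); and that whether a duple between \emph{regular} elements holds depends only on the model $M$, not on the chosen atomization. This last point is what transports information between two atomizations $A_1,A_2$ of the same $M$: for any atom $\phi$ with pinning term $T_{\phi}$ (Definition~\ref{definition:pinning_tem}) and any constant $c\in U^c(\phi)$, the pinning duple $(c,T_{\phi})$ is negative in $M$ — the same computation as in the proof of Theorem~\ref{redundantAtom}, using $F_C(\emptyset)\models(\phi\not< T_{\phi})$ and Theorem~\ref{atomIndependentFromTheRestTheorem} — so it must be discriminated by \emph{some} atom in whichever atomization we inspect.

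For part (i) I would argue by a minimal counterexample on the size of $U^c$. Write $\mathrm{NR}(A)$ for the non-redundant atoms of an atomization $A$; suppose $\mathrm{NR}(A_1)\ne\mathrm{NR}(A_2)$ and pick $\phi$ in the symmetric difference with $|U^c(\phi)|$ minimal, so (after swapping $A_1,A_2$) $\phi\in\mathrm{NR}(A_1)\setminus\mathrm{NR}(A_2)$. The first step is to exhibit $\phi$ as a union of atoms of $A_2$. If $U^c(\phi)\subsetneq C$, then for each $c\in U^c(\phi)$ the duple $(c,T_{\phi})$ is discriminated by $\phi$ in $A_1$, hence negative in $M$, hence discriminated in $A_2$ by some atom $\psi_c$; from $\psi_c\not< T_{\phi}$ and Theorem~\ref{atomicSegmentFromTermTheorem}(iv) one gets $c\in U^c(\psi_c)\subseteq U^c(\phi)$, so $\bigtriangledown_{c\in U^c(\phi)}\psi_c=\phi$. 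If instead $U^c(\phi)=C$ there is no pinning term, but AS6 applied to each constant $c$ as a regular element of $A_2$ gives $\psi_c\in A_2$ with $c\in U^c(\psi_c)$, and again $\bigtriangledown_c\psi_c=\phi$. Since by Theorem~\ref{redundantIsUnionOfAtomsTheorem} every atom of $A_2$ is a union of non-redundant atoms of $A_2$, this presents $\phi$ as a union of non-redundant atoms $\theta_j$ of $A_2$, each with $U^c(\theta_j)\subseteq U^c(\phi)$. Two cases close it: if some $\theta_j$ equals $\phi$ then $\phi\in\mathrm{NR}(A_2)$, contradicting the choice of $\phi$; otherwise every $\theta_j$ is strictly narrower than $\phi$, so by minimality $\theta_j\notin\mathrm{NR}(A_1)\triangle\mathrm{NR}(A_2)$, hence $\theta_j\in\mathrm{NR}(A_1)\subseteq A_1$, which makes $\phi$ a union of atoms of $A_1$ all different from $\phi$, i.e.\ redundant in $A_1$ by Theorem~\ref{redundantIsUnionOfAtomsTheorem}(i) — again a contradiction. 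Hence $\mathrm{NR}(A_1)=\mathrm{NR}(A_2)$.

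For part (ii), let $N$ be the set of non-redundant atoms of $M$, well-defined by (i). For existence, start from any atomization $A$ of $M$ and delete redundant atoms one at a time: each deletion preserves the model by Theorem~\ref{redundantAtom}, and by (i) every intermediate atomization still has non-redundant set $N$, so the deletions remove precisely the atoms of $A\setminus N$; since $C$ is finite so is $A$, the process terminates at $N$, and (i) applied to $N$ yields $\mathrm{NR}(N)=N$, i.e.\ $N$ has no redundant atoms. For uniqueness, if $B$ is any atomization of $M$ with no redundant atoms then $B=\mathrm{NR}(B)=\mathrm{NR}(N)=N$ by (i).

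The crux I expect is forcing a non-redundant atom of $A_1$ into $A_2$ — a priori it might only appear in $A_2$ as a union of narrower atoms. The resolution hinges on pairing the two descriptions of that atom: the external one, coming from the atomization-independent negativity of its pinning duples, which produces a covering family of atoms of $A_2$ no wider than $\phi$; and the internal characterization of redundancy as expressibility through strictly narrower atoms, which is exactly what turns the remaining alternative into a contradiction and makes the induction on $|U^c|$ close. The $U^c(\phi)=C$ corner case, where no pinning term exists, is the other thing to watch, but AS6 dispatches it with no extra work.
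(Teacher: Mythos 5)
Your proof is correct, and it takes a somewhat different route from the paper's. The paper proves the statement by first restricting to two atomizations $A$ and $B$ \emph{without} redundant atoms: it adds an atom $\phi$ of $B$ to $A$, observes via Theorem~\ref{redundantAtom} that $\phi$ must be an atom of $A$ or redundant with $A$, and in the latter case performs a one-step substitution (decompose $\phi$ into atoms of $A$, then decompose each of those into atoms of $B$) to exhibit $\phi$ as a union of strictly narrower atoms of $B$, contradicting its non-redundancy; identity of the two redundant-free atomizations then gives both claims. You instead attack part~(i) directly for \emph{arbitrary} atomizations by a minimal-counterexample induction on $\vert U^c(\phi)\vert$, and you transport $\phi$ across atomizations not through Theorem~\ref{redundantAtom} as a black box but through its underlying mechanism, the pinning duples $(c,T_\phi)$, whose negativity is an atomization-independent property of the model. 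The core ingredients are the same (pinning duples, Theorem~\ref{redundantIsUnionOfAtomsTheorem}, and the tension between non-redundancy and expressibility as a union of strictly narrower atoms), but your organization buys two things the paper's terse argument leaves implicit: the induction on $\vert U^c\vert$ makes the termination of the "keep decomposing" step rigorous, and working with general atomizations yields statement~(i) as literally written rather than only for redundant-free atomizations. You also explicitly dispatch the corner case $U^c(\phi)=C$, where no pinning term exists, via AS6. The price is a slightly heavier case analysis; the paper's version is shorter once Theorem~\ref{redundantAtom} is taken for granted.
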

\begin{proof}
Let $A$ and $B$ be two atomizations of a model M without redundant atoms. Choose an atom $\phi$ of $B$ and consider the model $A + \{\phi\}$ spawned by $\phi$ and the atoms of $A$.  It is clear that $A + \{\phi\}$ spawns the same model as $A$ otherwise there is a positive duple of $A$ discriminated only by $\phi$ and, hence, negative in $B$ contradicting that $A$ and $B$ are atomizations of the same model. From Theorem \ref{redundantAtom} either $\phi$ is an atom of $A$ or $\phi$ is redundant with $A$. Assume $\phi$ is redundant with $A$. There is a set $E_{\phi}$ of atoms of $A$ such that $\phi$ is a union of the atoms in $E_{\phi}$ (see Theorem \ref{redundantIsUnionOfAtomsTheorem}). Choose an atom $\eta$ in $E_{\phi}$ and consider the model $B + \{\eta\}$. The same reasoning applies so we should get that either $\eta$ is in $B$ or is redundant with atoms of $B$. We can substitute $\eta$ in $E_{\phi}$ with the atoms that make $\eta$ redundant in $B$ to form a set $E'_{\phi}$. In this way we can replace every atom of $E_{\phi}$ with atoms of $B$ such that $\phi$ is a union of the atoms in $E'_{\phi}$ which implies that $\phi$ is redundant in $B$, against our assumptions. Therefore, $\phi$ cannot be redundant with $A$ and then $\phi$ should be an atom of $A$, which proves proposition (i) and also proposition (ii) because $A$ and $B$ should be identical.
 \end{proof}	
\bigskip

\begin{theorem} \label{unionWithFreer}
Let $A$ and $B$ be two atomized models, with $A$ freer or as free as $B$.\\
 i) The model $A + B$ spawned by the atoms of $A$ and the atoms of $B$ is the same as the model spawned by $A$ alone. \\
 ii) The atoms of $B$ are in $A$ or are redundant with the atoms of $A$.  \\
 iii) $A$ is freer than $B$ if and only if the atoms of $B$ are atoms of $A$ or unions of atoms of $A$.
\end{theorem}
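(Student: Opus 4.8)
The plan is to argue everything at the level of $Th_0^{-}$, the set of negative duples satisfied by a model, using that an atom's behaviour is model independent: whether an atom $\phi$ discriminates a duple $(t,s)$ depends only on the upper constant segment of $\phi$, not on which model $\phi$ sits in (Theorem~\ref{atomIndependentFromTheRestTheorem} and Theorem~\ref{homomorphismTheorem} (ii)). For part (i) I would first record the elementary observation that, by AS3 combined with this model independence, a duple is negative in an atomized model $M$ exactly when some atom of $M$ discriminates it; since the atoms of $A+B$ are the atoms of $A$ together with those of $B$, this gives $Th_0^{-}(A+B) = Th_0^{-}(A) \cup Th_0^{-}(B)$. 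The hypothesis that $A$ is freer than or as free as $B$ is, by Definition~\ref{definition:freer_or_as_free_model}, precisely $Th_0^{-}(B) \subseteq Th_0^{-}(A)$, so the union collapses and $Th_0^{-}(A+B) = Th_0^{-}(A)$; passing to complements, $A+B$ and $A$ also satisfy the same positive duples. Two atomized models with the same $Th_0$ spawn isomorphic subalgebras of regular elements (this is exactly the step used at the end of the proof of Theorem~\ref{redundantAtom}), so $A+B$ is the same model as $A$.

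For part (ii), fix an atom $\phi$ of $B$. If $\phi$ is already an atom of $A$, the first alternative holds. Otherwise, let $N$ be the model atomized by the atoms of $A$ together with $\phi$. The atom set of $N$ sits between those of $A$ and of $A+B$, and by part (i) these two extremes spawn the same model, so $N$ spawns that model too; hence $\phi$ can be deleted from $N$ without changing the spawned model, and Theorem~\ref{redundantAtom} tells us $\phi$ is redundant in $N$. The witnesses to redundancy of $\phi$ are atoms of $N$ strictly narrower than $\phi$, so they are distinct from $\phi$ and therefore atoms of $A$; thus $\phi$ already satisfies the redundancy condition of Definition~\ref{definition:redundant_atom} relative to $A$.

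For part (iii), the forward direction is immediate from (ii): under the hypothesis every atom of $B$ is an atom of $A$ or is redundant with $A$, and by Theorem~\ref{redundantIsUnionOfAtomsTheorem} a redundant atom of $A$ is a union of (non-redundant) atoms of $A$. For the converse, assume every atom of $B$ is an atom of $A$ or a union $\bigtriangledown_i \eta_i$ of atoms $\eta_i$ of $A$, and take a negative duple $(t,s)$ of $B$, discriminated by some atom $\phi$ of $B$. If $\phi$ is an atom of $A$, then $(t,s)$ is negative in $A$ by Theorem~\ref{atomIndependentFromTheRestTheorem}. If $\phi = \bigtriangledown_i \eta_i$, then, working in $F_C(\emptyset)$ and using associativity of $\bigtriangledown$ together with Theorem~\ref{unionProperties} (iv), $\phi < t$ forces some $\eta_j < t$ while $\phi \not< s$ forces $\eta_i \not< s$ for all $i$, so $\eta_j$ discriminates $(t,s)$ and again $(t,s)$ is negative in $A$. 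Hence $Th_0^{-}(B) \subseteq Th_0^{-}(A)$, i.e.\ $A$ is freer than or as free as $B$.

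I expect the only genuine obstacle to be the bookkeeping in part (i): one must state carefully that negativity of a duple in $A+B$ is always witnessed by an atom coming from $A$ or from $B$, which means passing between the order $<$ inside the concrete model $A+B$ and the model-free statement ``$\phi$ discriminates $(t,s)$'', and one must invoke that coinciding $Th_0$ forces the same model. The remaining steps are short consequences of the structure theorems already available, with the one caveat that ``$A$ is freer than $B$'' in the statement of (iii) should be read as ``freer than or as free as'' (the stated condition cannot separate strict freeness from equal freeness).
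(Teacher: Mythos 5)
Your proposal is correct and follows essentially the same route as the paper: part (i) via the observation that negative duples of $A+B$ are exactly those discriminated by an atom of $A$ or of $B$ (which the freeness hypothesis collapses to those of $A$), part (ii) by reducing to Theorem~\ref{redundantAtom} on the model spawned by $A$ plus one atom of $B$, and part (iii) via Theorem~\ref{redundantIsUnionOfAtomsTheorem} in one direction and the union-of-atoms discrimination argument (Theorem~\ref{unionProperties}) in the other. The only cosmetic difference is that for (ii) the paper cites Theorem~\ref{uniqueAtomization} while you run a sandwich argument $A \subseteq N \subseteq A+B$ directly through Theorem~\ref{redundantAtom}; both are sound, and your caveat that "freer than" in (iii) must be read as "freer than or as free as" matches the paper's own conclusion.
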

\begin{proof} (i) Since $A$ is freer or as free as $B$ all the negative duples of $B$ are also negative in $A$. This means that a duple discriminated by an atom of $B$ is also discriminated by some atom of $A$. In addition, each positive duple of $A$ is also a positive duple of $B$ and also a positive duple of $A + B$. A duple of $A + B$ is  positive if and only if it is positive in $A$, and is negative if and only if it is negative in $A$. Therefore, the models $A + B$ and $A$ are equal. Part (ii) follows directly from Theorem \ref{uniqueAtomization} and the fact that $A + B$ spawns the same model as $A$. \\
(iii) Assume $A$ is freer than $B$. Consider the model $A + B$ spawned by the atoms of $A$ and the atoms of $B$. Proposition (i) tells us that if $A$ is freer than $B$ the model $A + B$ is equal to $A$. Theorem \ref{redundantAtom} assures us that each atom $\phi$ of $B$ is an atom of $A$ or is redundant with the atoms of $A$. Either $\phi$ is an atom of $A$ or for each constant $c$ such that $\phi < c$ there is at least one atom $\eta < c$ in $A$ such that $\phi$ is larger than $\eta$, i.e.\ the set  ${U^{c}}(\phi)$ contains the set ${U^{c}}(\eta)$. In fact, as there is an $\eta$ for each constant $c \in {U^{c}}(\phi)$, if the set $\{\eta_{i}:i=1,...,n\}$ makes $\phi$ redundant in $N$, then ${U^{c}}(\phi) = \cup_{i}{U^{c}}(\eta_{i})$ and then $\phi$ is the union $\phi = \bigtriangledown_{i} \eta_{i}$.  \\
Assume now that the atoms of $B$ are atoms of $A$ or unions of atoms of $A$. All the atoms of $B$ are redundant with $A$ and then, Theorem \ref{redundantAtom} assures that any duple $r$ discriminated by an atom of $B$ is discriminated by at least one atom of $A$ so, $B \models r^{-}$ implies $A \models r^{-}$ and $A$ is freer or equal to $B$.
\end{proof}	
\bigskip

\begin{theorem} \label{fullCrossingIsFreestTheorem}
Let $M$ be an atomized model with or without redundant atoms and $r$ a duple so $M \models r^{-}$. The full-crossing of $r$ in $M$ is the freest model $F_C(Th^{+}(M) \cup\, r^{+})$.  
\end{theorem}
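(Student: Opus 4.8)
The plan is to verify the two defining features of a freest model for $\square_r M$ relative to $Th^{+}(M)\cup\{r^{+}\}$: that $\square_r M$ is a model of it, and that every model $N$ of $Th^{+}(M)\cup\{r^{+}\}$ is less free than or as free as $\square_r M$. The freest-model property of Definition~\ref{definition:freest_model} then follows formally: if $\square_r M\models s^{+}$ and some model $N$ of $Th^{+}(M)\cup\{r^{+}\}$ had $N\models s^{-}$, freeness would force $\square_r M\models s^{-}$, a contradiction, so $N\models s^{+}$. By Theorem~\ref{atomizationExistsTheorem} it suffices to compare $\square_r M$ with atomized models $N$ over $C$, so all models below are taken atomized.

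\emph{$\square_r M$ is a model of $Th^{+}(M)\cup\{r^{+}\}$.} First note that $\square_r M$, given as a set of atoms, automatically satisfies AS1--AS6: AS1 because every new atom $\lambda\bigtriangledown\rho$ has $U^{c}(\lambda)\subseteq U^{c}(\lambda\bigtriangledown\rho)\neq\emptyset$; AS5 because atoms are identified with their upper constant segments, so the set union collapses coincidences; AS6 because $B=L^{a}_{M}(r_{R})\neq\emptyset$ by AS6 for $M$, so any regular element keeps an atom below it (possibly only after crossing a kept atom with an element of $B$); and AS2--AS4 hold by the way the order on regular elements is built from the atoms. Then, using Theorem~\ref{atomicSegmentFromTermTheorem}, a kept atom $\mu\in M\setminus H$ lies below a term $t$ in $\square_r M$ iff it did in $M$, and a new atom $\lambda\bigtriangledown\rho$ lies below $t$ iff $\lambda<t$ or $\rho<t$ in $M$ (since $U^{c}(\lambda\bigtriangledown\rho)=U^{c}(\lambda)\cup U^{c}(\rho)$). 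Hence $L^{a}_{\square_r M}(r_{L})=(L^{a}_{M}(r_{L})\cap L^{a}_{M}(r_{R}))\cup(H\bigtriangledown B)\subseteq L^{a}_{M}(r_{R})\cup(H\bigtriangledown B)=L^{a}_{\square_r M}(r_{R})$, so $r_{L}\le r_{R}$ in $\square_r M$ by Theorem~\ref{atomicSegmentFromTermTheorem}(vi); and for any positive duple $a\le b$ of $M$, each atom of $\square_r M$ below $a$ has a constituent atom of $M$ below $a$, which lies below $b$ in $M$ because $a\le b$ there, forcing that atom of $\square_r M$ below $b$. Thus $\square_r M\models Th^{+}(M)\cup\{r^{+}\}$.

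\emph{Freeness.} From $N\models Th^{+}(M)$ we get $Th_{0}^{-}(N)\subseteq Th_{0}^{-}(M)$, i.e.\ $M$ is freer than or as free as $N$, so by Theorem~\ref{unionWithFreer}(ii) and Theorem~\ref{redundantIsUnionOfAtomsTheorem} every atom $\xi$ of $N$ has $U^{c}(\xi)=\bigcup_{i}U^{c}(\mu_{i})$ for some atoms $\mu_{i}$ of $M$ (just $\xi$ itself if $\xi$ is already an atom of $M$). Suppose $\xi$ discriminates $s\equiv(s_{L},s_{R})$ in $N$: then $\xi<s_{L}$ forces some $\mu_{j}<s_{L}$ in $M$, while $\xi\not<s_{R}$ forces \emph{every} $\mu_{i}\not<s_{R}$ in $M$. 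If $\mu_{j}\notin H$, then $\mu_{j}$ is still an atom of $\square_r M$ and still discriminates $s$. If $\mu_{j}\in H$, then $\mu_{j}<r_{L}$ in $M$, hence $\xi<r_{L}$ in $N$; since $N\models r^{+}$, $\xi<r_{R}$ in $N$, so some constituent $\mu_{k}$ satisfies $\mu_{k}<r_{R}$ in $M$, i.e.\ $\mu_{k}\in B$, and $\mu_{k}\not<s_{R}$ in $M$. Then $\mu_{j}\bigtriangledown\mu_{k}\in H\bigtriangledown B$ is an atom of $\square_r M$ with $\mu_{j}<s_{L}$ and $\mu_{j},\mu_{k}\not<s_{R}$, so it discriminates $s$ in $\square_r M$. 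In both cases $\square_r M\models s^{-}$, which is exactly the freeness we need.

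I expect the only genuine obstacle to be the case $\mu_{j}\in H$: the atom of $N$ witnessing $s^{-}$ may have been destroyed by the crossing, and a replacement must be produced among the freshly created atoms $H\bigtriangledown B$. The leverage that makes this work is precisely the hypothesis that $N$ is a model of $r^{+}$, which forces a constituent $\mu_{k}$ of $\xi$ to lie in $B=L^{a}_{M}(r_{R})$ and to inherit $\mu_{k}\not<s_{R}$ from $\xi\not<s_{R}$; pairing $\mu_{k}$ with $\mu_{j}$ yields the required discriminating atom of $\square_r M$. Everything else is bookkeeping with upper constant segments via Theorem~\ref{atomicSegmentFromTermTheorem}.
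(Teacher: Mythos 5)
Your proof is correct. The first half (showing $\square_r M \models Th^{+}(M)\cup\{r^{+}\}$) coincides with the paper's argument: new atoms $\lambda\bigtriangledown\rho$ land below $r_R$ because $\rho\in B$, and they cannot break a positive duple $a\le b$ of $M$ because one of their constituents lies below $a$ and hence below $b$. Where you genuinely diverge is in the freeness half. The paper characterizes exactly which duples $s$ flip from negative in $M$ to positive in $\square_r M$ (namely those with $dis_M(s)\subseteq H$), extracts from that the facts $M\models (s_L\le s_R\odot r_L)$ and $M\models (r_R\le s_R)$, and closes with the purely equational syllogism $(s_L \leq s_R \odot r_L) \wedge (r_R \leq s_R) \wedge (r_L \leq r_R) \Rightarrow (s_L \leq s_R)$, so every newly positive duple is exhibited as a logical consequence of $Th^{+}(M)\cup\{r^{+}\}$. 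You instead compare $\square_r M$ directly against an arbitrary atomized competitor $N$: since $M$ is freer than or as free as $N$, each atom $\xi$ of $N$ decomposes as a union of atoms of $M$ (Theorems \ref{unionWithFreer} and \ref{redundantIsUnionOfAtomsTheorem}), and for any duple $\xi$ discriminates you manufacture a discriminating atom of $\square_r M$, using $N\models r^{+}$ to supply the constituent $\mu_k\in B$ when the natural witness $\mu_j$ lies in $H$ and is destroyed by the crossing. Both arguments are sound; the paper's buys an explicit deductive certificate for each duple made positive (which is reused elsewhere, e.g.\ in the commutativity theorem), while yours leans on the structure theory of atom unions and establishes the domination over every competitor more directly, without needing to characterize the set of flipped duples. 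One small remark: your verification of AS6 for $\square_r M$ is terse but fine, since $B\neq\emptyset$ by AS6 applied to $r_R$ in $M$, so every atom of $H$ below a regular element is replaced by at least one completion still below it.
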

\begin{proof}
Let $H \subseteq M$ be the discriminant of $r=(r_L,r_R)$, i.e.\ the set of atoms $\varphi$ such that $\varphi < r_L \,\wedge\, \varphi \not< r_R$. Let $B \subseteq M$ be the set of atoms of $r_R$, i.e.\ the atoms $\varphi$ such that $\varphi < r_R$. The full-crossing of $(r_L,r_R)$ in model $M$ is the model $K = (M \setminus H) \cup (H\bigtriangledown B)$ where $H\bigtriangledown B \equiv \{\lambda \bigtriangledown \rho : (\lambda \in H) \wedge (\rho \in B) \}$ is the set of all pairwise unions of an atom of $H$ and an atom of $B$. 

Using Theorem \ref{unionProperties} (iii) it follows that the atoms $\lambda \bigtriangledown \rho \in H\bigtriangledown B \subseteq K$ introduced by the full-crossing operation satisfy $\lambda \bigtriangledown \rho < r_R$ because $\rho < r_R$. Since the atoms in the discriminant $\lambda \in H = dis_M(r)$ are no longer present in $K$ and the atoms introduced by the full-crossing are all in the lower segment of $r_R$ then $K \models r^{+}$.

It is immediate from the definition of the order relationship, $<$, in atomized models (the axiom $\forall a \forall b \, (  a \leq b \,  \Leftrightarrow  \neg \exists \phi ( (\phi < a)  \wedge   (\phi \not< b))$) that the elimination of atoms from a model cannot cause any positive duple to become negative. Hence, the atoms eliminated by the full-crossing operation cannot switch positive duples into negative. We have to prove that the atoms introduced by the full-crossing do not switch positive duples into negative duples either.  Assume $M \models s^{+}$ for some duple $s=(s_L,s_R)$.  Suppose that $s_L$ acquires one of the new atoms $\lambda \bigtriangledown \rho$ in its lower segment. From Theorem \ref{unionProperties} (iv) follows that either $\lambda < s_L$ or  $\rho< s_L$ holds in $M$. Because $s_L \leq s_R$ in $M$ then $M \models (\lambda < s_R) \vee (\rho < s_R$). By Theorem \ref{unionProperties} (iii) we get $K \models (\lambda \bigtriangledown \rho < s_R)$. Therefore, the atoms of the form $\lambda \bigtriangledown \rho$ cannot switch a positive duple $s^{+}$ into $s^{-}$.  

So far we know that $K \models r^{+}$ and that $(M \models s^{+}) \Rightarrow (K \models s^{+})$ so we can write: $K \models Th^{+}(M) \cup r^{+}$. To prove that $K$ is the freest model of $Th^{+}(M) \cup r^{+}$ we have to show that the full-crossing does not switch negative duples into positive either unless they are logical consequences of $Th^{+}(M) \cup r^{+}$.

Consider a duple $s=(s_L,s_R)$ such that $K \models s^{+}$ and $M \models s^{-}$. The crossing of $r$ has switched $s$ from negative to positive. For this to occur a necessary condition is that the discriminant of $s$ should disappear from $K$ as a result of the crossing, in other words, $dis_M(s) \subseteq H = dis_M(r)$, so the atoms of $dis_M(s)$ should all be atoms of $r_L$. This implies  $s_L \leq s_R \odot r_L$ holds in $M$. 

Since $M \models s^{-}$ there is at least one $\lambda^{*} \in dis_M(s)$. From Theorem \ref{unionProperties} (iii), all the atoms of the form $\{  \lambda^{*} \bigtriangledown \rho :  \rho \in B \}$ are in $s_L$. If $K \models s^{+}$ then all the atoms $\lambda^{*} \bigtriangledown \rho$ should also be atoms of $s_R$ in model $K$. Using Theorem \ref{unionProperties} (v), $\lambda^{*} \not< s_R$ and $\lambda^{*} \bigtriangledown \rho < s_R$ implies $\rho < s_R$ for each $\rho \in B$. Since $B$ is the entire lower atomic segment of $r_R$ in $M$ it immediately follows that $M \models (r_R \leq s_R)$.  Putting both conditions together:
\[
M \models  ((s_L \leq s_R \odot r_L) \,\wedge\, (r_R \leq s_R)).
\]
Since
\[
(s_L \leq s_R \odot r_L) \,\wedge\, (r_R \leq s_R)  \,\wedge\, (r_L \leq  r_R) \Rightarrow s_L \leq  s_R
\]
$s^{+}$ is a logical consequence of the theory $Th^{+}(M) \cup r^{+}$ so any model of $Th^{+}(M) \cup r^{+}$ must satisfy $s^{+}$.  We have proved that any duple $s^{+}$ satisfied by $K$ is satisfied by any model of $Th^{+}(M) \cup r^{+}$ and, because $K \models Th^{+}(M) \cup r^{+}$ then $K = F_C(Th^{+}(M) \cup r^{+})$.    
\end{proof}
\bigskip

\begin{theorem} \label{fullCrossingIsCommutative}
The full-crossing operation is commutative up to redundant atoms, i.e.\ the resulting model of full-crossing each duple in a set of duples $R$ is independent of the order chosen. 
\end{theorem}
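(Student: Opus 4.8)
The plan is to reduce the statement entirely to Theorem~\ref{fullCrossingIsFreestTheorem} and the uniqueness of the redundancy-free atomization (Theorem~\ref{uniqueAtomization}). For a set $X$ of positive duples, write $\overline{X}$ for the set of positive duples valid in every model of $X$; this is a closure operator (monotone, extensive, idempotent), and whenever $F_{C}(X)$ exists one has $Th^{+}(F_{C}(X))=\overline{X}$, directly from Definition~\ref{definition:freest_model} ($F_{C}(X)$ is a model of $X$, and a positive duple holds in it iff it holds in all models of $X$). Since for a fixed model every duple is either positive or negative, two models with the same positive theory have the same negative theory, and hence, by Theorem~\ref{uniqueAtomization}, a common redundancy-free atomization. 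Consequently $F_{C}(X)$ is well defined up to redundant atoms and depends on $X$ only through $\overline{X}$; in particular, since $Th^{+}(M)$ is closed under logical consequence, $M=F_{C}(Th^{+}(M))$ up to redundant atoms for every atomized model $M$.

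Next I would strengthen Theorem~\ref{fullCrossingIsFreestTheorem} to hold for \emph{every} duple $r$: $\square_{r}M=F_{C}(Th^{+}(M)\cup r^{+})$ up to redundant atoms. If $M\models r^{-}$ this is exactly Theorem~\ref{fullCrossingIsFreestTheorem}. If $M\models r^{+}$, then $dis_{M}(r_{L},r_{R})=\emptyset$ by Theorem~\ref{atomicSegmentFromTermTheorem}(vi), so the discriminant $H$ is empty and the full-crossing is a no-op, $\square_{r}M=M$; since $r^{+}\in Th^{+}(M)$ we get $\square_{r}M=M=F_{C}(Th^{+}(M))=F_{C}(Th^{+}(M)\cup r^{+})$ up to redundant atoms by the previous paragraph.

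Now fix $R=\{r_{1},\dots,r_{k}\}$, an initial model $M_{0}$, and an arbitrary ordering $\pi$ of $R$; set $M_{j}^{\pi}=\square_{r_{\pi(j)}}\cdots\square_{r_{\pi(1)}}M_{0}$. I claim, by induction on $j$, that $M_{j}^{\pi}=F_{C}\bigl(Th^{+}(M_{0})\cup\{r_{\pi(1)}^{+},\dots,r_{\pi(j)}^{+}\}\bigr)$ up to redundant atoms; the base case $j=0$ is $M_{0}=F_{C}(Th^{+}(M_{0}))$, established above. For the inductive step, applying the strengthened Theorem~\ref{fullCrossingIsFreestTheorem} to $M_{j}^{\pi}$ and $r_{\pi(j+1)}$, then the inductive hypothesis in the form $Th^{+}(M_{j}^{\pi})=\overline{Th^{+}(M_{0})\cup\{r_{\pi(1)},\dots,r_{\pi(j)}\}}$, and finally the closure identity $\overline{\,\overline{Y}\cup Z\,}=\overline{Y\cup Z}$ together with the fact that $F_{C}$ sees its argument only through $\overline{(\cdot)}$:
\begin{align*}
M_{j+1}^{\pi}&=F_{C}\bigl(Th^{+}(M_{j}^{\pi})\cup r_{\pi(j+1)}^{+}\bigr)\\
&=F_{C}\bigl(\overline{Th^{+}(M_{0})\cup\{r_{\pi(1)},\dots,r_{\pi(j)}\}}\cup r_{\pi(j+1)}^{+}\bigr)\\
&=F_{C}\bigl(Th^{+}(M_{0})\cup\{r_{\pi(1)}^{+},\dots,r_{\pi(j+1)}^{+}\}\bigr),
\end{align*}
all up to redundant atoms, completing the induction. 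Taking $j=k$, the full crossing of all of $R$ into $M_{0}$ equals $F_{C}(Th^{+}(M_{0})\cup R^{+})$ up to redundant atoms, an expression with no dependence on the ordering $\pi$; this is the assertion of the theorem.

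I expect the only real obstacle to be bookkeeping rather than mathematics: keeping track that every ``$=$'' between models denotes equality of redundancy-free atomizations, checking that each appeal to Theorem~\ref{uniqueAtomization} is legitimate (the two models compared really do have the same positive and negative theories), and verifying the elementary facts that $X\mapsto\overline{X}$ is a closure operator with $\overline{\,\overline{Y}\cup Z\,}=\overline{Y\cup Z}$ and that $Th^{+}(M)$ is closed under logical consequence. The genuinely new content beyond Theorem~\ref{fullCrossingIsFreestTheorem} is slight — the no-op behaviour of $\square_{r}$ on an already-satisfied duple, and the manifest order-insensitivity of $F_{C}$ applied to a union of positive duples.
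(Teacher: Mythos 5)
Your proposal is correct and follows essentially the same route as the paper: reduce everything to Theorem~\ref{fullCrossingIsFreestTheorem}, which identifies the result of each crossing with the freest model of the accumulated positive theory (manifestly order-independent), and then invoke uniqueness of the redundancy-free atomization (Theorem~\ref{uniqueAtomization} / Theorem~\ref{redundantAtom}) to conclude that different orders can differ only in redundant atoms. The paper states this in two sentences and leaves implicit the induction, the closure-operator bookkeeping, and the no-op case $M\models r^{+}$, all of which you supply explicitly; this is a faithful elaboration rather than a different argument.
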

\begin{proof}
Theorem \ref{fullCrossingIsFreestTheorem} shows that the result of full-crossing the duples in a set $R$ in a model $M$ is the freest model $F_C(Th^{+}(M) \cup R^{+})$, so the result is the same independently of the order of crossing. Because the result is the same model, from Theorem \ref{redundantAtom}, the resulting atomizations for different orders of crossing can only differ in redundant atoms. 
\end{proof}
\bigskip

\begin{theorem} \label{freestModelTheorem}
The freest model $F_C(\emptyset)$ over a set $C$ of constants has $\vert C \vert$ non-redundant atoms, each with a single constant in its upper segment. 
\end{theorem}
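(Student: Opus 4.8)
The plan is to exhibit an explicit atomization and then verify it is (a) the freest model of $\emptyset$ and (b) built from exactly $|C|$ non-redundant atoms. Take $A=\{\phi_c : c\in C\}$, where $\phi_c$ is the atom with singleton upper constant segment $U^c(\phi_c)=\{c\}$, and let $M_A$ be the atomized semilattice it spawns (regular order determined from $A$ via AS3). First I would check AS1--AS6 for $M_A$: AS1 and AS5 are immediate because each $\phi_c$ lies in exactly one, distinct, constant; AS4 and AS6 follow from $\mathbf{C}(t\odot s)=\mathbf{C}(t)\cup\mathbf{C}(s)$ together with the fact that every term has at least one component constant; AS2 and AS3 hold by construction of the order on regular elements.

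Next I would pin down the theory of $M_A$. By Theorem~\ref{atomicSegmentFromTermTheorem}(iv), $L^a_{M_A}(\nu(t))=\{\phi_c : c\in\mathbf{C}(t)\}$, since $\mathbf{C}(t)\cap U^c(\phi_c)\neq\emptyset$ exactly when $c\in\mathbf{C}(t)$. Combining this with Theorem~\ref{atomicSegmentFromTermTheorem}(vi) yields the characterization
\[
M_A \models (t\leq s)\iff \mathbf{C}(t)\subseteq\mathbf{C}(s).
\]
But $\mathbf{C}(t)\subseteq\mathbf{C}(s)$ already forces $t\odot s=s$, hence $t\leq s$, in \emph{every} semilattice over $C$, using only commutativity, associativity and idempotence. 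Thus $Th^{+}(M_A)$ is contained in the set of positive duples valid in all semilattices over $C$; conversely, $M_A$ is itself a semilattice over $C$, so it satisfies every duple valid in all of them. Therefore $Th^{+}(M_A)$ equals the set of positive duples valid in all models of $\emptyset$, which is exactly the defining property of $F_C(\emptyset)$ in Definition~\ref{definition:freest_model}; so $M_A$ is the freest model.

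Finally I would count non-redundant atoms. By Definition~\ref{definition:redundant_atom}, $\phi_c$ would be redundant only if there were an atom $\eta$ of $M_A$ with $\eta<c$ and $U^c(\eta)\subsetneq U^c(\phi_c)=\{c\}$; but $\eta<c$ means $c\in U^c(\eta)$, which contradicts $U^c(\eta)\subsetneq\{c\}$, so no $\phi_c$ is redundant. Hence all $|C|$ atoms of $A$ are non-redundant, and by Theorem~\ref{uniqueAtomization}(i) the non-redundant atoms of a model are uniquely determined, so these are precisely the non-redundant atoms of $F_C(\emptyset)$, each with a single constant in its upper segment. The only step that needs care is the ``conversely'' in the theory computation: one must check that $M_A$ is not merely \emph{as free as} some model but genuinely the freest, and this is exactly where the observation that $M_A$ is an ordinary semilattice over $C$ (hence satisfies all universally valid duples) closes the argument.
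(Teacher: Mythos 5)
Your proof is correct and follows essentially the same route as the paper's: exhibit the atomization by $\vert C\vert$ singleton atoms, compute that it satisfies $t\leq s$ iff $\mathbf{C}(t)\subseteq\mathbf{C}(s)$, and identify these as exactly the duples valid in every semilattice over $C$. The paper packages that last step by citing the term-algebra characterization of $F_C(\emptyset)$ rather than your direct sandwich argument, but the content is identical, and your non-redundancy count and appeal to the uniqueness of the non-redundant atomization match the paper's closing remarks.
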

\begin{proof}
It is a well-known result of Universal Algebra that the freest model is the term algebra, i.e.\ the algebra spawned by the terms modulus the rules of the algebra, in this case the commutative, associative and idempotent laws. For terms $s$ and $t$, the term algebra satisfies that $F_C(\emptyset) \models s \leq t$ if and only if the component constants satisfy ${\bf{C}}(s) \subseteq {\bf{C}}(t)$. Let $M$ be the atomized model obtained by $\vert C \vert$ different atoms, each with one constant in its upper segment. From the axiom of atomized models $\,\phi < a \odot b \, \Leftrightarrow  (\phi < a) \vee (\phi < b)\,$ applied to the component constants of $s$ and $t$ follows that for $M$ (and for any atomized model), ${\bf{C}}(s) \subseteq {\bf{C}}(t)$ implies $s \leq t$. Conversely, assume $M \models s \leq t$. Each atom $\phi$ in the lower atomic segment of $s$ should be in the lower segment of some component constant $c \in {\bf{C}}(s)$ (c.f.\ Theorem \ref{atomicSegmentFromTermTheorem} (ii)) and, since the atoms of $M$ have only one constant in their upper segments then $\phi < t$ can occur only if $c$ is also a component constant of $t$. Since each constant of $C$ has its own atom, every component constant of $s$ should be a component constant of $t$ otherwise there is an atom that discriminates $(s, t)$ against our assumption. This proves that $M$ models the exact same duples as $F_C(\emptyset)$ and both models are the same. Since each atom $\phi$ of $M$ has an upper segment ${U^{c}}(\phi)$ with a single constant $\phi$ is non-redundant. We have finished the proof. \\
We can easily show that $M$ is freer than any atomized model. Let $N$ be any atomized model and let $r$ be any duple such that $N \models r^{-}$. Each atom $\eta$ of $N$ is the union of the atoms of $M$ corresponding to each constant in the set ${U^{c}}(\eta)$ and this proves that $\eta$ is redundant with $M$. Since all the atoms of $N$ are redundant with $M$, using Theorem $\ref{unionWithFreer}$ (iii) we conclude that $M$ is freer or as free as $N$ and, therefore, $M \models r^{-}$ and $M$ is the freest model. From Theorem \ref{redundantAtom} any atomization of $F_C(\emptyset)$ contains the $\vert C \vert$ atoms of $M$ and only can differ from $M$ in a set of redundant atoms. 
\end{proof}
\bigskip

\begin{theorem} \label{atomizationExistsTheorem}
Any model $M$ with a finite set $C$ of constants can be atomized. 
\end{theorem}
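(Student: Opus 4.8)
The plan is to construct an atomized semilattice $M'$ over $C$ whose subalgebra of regular elements is isomorphic, as a model interpreting $C$, to $M$ (this is what ``can be atomized'' should mean). The starting point is that, by axioms AS1--AS6 and Theorem~\ref{atomicSegmentFromTermTheorem}, such an $M'$ is completely determined by its set of atoms, which by AS5 and Definition~\ref{definition:atomInAsAsetDefinition} is just a family $\mathcal{A}$ of nonempty subsets of $C$: for a term $t$ one puts $L^{a}(t)=\{S\in\mathcal{A}:S\cap\mathbf{C}(t)\neq\emptyset\}$, declares $\nu(t)=\nu(s)$ iff $L^{a}(t)=L^{a}(s)$, orders the regular elements by inclusion of the $L^{a}$'s, and puts an atom $\phi_S$ below a regular element $\nu(t)$ exactly when $S\cap\mathbf{C}(t)\neq\emptyset$. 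I would first verify that, for \emph{any} family $\mathcal{A}$ of nonempty subsets of $C$ with $\bigcup\mathcal{A}=C$, this structure $M_{\mathcal{A}}$ is an atomized semilattice: AS1--AS4 are immediate from the set-theoretic description of $L^{a}$ (AS4 being just ``$L^{a}$ of a summation is the union of the $L^{a}$'s''), AS5 holds because $U^{c}(\phi_S)=S$, and AS6 is exactly the requirement $\bigcup\mathcal{A}=C$. Thus the whole problem reduces to choosing $\mathcal{A}$ so that the induced order reproduces $M$, i.e.\ so that $\nu_M(t)\le\nu_M(s)$ iff every $S\in\mathcal{A}$ meeting $\mathbf{C}(t)$ also meets $\mathbf{C}(s)$.

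I would take $\mathcal{A}$ to be the family of ``independent'' subsets: $S\in\mathcal{A}$ iff $S\neq\emptyset$ and $\nu_M(c)\not\le\nu_M(u)$ for every $c\in S$ and every term $u$ with $\mathbf{C}(u)\cap S=\emptyset$. Finiteness of $C$ enters here only to guarantee that the relevant summations $\bigodot_{c'\in C\setminus S}\nu_M(c')$ exist in $M$; note also that $S=C$ satisfies the condition vacuously, so $C\in\mathcal{A}$ and therefore $\bigcup\mathcal{A}=C$, securing AS6. For the order equivalence, the forward direction is a contradiction argument: if $S\in\mathcal{A}$ meets $\mathbf{C}(t)$ in a constant $c^{*}$, if $\nu_M(t)\le\nu_M(s)$, and if $S\cap\mathbf{C}(s)=\emptyset$, then $\mathbf{C}(s)\subseteq C\setminus S$ gives $\nu_M(c^{*})\le\nu_M(t)\le\nu_M(s)\le\bigodot_{c'\in C\setminus S}\nu_M(c')$, contradicting $S\in\mathcal{A}$. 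For the converse, suppose $\nu_M(t)\not\le\nu_M(s)$; since $\nu_M(t)$ is the summation of the $\nu_M(c)$ with $c\in\mathbf{C}(t)$, some $c^{*}\in\mathbf{C}(t)$ has $\nu_M(c^{*})\not\le\nu_M(s)$. Put $S_{0}=\{c\in C:\nu_M(c)\not\le\nu_M(s)\}$: then $c^{*}\in S_{0}\cap\mathbf{C}(t)$; $S_{0}\cap\mathbf{C}(s)=\emptyset$ because every constant of $s$ lies below $\nu_M(s)$ by Theorem~\ref{atomicSegmentFromTermTheorem}(i); and $S_{0}\in\mathcal{A}$ because $\bigodot_{c'\in C\setminus S_{0}}\nu_M(c')\le\nu_M(s)$ while no $\nu_M(c)$ with $c\in S_{0}$ is $\le\nu_M(s)$. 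So $S_{0}$ witnesses $L^{a}(t)\not\subseteq L^{a}(s)$. The two directions together say that the natural surjections $F_C(\emptyset)\to M$ and $F_C(\emptyset)\to M_{\mathcal{A}}$ have the same kernel (and the same induced order), hence identify $M$ with the regular reduct of $M_{\mathcal{A}}$ fixing constants, which is the desired atomization.

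The main obstacle is pinning down $\mathcal{A}$ correctly. Taking all nonempty subsets gives back the freest model of Theorem~\ref{freestModelTheorem}, not $M$; and the ``economical'' choice suggested by Birkhoff's subdirect representation --- one atom per subdirectly irreducible quotient $M\to\mathbf{2}$, i.e.\ $\mathcal{A}=\{\,C\setminus\{c:\nu_M(c)\le a\}:a\in M\,\}$ --- actually violates AS6, since no such atom can sit below a least element of $M$ when one exists. One is therefore forced to enlarge it, and the independence condition is precisely what makes both halves of the order equivalence go through while keeping $C\in\mathcal{A}$. Verifying this equivalence is the only nontrivial step; the construction of $M_{\mathcal{A}}$ and the checking of AS1--AS6 are routine bookkeeping of the same flavour as in Theorems~\ref{atomicSegmentFromTermTheorem} and~\ref{freestModelTheorem}.
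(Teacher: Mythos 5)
Your proof is correct, but it takes a genuinely different route from the paper's. The paper derives the theorem in two lines from machinery already in place: it starts from the $\vert C\vert$-atom atomization of $F_C(\emptyset)$ (Theorem~\ref{freestModelTheorem}), full-crosses each of the finitely many duples of $Th^{+}(M)$, and invokes Theorem~\ref{fullCrossingIsFreestTheorem} together with the identity $M=F_C(Th^{+}(M))$; the atomization is produced iteratively, and finiteness of $C$ is used to guarantee that $Th^{+}(M)$ is finite so that the crossing sequence terminates. You instead exhibit an atomization in closed form as the family $\mathcal{A}$ of ``independent'' subsets of $C$ and verify the order equivalence by hand; both halves of that verification are sound, and it is worth noting that your family is, apart from the bottom atom $\phi_C$ added to secure AS6, exactly the Birkhoff-style family $\{\,\{c:\nu_M(c)\not\le a\}:a\in M\,\}$ you allude to --- every independent proper subset $S$ coincides with $\{c:\nu_M(c)\not\le a_S\}$ for $a_S=\bigodot_{c'\in C\setminus S}\nu_M(c')$, and your witness $S_0$ in the converse direction is precisely the set attached to $a=\nu_M(s)$. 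What the paper's route buys is brevity and reuse of the full-crossing operator that the whole framework is built on; what yours buys is independence from Theorems~\ref{fullCrossingIsFreestTheorem} and~\ref{fullCrossingIsCommutative}, an explicit static description of one atomization of $M$ (rather than one reached as the endpoint of a crossing sequence), and a lighter reliance on finiteness, since you need $\vert C\vert<\infty$ only to form the complement summation rather than to bound the length of an iterative construction.
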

\begin{proof}
Full-crossing provides a simple constructive proof for semilattices. Since the theory $Th^{+}(M)$ of any model $M$ over a finite set of constants is a finite a set, $M$ can be atomized by starting with the set of $\vert C\vert$ atoms that provides an atomization for the freest model $F_C(\emptyset)$ (see Theorem \ref{freestModelTheorem}), each atom contained in a single constant, and then by performing a finite sequence of full-crossing operations for each duple in $Th^{+}(M)$. As a result we obtain an atomization of model $M$ which follows from Theorem \ref{fullCrossingIsFreestTheorem} and $M = F_C(Th^{+}(M))$. 
\end{proof}
\bigskip

\begin{theorem} \label{pinningStructureTheorem}
Let $M$ be an atomized model over a set $C$ of constants. \\
i) If $M$ satisfies a negative duple $r^{-}$ then there is a pinning duple $p=(c, T_\phi)$ for some constant $c \in U^{c}(\phi)$ and the pinning term $T_\phi$ such that $p^{-} \Rightarrow r^{-}$. \\
ii) Let $PR(M) \subset Th_0^{-}(M)$ be the set of pinning duples of $M$. $PR(M)$ implies $Th_0^{-}(M)$. \\
iii) Let $\alpha$ and $\beta$ be two different atoms of a model $M$ then $M \models (T_\alpha \not= T_\beta)$.
\end{theorem}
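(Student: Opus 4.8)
The plan is to prove the three parts in turn, each time relying on the same maneuver: convert a statement about an atom lying (or not) below a term into a purely combinatorial statement about component constants, using Theorem \ref{atomicSegmentFromTermTheorem} and Definition \ref{definition:atomInAsAsetDefinition}. For \textbf{part (i)}, write $r=(r_L,r_R)$; since $M\models r^-$, axiom AS3 hands me an atom $\phi$ with $\phi<_M r_L$ and $\phi\not<_M r_R$. From $\phi<_M r_L$ and Theorem \ref{atomicSegmentFromTermTheorem}(ii) I extract a component constant $c\in\mathbf{C}(r_L)$ with $\phi<_M c$, i.e.\ $c\in U^c(\phi)$, and then $c\leq r_L$ is a semilattice tautology by Theorem \ref{atomicSegmentFromTermTheorem}(i). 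The key observation is on the right: $\phi\not<_M r_R$ means, again by Theorem \ref{atomicSegmentFromTermTheorem}(ii) and Definition \ref{definition:atomInAsAsetDefinition}, that no component constant of $r_R$ lies in $U^c(\phi)$, so $\mathbf{C}(r_R)\subseteq C\setminus U^c(\phi)=\mathbf{C}(T_\phi)$; hence $r_R\leq T_\phi$ is also a tautology (and since $r_R$ is a nonempty term, $T_\phi$ is a genuine term). In any model, $r_L\leq r_R$ then forces $c\leq r_L\leq r_R\leq T_\phi$, so the contrapositive is exactly $p^-\Rightarrow r^-$ with $p=(c,T_\phi)$. Finally $p\in Th_0^-(M)$, because $\phi<_M c$ while $\phi\not<_M T_\phi$ (the component constants of $T_\phi$ avoid $U^c(\phi)$), so $\phi$ discriminates $(c,T_\phi)$ and AS3 gives $M\models(c\not\leq T_\phi)$.

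For \textbf{part (ii)}, the inclusion $PR(M)\subseteq Th_0^-(M)$ is just the last sentence of part (i) applied to an arbitrary atom $\phi$ of $M$ and constant $c\in U^c(\phi)$ (with $U^c(\phi)\subsetneq C$ so that $T_\phi$ exists): $\phi$ discriminates $(c,T_\phi)$, hence it is negative in $M$. The implication $PR(M)\Rightarrow Th_0^-(M)$ then follows since part (i) attaches to every $r^-\in Th_0^-(M)$ a single member $p^-\in PR(M)$ with $p^-\Rightarrow r^-$. For \textbf{part (iii)}, $\alpha\neq\beta$ forces $U^c(\alpha)\neq U^c(\beta)$ by axiom AS5, so without loss of generality there is $c_0\in U^c(\alpha)\setminus U^c(\beta)$, equivalently a component constant of $T_\beta$ that is not one of $T_\alpha$. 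Then Theorem \ref{atomicSegmentFromTermTheorem}(iv) gives $\alpha<_M T_\beta$ (since $c_0\in\mathbf{C}(T_\beta)\cap U^c(\alpha)$) while $\alpha\not<_M T_\alpha$ (since $\mathbf{C}(T_\alpha)\cap U^c(\alpha)=\emptyset$), so $\alpha$ separates the lower atomic segments of $T_\beta$ and $T_\alpha$; Theorem \ref{atomicSegmentFromTermTheorem}(vi) and antisymmetry of $\leq$ then yield $M\models(T_\alpha\neq T_\beta)$, the case $U^c(\beta)\not\subseteq U^c(\alpha)$ being symmetric.

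The argument has no deep obstacle; the one point requiring care is the reduction of the semantic facts ``$\phi\not<_M r_R$'' and ``$\alpha\not<_M T_\alpha$'' to the combinatorics of component constants, since this is precisely what upgrades $r_R\leq T_\phi$ to a tautology and hence makes $p^-\Rightarrow r^-$ hold in every model rather than only in $M$. One must also keep an eye on the degenerate case $U^c(\phi)=C$, where the pinning term would be an empty join: in part (i) it cannot arise because $r_R$ already supplies a constant outside $U^c(\phi)$, and in parts (ii)–(iii) one restricts to atoms with $U^c(\phi)\subsetneq C$, exactly as in Theorem \ref{uniquedupleTheorem}.
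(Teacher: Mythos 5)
Your proof is correct and takes essentially the same approach as the paper's: part (i) rests on the same two tautologies $c\leq r_L$ and $r_R\leq T_\phi$ extracted from a discriminating atom, and your part (iii) is simply the contrapositive of the paper's argument via $U^{c}(\alpha)\subseteq U^{c}(\beta)$. Your explicit handling of the degenerate case $U^{c}(\phi)=C$ is a sensible precaution that the paper leaves implicit.
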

\begin{proof}
Let $r=(r_L, r_R)$. If $M \models r^{-}$ there is an atom $\phi$ of $M$ that discriminates $r^{-}$. It follows that there is a constant $c$ such that $\phi < c$ and a sentence $\Delta = (c \leq r_L) \,\wedge\,( r_R \leq T_{\phi})$ that is true not only in $M$, but also in the freest model $F_C(\emptyset)$ and in every model with constants in $C$. Suppose a model $N$ satisfies $r^{+}$; then, $N \models \Delta \,\wedge\ r^{+}$ and it follows that $c \leq r_L \leq  r_R \leq T_{\phi}$ which implies $p^{+}$. In other words $p^{-} \Rightarrow r^{-}$. This proves proposition (i) from which (ii) follows directly. \\
(iii) Assume $M \models (T_\alpha = T_\beta)$. Since $\alpha \not< T_\alpha$ then we also must have $\alpha \not< T_\beta$, which implies ${U^{c}}(\alpha) \subseteq {U^{c}}(\beta)$. Using the symmetrical,  $\beta \not< T_\beta$ we get  $U^{c}(\beta) \subseteq U^{c}(\alpha)$. Hence, $\alpha$ and $\beta$ are equal against our hypothesis.
\end{proof}
\bigskip

\begin{theorem} \label{implicationTheorem}
Let  $\Gamma$ be a set of first order sentences, $R$ a set of positive and negative duples so $\Gamma \Rightarrow R$, and $M$ a model of $\Gamma$. Any atom of $M$ is an atom of $F(R^{+})$ or is redundant with atoms of  $F(R^{+})$. 
\end{theorem}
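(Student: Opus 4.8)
The plan is to derive this as an essentially immediate consequence of Theorem~\ref{unionWithFreer}(ii): it suffices to check that (an atomization of) $F(R^{+})$ is freer than or as free as $M$ in the sense of Definition~\ref{definition:freer_or_as_free_model}, and then to apply that theorem with $A = F(R^{+})$ and $B = M$.

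First I would observe that $M$ is a model of $R^{+}$. By hypothesis $\Gamma \Rightarrow R$, so every duple of $R$ — in particular every positive duple $r^{+} \in R^{+}$ — is a logical consequence of $\Gamma$; since $M \models \Gamma$, we get $M \models r^{+}$ for all $r^{+} \in R^{+}$, i.e.\ $M \models R^{+}$. Note that the negative duples of $R$ are irrelevant here, as the conclusion only mentions $F(R^{+})$.

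Next I would show that $F(R^{+})$ is freer than or as free as $M$. By Definition~\ref{definition:freest_model}, for any duple $r$, $F(R^{+}) \models r^{+}$ implies that every model of $R^{+}$ satisfies $r^{+}$; taking the contrapositive and using that $M$ is a model of $R^{+}$, we get that $M \models r^{-}$ implies $F(R^{+}) \models r^{-}$. Hence $Th_0^{-}(M) \subseteq Th_0^{-}(F(R^{+}))$, which by Definition~\ref{definition:freer_or_as_free_model} is exactly the statement that $F(R^{+})$ is freer than or as free as $M$. Then I would invoke Theorem~\ref{unionWithFreer}(ii) with $A = F(R^{+})$ and $B = M$: since $A$ is freer than or as free as $B$, every atom of $B$ is an atom of $A$ or is redundant with the atoms of $A$, which is precisely the claim. (Here $F(R^{+})$ is regarded in an atomization, which exists by Theorem~\ref{atomizationExistsTheorem}; its set of non-redundant atoms is well-defined by Theorem~\ref{uniqueAtomization}, so ``atom of $F(R^{+})$'' is unambiguous.)

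I do not expect a genuine obstacle: the argument is bookkeeping around the definition of the freest model. The only point needing care is to keep straight the direction in the definition of ``freest'' — that $F(R^{+})$ satisfies a positive duple \emph{only} when all models of $R^{+}$ do, equivalently it satisfies the maximal set of negative duples among models of $R^{+}$ — and to notice that no property of $\Gamma$ is used beyond $\Gamma \Rightarrow R$, and no property of $R$ beyond its positive part $R^{+}$.
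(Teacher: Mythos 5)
Your proposal is correct and follows essentially the same route as the paper's own proof: observe that $M \models R^{+}$ because $\Gamma \Rightarrow R$, conclude that $F(R^{+})$ is freer than or as free as $M$, and apply Theorem~\ref{unionWithFreer}(ii). You merely spell out the intermediate steps (the contrapositive of the freest-model definition and the existence of an atomization) in more detail than the paper does.
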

\begin{proof}
Since $\Gamma$ implies $R$ any model $M$ of $\Gamma$ also satisfies $R^{+}$. Therefore, $F(R^{+})$ is freer than $M$ or equal to $M$. From Theorem \ref{unionWithFreer}, each atom of $M$ is an atom of $F(R^{+})$ or is redundant with atoms of $F(R^{+})$.  
 \end{proof}	
\bigskip

\begin{theorem} \label{compositionTheorem}
Let ${R^{+}} \subset {Th_0^{+}(M)}$ be a subset of the positive theory of some atomized model M and let $F({R^{+}})$ be the freest model of ${R^{+}}$. The atoms of M are atoms of $F({R^{+}})$ or unions of atoms of $F({R^{+}})$.
\end{theorem}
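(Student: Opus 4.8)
The plan is to reduce the statement to two results already proved in the excerpt: that the freest model of $R^{+}$ is at least as free as $M$, and that passing to a freer model absorbs the atoms of a less free one up to unions. First I would note that the hypothesis $R^{+} \subseteq Th_0^{+}(M)$ says exactly that $M$ satisfies every duple of $R^{+}$, i.e.\ $M$ is itself a model of $R^{+}$. By Definition \ref{definition:freest_model}, $F(R^{+})$ is the model of $R^{+}$ all of whose positive duples are satisfied by every model of $R^{+}$; applying this with the model $M$ gives $Th_0^{+}(F(R^{+})) \subseteq Th_0^{+}(M)$, equivalently $Th_0^{-}(M) \subseteq Th_0^{-}(F(R^{+}))$. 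By Definition \ref{definition:freer_or_as_free_model} this is precisely the statement that $F(R^{+})$ is freer than or as free as $M$.

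Next I would fix an atomization of $F(R^{+})$ (which exists by Theorem \ref{atomizationExistsTheorem} since $C$ is finite) and apply Theorem \ref{unionWithFreer}(ii) with $A = F(R^{+})$ and $B = M$: every atom of $M$ is either an atom of $F(R^{+})$ or is redundant with the atoms of $F(R^{+})$. Finally, Theorem \ref{redundantIsUnionOfAtomsTheorem}(i) characterizes any atom redundant with $F(R^{+})$ as a union $\bigtriangledown_{i}\eta_{i}$ of atoms $\eta_{i}$ of $F(R^{+})$ with $\eta_{i} \neq \phi$ for all $i$. Splitting into the two cases — atom of $F(R^{+})$, or union of atoms of $F(R^{+})$ — gives exactly the conclusion of the theorem.

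I do not expect a genuine obstacle here, since Theorems \ref{unionWithFreer} and \ref{redundantIsUnionOfAtomsTheorem} carry the weight; the single point that needs care is the \emph{direction} of the freeness relation. One must be careful that the freest model of a \emph{subset} $R^{+}$ of $M$'s positive theory satisfies no more positive duples (hence no fewer negative duples) than $M$ does, which is exactly why $M$ being a model of $R^{+}$ is the crucial use of the hypothesis. A minor bookkeeping remark is that $F(R^{+})$ must be taken over the same constant set $C$ as $M$ for Theorem \ref{unionWithFreer} to apply, which is implicit in writing $F(R^{+})$ for $F_{C}(R^{+})$.
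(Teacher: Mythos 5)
Your proof is correct and follows essentially the same route as the paper: the paper's proof is a one-line citation of Theorem \ref{unionWithFreer}(iii) together with the observation that $F(R^{+})$ is freer than $M$, and your argument via Theorem \ref{unionWithFreer}(ii) plus Theorem \ref{redundantIsUnionOfAtomsTheorem}(i) is precisely how part (iii) of that theorem is itself established. Your explicit justification that $F(R^{+})$ is freer than or as free as $M$ (from $M \models R^{+}$ and the definition of the freest model) is the one step the paper leaves implicit, and you have it right.
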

\begin{proof}
Follows from Theorem \ref{unionWithFreer} (iii) and the fact that $F({R^{+}})$ is freer that $M$.
\end{proof}	
\bigskip

\subsection{Additional results} \label{appendix:additional_ressults}

In this section we present a few new results on atomized semilattices necessary to support this paper.

\begin{theorem} \label{SparseCrossing:segregationTheorem}
Let $R = R^{+}  \cup R^-$ where $R^{+}$ is a set of atomic sentences of the form $(a \leq b)$ and $R^{-}$ a set of negated atomic sentences of the form $(a \not\leq b)$. Let $r$, $s$ and $t$ be duples;\\
i) if $R \Rightarrow r^{+}$, then either $R \cup\, \{r^{+}\}$ is inconsistent or $R^{+} \Rightarrow r^{+}$, \\
ii) if $R \Rightarrow s^{-}$ there is at least one duple $t \in R^{-}$ such that $R^{+} \cup\, \{t^{-}\} \Rightarrow s^{-}$,\\
where the notation $r^+ = (r_L \leq r_R)$ and  $r^- = (r_L \not\leq r_R)$ has been used. 
\end{theorem}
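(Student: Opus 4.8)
The plan is to reduce both parts to one structural fact supplied by the excerpt: for any finite set $P$ of positive duples over a finite constant set $C$, the freest model $F_C(P)$ exists — it is obtained by a finite sequence of full-crossings starting from $F_C(\emptyset)$ (Theorems \ref{freestModelTheorem}, \ref{fullCrossingIsFreestTheorem}, \ref{atomizationExistsTheorem}) — and by Definition \ref{definition:freest_model} it satisfies the characterization $F_C(P)\models q^{+}\Leftrightarrow P\Rightarrow q^{+}$, equivalently $F_C(P)\models q^{-}\Leftrightarrow P\not\Rightarrow q^{+}$, for every duple $q$ over $C$. Throughout, I fix $C$ large enough to contain every constant occurring in $R$ and in the duple under discussion. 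The elementary observation used repeatedly is that a single model can never satisfy both $q^{+}$ and $q^{-}$.

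For (i), assume $R\Rightarrow r^{+}$ and suppose $R^{+}\not\Rightarrow r^{+}$; I will show $R$ is inconsistent, which — since $R\Rightarrow r^{+}$ means $R$ and $R\cup\{r^{+}\}$ have exactly the same models — gives that $R\cup\{r^{+}\}$ is inconsistent, as required. By the characterization, $F_C(R^{+})\models r^{-}$. If $F_C(R^{+})$ satisfied every duple of $R^{-}$, it would be a model of $R$ discriminating $r$, contradicting $R\Rightarrow r^{+}$; hence there is $q^{-}\in R^{-}$ with $F_C(R^{+})\models q^{+}$, so $R^{+}\Rightarrow q^{+}$. Then no model of $R^{+}$ satisfies $q^{-}$, so $R=R^{+}\cup R^{-}$ has no model at all. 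Contraposing, $R$ consistent implies $R^{+}\Rightarrow r^{+}$, which is the disjunction claimed.

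For (ii), assume $R\Rightarrow s^{-}$ and set $M^{*}:=F_C(R^{+}\cup\{s^{+}\})$, which exists because $R^{+}\cup\{s^{+}\}$ is a set of positive duples. Since $M^{*}\models s^{+}$, it does not satisfy $s^{-}$, so $M^{*}$ is not a model of $R$; as $M^{*}\models R^{+}$, there must be some $t^{-}\in R^{-}$ with $M^{*}\models t^{+}$. By the freest-model characterization applied to the positive theory $R^{+}\cup\{s^{+}\}$, this yields $R^{+}\cup\{s^{+}\}\Rightarrow t^{+}$. Now contrapose duple by duple: any model of $R^{+}\cup\{t^{-}\}$ that also satisfied $s^{+}$ would satisfy $R^{+}\cup\{s^{+}\}$ and hence $t^{+}$, contradicting $t^{-}$; therefore $R^{+}\cup\{t^{-}\}\Rightarrow s^{-}$, and this $t$ is the required negative duple. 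Incidentally, the argument locates $t$ constructively as a negative axiom violated by the freest model of $R^{+}\cup\{s^{+}\}$, and it shows that the hypothesis $R\Rightarrow s^{-}$ forces $R^{-}\neq\emptyset$, so this step is non-vacuous.

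The only load-bearing input is the freest-model characterization — that $F_C(P)$ exists for every finite positive $P$ and genuinely minimises the satisfied positive duples among models of $P$ — and this is exactly what the full-crossing results already in the excerpt provide; the remainder is routine bookkeeping separating positive from negative duples, so I do not anticipate a real obstacle beyond being careful to choose the constant set $C$ uniformly for all the freest models invoked.
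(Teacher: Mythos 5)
Your proof is correct, but it takes a genuinely different route from the paper's. The paper proves both parts with the ``sum of models'' device: given atomized models $M$ and $N$ over the same constants, the model $M+N$ spawned by the union of their atoms satisfies $Th_0^{-}(M)\cup Th_0^{-}(N)$ and $Th_0^{+}(M)\cap Th_0^{+}(N)$; for (i) it sums a model of $R\cup\{r^{+}\}$ with a model of $R^{+}\cup\{r^{-}\}$, and for (ii) it sums one counterexample model $N_t$ for each $t\in R^{-}$ to manufacture a single model violating $R\Rightarrow s^{-}$. You instead use the freest model of a positive theory as a single canonical witness: $F_C(R^{+})$ in (i) and $F_C(R^{+}\cup\{s^{+}\})$ in (ii), together with the characterization $F_C(P)\models q^{+}\Leftrightarrow P\Rightarrow q^{+}$ from Definition \ref{definition:freest_model} and Theorem \ref{fullCrossingIsFreestTheorem}. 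Conceptually the two are close (the freest model is in effect the sum of all models of $P$), but formally your argument trades the union-of-atomizations lemma for the freest-model machinery, and it buys a slightly more constructive statement in (ii): the required $t$ is exhibited as a negative axiom violated by $F_C(R^{+}\cup\{s^{+}\})$, and you get for free that $R\Rightarrow s^{-}$ forces $R^{-}\neq\emptyset$. The paper's route is marginally more self-contained (it needs only that every semilattice can be atomized, not the full-crossing/freest-model results), and its version of (ii) does not need the freest model of $R^{+}\cup\{s^{+}\}$ to be constructed; your version requires, as you correctly note, fixing the constant set $C$ uniformly and invoking existence of the freest model of a finite positive theory, both of which the paper's Theorems \ref{freestModelTheorem}, \ref{atomizationExistsTheorem} and \ref{fullCrossingIsFreestTheorem} supply. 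I see no gap in either part.
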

\begin{proof}
(i) Let $M$ and $N$ be two atomized semilattices over the same set of constants $C$. Considering the discriminant atoms present in the model $M + N$ it follows that $M + N \models Th_0^{-}(M) \cup Th_0^{-}(N)$  and $M + N \models Th_0^{+}(M) \cap Th_0^{+}(N)$, where $Th_0^{+}(M)$ and $Th_0^{-}(M)$ are the sets of every positive and negative duples over $C$, respectively, satisfied by $M$. \\
Assume $R \cup\, \{r^{+}\}$ is consistent; there is a model $M \models R^{+} \cup\, R^{-} \cup\, \{r^{+}\}$. Suppose $\,R^{+} \not\Rightarrow r^{+}$, there is a model $N \models R^{+} \cup\, \{r^{-}\}$. Since every semilattice can be atomized, without loss of generality we can assume $M$ and $N$ are atomized over $C$, where $C$ is the set of constants  mentioned in $R$.  Let the model $M + N$ be the model spawned by the union of the atoms of $M$ and the atoms of $N$. It follows $M + N \models R^{+} \cup\, R^{-} \cup\, \{r^{-}\}$ contradicting $R \Rightarrow r^{+}$. \\
(ii) Assume that for every duple $t \in R^{-}$, $\,R^{+} \cup\, \{t^{-}\} \not\Rightarrow s^{-}$. This means that there are models $N_t$ satisfying $\forall t(t \in R^{-})\exists N_t (N_t \models \,R^{+} \cup\, \{t^{-}\} \cup\, \{s^{+}\})$. Consider the model $Q$ spawned by the atoms of all the models $N_t$. We have $Q \models \,R^{+} \cup\, R^{-} \cup\, \{s^{+}\}$ contradicting $R^{+} \,\cup\, R^{-} \Rightarrow s^{-}$.
\end{proof}
\bigskip

\begin{theorem} \label{atomGenesisTheorem}
Let $M$ be an atomized model and $r=(r_L,r_R)$ a duple so $M \models r^{-}$. \\
i) Each non-redundant atom of $M$ is non-redundant with $F(Th^{+}(M) \cup r^{+})$. \\
ii) If $K$ is an atomization of $F(Th^{+}(M) \cup r^{+})$ without redundant atoms, any atom $\phi$ of $K$ is either a non-redundant atom of $M$ or the union of two non-redundant atoms of $M$, $\lambda$ and $\rho$, such that $U^{c}(\phi) = U^{c}(\lambda) \cup U^{c}(\rho)$ and $\lambda<r_L$, $\,\lambda \not<r_R$ and $\, \rho<r_R$. 
\end{theorem}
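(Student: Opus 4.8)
The plan is to route everything through Theorem~\ref{fullCrossingIsFreestTheorem}, which identifies $N:=F(Th^{+}(M)\cup r^{+})$ with the full-crossing $\square_r M$, i.e.\ with the model atomized by $(M\setminus H)\cup(H\bigtriangledown B)$ where $H=dis_M(r_L,r_R)$ is the discriminant and $B=L^a_M(r_R)$. First I would reduce to the case in which $M$ has no redundant atoms: deleting the redundant atoms of $M$ leaves the spawned model unchanged (Theorem~\ref{redundantAtom}), hence also $Th^{+}(M)$, the truth value of $r^{-}$, the model $N$, and --- by Theorem~\ref{uniqueAtomization} --- the set of non-redundant atoms of $M$, so both claims are unaffected. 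From now on every atom of $M$ is non-redundant and $H,B\subseteq M$ consist of non-redundant atoms of $M$.

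For (i) I would argue by contradiction. Assume a non-redundant atom $\phi$ of $M$ is redundant with $N$; by Theorem~\ref{redundantIsUnionOfAtomsTheorem}(i) we can write $\phi=\bigtriangledown_i\eta_i$ with each $\eta_i$ an atom of $N$ and $\eta_i\neq\phi$. Since $N$, being a model of $Th^{+}(M)\cup r^{+}$, satisfies $Th^{+}(M)$, the model $M$ is freer than or as free as $N$, so Theorem~\ref{unionWithFreer}(ii) makes each $\eta_i$ an atom of $M$ or redundant with $M$; applying Theorem~\ref{redundantIsUnionOfAtomsTheorem}(i) once more, $\eta_i=\bigtriangledown_j\mu_{ij}$ for atoms $\mu_{ij}$ of $M$. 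Substituting, $\phi=\bigtriangledown_{i,j}\mu_{ij}$ is a union of atoms of $M$, and no $\mu_{ij}$ can equal $\phi$: if $\mu_{i_0j_0}=\phi$ then $U^c(\phi)\subseteq U^c(\eta_{i_0})\subseteq U^c(\phi)$, forcing $\eta_{i_0}=\phi$, a contradiction. Hence $\phi$ is a union of atoms of $M$ none of which is $\phi$, so by Theorem~\ref{redundantIsUnionOfAtomsTheorem}(i) $\phi$ is redundant with $M$, contradicting the hypothesis. Therefore $\phi$ is non-redundant with $N$.

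For (ii), Theorem~\ref{fullCrossingIsFreestTheorem} gives that $N$ is atomized by $M':=(M\setminus H)\cup(H\bigtriangledown B)$, and since $K$ is an atomization of $N$ without redundant atoms, Theorem~\ref{uniqueAtomization}(i) says every atom of $K$ is a non-redundant atom of $N$ and hence appears among the atoms of $M'$. If $\phi\in M\setminus H$ then $\phi$ is an atom of $M$, i.e.\ (after the reduction) a non-redundant atom of $M$. If $\phi\in H\bigtriangledown B$ then $\phi=\lambda\bigtriangledown\rho$ with $\lambda\in H$ and $\rho\in B$: both $\lambda$ and $\rho$ are non-redundant atoms of $M$, they are distinct since $H\cap B=\emptyset$, $U^c(\phi)=U^c(\lambda)\cup U^c(\rho)$ by the definition of $\bigtriangledown$, $\lambda<r_L$ and $\lambda\not<r_R$ because $\lambda\in H=dis_M(r_L,r_R)$, and $\rho<r_R$ because $\rho\in B=L^a_M(r_R)$ --- precisely the stated alternative.

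I expect (ii) to be essentially bookkeeping once Theorems~\ref{fullCrossingIsFreestTheorem} and~\ref{uniqueAtomization} are in place; the only delicate point is in (i), namely controlling the telescoped union $\phi=\bigtriangledown_{i,j}\mu_{ij}$ so as to still deduce redundancy of $\phi$ with $M$. The step that does the work there is the upper-constant-segment squeeze ruling out $\mu_{ij}=\phi$, which relies on atoms being determined by their upper constant segments (Definition~\ref{definition:atomInAsAsetDefinition}).
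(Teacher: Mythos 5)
Your proof is correct, and part (ii) follows the paper's own argument almost verbatim: identify $F(Th^{+}(M)\cup r^{+})$ with $\square_r M$ via Theorem~\ref{fullCrossingIsFreestTheorem}, reduce to an atomization of $M$ without redundant atoms, and invoke the uniqueness of the non-redundant atomization (Theorem~\ref{uniqueAtomization}) to place every atom of $K$ inside $(M\setminus H)\cup(H\bigtriangledown B)$. For part (i) you take a mildly different route: the paper argues directly on a witnessing set $\Gamma\subseteq\square_r M$ that would make $\phi$ redundant, replacing each crossing-generated atom $\lambda\bigtriangledown\rho$ in $\Gamma$ by its components $\lambda,\rho\in M$; you instead bypass the full-crossing structure entirely and use only that $M$ is freer than $N$ together with Theorems~\ref{unionWithFreer} and~\ref{redundantIsUnionOfAtomsTheorem}, closing the argument with the upper-constant-segment squeeze that rules out $\mu_{ij}=\phi$. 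Your version is slightly more general (it shows a non-redundant atom of a freer model cannot be redundant with any less free model over the same constants), at the cost of one extra bookkeeping step; the paper's version gets the same contradiction more concretely from the shape of the crossed atoms. Both are sound.
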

\begin{proof}
Theorem \ref{fullCrossingIsFreestTheorem} says that a model $Q$ for $F(Th^{+}(M) \cup r^{+})$ can be obtained from $M$ by full-crossing of $r^{+}$. Suppose that a non-redundant atom $\phi \in M$ becomes redundant with a set of atoms of $Q$. Since $\phi$ was not redundant in $M$ then the set $\Gamma$ of atoms of $Q$ that makes $\phi$ redundant should contain at least one atom $\lambda \bigtriangledown \rho$ introduced by full-crossing (see the proof of Theorem \ref{fullCrossingIsFreestTheorem}), but if this is the case, then $\phi$ is redundant with the set $\Gamma \setminus \{\lambda \bigtriangledown \rho\} \cup \{\lambda\} \cup \{\rho\}$. If $\Gamma$ contains more than one new atom of $Q$ we can replace it by its components $\lambda$ and $\rho$ until forming a set of atoms of $M$ that make $\phi$ redundant contradicting that $\phi$ is non-redundant in $M$. The situation does not change if we add redundant atoms to $Q$, therefore $\phi$ is not redundant in any atomization of $Q$.    

(ii) follows from Theorem \ref{fullCrossingIsFreestTheorem} and the uniqueness of the model without redundant atoms. The atomization of $F(Th^{+}(M) \cup r^{+})$ without redundant atoms, $K$, should be the subset of non-redundant atoms of the model obtained by full-crossing of $r^{+}$ in an atomization of $M$ without redundant atoms, so every atom of $K$ is a non redundant atom of $M$ or is in the set $(M \setminus H) \cup (H\bigtriangledown B)$ where $H$ and $B$ are defined in the proof of Theorem \ref{fullCrossingIsFreestTheorem}, so any new atom of $K$ is a union of two non-redundant atoms of $M$.
\end{proof}
\bigskip

\begin{theorem} \label{atomsInDiscriminantTheorem}
Let $N$ be an atomized model, $\lambda$ an atom of $N$ and $r=(r_L,r_R)$ a duple. Let $M = \square_r N$. Either $\lambda \not\in dis_N(r)$, and then $\lambda \in M$, or $\lambda \in dis_N(r)$ and then $\lambda \not\in M$ and there is at least one and possibly various atoms in $M$ strictly wider than  $\lambda$.
\end{theorem}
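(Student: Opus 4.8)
The plan is to unfold the definition of the full-crossing operator (Definition~\ref{definition:full_crossing}), $M = \square_r N = (N \setminus H) \cup (H \bigtriangledown B)$ with $H = dis_N(r)$ and $B = L^a_N(r_R)$, and then split on whether $\lambda$ lies in the discriminant $H$.

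The first case, $\lambda \notin dis_N(r)$, is immediate: since $\lambda$ is an atom of $N$ by hypothesis, $\lambda \in N \setminus H \subseteq M$, so $\lambda \in M$ and there is nothing more to prove. For the second case, $\lambda \in dis_N(r) = H$, I would first isolate one elementary observation to be reused twice: \emph{for every $\rho \in B = L^a_N(r_R)$ one has $U^c(\rho) \not\subseteq U^c(\lambda)$}. Indeed, by Theorem~\ref{atomicSegmentFromTermTheorem}(iv) the membership $\rho \in L^a_N(r_R)$ gives a component constant $c \in {\bf{C}}(r_R)$ with $c \in U^c(\rho)$; if we had $U^c(\rho) \subseteq U^c(\lambda)$ then $\lambda < c \leq r_R$ (using Theorem~\ref{atomicSegmentFromTermTheorem}(i)), so $\lambda < r_R$, contradicting $\lambda \in H$, which forces $\lambda \not< r_R$.

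From this observation both remaining claims follow. To see $\lambda \notin M$: clearly $\lambda \notin N \setminus H$ since $\lambda \in H$; and $\lambda \notin H \bigtriangledown B$, because an equality $\lambda = \mu \bigtriangledown \rho$ with $\mu \in H$, $\rho \in B$ would give $U^c(\rho) \subseteq U^c(\mu) \cup U^c(\rho) = U^c(\lambda)$ by Definition~\ref{definition:atom_union_of_atoms}, contradicting the observation. Hence $\lambda \notin M$. To produce the wider atoms: by AS6 the regular element $\nu_N(r_R)$ has at least one atom below it, so $B \neq \emptyset$; for any $\rho \in B$ the atom $\lambda \bigtriangledown \rho$ belongs to $H \bigtriangledown B \subseteq M$, and $U^c(\lambda \bigtriangledown \rho) = U^c(\lambda) \cup U^c(\rho) \supsetneq U^c(\lambda)$, where the strictness is exactly the observation. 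Thus $\lambda \bigtriangledown \rho$ is strictly wider than $\lambda$ in the sense of Definition~\ref{definition:atom_wider_than_another_atom}, and ranging over all $\rho \in B$ gives the ``possibly various'' such atoms.

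I do not anticipate a genuine obstacle here; the proof is essentially bookkeeping with upper constant segments. The only point that requires a little care is the passage from $\rho < r_R$, where $r_R$ is a term rather than a single constant, to the existence of a component constant of $r_R$ inside $U^c(\rho)$ — which is precisely what Theorem~\ref{atomicSegmentFromTermTheorem} provides — together with making sure the single observation above is invoked consistently both in the ``$\lambda \notin M$'' argument and in the construction of the wider atoms.
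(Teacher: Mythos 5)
Your proof is correct, and it follows the same overall skeleton as the paper's (unfold $\square_r N = (N\setminus H)\cup(H\bigtriangledown B)$ and split on $\lambda\in H$), but the engine of the argument in the nontrivial case is genuinely different. The paper establishes $\lambda\notin M$ semantically: it invokes Theorem~\ref{fullCrossingIsFreestTheorem} to get $M\models r^{+}$, hence $dis_M(r)=\emptyset$, and then Theorem~\ref{atomIndependentFromTheRestTheorem} (universality of discrimination) to conclude that $\lambda$, which discriminates $r$, cannot sit in $M$; strictness of the completions is then \emph{deduced from} $\lambda\notin M$, since each completion has upper constant segment containing $U^c(\lambda)$ and is not equal to $\lambda$. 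You instead isolate the purely combinatorial fact that $U^c(\rho)\not\subseteq U^c(\lambda)$ for every $\rho\in L^a_N(r_R)$ (via Theorem~\ref{atomicSegmentFromTermTheorem}), and derive \emph{both} $\lambda\notin M$ and the strict widening directly from it, without needing to know that $M$ is the freest model of $Th^{+}(N)\cup r^{+}$ or that atoms discriminate duples universally. Your route is more elementary and self-contained — it would survive even if one only knew the syntactic definition of full-crossing and nothing about its semantic correctness — at the cost of redoing a small amount of bookkeeping that the paper gets for free from Theorems~\ref{fullCrossingIsFreestTheorem} and~\ref{atomGenesisTheorem}. The one point you flagged as delicate (passing from $\rho<r_R$ to a component constant of $r_R$ in $U^c(\rho)$, and from $U^c(\rho)\subseteq U^c(\lambda)$ to $\lambda<r_R$) is handled correctly, and your appeal to AS6 for $B\neq\emptyset$ properly secures the existence claim.
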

\begin{proof}
The Theorem follows from Theorem \ref{atomGenesisTheorem} and from the full-crossing mechanism given in the proof of Theorem \ref{fullCrossingIsFreestTheorem}. Full-crossing transforms $N$ into a model $M$ leaving unaltered the atoms of $N$ except when the atom $\lambda \in dis_N(r)$, in this case $\lambda$ is replaced by at least one, usually multiple, completions in $M$. These completions $\phi$ are atoms which upper constant segment $\bf{U}^{c}(\phi) = \bf{U}^{c}(\lambda) \cup \bf{U}^{c}(\rho)$ for some atom $\rho \in N$ (see Theorem \ref{atomGenesisTheorem}). Since $M \models r^{+}$ then $dis_M(r) = \emptyset$, and since $\lambda$ discriminates $r$, then $\lambda$ cannot be in $M$. It follows that every completion $\phi$ must be different from $\lambda$, i.e.\ it should be strictly larger than $\lambda$.
\end{proof}
\bigskip

\begin{theorem} \label{inwarsOutwardSequenceTheorem}
Let $N_0$ be an atomized model over a set of constants $C$, $R^{+}$ a set of positive duples and $r_1, r_2,..., r_n$ an ordering of $R^{+}$.  Assume $R^{+}$ is enforced in $N_0$ by successive full-crossings in the order given and that intermediate models $N_1, N_2,...,N_n$ where  $N_{i-1} \models r_i^{-}$,$\,N_i \models r_i^{+}$  and  $N_n = F(Th^{+}(N_0) \cup R^{+})$ are obtained.  

\begin{enumerate}[label=\roman*), leftmargin=0.8cm]
    \item For each atom $\phi \in N_n$ there is at least one ``inward chain'' of atoms $\lambda_0, \lambda_1 ,..., \lambda_n$ such that $\lambda_0 \in N_0$, $\,\phi = \lambda_n$ and $\lambda_i \in N_i$ either $\lambda_i = \lambda_{i+1}$ or $\lambda_{i+1}$ is strictly larger than $\lambda_{i}$. 
    
    \item For each atom $\lambda \in N_0$ there is at least one ``outward chain'' of atoms $\lambda_0, \lambda_1 ,..., \lambda_n$ such that $\lambda_0 = \lambda$ and $\lambda_i \in N_i$ with either $\lambda_i = \lambda_{i+1}$ or $\lambda_{i+1}$ is strictly larger than $\lambda_{i}$. 
    
    \item Along any inward or outward sequence of atoms the number of times $\lambda_i \not= \lambda_{i+1}$ is at most $\vert U^{c}(\lambda_n) \vert - 1< \vert C \vert$. 
    
    \item Along any inward or outward chain an atom $\lambda_{i} \in dis_{N_i}(r_{i+1})$ if and only if $\lambda_i \not= \lambda_{i+1}$. 
    
    \item Along any inward or outward chain an atom $\lambda_{i}$ in the chain causes model $N_i$ to produce a false negative  when tested with duple $r_{i+1}^{+}$ at most $\vert U^{c}(\lambda_n) \vert - \vert U^{c}(\lambda_0) \vert< \vert C \vert$ times. 
    
    \item Along any inward or outward chain, for each step $i$ there is an atom $\rho_i \in N_{i} \cap N_{i+1}$ such that $\,\lambda_{i+1} = \lambda_i \bigtriangledown \rho_i$.
\end{enumerate}
\end{theorem}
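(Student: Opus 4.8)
The plan is to build both families of chains directly from the full-crossing mechanism and then read off (iii)--(vi) from the construction. Write $H_i = dis_{N_{i-1}}(r_i)$ and $B_i = L^{a}_{N_{i-1}}((r_i)_R)$, so that by Definition \ref{definition:full_crossing} we have $N_i = (N_{i-1}\setminus H_i)\cup(H_i\bigtriangledown B_i)$, where $B_i\neq\emptyset$ by AS6 and $H_i\cap B_i=\emptyset$ because atoms of $H_i$ are not below $(r_i)_R$ while those of $B_i$ are. First I would record a one-step lemma: if $\mu\in H_i$ and $\rho\in B_i$, then $\mu\bigtriangledown\rho$ is strictly wider than $\mu$ and $\rho\in N_{i-1}\cap N_i$. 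Strict widening holds because $\rho<(r_i)_R$ forces, via Theorem \ref{atomicSegmentFromTermTheorem}(iv), some component constant of $(r_i)_R$ into $U^c(\rho)$, while $\mu\not<(r_i)_R$ keeps that same constant out of $U^c(\mu)$, so $U^c(\mu)\subsetneq U^c(\mu)\cup U^c(\rho)=U^c(\mu\bigtriangledown\rho)$; and $\rho<(r_i)_R$ gives $\rho\notin H_i$, so $\rho$ survives the crossing by Theorem \ref{atomsInDiscriminantTheorem} (equivalently, by the formula for $N_i$).

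For the outward chains (ii) I would induct forward: given $\lambda_i\in N_i$, if $\lambda_i\notin H_{i+1}$ set $\lambda_{i+1}:=\lambda_i$ and $\rho_i:=\lambda_i$, which lies in $N_i\cap N_{i+1}$ by Theorem \ref{atomsInDiscriminantTheorem} and satisfies $\lambda_i=\lambda_i\bigtriangledown\lambda_i$ by Theorem \ref{unionProperties}(i); if $\lambda_i\in H_{i+1}$, pick any $\rho_i\in B_{i+1}$ and set $\lambda_{i+1}:=\lambda_i\bigtriangledown\rho_i\in H_{i+1}\bigtriangledown B_{i+1}\subseteq N_{i+1}$, strictly wider than $\lambda_i$ by the one-step lemma. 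For the inward chains (i) I would induct backward from $\phi=\lambda_n\in N_n$: since $N_i=(N_{i-1}\setminus H_i)\cup(H_i\bigtriangledown B_i)$, either $\lambda_i\in N_{i-1}\setminus H_i$, and then set $\lambda_{i-1}:=\lambda_i$, $\rho_{i-1}:=\lambda_i$; or $\lambda_i\in H_i\bigtriangledown B_i$, in which case fix any decomposition $\lambda_i=\mu\bigtriangledown\rho$ with $\mu\in H_i\subseteq N_{i-1}$ and $\rho\in B_i$ (such a decomposition exists by the very definition of $H_i\bigtriangledown B_i$), and set $\lambda_{i-1}:=\mu$, $\rho_{i-1}:=\rho$; the one-step lemma again supplies strict widening and $\rho_{i-1}\in N_{i-1}\cap N_i$. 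In both constructions, by design $\lambda_j=\lambda_{j+1}$ precisely when the ``grow'' branch was not taken, which happens precisely when $\lambda_j\notin dis_{N_j}(r_{j+1})$ — here one uses that an atom lying in both $N_j$ and $N_{j+1}$ cannot belong to $H_{j+1}$, since $H_{j+1}$ atoms are deleted by the crossing. This is statement (iv), and the atoms $\rho_i$ recorded at each step are statement (vi).

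Statements (iii) and (v) are then counting consequences. By (iv), the number of growth steps equals $\#\{i:\lambda_i\in dis_{N_i}(r_{i+1})\}$, i.e.\ the number of indices at which $\lambda_i$ makes $N_i$ fail the positive duple $r_{i+1}^{+}$; since each growth step enlarges the upper constant segment by at least one constant while non-growth steps leave it fixed, this count is at most $|U^c(\lambda_n)|-|U^c(\lambda_0)|$, and combining $|U^c(\lambda_0)|\ge 1$ (AS1) with $U^c(\lambda_n)\subseteq C$ gives both $\le |U^c(\lambda_n)|-1<|C|$ of (iii) and the bound of (v). The main difficulty is bookkeeping rather than conceptual: one must insist that the intermediate models $N_i$ are the genuine full crossings $\square_{r_i}N_{i-1}$ (so that no atom is silently discarded as redundant, which would break Theorem \ref{atomsInDiscriminantTheorem}); one must handle the degenerate overlap case in which a surviving atom of $N_{i-1}$ happens to coincide with some union $\mu\bigtriangledown\rho$ by always preferring the non-growth branch; and one must check that the decomposition chosen in the inward step stays inside $N_{i-1}$ and is compatible with continuing the backward induction, which is exactly what the explicit form $N_i=(N_{i-1}\setminus H_i)\cup(H_i\bigtriangledown B_i)$ and Theorem \ref{atomGenesisTheorem} guarantee.
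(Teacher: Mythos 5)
Your proposal is correct and follows essentially the same route as the paper's proof: both build the inward chain by backward induction and the outward chain by forward induction from the full-crossing replacement $N_i=(N_{i-1}\setminus H_i)\cup(H_i\bigtriangledown B_i)$ (your one-step lemma is the content of Theorems \ref{atomGenesisTheorem} and \ref{atomsInDiscriminantTheorem}, which the paper cites), and then obtain (iii)--(vi) by the same counting of growth steps against $\vert U^{c}(\cdot)\vert$. Your treatment is somewhat more explicit about the strict-widening argument and the degenerate case where a surviving atom coincides with a newly formed union, but the substance is identical.
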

\begin{proof}
Let $N_{i+1}$ be a model obtained as a result of full-crossing of $r_{i+1}$ in $N_i$. From Theorem \ref{atomGenesisTheorem}, or directly from the definition of full-crossing, it follows that an atom $\lambda_{i+1} \in N_{i+1}$ is either an atom of $N_i$ or has an upper constant segment that is the union of the upper constant segments of two atoms of $N_i$, say $\lambda_i$ and $\rho_i$, such that $\rho_i \in N_{i +1}$, $\,\lambda_i \not\in N_{i+1}$ and $\lambda_{i} \in dis_{N_i}(r_{i+1})$. Write $\lambda_{i+1} = \lambda_i \bigtriangledown \rho_i$. This suffices to establish that for each atom of $\lambda_{i+1} \in N_{i+1}$ there is at least one precursor atom in $\lambda_i \in N_i$, perhaps equal to $\lambda_{i+1}$. The precursor atom may not be unique because the same $\lambda_{i+1}$ may be formed by union of two different $\lambda_i, \rho_i$ pairs. This proves the existence of an inward chain, i.e.\ (i). 

To prove (ii) we can use the same reasoning. Given a $\lambda_i \in N_i$ the outward chain continues with $\lambda_{i+1} = \lambda_i$ when $\lambda_{i} \not\in dis_{N_i}(r_{i+1})$ and in case $\lambda_{i} \in dis_{N_i}(r_{i+1})$ then we can choose any of the resulting atoms from the union of $\lambda_{i}$ with any other atom $\rho_i$. This produces at least one, and typically many, possible outward chains.

Number (iii) follows trivially from the fact that any atom $\lambda_{i+1} = \lambda_i \bigtriangledown \rho_i$ formed by full-crossing is strictly larger than $\lambda_{i} \in dis_{N_i}(r_{i+1})$ and can grow at most $\vert U^{c}(\lambda_n) \vert - \vert U^{c}(\lambda_0) \vert$ times. 

Number (iv) is the result of Theorem \ref{atomsInDiscriminantTheorem} applied to successive atoms in any chain.
Number (v) is the same as (iv) but points out that an atom $\lambda_{i}$ causes model $N_{i}$ to produce a false negative for duple $r_{i+1}^{+}$, i.e.\ $N_{i} \models r_{i+1}^{-}$ only when $\lambda_{i} \in dis_{N_i}(r_{i+1})$. When an atom produces a false negative for $r_{i+1}^{+}$ its upper constant segment acquires at least one new constant in the crossing of $r_{i+1}$. Atoms along any chain can fail at most $\vert U^{c}(\lambda_n) \vert - \vert U^{c}(\lambda_0) \vert$ duples from the set $r_1, r_2,..., r_n$, irrespective of how large is $n$.  

 (vi) is a consequence of $\rho_i \in N_{i}$ that remains unaltered in $N_{i+1}$ and the recipe given above for the construction of $\lambda_{i+1}$. At the stages in the chain when $\lambda_i = \lambda_{i+1}$ we can use $\rho_i = \lambda_i$ since  $\lambda_{i+1} = \lambda_{i} = \lambda_i \bigtriangledown \lambda_i$.  
\end{proof}
\bigskip

Theorem \ref{inwarsOutwardSequenceTheorem} proves the existence of inward and outwards chains of atoms $\lambda_0, \lambda_1 ,..., \lambda_n$. Each chain ends in an atom of a model $N_n$ obtained by crossing a set of positive duples $R^{+}$ in an order $r_1, r_2,..., r_n$ producing a sequence of intermediate models $N_0,N_1, N_2,...,N_n$. Each atom along the chain belong to the corresponding intermediate model: $\lambda_i \in N_i$.

\newpage

\section{Discovery of rules in data}  \label{suppSection:discoveryOfRulesInData}

Let $P$ be a set of atomic and negated atomic sentences and $Q$ the set of its logical consequences, excluding $P$ itself. For example, $Q$ may represent data and $P$ the underlying rules implicit in the data that are not directly observable.  

In the Supplementary Section \ref{suppSection:learningTheCausalTheory} we prove that the freest model $F_C(Q^+)$ and the freest model $F_C(P^+)$ are atomized by the following disjoint sets of non-redundant atoms: \[
Nr(F_C(consequences)) = Nr(F_C(Q^+)) = \Phi \cup \Pi, 
\]\[
Nr(F_C(rules)) = Nr(F_C(P^+)) = \Phi \cup \Omega.
\]
The disjoint sets of atoms $\Phi, \,\Pi$ and $\Omega$ classify the non-redundant atoms involved into three separated classes. Since $P^+$ implies $Q^+$, the model $F_C(Q^+)$ is freer than $F_C(P^+)$ and, since $Q^+ \cap P^+ = \emptyset$, some or all the sentences of $P^+$ are negative in $F_C(Q^+)$. The atoms in the set $\Pi$ are precisely the ones discriminating duples of $P^+$ in $F_C(Q^+)$ and in Theorem \ref{causalTheorem} we prove that they have a cardinal bounded by that of $P^+$: \[
\vert \Pi \vert \leq \vert P^+\vert.
\]
The atoms in $\Pi$ encode that $P^+$ is not in $Q^+$ and are usually very wide (with very large upper constant segments) and, hence, barely discriminative. The atoms in the set $\Omega$ are non-redundant in $F_C(P^+)$ but redundant in $F_C(Q^+)$. Each atom $\omega \in \Omega$ is a union of at least one atom $\pi \in \Pi$ and other atoms of $F_C(Q^+)$, so they are very similar but even wider than those of $\Pi$. Furthermore, every duple discriminated by $\omega$ is discriminated by an atom of $\Pi$. If an atom in $\Omega$ discriminates a duple $s$, i.e.\ if it causes $F_C(P^+) \models s^-$ then $s^+$ implies (in any model) at least one duple of $P^+$. 

The set $\Phi$ contains the useful atoms: a duple that corresponds to a negative axiom, is negative in models $F_C(P^+)$ and $F_C(Q^+)$ because it is discriminated by at least one atom in the set $\Phi$. These atoms are common to both models.  

In a realistic scenario, only a small subset of the consequences, $Q^{\prime} \subset Q$, is observed (the training data). Because the upper constant segment of the atoms cannot grow without limit as a result of a sequence of crossings, the atoms mature rapidly into atoms of $\Phi$. Some intuition about this can be obtained from Supplementary Section \ref{section:genSubsetsFreestModel}, but in essence, when an atom appears in the discriminant of a duple in $Q^{\prime +}$, it is replaced by one or more wider atoms. Any atom $\phi$ in the model is the result of an ``inward chain'' of length $g(\phi)$ of increasingly wider atoms (with increasing upper constant segments). Since $g(\phi)$ cannot exceed $\vert U^c(\phi) \vert$, atoms mature after a few effective crossings. Specifically, effective crossings for $\phi$ are only the $g(\phi)$ ones that influenced the inward chain, i.e., those in which the atom in the inward chain appears in a discriminant and is, therefore, replaced. For example, most atoms of the MNIST dataset have about 10 to 12 constants, which means that they matured in fewer than 10 to 12 effective crossings. Although most crossing have no effect on the inward chain of $\phi$, if $\phi$ is not yet matured after $\vert Q^{\prime +} \vert$ crossings then $\phi$ has an expected probability of causing a false negative lower than $g(\phi) / \vert Q^{\prime +} \vert$. 

The non-redundant atoms of $F_C(Q^{\prime +})$ comprise a subset $\Phi^{\prime} \subset \Phi$ and a set of immature atoms that would develop into atoms of $\Phi$ if more data were processed. Other non-redundant atoms of $F_C(Q^{\prime +})$ include atoms of $\Pi$ and also wider atoms that, if more data is processed, will end up in the discriminant of sentences of $Q^+$ and removed from the model without leaving an offspring, as they produce only redundant atoms when they are crossed: \[
Nr(F_C(Q^{\prime +})) = \Phi^{\prime} \cup Immature  \cup Incompatible \cup \Pi.
\]
If Sparse Crossing is used, a model $N$ is obtained that depends upon the set $Q^{\prime -}$ and is atomized by a subset of the atoms of $F_C(Q^{\prime +})$: \[
Nr(N) \approx \Phi^{\prime \prime} \cup Immature^{\prime}  \cup Incompatible^{\prime} \subseteq Nr(F_C(Q^{\prime +})),
\]
where prime indicates subset, i.e.\ $\Phi^{\prime \prime} \subseteq \Phi^{\prime}$, etc. An atom of $\Pi$ can only be discovered in the improbable case that it discriminates a duple of $Q^{\prime -}$. Incompatible atoms are typically large and not very discriminative, so their presence have a relative low impact in the behavior of $N$. The atoms of $\Phi^{\prime \prime}$ and also the immature atoms provide an approximation to the rules $P$. We should take into account:
\begin{itemize}
\item Some atoms of $\Phi$ are rapidly discovered even when the set of examples seen, $Q^{\prime +}$, is a tiny fraction of the set $Q^+$.  Since every subset of $\Phi$ produces a model that satisfies the entire set $Q^+$ this leads to generalization.
\item There is a very large number of subsets of $\Phi$ that lead to generalizing models; in Supplementary Section \ref{section:genSubsetsFreestModel} and in \cite{Maroto} we discussed that small subsets of the atoms of a model can approximate the model with high accuracies even in the presence of noise. Atoms of $\Phi$ not present in a model cause false positives, but since most duples discriminated by an atom of $\Phi$ are also discriminated by other atoms of $\Phi$, small subsets of atoms can produce models that perform with low probability of false positive (but not 0).  
\item Different rules or aspects of the rules may be encoded by distinct atoms of $\Phi$. Some of these atoms may not be discovered by Sparse Crossing, depending on the counterexamples provided in $Q^{\prime -}$. For instance, consider a problem of identifying patterns in binary (black and white) images. There exist atoms in $\Phi$ that encode precisely this constraint, i.e., that each pixel location contains either a black or white pixel. Since all examples, including counterexamples, adhere to this rule, these atoms are not discriminative and will not be discoverable by Sparse Crossing.
\item The entire model $F_C(Q^{\prime +})$ is not generalizing, as it assigns negative every duple not implied by $Q^{\prime +}$, which causes false negatives. This is caused by the cumulative effect of all the incompatible and immature atoms, which cause false negatives. However, the number of atoms in $F_C(Q^{\prime +})$ is typically immense, and this loss of generalization is not observed in practice. Models with very different number of atoms tend to produce similarly generalizing models (see figure \ref{figure:mnistpfppfn}). 
\item There are other nuances that should be taken into account. For example, observable sentences in $Q^{\prime}$ typically belong to a localized subset of the space of possible duples. In the case of binary (black and white) images, all observable terms have as many constants as pixels. This can have the effect of introducing implicit rules in the data, see Theorem \ref{causalTheoremOnW}.
\item There are usually many symmetries given by permutations of the constants that interchange atoms of $\Phi$ leaving this set invariant. This property can potentially lead to reasonable guesses of $\Phi$ from a subset of it.
\end{itemize}
\subsection{Learning the causal theory} \label{suppSection:learningTheCausalTheory}

\begin{theorem} \label{causalTheorem}
Let $P$ and $Q$ be two sets of signed duples over a set of constants $C$ such that $Q$ is the set of logical consequences of $P$ minus the set $P$: 
\begin{enumerate}[label=\roman*), leftmargin=1cm]
\item The atomization without redundant atoms of $F_C(Q^{+})$ consists of non-redundant atoms of $F_C(P^{+})$ and an additional set of atoms of cardinal equal or lower than that of $\vert P^{+} \vert$. 
\item Each non-redundant atom of $F_C(Q^{+})$ is either a non-redundant atom of $F_C(P^{+})$ or is the unique discriminant of one duple of $\,P^{+}$. 
\item If a non-redundant atom of $F_C(Q^{+})$ that is the unique discriminant of a duple $\,s^+ \in P^{+}$ discriminates a duple $r=(r_L, r_R)$ then the component constants of the term $r_R$ are component constants of $s_R$. 
\item For each non-redundant atom $\omega$ of $F_C(P^{+})$ that is not a non-redundant atom of $F_C(Q^{+})$ there is a pining duple $u = (c, T_{\omega})$ of $\omega$ such that there is a $r^{+} \in P$ for which, $u^{+} \Rightarrow r^{+}$. 
\end{enumerate}
\end{theorem}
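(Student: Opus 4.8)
The plan is to prove part (ii) first, and then read off (i), (iii) and (iv) from it with almost no extra work. I would begin by recording what makes $F_C(Q^+)$ and $F_C(P^+)$ comparable. Since $Q$ is the set of logical consequences of $P=P^+\cup P^-$ with $P$ itself removed, the positive logical consequences of $P$ are exactly the duples in $P^+\cup Q^+$; and, assuming as usual that $P$ is consistent, Theorem \ref{SparseCrossing:segregationTheorem} (i) shows these coincide with the positive consequences of $P^+$ alone. Hence $P^+\Rightarrow Q^+$, so $F_C(P^+)$ is a model of $Q^+$ and $F_C(Q^+)$ is freer than or as free as $F_C(P^+)$. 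Two bookkeeping facts will be used repeatedly: $Th_0^+(F_C(P^+))$ equals the set of positive consequences of $P$, and $Q^+\subseteq Th_0^+(F_C(Q^+))$.

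For (ii), let $\phi$ be a non-redundant atom of $F_C(Q^+)$ that is not a non-redundant atom of $F_C(P^+)$. The first step is to show $\phi$ is not even redundant with $F_C(P^+)$: otherwise, by Theorem \ref{redundantIsUnionOfAtomsTheorem}, $\phi=\bigtriangledown_i\eta_i$ with the $\eta_i$ non-redundant atoms of $F_C(P^+)$, all distinct from $\phi$; by Theorem \ref{unionWithFreer} (iii) each $\eta_i$ is an atom or a union of atoms of $F_C(Q^+)$, hence decomposes into non-redundant atoms of $F_C(Q^+)$ of strictly smaller upper constant segment, and concatenating these decompositions realizes $\phi$ as a union of non-redundant atoms of $F_C(Q^+)$ all distinct from $\phi$ --- contradicting its non-redundancy there. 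Consequently $\phi$ is not an atom of $F_C(P^+)$ at all, so by Theorem \ref{redundantAtom} adding $\phi$ to $F_C(P^+)$ yields a strictly freer model; thus $\phi$ discriminates some positive duple of $F_C(P^+)$, and in particular $U^c(\phi)\neq C$. By Theorem \ref{uniquedupleTheorem} there is then a constant $\hat c\in U^c(\phi)$ whose pinning duple $(\hat c,T_\phi)$ is discriminated only by $\phi$ in $F_C(Q^+)$.

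The heart of the argument is to show $F_C(P^+)\models(\hat c\leq T_\phi)$. If this failed, some non-redundant atom $\eta$ of $F_C(P^+)$ would discriminate $(\hat c,T_\phi)$, i.e.\ $\eta<\hat c$ and $U^c(\eta)\subseteq U^c(\phi)$; the case $U^c(\eta)=U^c(\phi)$ is impossible, since it would make $\phi=\eta$ an atom of $F_C(P^+)$, so $U^c(\eta)\subsetneq U^c(\phi)$, and decomposing $\eta$ into non-redundant atoms of $F_C(Q^+)$ (Theorems \ref{unionWithFreer} (iii), \ref{redundantIsUnionOfAtomsTheorem}) produces one such atom $\mu\neq\phi$ with $\hat c\in U^c(\mu)\subseteq U^c(\phi)$, which discriminates $(\hat c,T_\phi)$ in $F_C(Q^+)$ --- contradicting the uniqueness of $\phi$. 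Hence $(\hat c\leq T_\phi)$ is a positive consequence of $P$, so it lies in $P^+\cup Q^+$; it is not in $Q^+$, because $\phi$ discriminates it in $F_C(Q^+)$; therefore $(\hat c,T_\phi)\in P^+$, and $\phi$ is its unique discriminant, which is (ii). Statement (iii) is then immediate with $s=(\hat c,T_\phi)$: if $\phi$ discriminates $r=(r_L,r_R)$ then $\phi\not<r_R$, so by Theorem \ref{atomicSegmentFromTermTheorem} (ii) ${\bf{C}}(r_R)\cap U^c(\phi)=\emptyset$, i.e.\ ${\bf{C}}(r_R)\subseteq C\setminus U^c(\phi)={\bf{C}}(T_\phi)={\bf{C}}(s_R)$.

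For (i), partition $Nr(F_C(Q^+))$ into the atoms that are non-redundant in $F_C(P^+)$ and the remaining set $\Pi$; by (ii) each atom of $\Pi$ is the unique discriminant of some duple of $P^+$, and two distinct atoms cannot be unique discriminants of the same duple, so $|\Pi|\leq|P^+|$. For (iv), given a non-redundant atom $\omega$ of $F_C(P^+)$ that is not non-redundant in $F_C(Q^+)$, decompose $\omega$ into non-redundant atoms of $F_C(Q^+)$, all distinct from $\omega$ (Theorems \ref{unionWithFreer} (iii), \ref{redundantIsUnionOfAtomsTheorem}); not all of them can be non-redundant atoms of $F_C(P^+)$, for then $\omega$ would be redundant there, so one of them, $\pi$, lies in $\Pi$; by (ii) $\pi$ uniquely discriminates a pinning duple $(\hat c,T_\pi)\in P^+$, and since $U^c(\pi)\subseteq U^c(\omega)$ we get $\hat c\in U^c(\omega)$ and $T_\omega\leq T_\pi$, so the pinning duple $u=(\hat c,T_\omega)$ of $\omega$ satisfies $u^+\Rightarrow(\hat c\leq T_\pi)$ with $(\hat c\leq T_\pi)\in P^+\subseteq P$. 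The step I expect to be the main obstacle is precisely showing $F_C(P^+)\models(\hat c\leq T_\phi)$: transferring the ``discriminated only by $\phi$'' property from $F_C(Q^+)$ to a positivity statement in the less free model $F_C(P^+)$ requires juggling the uniqueness of the discriminant, the ``freer than'' relation between the two freest models, and the decomposition of atoms of $F_C(P^+)$ into non-redundant atoms of $F_C(Q^+)$; everything else is routine manipulation of pinning terms within the already-established structure theory.
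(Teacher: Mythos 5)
Your proof is correct and follows essentially the same strategy as the paper: compare the two freest models via the freer-than relation, decompose atoms into non-redundant atoms of the freer model, and use pinning duples together with the dichotomy "a positive consequence of $P$ lies in $P^{+}\cup Q^{+}$, and negativity in $F_C(Q^{+})$ rules out $Q^{+}$" to land in $P^{+}$. The one organizational difference worth noting: you invoke Theorem \ref{uniquedupleTheorem} up front to fix a pinning duple uniquely discriminated by $\phi$ and then prove it is positive in $F_C(P^{+})$ by contradiction, whereas the paper first exhibits a duple of $P^{+}$ discriminated by $\pi$ and only then passes to a pinning duple via Theorem \ref{pinningStructureTheorem}; your ordering makes the "unique discriminant" clause of (ii) explicit where the paper leaves it somewhat implicit. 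Your route to (iv) — extracting a type-$\pi$ atom from the decomposition of $\omega$ and transporting its pinning constant to $\omega$ — is likewise a minor, valid rearrangement of the paper's argument.
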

\begin{proof}
By definition $P \Rightarrow Q$ and $Q \cap P = \emptyset$. Since $P \Rightarrow Q$ all the positive duples in $Q$ are positive duples of any model of $P$ and, hence, $F_C(Q^{+})$ is freer than $F_C(P^{+})$ or equal to $F_C(P^{+})$. Choose a non-redundant atom of $F_C(Q^{+})$ and consider the model $K$ spawned by the chosen atom plus the atoms of $F_C(P^{+})$. \\ 
Suppose $K = F_C(P^{+})$. If $K$ is equal to $F_C(P^{+})$ then the chosen atom, say $\phi$, is, according to Theorem \ref{redundantAtom}, either redundant with $F_C(P^{+})$ or is a non-redundant atom of $F_C(P^{+})$. We are going to show that $\phi$ cannot be redundant with $F_C(P^{+})$. Suppose it is redundant; $\phi$ is a union of atoms of $F_C(P^{+})$. Since $F_C(Q^{+})$ is freer than $F_C(P^{+})$ Theorem \ref{compositionTheorem} tells us that each atom of $F_C(P^{+})$ is either an atom of $F_C(Q^{+})$ or a union of atoms of $F_C(Q^{+})$. It immediately follows that $\phi$ is a union of atoms of $F_C(Q^{+})$, which contradicts the assumption that we chose a non-redundant atom of $F_C(Q^{+})$. Therefore, if $K$ is equal to $F_C(P^{+})$ then $\phi$ is a non-redundant atom of $F_C(P^{+})$. \\
Suppose now that $K \not= F_C(P^{+})$. In this case we are going to refer to the chosen atom as $\pi$. Since $K$ has all the atoms of $F_C(P^{+})$, both models can differ only if there is at least one duple $r$ discriminated by $\pi$ such that $F_C(P^{+}) \models r^{+}$ and $F_C(K) \models r^{-}$. Since $\pi$ is an atom of $F_C(Q^{+})$ then $F_C(Q^{+}) \models r^{-}$. Because $r^{+}$ is modeled by the freest model of $P$ then $r^{+}$ is a logical consequence of $P^{+}$ and, by definition of $Q$, either $r^{+} \in Q^{+}$ or $r^{+} \in P^{+}$. Since $F_C(Q^{+}) \models r^{-}$, it follows that $r^{+}$ is a duple of $P^{+}$. \\
So far we have shown that each non-redundant atom of $F_C(Q^{+})$ is either like $\phi$ or like $\pi$. Atoms like $\phi$ are also non-redundant atoms of $F_C(P^{+})$ while atoms like $\pi$ are never atoms of $F_C(P^{+})$ and discriminate each at least one duple of $P^{+}$. \\
As in the proof of Theorem \ref{pinningStructureTheorem}, for each $r$ discriminated by $\pi$ there is a constant $c \in U^{c}(\pi)$ such that the pinning duple $s = (c, T_\pi)$ satisfies $s^{-} \Rightarrow r^{-}$. Each atom $\pi$ discriminates at least one duple of $P^{+}$, so let $r$ be one of such duples. The equivalent $r^{+} \Rightarrow s^{+}$ implies that $s^{+}$ is either a consequence of $P^{+}$ or in $P^{+}$. Since $F_C(Q^{+}) \models s^{-}$ we can discard that $s^{+}$ is a consequence of $P^{+}$ and, therefore, $s^{+}$ should be a duple of $P^{+}$. Since the pinning duples of different atoms are all different (see Theorem \ref{pinningStructureTheorem} (iii)) then each atom $\pi$ can be mapped to a different duple of $P^{+}$ and, hence, $F_C(Q^{+})$ cannot have more atoms like $\pi$ than duples are in $P^{+}$.  We have proven (i) and (ii). \\
iii) We have just seen that a non-redundant atom of $F_C(Q^{+})$ that is the unique discriminant of a duple $s^+ \in P^{+}$ is an atom $\pi_0$ of type $\pi$ and $s = (c, T_{\pi_0})$ is a pinning duple of $\pi_0$. If $\pi_0$ discriminates a duple $r = (r_L, r_R)$ then $\pi_0 < r_L$ and $r_R < T_{\pi_0} = s_R$ is true in any model over $C$, i.e.\ the component constants of the term $r_R$ are component constants of $s_R$. \\
iv) Let $\omega$ be a non-redundant atom of $F_C(P^{+})$ that is not a non-redundant atom of $F_C(Q^{+})$. According to Theorem \ref{uniquedupleTheorem}, if $\omega$ is non-redundant in $F_C(P^{+})$ there is at least one pinning duple $u = (c, T_{\omega})$ discriminated only by $\omega$ in $F_C(P^{+})$. Because $F_C(Q^{+})$ is freer than $F_C(P^{+})$ then $F_C(Q^{+}) \models u^{-}$. The non-redundant atoms of $F_C(Q^{+})$ discriminating $u^{-}$ cannot be of type $\phi$ because these atoms are also atoms of $F_C(P^{+})$ and that would contradict that $u$ is discriminated only by $\omega$ in $F_C(P^{+})$. Therefore, the atoms of $F_C(Q^{+})$ that discriminate $u$ are of type $\pi$. \\
Since $F_C(Q^{+})$ is freer than $F_C(P^{+})$ then $\omega$ is a union of atoms of $F_C(Q^{+})$.  Among these atoms at least one is of type $\pi$, otherwise $\omega$ would be redundant in $F_C(P^{+})$ and it is not. In fact, at least one of these atoms of type $\pi$ should discriminate $u$ simply because if atom $\alpha \bigtriangledown \beta$ discriminates $u$ then either $\alpha$ or $\beta$ discriminates $u$. Let $\pi_0 \in F_C(Q^{+})$ be one of such atoms of type $\pi$ that discriminates $u$, i.e.\ $\pi_0 < c$ and $\pi_0 \not< T_{\omega}$. Since $\omega$ is wider than $\pi_0$, $T_{\omega} < T_{\pi_0}$. It follows $(c \not\leq T_{\pi_0}) \Rightarrow u^{-}$. We showed above that the pinning duples of the atoms of type $\pi$ are elements of the set $P^{+}$. Therefore, $u \in P^{+}$ and $u^{+} \Rightarrow (c \leq T_{\pi_0})$. 
\end{proof}	
\bigskip

Often the duples that can be observed lie in a region, $W \subseteq F_C(\emptyset) \times F_C(\emptyset)$, of the space of duples. For example, if we are learning images, the duples observed may all have a term at the right-hand side that correspond to an image; a term formed as a summation of as many constants as pixels. Terms with other number of component constants may not be accessible for training. The following theorem clarifies this situation:  

\bigskip

\begin{theorem} \label{causalTheoremOnW}
Let $P$ and $Q$ be two sets of signed duples over a set of constants $C$ such that $Q$ is the set of logical consequences of $P$ minus the set $P$. Let $W$ by the set of ``observable duples'': 
\begin{enumerate}[label=\roman*), leftmargin=1cm]
\item The non-redundant atoms of $F_C(P^{+})$ that are non-redundant atoms of $F_C(Q^+ \cap W)$  are also non-redundant atoms of $F_C(Q^+)$.   
\item The non-redundant atoms of $F_C(P^{+})$ that are non-redundant atoms of $F_C(Q^+)$ are atoms of $F_C(Q^+ \cap W)$, redundant or non-redundant. 
\item A non-redundant atom $\eta$ of $F_C(P^{+})$ that is a non-redundant atom of $F_C(Q^+)$ but a redundant atom of $F_C(Q^+ \cap W)$ exists only if there is a non-redundant atom $\delta$ of $F_C(Q^+ \cap W)$ that is narrower that $\eta$ and discriminated by at least one duple of $Q^+ \cap \overline{W}$ and by no duple of $Q^+ \cap W$.
\end{enumerate}
\end{theorem}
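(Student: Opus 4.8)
The plan is to drive all three parts from the freeness chain: $F_C(Q^{+}\cap W)$ is freer than or as free as $F_C(Q^{+})$, which in turn is freer than or as free as $F_C(P^{+})$ — the first because $Q^{+}\cap W\subseteq Q^{+}$, the second as in the proof of Theorem \ref{causalTheorem} (so $F_C(P^{+})$ already satisfies $Q^{+}$). All duple sets here are finite since $C$ is. I would then lean on Theorem \ref{unionWithFreer} (atoms of a less-free model are non-redundant atoms of, or redundant with, a freer model, and freer-model atomizations relate to less-free ones through unions), together with Theorem \ref{redundantIsUnionOfAtomsTheorem} and the uniqueness of the non-redundant atomization (Theorem \ref{uniqueAtomization}).

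For (i), given $\eta$ non-redundant in $F_C(P^{+})$ and in $F_C(Q^{+}\cap W)$, I would apply Theorem \ref{unionWithFreer} (ii) with $F_C(Q^{+})$ as the freer model: $\eta$ is then either a non-redundant atom of $F_C(Q^{+})$ (the desired conclusion) or redundant with $F_C(Q^{+})$. In the latter case $\eta=\bigtriangledown_i\nu_i$ with $\nu_i$ non-redundant atoms of $F_C(Q^{+})$ strictly narrower than $\eta$; each $\nu_i$ is in turn an atom or a union of atoms of the freer $F_C(Q^{+}\cap W)$, all with upper constant segments inside $U^{c}(\nu_i)\subsetneq U^{c}(\eta)$, so $\eta$ becomes a union of strictly narrower atoms of $F_C(Q^{+}\cap W)$, i.e.\ redundant there — a contradiction. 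For (ii), given $\eta$ non-redundant in $F_C(P^{+})$ and in $F_C(Q^{+})$, I would apply Theorem \ref{unionWithFreer} (ii) with $F_C(Q^{+}\cap W)$ as the freer model: $\eta$ is a non-redundant atom of $F_C(Q^{+}\cap W)$ or is redundant with it, and in either case $\eta$ appears in some atomization of $F_C(Q^{+}\cap W)$.

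Part (iii) is the substantive one. Assuming $\eta$ is non-redundant in $F_C(Q^{+})$ but redundant in $F_C(Q^{+}\cap W)$, write $\eta=\bigtriangledown_i\delta_i$ with the $\delta_i$ non-redundant atoms of $F_C(Q^{+}\cap W)$ strictly narrower than $\eta$ (Theorem \ref{redundantIsUnionOfAtomsTheorem} (ii)). The pivotal fact I would establish is that $F_C(Q^{+})$ is exactly the model obtained by full-crossing into $F_C(Q^{+}\cap W)$, in any order, the finitely many duples of $Q^{+}\cap\overline{W}$: iterating Theorem \ref{fullCrossingIsFreestTheorem} yields the freest model of $Th^{+}(F_C(Q^{+}\cap W))\cup(Q^{+}\cap\overline{W})$, whose positive consequences coincide with those of $Q^{+}=(Q^{+}\cap W)\cup(Q^{+}\cap\overline{W})$. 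Along the chain $M_0=F_C(Q^{+}\cap W),\dots,M_m=F_C(Q^{+})$ of these crossings I would track each $\delta_i$: by Theorem \ref{atomsInDiscriminantTheorem} it persists into the next model unless it lies in the discriminant of the crossed duple, and by Theorem \ref{atomGenesisTheorem} (i) a non-redundant atom stays non-redundant through a crossing; hence a $\delta_i$ that never enters a discriminant remains a non-redundant atom of $F_C(Q^{+})$. If this held for all $i$, then $\eta=\bigtriangledown_i\delta_i$ would be redundant in $F_C(Q^{+})$, contradicting the hypothesis — so some $\delta:=\delta_i$ first enters a discriminant $dis_{M_{j-1}}(r_j)$ at some step $j$. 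Minimality of $j$ keeps $\delta$ a non-redundant atom of $M_{j-1}$, and $\delta<(r_j)_L$, $\delta\not<(r_j)_R$, so by Theorem \ref{atomIndependentFromTheRestTheorem} $\delta$ is discriminated by $r_j\in Q^{+}\cap\overline{W}$; and since $\delta$ lies in the atomization of $F_C(Q^{+}\cap W)$, where every duple of $Q^{+}\cap W$ is positive, $\delta$ discriminates no duple of $Q^{+}\cap W$ (Theorem \ref{atomicSegmentFromTermTheorem} (vi), and Theorem \ref{atomIndependentFromTheRestTheorem}). This $\delta$ — narrower than $\eta$ by construction — is the atom the statement requires.

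The hard part, I expect, will be the bookkeeping in (iii): first getting the reduction of $F_C(Q^{+})$ to a crossing sequence over $F_C(Q^{+}\cap W)$ stated and justified cleanly, and then carefully distinguishing ``$\delta$ is non-redundant with $M_j$'' — which propagates all the way along the chain via Theorem \ref{atomGenesisTheorem} — from ``$\delta$ is present in the atomization of $M_j$'', which can fail exactly at the crossing that removes $\delta$. The whole conclusion rides on the dichotomy: either every $\delta_i$ survives (forcing $\eta$ redundant in $F_C(Q^{+})$, against the hypothesis) or one of them is first crossed away, and that crossing hands us the required unobservable positive duple.
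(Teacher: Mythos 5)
Your proof is correct and follows essentially the same route as the paper's: the freeness chain $F_C(Q^{+}\cap W)\to F_C(Q^{+})\to F_C(P^{+})$ drives (i) and (ii), and (iii) comes from decomposing the redundant $\eta$ into non-redundant atoms of $F_C(Q^{+}\cap W)$ and arguing that at least one must be crossed away by a duple of $Q^{+}\cap\overline{W}$, since otherwise $\eta$ would stay redundant in $F_C(Q^{+})$. The only differences are presentational: where the paper invokes the external/internal-pair classification of Theorem \ref{causalTheorem} for (i), you give the same fact a direct proof via Theorems \ref{unionWithFreer} and \ref{redundantIsUnionOfAtomsTheorem}, and your part (iii) spells out the crossing-chain bookkeeping that the paper leaves terse.
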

\begin{proof}
Consider the sequence of models: \[
F_C(Q^+ \cap W) \rightarrow F_C(Q^+) \rightarrow  F_C(P^+) = F_C(P^+ \cup (Q^+ \cap \overline{W} ) ) 
\]  were each $\rightarrow$ represent a constant-preserving homomorphism. The last equality follows easily as $P^+ \Rightarrow Q^+$ and then the freest model of $P^+ \cup (Q^+ \cap \overline{W} )$ is the same of that of  $P^+$ alone.  It is clear that the models in the sequence are increasingly less free as we move to the right;  $F_C(Q^+ \cap W)$ is obviously freer or equal to $F_C(Q^+)$ and $F_C(Q^+)$ is freer than $F_C(P^+)$ because every duple in $Q^+$ is a logical consequence of $P^+$.  \\
Let's refer to the pair of models $F_C(Q^+ \cap W)$ and $F_C(P^+ \cup (Q^+ \cap \overline{W} ))$ as the ``external pair'' and to the pair of models $F_C(Q^+)$ and $F_C(P^+)$ as the ``internal pair''.  \\
For the internal pair, the first model, i.e.\ $F_C(Q^+)$ contains the logical consequences (but not the premises) of the second, $F_C(P^+)$, with an empty intersection $Q^+ \cap P^+ = \emptyset$. Just like the internal pair, the external pair behaves in the same manner; $Q^+ \cap W$ is the set of logical consequences of $P^+ \cup (Q^+ \cap \overline{W} )$ and both sets have empty intersection. \\
In the proof of Theorem \ref{causalTheorem} we showed that the non-redundant atoms of the (freest model of) the consequences where, either non-redundant atoms of the causes, we called them type $\phi$ atoms, or they were external to the model of the causes and we refereed to that atoms as of type $\pi$.  Also, we used $\omega$ for the non-redundant atoms of the causes that were redundant atoms of the consequences; let's refer to them as atoms of type  $\omega$. \\
i) The atoms of type $\phi$ for the external pair are non-redundant atoms of $F_C(P^+ \cup (Q^+ \cap \overline{W} )$, which is equal to $F_C(P^+)$. Since a non-redundant atom of the first model of the sequence that is also an atom of the last model of the sequence is a non-redundant atom of the four models of the sequence, it is clear that the atoms of type $\phi$ for the external pair are also atoms of type $\phi$ for the internal pair. The atoms of type $\phi$ we are discovering using duples of $W$ are atoms of type $\phi$ that we would discover using all the duples. \\ 
ii) It follows from the fact that $F_C(Q^+ \cap W)$ is freer than  $F_C(Q^+)$. Notice that the type $\phi$ atoms of the internal pair that are not type $\phi$ atoms of the external pair are redundant atoms of $F_C(Q^+ \cap W)$ that become non-redundant when crossing some duple of $Q^+ \cap \overline{W}$, since they are non-redundant for $F_C(Q^+)$.  \\ 
iii) It is natural to wonder which type $\phi$ atoms of the internal pair are not discovered when using only duples of $W$. Since $\eta$ is redundant in $F_C(Q^+ \cap W)$ it must be equal to a union of non-redundant atoms of $F_C(Q^+ \cap W)$. As $\eta$ becomes non-redundant when crossing the duples in $Q^+ \cap \overline{W}$ there must be some non-redundant atom $\delta$ of $F_C(Q^+ \cap W)$ narrower than $\eta$, i.e.\ $U_C(\delta) \subseteq U_C(\eta)$, that is discriminated away by some duple of $Q^+ \cap \overline{W}$. Notice that $\delta$ is of type $\pi$ for the external pair.  \\
\end{proof}	
\bigskip

\noindent\textbf{Example.} Consider the task of, given a black and white $n \times n$ image, distinguishing whether or not it has at least one black vertical bar. We may have an underlying theory  \[ 
P^+ = \{ (v \leq bar_1), (v \leq bar_2),...., (v \leq bar_n) \}  \]
with $2n^2 + 1$ constants, one for each color of the $n^2$ pixels plus one constant for the vertical bar $v$. If the observable duples belong to well-formed images, then $W$ consists of the set of all the terms with $n^2$ constants, one for each pixel. The constant are $C =  \{ v, b_{ij}, w_{ij} : i, j \in  \{1,..., n \}  \}$ and the atomization of the freest model is \[ 
 F_C(\emptyset) = \{ \phi_{v}, \phi_{b_{ij}}, \phi_{w_{ij}} : i, j \in  \{1,..., n \}  \} 
\]
where the $b_{ij}$ are the constants for the black color pixels and $w_{ij}$ the constants for the white color pixels. We then have: \[ 
 F_C(P^+) = \{ \phi_{b_{ij}}, \phi_{w_{ij}},  \phi_{v, b_{\sigma(1)1}, b_{\sigma(2)2},...,b_{\sigma(n)n} }  : i, j \in  \{1,...,n \}  \,\,\,\, \sigma : \{1,...,n \} \rightarrow \{1,...,n \}   \} 
\]
where $\sigma$ runs along the set of all functions from  $\{1,...,n \}$ to $ \{1,...,n \}$. The atomization has $2n^2$ atoms with one constant in its upper constant segment plus $n^n$ atoms with $n$ constants in their upper constant segments, one constant for a black pixel selected for each of the $n$ columns. In total $2n^2 + n^n$ non--redundant atoms.
\[ 
 F_C(Q^+) = \{ \phi_{b_{ij}}, \phi_{w_{ij}},  \phi_{v, b_{\sigma(1)1}, b_{\sigma(2)2},...,b_{\sigma(n)n} }  : i, j \in  \{1,...,n \}  \,\,\,\, \sigma : \{1,...,n \} \rightarrow \{1,...,n \}   \}  \, \cup 
\] \[ 
\cup   \{ \phi_{C - \{b_{1j}, b_{2j},..., b_{nj} \}},   :  j \in  \{1,...,n \}   \} .
\]
Every non-redundant atom of $F_C(P^+)$ is discovered by $F_C(Q^+)$.  The $n$ atoms $\phi_{C - \{b_{1j}, b_{2j},..., b_{nj} \}}$ are type $\pi$ atoms in the proof of Theorem $\ref{causalTheorem}$ (i) and each of them have a duple of $P$ as pining duple.  We will see later that:
\[ 
 F_C(Q^+ \cap W) = \{ \phi_{b_{ij}}, \phi_{w_{ij}},  \phi_{v, b_{\sigma(1)1}, b_{\sigma(2)2},...,b_{\sigma(n)n} }  : i, j \in  \{1,...,n \}  \,\,\,\, \sigma : \{1,...,n \} \rightarrow \{1,...,n \}   \}  \, \cup 
\] \[ 
\cup   \{ \phi_{v, b_{ij}, w_{ij}}  :  i, j \in  \{1,...,n \}   \} .
\]
Again, every non-redundant atom of $F_C(P^+)$ is discovered by $F_C(Q^+ \cap W)$. The $n^2$ atoms $\phi_{v, b_{ij}, w_{ij}}$ actually characterize the set $W$. Notice that the $\pi$ atoms $\phi_{C - \{b_{1j}, b_{2j},..., b_{nj} \}}$ are equal to unions of atoms of the form  $\phi_{v, b_{ij}, w_{ij}}$ and $\phi_{w_{ij}}$, so they are all redundant with $F_C(Q^+ \cap W)$.
From Theorem  \ref{causalTheoremOnW} (iv) we can deduce that there is no non-redundant atom of $F_C(P^{+})$ that is non-redundant in $F_C(Q^+)$ and redundant in $F_C(Q^+ \cap W)$; an atom $\delta$ narrower to $\phi_{b, b_{\sigma(1)1}, b_{\sigma(2)2},...,b_{\sigma(n)n} }$ is necessarily missing some black pixel constant in some of the $n$ columns and it is, therefore, discriminated by some duple $Q^+ \cap W$, i.e.\ there is a duple with a term of a well-formed image that has $\delta$ in its discriminant. Therefore, every type $\phi$ atom of  $F_C(Q^+)$ is discovered by $F_C(Q^+ \cap W)$.
\bigskip

Notice that for any pattern recognition problem of this kind, with a property $v$ shared by a set of binary images, we can say the same as for the vertical bar.  The atoms of $v$ that discriminate images without the property $v$ cannot have the black and white constant for the same pixel. Therefore, there is no non-redundant atom of $F_C(P^{+})$ that is a non-redundant in $F_C(Q^+)$ and redundant in $F_C(Q^+ \cap W)$. Hence, $F_C(Q^+ \cap W)$ discovers every non-redundant atom of $F_C(P^+)$ discovered by $F_C(Q^+)$.

\newpage

\section{Generalizing Subsets of the Freest Model} \label{section:genSubsetsFreestModel}

Consider the freest model $M =F_C(R^{+})$ of an ideal set of positive duples containing every valid sample of a task, for example, a classification problem. Since $R^{+}$ contains every valid sample, $M$ must perform with 0 error. Given an ordering $R^{+} =\{s_1^{+}, s_2^{+}, s_3^{+},...,s_J^{+}\}$ and applying full-crossing, we obtain the sequence of models $F_0, F_1,...,F_j,...,F_J$, with $F_j = F_C(s_1^{+}, s_2^{+},...,s_j^{+})$ for $0 \leq j  \leq J = \vert R^{+} \vert$ with $F_0 = F_C(\emptyset)$ and $F_J = M$, which is independent of the order chosen. We are interested in calculating how well a subset of atoms of the intermediate model $F_j$ approaches $M$.

In  Supplementary Section \ref{suppSection:statistical_model} we show that, in the task of approaching $M$ with a subset of $F_j$, after crossing $j$ positive duples, the expected value for the probability of false negative of a subset of $Z$ atoms of $F_j$ is given by \[
\overline{\operatorname{PFN}}(\phi_1,\dots,\phi_Z)\approx  \sum_{i=1}^{Z} \min \left( \frac{1}{h(\phi_i) + 1},  \frac{g(\phi_i) + 1}{j + 1} \right),
\]
where $g(\phi_i)$ is the length of the inner chain of atom $\phi_i$, i.e.\ the number of atom unions that took the formation of atom $\phi_i$, and $h(\phi_i)$ is the number of positive duples crossed after the final atom union that produced $\phi_i$.  
 The expected $\operatorname{PFN}$ decreases with $j$ so, for a sufficiently large $j$ it becomes as small as desired. Eventually, the expected $\operatorname{PFN}$ is dominated by the length of the tails $h(\phi_i)$;  \[
\overline{\operatorname{PFN}}(\phi_1,\dots,\phi_Z) \approx  \sum_{i=1}^{Z} \frac{1}{h(\phi_i) + 1},
\]
and, since the tails grow linearly with $j$ the collective $\operatorname{PFN}$ still decreases with $j$ and grows linearly with the number of atoms in the set, $Z$.

To find out if our subset is generalizing we must also consider the probability of false positive. A duple that must be negative gives a false positive if every atom in our subset fails to discriminate the duple. In this case  \[
\overline{\operatorname{PFP}}(\phi_1,\dots,\phi_Z) = \prod_{i=1}^{Z} \overline{\operatorname{PFP}}(\phi_i),
\]
which assumes the $\overline{\operatorname{PFP}}(\phi_i)$ of individual atoms are approximately independent of each other. The assumption of independence may lead to a safe overestimation of $\overline{\operatorname{PFN}}$ but to an underestimation of the $\overline{\operatorname{PFP}}$.  However, because every $F_j$ is freer than $M$, there must be atoms in $F_j$ that discriminate every duple that is negative in $M$. With a large enough subset, and even if there are correlations, we must get a collective $\overline{\operatorname{PFP}}$ as small as desired, even equal to 0. 

To minimize the $\overline{\operatorname{PFP}}$ we have to select the atoms for our subset as those having a low and uncorrelated individual $\overline{\operatorname{PFP}}(\phi_i)$, which we can do by computing $\overline{\operatorname{PFP}}(\phi_i)$ using the negative duples of the training set. In addition, we should also make sure to select a subset that, collectively, discriminates all the negative duples of the training set. Since correlated atoms discriminate similar duples, the smaller the resulting subset of atoms the less correlation we must have. 

Since the upper constant segment of an atom is limited by the number of constants, atoms cannot ``grow'' without limit. Every atom present in the hypothetical model $M$ is the result of a finite number of atom unions and after that, the atom ``matures'' and remains unchanged. Notice that an atom is removed from an intermediate model when it causes the first false negative in the sequence $s_1^{+}, s_2^{+}, s_3^{+},...,s_J^{+}$, then the atoms grows, but this cannot happen many times. In fact, atoms obtained in classification problems have just a few constants in its upper constant segment: $5$ to $50$ constants, with most atoms around $12$ to $15$ constants, are typical values observed, so maturity is typically reached after $g(\phi_i) < 15$ crossings. 

We are in an advantageous situation where the $\overline{\operatorname{PFP}}$ decreases geometrically with $Z$ while the $\overline{\operatorname{PFN}}$ increases linearly with $Z$ but decreases inversely proportional to $j$. We then expect subsets of atoms of the freest model $F_j$ with low and uncorrelated individual $\overline{\operatorname{PFP}}$ to do a good job approaching $M$, with an error as small as desired as $j$ increases, which is, indeed, observed experimentally in a wide range of problems.

\subsection{Generalizing Subsets selected by Sparse Crossing} \label{Supplementary Section:section_subsets}

\begin{figure}[hbt!]
\centering\includegraphics[width=1\linewidth]{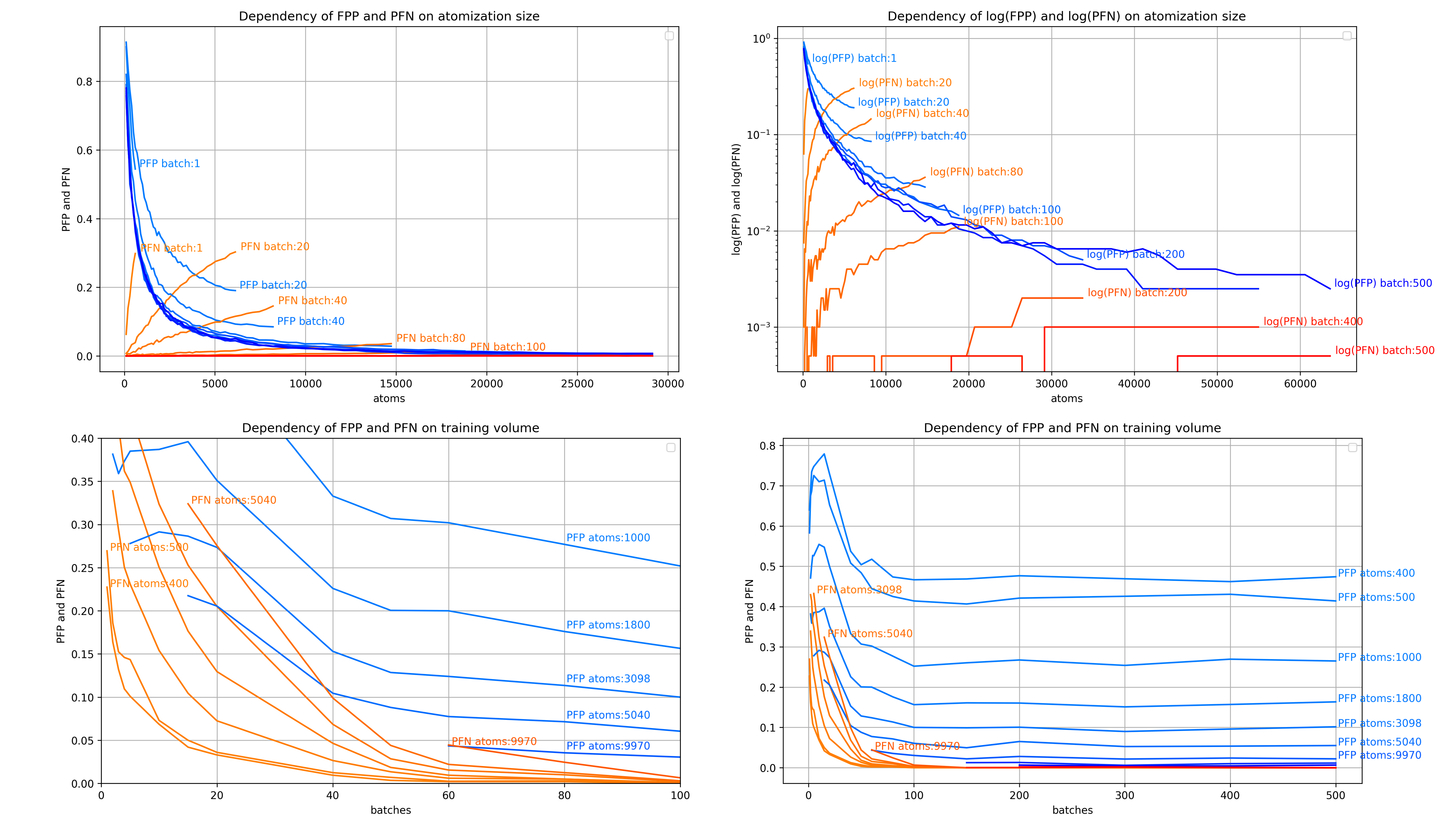}
\caption{\textbf{Evolution of the $\operatorname{PFP}$ and $\operatorname{PFN}$ for the problem of classifying images with or without a complete vertical black bar in an $8\times 8$ grid in the presence of $50\%$ background noise}. Every batch consisted of $1{,}000$ examples and $1{,}000$ counterexample images. As expected, the $\operatorname{PFN}$ (in orange/red) increases linearly with the number of atoms in the model, with a slight sublinear growth due to atom correlations. For the $\operatorname{PFP}$ (in blue), we observe a very fast geometric decrease with the number of atoms, with diminishing rate due to correlations. As training progresses, an strong decrease of both the $\operatorname{PFP}$ and $\operatorname{PFN}$ is observed for models with the same number of atoms. After about 100 batches, most atoms reach maturity; after that, the larger the subset of atoms the better the performance. However,  more training does not result in better performance for sets of atoms with the same size. The models were obtained using Sparse Crossing, which computes a model $M_b$ at batch $b$, that is a subset of the freest model of the data. At different stages of the training, subsets of $M_b$ with good discriminative capacity and low correlation were obtained to calculate the figures. Discriminative, low-correlation subsets were calculated by selecting a set of atoms, as small as possible, that collectively sufficed to discriminate $5{,}000$ counterexamples. To obtain larger subsets multiple subset selections were carried out, with replacement, and using the same set of counterexamples. The resulting subsets were added. The performance of the subsets was measured using a test set of $2{,}000$ examples and $2{,}000$ counterexamples. }
\label{figure:verticalLines}
\end{figure}

\begin{figure}[hbt!]
    \centering \includegraphics[width=0.8\textwidth,height=0.75\textheight,keepaspectratio,bb=0 0 2231 3730]{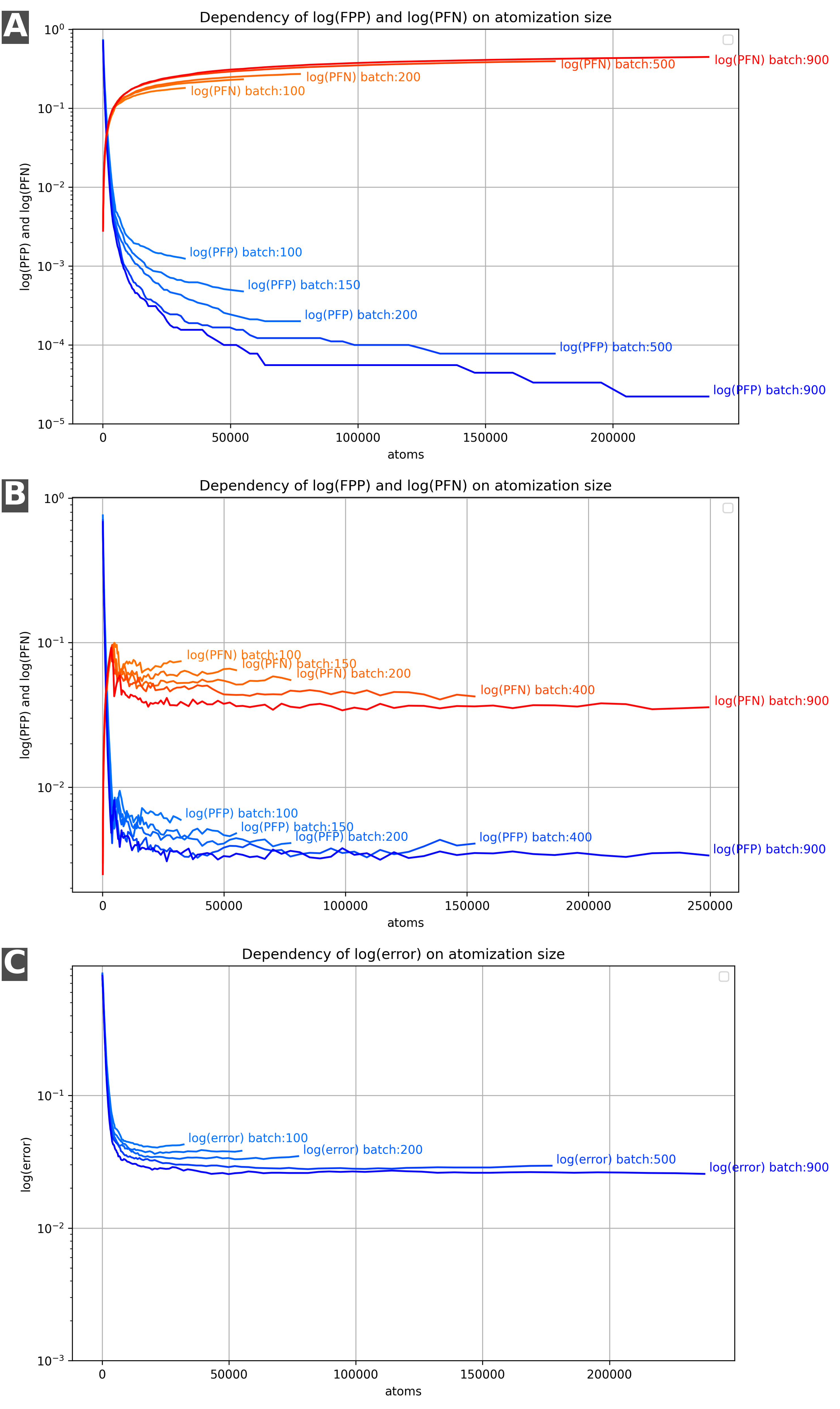}
	\caption{\textbf{Repeated training with the same data for the MNIST handwritten digit classification.} The evolution of the $\operatorname{PFP}$ and $\operatorname{PFN}$ is shown in plot A. In plot B, the $\operatorname{PFP}$ and $\operatorname{PFN}$ calculated from an optimal number of atom misses (beyond which a duple is considered negative) estimated using the validation dataset. Plot C, corresponds to the classification error rate obtained by selecting the class with fewer misses, without using the validation. The figure illustrates that Sparse Crossing produces models that improve with the amount of training even if the same data is presented 490 times, whiteout overfitting, and given constant results across a wide range of atom subset sizes. }
     \label{figure:mnistpfppfn}
\end{figure}

The freest model, $F_C(R^+)$, of a set $R^+$ of atomic sentences is usually too large to compute. However, using Sparse Crossing, it is possible to directly calculate a subset of its atoms without the need of computing the entire freest model first. For that to be possible, a set of negative duples, $R^-$, is also required. In a classification problem, counterexamples can be provided as negative duples.   

The subset of atoms computed by sparse-crossing at batch $i$ is such that it satisfies $R_i = R_i^+ \cup R_i^-$ with just the necessary atoms needed to discriminate $R_i^-$, i.e., at most $\vert R_i^-\vert$ atoms. This results in a small subset of atoms that are good at discriminating the negative duples. Moreover, the less correlation the atoms in the subset have, the fewer atoms are needed to discriminate $R_i^-$, so Sparse Crossing also tends to produce atoms that are relatively uncorrelated.

In addition to the ``master'' model of small cardinality obtained from the sparse-crossing of a batch, a ``union model'' obtained as the union of the master models of all previous batches is also computed (see \textbf{Supplementary Section, Section \ref{suppSection:SparseCrossingInDepth}}). The union model undergoes a sieve in which atoms inconsistent with $\cup^i_{k=0} R^+_k$ are removed resulting in a model that satisfies this set of positive duples. The union model is atomized by a subset of atoms of the freest model of $\cup^i_{k=0} R^+_k$, and is typically much larger than the master model but still very small compared to the size of the freest model. 

The mental experiment of Section \ref{section:genSubsetsFreestModel} can be done in practice using Sparse Crossing. In this experiment, we study how well subsets of the model $F_j = F_C(s_1^{+}, s_2^{+},...,s_j^{+})$ approximate a hypothetical model $M$ of perfect performance. Assume the first $i$ batches are such that $\cup^i_{k=0} R^+_k = \{s_1^{+}, s_2^{+},...,s_j^{+} \}$. To study the evolution of the expected $\operatorname{PFP}$ and $\operatorname{PFN}$ we can compute subsets of atoms of $F_j$ of various sizes by selecting atoms from the union model. We can extract from the union model subsets of any size of discriminative, low-correlation atoms, for which we use the following selection method: we select a small subset of atoms that suffices to discriminate the training set or part of it and, by adding multiple such subsets of smallest cardinality we can extract larger subsets of $F_j$ of any size up to the size of the union model.

Figures \ref{figure:verticalLines} and \ref{figure:mnistpfppfn} show the observed $\overline{\operatorname{PFP}}$ and $\overline{\operatorname{PFN}}$ for the problem of identifying vertical bars in a noisy background and for the problem of handwritten digits, respectively. For the first problem there is a very large supply of examples while for the second, the number of examples is limited to 50K. The calculations of the expected $\operatorname{PFP}$ and $\operatorname{PFN}$ assumes a sequence $s_1^{+}, s_2^{+}, s_3^{+},...,s_j^{+}$ of different positive examples, which is what happens in the first problem. However, for the second problem the sequence of examples contains repeated samples and after a few batches, only contains repeated examples. While with Full Crossing training using repeated examples has no effect, with Sparse Crossing it is useful as it leads to the discovery of more and more discriminative atoms of the freest model as explained in Section \ref{SparseCrossing:smarter} or Supplementary Section \ref{suppSection:SparseCrossingInDepth}. 

The experimental evolution of the $\overline{\operatorname{PFP}}$ and $\overline{\operatorname{PFN}}$ in both problems matches very well the theoretical calculations. However, for MNIST, it does so only up to the batch 50 or so, after which most examples are repeated and our theoretical assumptions no longer hold. 

The experimental result using repeated examples in MNIST is shown in Figure \ref{figure:mnistpfppfn}. A model for MNIST was calculated training for $900$ batches using Sparse Crossing.  The number of examples in a training batch grew linearly from $500$ examples in batch $1$ to two/thirds of the training set at batch $500$. After batch $500$, batch size remained constant until batch $900$.  In Figure \ref{figure:mnistpfppfn}A, we can see that the experimental $\operatorname{PFP}$ behaves similar to that expected for training with non-repeated data, with a very strong decrease with the number of atoms and the amount of training. Since the expected $\operatorname{PFN}$ decreases geometrically with $Z$, as Sparse Crossing discovers more and more atoms of $F_j$ it is not surprising that we see a strong decrease with the size of the atom subset despite training occurring with the same data. For the experimental $\operatorname{PFP}$, we observe, as expected, an increase with the number of atoms that flattens quite rapidly. However, here we observe that the $\operatorname{PFN}$ worsens with the amount of training. Narrower atoms discovered by Sparse Crossing are more effective at correctly discriminating negative duples but they also have a higher risk of causing a false negative. In addition, the theoretical $\overline{\operatorname{PFP}}$ only decreases with the number of different positive duples presented to the algebra, so we expect the $\operatorname{PFP}$ to be affected by the presence of repeated examples. 

In this situation, after around batch 100, we have a $\operatorname{PFN}$ that improves with training and a $\operatorname{PFP}$ that worsens, so we should wonder if the algebra is still learning or not. To answer this question consider Figure \ref{figure:mnistpfppfn}B. Now we allow a few misses before a duple is considered negative instead of just 1 miss, in order to evaluate a subset we used a validation dataset of the same size to determine the optimal number of atom misses needed to declare a duple negative. We can see now two important differences: first, both the $\operatorname{PFP}$ and $\operatorname{PFN}$ improve with the amount of training and, second, results no longer depend upon subset size for subsets of more than 30K atoms. In Figure \ref{figure:mnistpfppfn}C we use a simpler heuristic; to classify a sample we selected the class with fewer misses. This does not require a validation dataset and, since there is a false negative for each false positive, $\operatorname{PFP}$ and $\operatorname{PFN}$ both become equal to the classification error rate, which reached about $2.5\%$. Again we observe the same as before; error decreases with training and is independent of subset size, actually flat, in a wide range of sizes beyond 30K atoms. After about 100 epochs the error in the training set 0, but this does not stop learning, as it continues for hundreds of epochs and, probably, for much longer. Learning continues without overfitting even after the training set have been presented 490 times. There is a clear performance improvement for subsets of the same size as training progresses using the same data. This is indication that Sparse Crossing is able to find increasingly better atoms with more training. The mechanism involves leveraging previously discovered atoms to find narrower atoms (with smaller upper constant segments) and non-redundant atoms of the model $F_j$.

\subsection{Derivation of the Probability of False Negative}
\label{suppSection:statistical_model}

According to Theorem \ref{fullCrossingIsFreestTheorem} we can calculate the freest model $F_C(R^{+})$ using the full-crossing of the duples in $R^{+}$ in any order. Suppose we want the algebra to learn a model $M = F_C(R^{+})$. We can choose an ordering for the set of its positive duples $R^{+} =\{s_1^{+}, s_2^{+}, s_3^{+},...,s_J^{+}\}$ and apply full-crossing. Consider the sequence of models $F_0, F_1,...,F_j,...,F_J$, with $F_j = F_C(s_1^{+}, s_2^{+},...,s_j^{+})$ for $0 \leq j  \leq J = \vert R^{+} \vert$ with $F_0 = F_C(\emptyset)$ and $F_J = M$. We want to derive the probability of false negative, $\omega$, incurred by one or many atoms of model $F_j$ in the task of approaching $M$. 

The more widely used False Positive Ratio (FPR) becomes approximately equal to the probability of false negative (false negatives divided by true positives) when it is small, i.e.\ when the number of false negatives is much smaller than the number of true positives.

For a typical pattern recognition task,  $R^{+}$ would correspond with the set of all valid positive samples that could exist, so $M$ gives no error in the task. The positive duples of the training set would then correspond to the initial samples of an ordering given to $R^{+}$. 

When $s_j$ is presented to $F_{j - 1}$ it produces a false negative if $F_{j - 1} \models s_j^{-}$, which implies that there is one or many atoms of $F_{j - 1}$ in the discriminant $dis_{F_{j - 1}}(s_j^{-})$. In the crossing of duple $s_j$ each atom in the discriminant produces wider atoms (perhaps redundant) and then is removed from the model. This occurs if and only if $s_j$ is a false negative of  $F_{j - 1}$. 

Full-crossing is a process of elimination of atoms and creation of new atoms as unions, $\phi \bigtriangledown \psi$, of two existing atoms.  Suppose an atom $\phi \bigtriangledown \psi$ is formed when crossing duple $s_k$, i.e.\ at ``inception time'' $k$, and assume that this atom has survived until time $j$ with $k < j << J$, i.e.\ assume $\phi \bigtriangledown \psi$ is an atom of $F_j$.  If $\phi \bigtriangledown \psi$ belongs to $F_j$ then $\phi \bigtriangledown \psi$ is compatible with the $j - k$ duples $s_{k +1}^{+}, s_{k + 2}^{+},...,s_j^{+}$ or, in other words, the atom is in the discriminant of none of these $j - k$  duples. Knowing that the atom has been compatible with $j - k$ positive training duples can be used to calculate a Bayesian estimation of the cumulative distribution of the probability of false negative $\omega(\phi \bigtriangledown \psi)$; this calculation is carried out in Section \ref{FNROneAtom}. The cumulative distribution for the probability of false negative of atom $\phi \bigtriangledown \psi$ is: 
\[
P(\omega(\phi \bigtriangledown \psi)  \geq \delta) \,  = \,   (1 - \delta) ^ {j - k(\phi \bigtriangledown \psi) + 1}
\]
where $j$ is the moment at which we measure the probability of false negative, $k(\phi \bigtriangledown \psi)$ is the inception time of the atom and $P(\omega(\phi \bigtriangledown \psi)  \geq \delta)$ is the probability for the probability of false negative of $\phi \bigtriangledown \psi$ to be larger than a value $\delta$.  

Without loss of generality assume that atom $\phi$ is in the discriminant of duple $s_{k(\phi \bigtriangledown \psi)}$ so $\phi \bigtriangledown \psi$ is created from atom $\phi$ during the crossing of $s_{k(\phi \bigtriangledown \psi)}$. The cumulative distribution for atom $\phi$ can be approached as $P(\omega(\phi)  \geq \delta)\,  \approx \,  (1 - \delta) ^ {k(\phi \bigtriangledown \psi) - k(\phi)}$ and then: 
\[
 P(\omega(\phi)  \geq \delta) P(\omega(\phi \bigtriangledown \psi)  \geq \delta) \,  \approx  \,  (1 - \delta) ^ {j - k(\phi) + 1}.
\]
see Section \ref{FNROneAtom} for details. We can apply the same formula along any inward chain $\varphi_0, \varphi_1,...,\varphi_g$ leading to atom $\varphi_g = \phi \bigtriangledown \psi$ that exists at time $j$ (see Theorem \ref{inwarsOutwardSequenceTheorem}), to get the product: 
\[
\Pi_{n = 0}^{g} P(\omega(\varphi_n)  \geq \delta)  \, \approx  \,  (1 - \delta) ^ {j + 1}.
\]
where $k(\varphi_0) = 0$ has been used. As more positive duples are presented to the algebra the atoms become wider and tend to produce fewer false negatives. Assuming the last atom $\varphi_g$ in the chain does not produce false negatives with significantly higher probability that the atoms along the chain, we get:
\[
P(\omega(\varphi_g)  \geq \delta)  \,  \leq  \,  (1 - \delta) ^ {\frac{j + 1}{g + 1}}.
\]
This bound has a straightforward interpretation; atoms in the chain cause false negatives at times $k(\varphi_0),k(\varphi_1),...,k(\varphi_{g})$. If the atoms get increasingly better at producing false negatives, as we progress along the chain the intervals between false negatives $k(\varphi_1) - k(\varphi_0), k(\varphi_2) - k(\varphi_1),...,k(\varphi_{g}) - k(\varphi_{g - 1}), j - k(\varphi_{g})$ would tend to be increasingly larger, so at time $j$ we expect to have $j - k(\varphi_{g-1}) \geq {\frac{j}{g + 1}}$. Since the atoms cannot grow without limit, for a sufficiently large amount of training, i.e.\ a sufficiently large $j$, the value of $g$ becomes much smaller than $j$ and $P(\omega(\varphi_g)  \geq \delta)$ becomes small.  For a large enough $j$, an atom of $F_j$ produces as few false negatives as desired or no false negatives at all if the atom has matured into an atom of $M$. 

It is clear that $g$ can be at most equal to $j$ and usually is much smaller than $j$. Each of the chains corresponds to an atom formed during a crossing and is wider than the previous atom, i.e.,  $U^c(\varphi_n) \subseteq U^c(\varphi_{n +1})$. Since atoms grow at each of the $g$ links, the value of $g$ can be bounded by how much an atom can grow:
\[
g \leq \vert U^{c}(\varphi_g) \vert - \vert U^{c}(\varphi_0) \vert< \vert C \vert.
\]
The number of crossings, $g$, can be as large as $\vert U^{c}(\varphi_g) \vert - \vert U^{c}(\varphi_0) \vert$ but also as small as 0. After a crossing, an atom of the discriminant produces unions with other atoms of any size, so its upper segment $U^{c}(\varphi)$ can grow in a single step by one or many constants.

 In conclusion,  after a sufficiently large number of positive training examples $j$, we can expect an atom of $F_j$ to produce very few or no false negatives. We are assuming that the distribution of the test and training sets are equal and that we do no have positive duples that are false in the training set; that would be the case if we had mislabels, for example. 

In Section \ref{FNROneAtom} we study various approximations to
 $P(\omega(\varphi_g)  \geq \delta)$ for an atom $\varphi_g$ present in $F_j$, and conclude that
\[
P(\omega(\varphi_g)  \geq \delta) = (1 - \delta)^ {a(\phi)}   \,\,\,\,\,\text{with} \, \,\,\,\, a(\phi) = \max\left(h_g + 1, \frac{j + 1}{g + 1}\right)
\]
where $h_g = j - k(\varphi_g)$ is the length of the last interval, approaches the confidence quite well.  For this approximation, the expected value for the probability of false negative is:
\[
\overline{\omega}(\varphi_g) = \min{\left(\frac{g + 1}{j + 2}, \frac{1}{h_g + 2}\right)}.
\]
Since $a(\phi)$ is large, we can approach the expected value of the individual probability of false negative with $\overline{\omega}(\varphi_g) \approx \frac{1}{a(\phi)}$ which has a quite straightforward interpretation.  

In Section \ref{FNRforZatoms} we derive the probability of false negative produced by a set of atoms.  Let $W$ be the collective probability of false negative $\omega(\phi_1,\phi_2,...,\phi_Z)$, for a set of $Z$ atoms each with a probability of false negative given by $\omega(\phi_i)$. Since the model produces a false negative if any of the Z atoms produce a false negative we get:
\[
\omega(\phi_1,....\phi_Z) =  1 -  \prod_{i=1}^{Z} (1 - \omega(\phi_i)).
\]
We are assuming here that atoms produce false negatives in an independent manner, which is false but safe, as it overestimates the probability of false negative.
 
To get a small probability of false negative for $Z$ atoms the individual probability of false negative for each atom should be quite small, so we can do a first order approximation:
\[
\overline{\omega}(\phi_1,....\phi_Z) \approx  \sum_{i=1}^{Z} \overline{\omega}(\phi_i) \approx  \sum_{i=1}^{Z}  \frac{1}{a(\phi_i)}  = \sum_{i=1}^{Z} \min \left( \frac{1}{h(\phi_i) + 1},  \frac{g(\phi_i) + 1}{j + 1} \right).
\]
Since the values of $a(\phi_i)$ are known, we can compute the expected probability of false negative. Notice that this calculation can produce values larger than 1. In this case we expect a probability of false negative equal to 1, but we can still use the formula to measure how much learning is taking place even in the regime when learning cannot be measured by computing the probability of false negative experimentally with a validation dataset.

Given a value $\Delta$ for the probability of false negative of $Z$ atoms, in Section \ref{FNRforZatoms} we calculate $P(\omega(\phi_1,....\phi_Z)  \geq \Delta)$, and show that it is a function that transitions very abruptly from high to low values. This means that there is a moment from which we suddenly have high confidence our collective probability of false negative is better than  $\Delta$.  We give an expression for the transition point from which we can estimate how much training is still needed to achieve with confidence a desired probability of false negative value. 

Although the calculation becomes a bit technical, we can give an intuition of what happens, as follows. If we want the probability of false negative of a model of $Z$ atoms to be bounded by $\Delta$, i.e.\ $\overline{\omega}(\phi_1,....\phi_Z) < \Delta$, it suffices with having each atom $\phi_i$ with an individual probability of false negative of $\omega(\phi_i)  < \frac{\Delta}{Z}$:
\[
P\left(\omega(\phi_i)  \geq \frac{\Delta}{Z}\right)  \,  \leq  \,  \left(1 - \frac{\Delta}{Z} \right) ^ {a(\phi_i)}.
\]
If we want $P(\omega(\phi_i)  \geq \frac{\Delta}{Z})$ to be smaller than $\frac{1}{2}$, it is enough with requiring
\[
\frac{Z}{a(\phi_i) \, \, \Delta} < 1,
\]
that is valid for small values of$\Delta$. i.e.\ when $\Delta  \approx -\ln(1-\Delta)$ is a valid approximation. We also show that the better approximation
\[
\frac{1}{\Delta} \sum_{i=1}^{Z} \frac{1}{a(\phi_i)} < 1
\]
also works.  To derive from here an upper bound for the necessary number of training examples we can use $a(\phi_i)  \leq \frac{j + 1}{g + 1}$ which gives: 
\[
j \geq \frac{\sum_{i=1}^{Z} (g(\phi_i) + 1)}{\Delta},
\]
and, since the values $g(\phi_i)$ are known, we can use it to estimate how much training is left to achieve a probability of false negative better than $\Delta$. 

We can use the fact that $g(\phi_i) + 1$ is always smaller than the number of constants $\vert C \vert$ (each time an atom causes a false negative its upper segment grows by at least one constant), to get the bound:
\[
j \geq \frac{ Z \, \vert C \vert }{\Delta},
\]
that is simple but significantly overestimates the amount of training needed. It implies that any model over $C$ is approached with  better probability of false negative than $\Delta$ by a set of $Z$ atoms obtained after crossing fewer than $\frac{ Z \, \vert C \vert }{\Delta}$ positive duples. The bound works even for $Z = 0$, which gives a probability of false negative equal to 0 at the expense of obtaining a probability of false positive equal to 1.

\subsubsection{Probability of False Negative of a single atom} \label{FNROneAtom}

The false negatives produced by an individual atom can be illustrated with a game of biased coins. We toss $j$ times and get $g$ tails with $g << j$. This is a well known game but with an added difficulty: each time we get tails we have the coin replaced with a new coin that is more biased towards heads than the previous one. If we obtain $g$ tails $g + 1$ coins are played, the coins $\varphi_0, \varphi_1, ...., \varphi_g$. We are interested in determining the probability distribution for the coin $\varphi_g$ at the end of the game. 

We can make a first approximation using the fact that the coin we get at the end of the game, the final coin $\varphi_g$, should be more biased towards heads than a coin from which we expect $g$ tails in $j$ tosses. If $\omega(\varphi_g)$ is the probability for the final coin to produce tails, a lower bound for the confidence, i.e.\ the probability of having $\omega(\varphi_g) \geq \delta$ is:
\[
P(\omega(\varphi_g)  \geq \delta)  \,  \leq  \,  \frac{  {\binom{j}{g}} \int_{\delta}^{1} {  {t} ^g (1 - t) ^{j - g}} dt    }{  {\binom{j}{g}} \int_{0}^{1} {  {t} ^g (1 - t) ^{j - g}} dt } = 1 - I_{\delta}(g + 1, j - g + 1) 
\]
where $I_{\delta}(g + 1, j - g + 1)$ is the regularized incomplete beta function. This expression corresponds to the Bayesian estimation of the conditional probability of having $\omega(\varphi_g) \geq \delta$ when $g$ tails and $j - g$ heads are produced, assuming a flat prior distribution for $\omega(\varphi_g)$. This approximation uses all the information available; however, if the final coin is significantly more biased towards heads than the previous coins it overestimates the probability of obtaining tails (i.e.\ the probability of getting a false negative). 

In the case where the final coin is significantly more biased than the previous coins a simpler approximation can yield better results.  We can count how many times the final coin $\varphi_g$, produces heads. We never see the final coin producing tails; we only see the final coin producing a final subsequence of $h_g$ heads:
\[
P(\omega(\varphi_g)  \geq \delta)  \,  =  \,  (1 - \delta) ^{h_g + 1},
\]
that corresponds with $1 - I_{\delta}(1, h_g + 1)$, the formula above for $h_g$ heads and $0$ tails.  Note that when $h_g = 0$ we get $P(\omega(\varphi_g)  \geq \delta)  \,  =  \,  (1 - \delta)$ which reflects the lack of prior information regarding the distribution of $\omega(\varphi_g)$ we have assumed. This approximation can be good under some circumstances but bad in others, particularly when the final coin has been obtained close to the end of the game and it has been tossed only a few times.

We can improve our calculations by using $P(\omega(\varphi_g)  \geq \delta)  \,  =  \,  (1 - \delta) ^{h_g + 1}$ for the final coin and:
\[
P(\omega(\varphi_n)  \geq \delta)  \,  =  \,  (1 - \delta) ^{h_n + 1} (h_n\delta + \delta + 1),
\]
for the intermediate coins, which corresponds with $1 - I_{\delta}(2, h_n + 1)$, the formula above for $h_n$ heads and $1$ tail at the end.  Multiplying the $g + 1$ confidence values:
\[
\Pi_{n = 0}^{g} P(\omega(\varphi_n)  \geq \delta)  \,  =  \,  (1 - \delta) ^ {h_g + 1} \Pi_{n = 0}^{g - 1}  (1 - \delta) ^ {h_n + 1} (h_n\delta + \delta + 1)  = 
\]
\[
=  \,  (1 - \delta) ^ {1 + g + \Sigma_{n = 0}^{g} h_n } \Pi_{n = 0}^{g - 1} (h_n\delta + \delta + 1) =  \,  (1 - \delta) ^ {j + 1} \Pi_{n = 0}^{g - 1} (h_n\delta + \delta + 1)
\]
where we have used $g + \Sigma_{n = 0}^{g} h_n  = j$, and taking into account that the coins get increasingly more biased:
\[
P(\omega(\varphi_{n + 1})  \geq \delta)  \,  \leq  P(\omega(\varphi_n)  \geq \delta)
\]
which implies $\Pi_{n = 0}^{g} P(\omega(\varphi_n)  \geq \delta) \geq P(\omega(\varphi_g)  \geq \delta)^{g + 1}$, we get an upper bound for $P(\omega(\varphi_g)  \geq \delta)$:
\[
P(\omega(\varphi_g)  \geq \delta) \,  \leq  \,   (1 - \delta) ^ {\frac{j + 1}{g + 1}} \Pi_{n = 0}^{g - 1} (h_n\delta + \delta + 1)^ {\frac{1}{g + 1}}.  
\]

\begin{figure}
    \centering
	\includegraphics[width=0.4\linewidth]{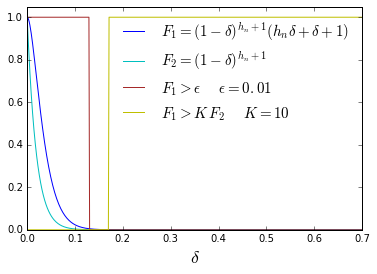}
	\caption{
\textbf{Approximation to the confidence $P(\omega(\varphi_n)  \geq \delta)$.  }  $F_1 = (1 - \delta)^{\frac{j + 1}{g + 1}} \Pi_{n = 0}^{g - 1} (h_n\delta + \delta + 1)^{\frac{1}{g + 1}}$ in dark blue and its approximation $F_2 = (1 - \delta) ^ {h_n + 1}$ in cyan color. The region of interest is located to the left of the red line while the region of excessive underestimation starts at the yellow line. 
}
\label{fig:figApproach}
\end{figure}

Conveniently, we can ignore the multiplying factor $\Pi_{n = 0}^{g - 1} (h_n\delta + \delta + 1)^ {\frac{1}{g + 1}}$ because it is important only for values of $\delta$ we are not interested in. To be more specific, by omitting the multiplicative factor $(h_n\delta + \delta + 1)$  the probability $P(\omega(\varphi_{n})  \geq \delta)$ becomes underestimated by a factor of $K$ when
$(1 - \delta) ^ {h_n + 1} (h_n\delta + \delta + 1) > K  \, (1 - \delta) ^ {h_n + 1}$ and, since we are interested in the region where $\omega(\varphi_{n})$ is most likely higher than $\delta$ and not in the region where we can assure that (i.e.\ $\omega(\varphi_{n}) \geq \delta$) with unnecessarily high confidence, we need a good estimation in the region $(1 - \delta) ^ {h_n + 1} (h_n\delta + \delta + 1)> \epsilon$ for a sufficient but modest confidence value of, say, $\epsilon > 0.005$. It should be clear that with this inequality we are limiting the confidence for the estimation of the probability of false negative and not the actual value $\omega(\varphi_{n})$. The two inequalities can be rewritten as:
\[
(h_n + 1)\delta  >  K  - 1
\]\[
\ln((h_n + 1)\delta + 1) - (h_n + 1) \delta > \ln(\epsilon)
\]
where we have approached $\ln(1 - \delta) \approx -\delta$. Both inequalities hold simultaneously for some domain of $\delta$ if and only if $\ln(K) - K + 1 > \ln(\epsilon)$, an inequality that does not depend upon $h_n$ or $\delta$. Therefore, provided that the inequalities do not hold simultaneously, i.e.\ when:
\[
\ln(K) - K + 1 < \ln(\epsilon),
\]
we can neglect the multiplicative factors $(h_n\delta + \delta + 1)$ and write: 
\[
P(\omega(\varphi_g)  \geq \delta) \,  \leq  \,   (1 - \delta) ^ {\frac{j + 1}{g + 1}}.
\]
For example, for an overestimation of the confidence of, say, at most $K = 10$ and an $\epsilon > 0.005$ our approximation is valid. Figure $\ref{fig:figApproach}$ illustrates the regions where each of the two inequalities hold and show that there is no overlap.

The estimation $F_1 = (1 - \delta)^{\frac{j + 1}{g + 1}} \Pi_{n = 0}^{g - 1} (h_n\delta + \delta + 1)^{\frac{1}{g + 1}}$, its approximation $F_2 = (1 - \delta) ^ {\frac{j + 1}{g + 1}}$ as well as $F_3 = 1 - I_{\delta}(g + 1, j - g + 1)$ transition at the same point $\frac{g + 1}{j + 2}$ given by the rule of succession of Laplace and are all very similar, with $F_2$ having the advantage of the simplicity. The estimation based on the regularized incomplete beta function, $F_3$, transitions very fast and resembles a step function $F_4 = \Theta(\delta - \frac{g + 1}{j + 2})$. They all overestimate the probability of obtaining tails with the coin $\varphi_g$, i.e.\ the probability of false negative. On the other hand, the estimation based in the final sequence of heads $F_5 = (1 - \delta)^{h_g + 1}$ tend to better resemble the actual behavior of $\varphi_g$ provided that $h_g$ is sufficiently large, i.e.\ when $h_g > {\frac{j + 1}{g + 1}}$, which is usually the case. An even better estimation is given by $F_{6}=(1 - \delta)^{\max(h_g + 1, \frac{j + 1}{g + 1})}$, which behaves as $F_2$ when $h_g < {\frac{j + 1}{g + 1}}$ and as $F_5$ when $h_g > {\frac{j + 1}{g + 1}}$. Figure \ref{fig:apprAllLogCombinedAverage} gives the averaged distributions for the different estimations against the numerically obtained cumulative distribution. In practice we can use:
\[
P(\omega(\varphi_g)  \geq \delta) = (1 - \delta)^{\max(h_g + 1, \frac{j + 1}{g + 1})}
\]
and an expected value for the probability of false negative:
\[
\overline{\omega}(\varphi_g) = \min{\left(\frac{g + 1}{j + 2}, \frac{1}{h_g + 2}\right)}.
\]

\begin{figure}
    \centering
	\includegraphics[width=0.8\linewidth]{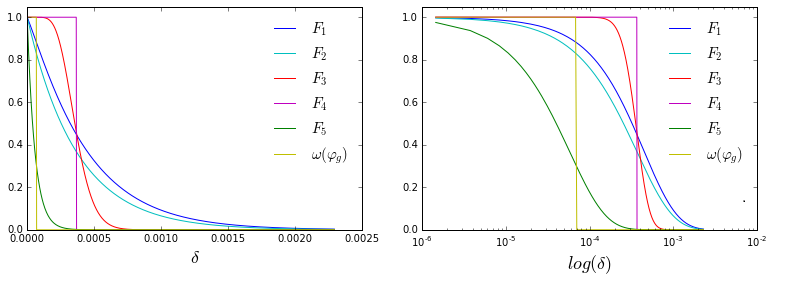}
	\caption{
\textbf{Various estimations of $P(\omega(\varphi_g)  \geq \delta)$} A single coin is tossed. The probability of tails starts at $\omega(\varphi_0) = 0.5$. Each time a tail is obtained the coin is manipulated to increase the probability of tails that becomes smaller by a dividing factor uniformly sampled between 1 and 4. The number of trials is $j=30{,}000$. Cumulative distributions $F_1,...,F_5$ for the different estimations of $\omega(\varphi_g)$ are plot. $F_6$ (not shown) is equal to $F_5$. In yellow, the final value of $\omega(\varphi_g)$ that was $0.00007$. }

\label{fig:apprAllLogCombined}
\end{figure}

\begin{figure}
    \centering
	\includegraphics[width=0.8\linewidth]{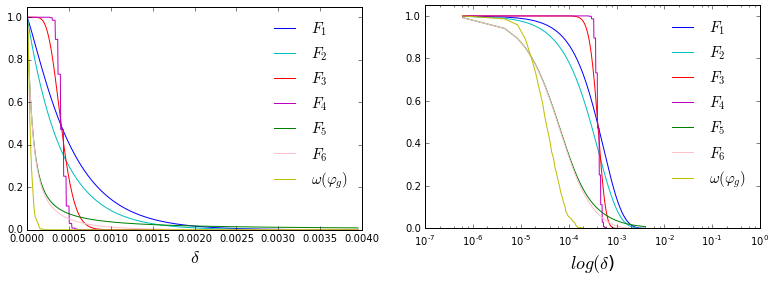}
	\caption{
\textbf{Average estimations of $P(\omega(\varphi_0)  \geq \delta)$ calculated with a simulation. }
The probability of tails starts at $\omega(\varphi_0) = 0.5$. Each time a tail is obtained the probability of tails becomes smaller by a dividing factor uniformly sampled between 1 and 4. The number of trials is $j=30{,}000$ and the experiment is repeated 500 times. Average cumulative distributions for the different estimations $F_1,...,F_6$ are plotted as well as the actual cumulative distribution of $\omega(\varphi_g)$, in yellow. 
}
\label{fig:apprAllLogCombinedAverage}
\end{figure}

\subsection{Probability of False Negative of Z atoms}  \label{FNRforZatoms}

\subsubsection{The Expected Probability of False Negative}

We calculated that the probability of false negative for a single atom is given by the exponential:
\[
P(\omega(\phi)  \geq \delta)  \,  =  \,  (1 - \delta) ^ {a(\phi)}.
\]
for a positive value $a(\phi) >> 1$: 
\[
a(\phi) = \max \left( h(\phi) + 1, \frac{j + 1}{g(\phi) + 1}\right)
\]
where $h(\phi)$ is the number of training positive duples used since the moment atom $\phi$ appears in the model with $g(\phi)$ is the number of links in an inward chain ending at $\phi$. 

The expected value of the probability of false negative, $\omega(\phi_1,....\phi_Z)$, produced by a model of $Z$ atoms is
\[
\overline{\omega}(\phi_1,....\phi_Z) \,  =  \int \omega(\phi_1,....\phi_Z)  \prod_{i=1}^{Z} \frac{dP(\omega_i)}{d\omega_i} d\omega_i 
\]
where
\[
\omega(\phi_1,....\phi_Z) =  1 -  \prod_{i=1}^{Z} (1 - \omega_i)
\]
and
\[
\frac{dP(\omega_i)}{d\omega_i} =  a_i (1 - \omega_i) ^{a_i - 1}.
\]
The solution of the integral is straightforward:
\[
\overline{\omega}(\phi_1,....\phi_Z) \,  =  1 -  \prod_{i=1}^{Z} \frac{a_i}{a_i + 1}. 
\]
This expression assumes the atoms in the model produce false negatives in an independent manner which overestimates the number of false negatives expected.

At first order, and assuming every $a_i$ is a large number, we can approach the expected probability of false negative with:
\[
\overline{\omega}(\phi_1,....\phi_Z) \,  \approx  \sum_{i=1}^{Z} \frac{1}{a_i}. 
\]

\subsubsection{Cumulative distribution function for the PFN of Z atoms}

The cumulative distribution for the probability of false negative, $\omega(\phi_1,....\phi_Z)$, of a model with $Z$ atoms can be computed as the integral:
\[
P(\omega(\phi_1,....\phi_Z)  \geq \Delta) \,  =  \int_{\omega(\phi_1,....\phi_Z)\geq\Delta} \prod_{i=1}^{Z} \frac{dP(\omega_i)}{d\omega_i} d\omega_i,
\]
where the region of integration goes from $\omega(\phi_1,....\phi_Z) = \Delta$ to $\omega(\phi_1,....\phi_Z) = 1$. This integral measures how confident can we be the algebra will indeed behave with a probability of false negative better than $\Delta$. 

If we assume all the values of $a(\phi_i)$ are all different, the result of this integral is:
\[
P(\omega(\phi_1,....\phi_Z)  \geq \Delta) \,  =  \sum_{i=1}^{Z}  (1 - \Delta)^{a_i}  \prod_{k=1, k \neq i}^{Z} \frac{a_k}{a_k - a_i}.
\]
Interestingly, this expression happens to be equal to the value at $x = 0$ of the Lagrange interpolation polynomial passing through the $Z$ points: $(a_1, (1 - \Delta)^{a_1}),...,(a_Z, (1 - \Delta)^{a_Z})$. The coefficients $\frac{a_k}{a_k - a_i}$ usually take very large positive and negative values (compared to 1) which makes the expression opaque and difficult to handle. Surprisingly, the coefficients add up to 1:
\[
\sum_{i=1}^{Z}  \prod_{k=1, k \neq i}^{Z} \frac{a_k}{a_k - a_i} = 1.
\]

The result of the integral when the values of $a(\phi_i)$ are all equal for the $Z$ atoms is more illuminating:
\[
P(\omega(\phi_1,....\phi_Z)  \geq \Delta) \,  =  (1 - \Delta)^{a} \,\,T_a((1 - \Delta)^{-a}, Z - 1) 
\]
where $T_a((1 - \Delta)^{-a}, Z - 1)$ is the sum of the first $Z - 1$ terms of the Taylor series expansion at $a = 0$ of the function $(1 - \Delta)^{-a}$ as a function of $a$. This produces the series:
\[
P(\omega(\phi_1,....\phi_Z)  \geq \Delta) \,  =  (1 - \Delta)^{a} \sum_{n=0}^{Z - 1} \frac{(-a \, \ln(1 - \Delta))^{n}}{n!},
\]
that, defining
\[
\tilde{\Delta} = -\ln(1-\Delta)
\]
becomes
\[
P(\omega(\phi_1,....\phi_Z)  \geq \Delta) \,  =  \sum_{n=0}^{Z - 1} \frac{ e^{-a\tilde{\Delta}} (a \tilde{\Delta})^{n}}{n!},
\]
a function of two variables: $a \tilde{\Delta}$ and $Z$. This expansion corresponds with a regularized gamma function $Q(Z, a \tilde{\Delta})$.

\subsubsection{Point of abrupt confidence transition and the expected training set size}

\begin{figure}
    \centering
	\includegraphics[width=0.4\linewidth]{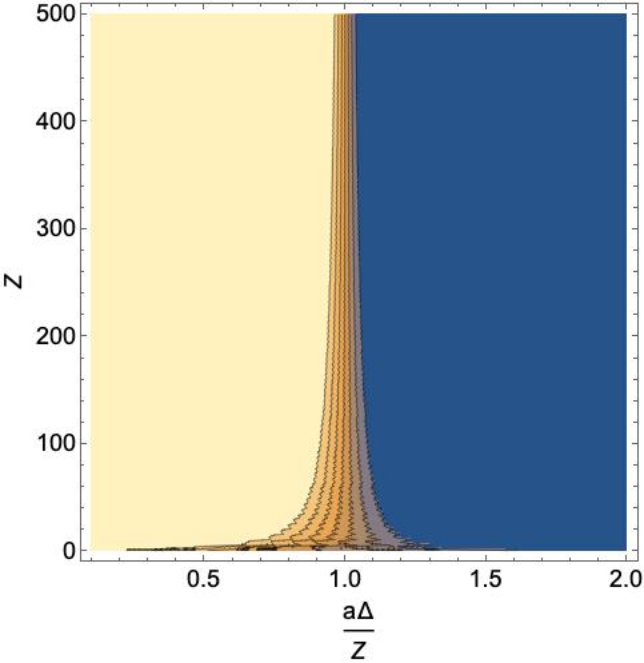}
	\caption{
\textbf{Probability $P(\omega(\phi_1,....\phi_Z)  \geq \Delta)$ that the probability of false negative produced by Z atoms of equal value of $a$ is greater than a value $\Delta$ as a function of $a \tilde{\Delta} / Z$ and $Z$.} 
Warmer colors indicate higher probability. Contours are plotted at values of the probability of 0.2, 0.3,0.4,0.5,0.6,0.7 and 0.8. Note the abrupt transition from high to low probability at $\frac{a \tilde{\Delta}}{Z}=1$ for any value of Z.  }
\label{fig:figX}
\end{figure}

Figure \ref{fig:figX} shows $P(\omega(\phi_1,....\phi_Z)  \geq \Delta)$ as a function $F(y, Z)$ of the variable $y = \frac{a\tilde{\Delta} }{Z}$:
\[
F(y, Z) = \sum_{n=0}^{Z - 1} \frac{ e^{-yZ} (yZ)^{n}}{n!}.
\]
For $y = 1$ this function takes values close to $1/2$ for all $Z$, getting even closer to $1/2$ the larger the $Z$. For a fixed $Z$ the function $F(y, Z)$  transitions from a value very close to 1 to a value very close to 0 abruptly and always at the same point $y = 1$. The transition is more abrupt the larger is $Z$. 

The confidence $P(\omega(\phi_1,....\phi_Z)  \geq \Delta)$ as a function of $\Delta$ monotonically decreases, from $1$ at $\Delta = 0$ to $0$ at $\Delta = 1$. The confidence has an inflection point for any $Z>1$. At the inflection point, $P(\omega(\phi_1,....\phi_Z)  \geq \Delta) \approx \frac{1}{2}$ so, in practice, we can calculate the inflection point instead of resolving the equation $P(\omega(\phi_1,....\phi_Z)  \geq \Delta) = \frac{1}{2}$ which does not lead to a closed formula. The expression for the inflection point i.e.\ the $\Delta$ that makes $\frac{\partial^{2} P}{\partial \Delta^{2}}=0$, is quite simple:
\[
\frac{(a \tilde{\Delta})^{Z} e^{-a \tilde{\Delta}} (1 + a \tilde{\Delta}-Z)}{Z^2 (Z-1)!} = 0.
\]
Resolving this equation we get that the inflection point is located at $\tilde{\Delta} = \frac{Z-1}{a}$. At the inflection point, $P(\omega(\phi_1,....\phi_Z)  \geq \Delta)$ is equal to $\Gamma(Z,Z-1)/(Z-1)!$, with $\Gamma(z,z)$ the incomplete gamma function, which gives values slightly smaller than $1/2$ and approaches $1/2$ for large $Z$. For the neighboring value $\tilde{\Delta} = \frac{Z}{a}$ the confidence $P(\omega(\phi_1,....\phi_Z)  \geq \tilde{\Delta})$ is equal to $\Gamma(Z,Z)/(Z-1)$ and yields values slightly larger than $1/2$ that also tend to $1/2$ for large $Z$. Hence, the transition point we are looking for, the $\Delta$ that makes $P(\omega(\phi_1,....\phi_Z) \geq \Delta) = \frac{1}{2}$, occurs somewhere between $\frac{Z-1}{a}$ and $\frac{Z}{a}$. In practice, the formula:
\[
\frac{Z}{a\tilde{\Delta}} = 1
\]
can be used to calculate the transition point and it has the advantage of working also for the case $Z = 1$. Substituting the value of $\frac{1}{a} = \min(\frac{1}{h(\phi) + 1},  \frac{g(\phi) + 1}{j + 1})$ we get the equation for the transition point:
\[
\frac{Z}{\tilde{\Delta}} \min \left( \frac{1}{h(\phi) + 1},  \frac{g(\phi) + 1}{j + 1} \right)= 1
\]
beyond which $P(\omega(\phi_1,....\phi_Z)  \geq \Delta)$ approaches 0 quite rapidly as $\Delta$ or $j$ increase. From here we can calculate a bound for the necessary number of training examples:
\[
j > \frac{Z \, (g(\phi) + 1) }{\tilde{\Delta}},
\] 
where we have used $h(\phi)  < j$, assumed $j$ large and replaced $j + 1$ by $j$. 

The abrupt transition of the regularized gamma function implies, in practical terms, that a value $\Delta$ (or better) for the probability of false negative can be established with confidence at a particular number of training examples. This confidence changes abruptly from low to high in a narrow window of examples. This does not mean that learning occurs abruptly. Increasingly better probability of false negative values can be gained gradually as the number of training examples increase, but the confidence that we can assign to a given bound $\Delta$ for the probability of false negative changes suddenly.  

We have derived an expression for the transition point that is valid when all the Z atoms have the same $a(\phi)$. We are interested in the value of $\Delta$ at which the transition occurs for sets of atoms with arbitrary values of $a(\phi_i)$, i.e.\ arbitrary value of $h(\phi_i)$ and $g(\phi_i)$.

For the more general case, with atoms of different $a(\phi_i)$, we can use the formula with the Lagrange polynomials $P(\omega(\phi_1,....\phi_Z)  \geq \Delta) \,  =  \sum_{i=1}^{Z}  (1 - \Delta)^{a_i}  \prod_{k=1, k \neq i}^{Z} \frac{a_k}{a_k - a_i}$. We show at the end of this section that the transition occurs for this cumulative distribution at:
\[
\frac{1}{\tilde{\Delta}} \sum_{i=1}^{Z} \frac{1}{a_i} = 1,
\]
which generalizes our previous result $\frac{Z}{a\tilde{\Delta}} = 1$ for $Z$ atoms with the same exponent $a$. Substituting the value of each $a_i$ we get the equation for the transition point:
\[
\frac{L}{\tilde{\Delta}}= 1 \,\,\,\, \text{where} \,\,\,\, L = \sum_{i=1}^{Z} \min \left( \frac{1}{h(\phi_i) + 1},  \frac{g(\phi_i) + 1}{j + 1} \right).
\]
Taking into account $\tilde{\Delta} = -\ln(1-\Delta)$, if we solve for $\Delta$: 
\[
\Delta_T = 1 - e^{-\sum_{i=1}^{Z} \frac{1}{a_i}}.
\] 
we obtain the value of the transition point  $\Delta_T$. This formula and the formula above for the average $\overline{\omega}(\phi_1,....\phi_Z) \,  =  1 -  \prod_{i=1}^{Z} \frac{a_i}{a_i + 1}$ are numerically very close (just do a first order approximation). We have:
\[
 \Delta_T \approx \overline{\omega}(\phi_1,....\phi_Z),
\] which occurs because most of the weight for the integral yielding the average distribution is located at the narrow transition region. Substituting the value of $a_i$
\[
\overline{\omega}(\phi_1,....\phi_Z) = 1 - e^{-L}.
\]
We refer to $L$ as ``the load'', a value that can be computed in practice and can be used to determine, much before it can be established using a test set, how close the algebra is to learning the problem at hand. The load measures the immaturity of the atoms and how much extra learning is still required. The larger is the load, the more far away the model is from producing a good probability of false negative. The load can take values much larger than $1$ at the beginning of learning and, as the algebra learns, it goes down. The algebra produces small values for the probability of false negative when the load is smaller than $1$. For a pattern recognition problem the load tends to first increase and then decrease until it reaches the region with values $L\leq 1$. It is only in the region $L\leq 1$ that the amount of learning can be measured with a test set. When the load is larger than $1$ the algebra produces a $100\%$ false negatives in the test set. In this region learning occurs at constant probability of false negative equal to 1, but can be measured using $L$. 

Solving for $j$ we get the number of training examples needed to get a probability of false negative better than $\Delta$ or, more precisely, to get the number of training examples needed to go beyond the transition point for $\Delta$:
\[
j \geq \frac{\sum_{i=1}^{Z} (g(\phi_i) + 1)}{\tilde{\Delta}}.
\]
where, again, we have assumed $j$ large and replaced $j + 1$ by $j$.

\subsubsection{General case for the transition point}

We are going to do a calculation of the transition point of $P(\omega(\phi_1,....\phi_Z)  \geq \Delta)$ for $Z$ atoms $\phi_i$ with different exponents $a_i$. First notice that, still in this more general case, $P(\omega(\phi_1,....\phi_Z)  \geq \Delta)$ is a function of $\tilde{\Delta} \bf{a}$ rather than a function of $\Delta$ and $\bf{a}$, where we have used $\bf{a}$, in bold, to represent the vector with $Z$ components, $a_i$. We can indeed rewrite the confidence as:
\[
P(\omega(\phi_1,....\phi_Z)  \geq \Delta)  =  \sum_{i=1}^{Z}  e^{\tilde{\Delta} a_i}  \prod_{k=1, k \neq i}^{Z} \frac{\tilde{\Delta} a_k}{\tilde{\Delta} a_k - \tilde{\Delta} a_i}.
\]
Its derivative is:
\[
\frac{d P(\omega(\phi_1,....\phi_Z)  \geq \Delta)}{d \Delta} = \frac{d \tilde{\Delta}}{d\Delta} \, \left(\prod_{j=1}^{Z} a_j\right) \,\,  \sum_{i=1}^{Z}  \left( e^{\tilde{\Delta} a_i} \prod_{k=1, k \neq i}^{Z} \frac{1}{a_k - a_i} \right),
\]
which is not a function of $\tilde{\Delta} \bf{a}$, but the closely related function:
\[
G(\tilde{\Delta} {\bf{a}}) \equiv - \left(\prod_{j=1}^{Z} \tilde{\Delta}  a_j\right) \,\,  \sum_{i=1}^{Z}  \left( e^{-\tilde{\Delta}  a_i}  \prod_{k=1, k \neq i}^{Z} \frac{1}{\tilde{\Delta} a_k - \tilde{\Delta} a_i} \right)
\]
is, and it can be used to calculate the derivative as:
\[
\frac{d P(\omega(\phi_1,....\phi_Z)  \geq \Delta)}{d \Delta}  = G(\tilde{\Delta} {\bf{a}}) \frac{1}{\tilde{\Delta}}   \frac{d \tilde{\Delta}}{d\Delta}. 
\]
The equation $P(\omega(\phi_1,....\phi_Z)  \geq \Delta) = 0.5$ yields the location at which the confidence transitions but it cannot be resolved. We could try, as we did before, to search for the inflection point but this time it does not lead to a closed expression. Fortunately, it is enough with finding a value in the narrow region where the cumulative distribution transitions. Any value on this region works for us.  

We are going to use a convenient translational property of $G(\tilde{\Delta} {\bf{a}})$. If we add the same value $x$ to each of the components of the vector $\tilde{\Delta} \bf{a}$ we get:
\[
G(\tilde{\Delta} {\bf{a}} + x) = \frac{\prod_{j=1}^{Z}  (\tilde{\Delta} a_j + x)}{\prod_{j=1}^{Z} \tilde{\Delta} a_j} e^{-x} G(\tilde{\Delta} {\bf{a}}).
\]
Let us shift the vector $\tilde{\Delta} \bf{a}$ by adding $x$ to each of its components in order to find a value of $x$ that minimizes $\frac{d P(\omega(\phi_1,....\phi_Z)  \geq \Delta)}{d \Delta}$. The value of the transition can be found by locating the stationary point of $G(\tilde{\Delta} {\bf{a}})$:
\[
\frac{dG(\tilde{\Delta} {\bf{a}} + x) }{dx}= 0
\]
from which we get:
\[
\left( - \prod_{j=1}^{Z}  (\tilde{\Delta} a_j + x) + \sum_{i=1}^{Z} \prod_{j=1, i \neq j}^{Z} (\tilde{\Delta} a_j + x)  \right)  \frac{e^{-x} G(\tilde{\Delta} {\bf{a}})}{\prod_{j=1}^{Z} \tilde{\Delta} a_j} = 0,
\]
and, finally:
$
\sum_{i=1}^{Z} \frac{1}{\tilde{\Delta} a_i + x} = 1$, which locates the transition of $P(\omega(\phi_1,....\phi_Z)  \geq \Delta)$ at 
\[
\frac{1}{\tilde{\Delta}} \sum_{i=1}^{Z} \frac{1}{a_i} = 1.
\]

\bigskip

\newpage

\section{Sparse crossing}  \label{suppSection:SparseCrossingInDepth}

The Full Crossing operator can be utilized to construct the freest semilattice that satisfies a given set of axioms. These axioms, denoted as $R = R^{+} \cup R^{-}$, consist of atomic sentences ($R^{+}$) and negated atomic sentences ($R^{-}$) without quantifiers.  The computation of the freest model of $R$ necessitates a full-crossing operation for each atomic sentence in $R^{+}$.

The freest semilattice model satisfying a set of axioms typically has a substantial number of subdirectly irreducible components. Consequently, it is often impractical to first calculate the freest model and then identify a generalizing subset, as the freest model is computationally intractable. To address this challenge, we employ a sparse version of the Full Crossing operator. Sparse Crossing retains the atoms required to maintain the satisfaction of negative axioms while selectively eliminating other atoms. 

The atom elimination process is guided by the following considerations: to discover generalizing models, we aim to identify atoms of the freest model of the positive axioms that most effectively discriminate the negative axioms (see Definition \ref{definition:discriminant}). We also must avoid discarding atoms that, although removable at a particular crossing step without affecting the negative axioms, are essential for maintaining the satisfaction of negative axioms in subsequent crossing steps.

\subsection{Building free models iteratively} \label{SparseCrossing:iterative}

Besides sparsity, several other differences exist between Full and Sparse Crossing. First, while Sparse Crossing utilizes $R^{-}$, Full Crossing does not, as every negative atomic sentence consistent with $R^{+}$ is always satisfied by the freest model of $R^{+}$. Second, Sparse Crossing typically employs batches, breaking $R$ into subsets (often with replacement) and proceeding iteratively, batch by batch, in a process resembling neural network training. In contrast, Full Crossing performs a crossing operation for each axiom of $R^{+}$ in any order, obtaining the final model after crossing each positive tuple exactly once.  An iterative method is necessary because discovering the most discriminative (useful) atoms of the freest model of $R^{+}$ in a single attempt is unlikely to occur.

The processing of a batch $i$ with axioms $R_i = R_i^{+} \cup R_i^{-}$ starts with a model $M_{i-1}$ and ends with a model $M_i$ that satisfies $R_i$. We thus obtain a sequence of atomized models ${M_0, M_1, \ldots}$. The sequence starts with an initial model $M_0$, usually equal to the freest semilattice over the empty set of axioms, $M_0 = F_C(\emptyset)$. The model $F_C(\emptyset)$ has one atom for each constant in $C$, so it can be easily computed (see Supplementary Section \ref{supplementary:review}, Theorem \ref{freestModelTheorem}). We refer to the models $M_i$ as ``master models".

After each master model $M_i$ is calculated, we add it to a model that contains the result of processing previous batches, generating a second sequence of models $\{N_0, N_1, \ldots\}$. We refer to these models as ``union models"
The model $N_{i-1} + M_i$ always satisfies $R_i^{-}$; however, it does not necessarily satisfy $R_i^{+}$. To resolve this, we discard the atoms of $N_{i-1}$ that are not consistent with $R_i^{+}$: \[
N_i = \{ \phi \in N_{i - 1 }: \phi \models R_i^{+} \} \cup M_i.
\]
Since $ \{ \phi \in N_{i - 1 }: \phi \models R_i^{+} \} \models R_i^{+}$ and $M_i \models R_i^{+}$, then  $N_i \models R_i^{+}$. Since  $M_i \models R_i^{-}$ and the atoms of  $M_i$ are atoms of $N_i$ then $N_i$  satisfies $R_i^{-}$. It follows, $N_i \models R_i$.

As batches are processed, the union models become more consistent with $R$, usually larger, and always composed of atoms that are more effective at discriminating the axioms of $R^{-}$. After a number of batches, the union models satisfy the entire set of axioms $R$ (the training set), and the training reaches a point after which no atoms are removed and union models only grow larger. We can say that, at that point, all the atoms in the union model have reached full maturity.

\subsection{Keeping model size under control}  \label{SparseCrossing:modelSizeUnderControl}

The full-crossing (see Definition \ref{definition:full_crossing}) of a duple $r = (r_L, r_R) \in R^{+}$ over a model $M$ involves removing the atoms of the discriminant $dis_M(r)$ and then adding the atoms in $dis_M(r) \bigtriangledown L_M^a(r_R)$ where the lower atomic segment $L_M^a(r_R) = \{ \varphi \in M : \varphi < r_R\}$ is the set with the atoms of the right-hand side $r_R$ of the duple. The full-crossing operation transforms a model $M$ into the model: \[
\square_r M = (M - dis_M(r)) + dis_M(r) \bigtriangledown L_M^a(r_R)
\]
where we have used $-$ for set subtraction and $+$ for set union. 

We can view a full-crossing operation as an atom replacement process. An atom $\phi \in dis_M(r)$ is replaced by multiple atoms $\phi \bigtriangledown \psi_k$, where $\psi_k$ are the atoms in the lower atomic segment $L^a(r_R)$. Specifically, an atom $\phi$ in the discriminant is replaced by the set of atoms:\[
\phi \rightarrow \{ \phi \bigtriangledown \psi_k : \psi_k \in L^a(r_R) \}.
\]

To ensure that atom discarding allows for the satisfaction of all negative axioms, even after subsequent crossing steps, Sparse Crossing relies on an invariant called the trace. The trace is a function that maps an element of an atomized semilattice model (either an atom or a regular element) to a set. The rationale behind this approach is that we can eliminate an atom generated by Full Crossing as long as we maintain the trace of all elements in the model unchanged. 

The trace function (see Theorem \ref{SparseCrossing:coreProperty}) has the property: \[
Tr(\phi) = \cap_{\psi_k \in L_M^a(r_R)} \, Tr(\phi \bigtriangledown \psi_k),
\]
so it is possible to discard some of the atoms $\phi \bigtriangledown \psi_k$ in the right-hand side if the trace $Tr(\phi)$ given by this expression remains the same. In other words, instead of the replacement above, Sparse Crossing uses: \[
\phi \rightarrow \{ \phi \bigtriangledown \psi_k : \psi_k \in S_{\phi} \subseteq L_M^a(r_R) \},
\] 
where $S_{\phi}$ is any subset of $ L_M^a(r_R) $ that satisfies \[
Tr(\phi) = \cap_{\psi_k \in S_{\phi}} \, Tr(\phi \bigtriangledown \psi_k).
\]
This replacement operation keeps the traces of every element of the atomized semilattice unaltered as proven in Theorem \ref{SparseCrossing:invarianceOfSparseCrossing}, and corresponds with Algorithm \ref{SparseCrossing:sparseCrossing} in the pseudocode. 

Assume a model $M \models R^{-}_i$. The sparse-crossing of a duple $r$ over $M$ transforms $M$ in a model $M^\prime$ that satisfies $r^{+}$ and $R^{-}_i$  and it is atomized by a subset of the atoms of $\,\square_r M$, i.e.\ it is less free than $\square_r M$ since the atoms of $M^\prime$ are all atoms of $\,\square_r M$.

\subsection{Building the dual} \label{SparseCrossing:buildingTheDual}

Before calculating the sparse-crossing of the positive tuples in $R^{+}_i$ of batch $i$, we must first determine an appropriate codomain for the trace function. We refer to such codomain as ``the dual''.

Consider the set of constants and terms: \[
E_i = C \cup Terms(R_i^{+} \cup R_i^{-}) \cup \{ T_{\phi} : \phi \in N_{i - 1} \wedge \phi \models R_i^{+} \} 
\]
where $T_{\phi}$ is the pinning term of an atom $\phi$, i.e., the term with component constants $C - U^c(\phi)$, and: \[
Terms(R_i) = \{ r_L : r \in R_i \} \cup \{ r_R : r \in R_i \}
\]
are the terms mentioned in the set of axioms $R_i$ of the batch $i$. 

Let the dual $D_i$ be an atomized semilattice generated by a set of constants $C^{*}$ with cardinality $\vert C^{*} \vert =\vert E_i \vert$. The set $C^{*}$ contains a constant for each constant of $C$, a constant for each term mentioned in $R_i^{+} \cup R_i^{-}$ and a constant for each pinning term $T_{\phi}$ of the union model $N_{i - 1}$ after it has been filtered out using $R_i^{+}$. We denote the constant in $C^{*}$ corresponding to the element $x \in E_i$ with the symbol $[x]$, which we refer to as ``the dual of $x$."

The dual $D_i$ satisfies the dual of the axioms of $R_i$: \[
(r_L, r_R) \in R_i \rightarrow ([r_R], [r_L]) \in R_i^{*}, 
\]
i.e.\ the axioms $R^{*} =R_i^{*+} \cup R_i^{*-}$. 

Since every element mentioned in $R_i^{*}$ is a constant of $D_i$, atomizing the dual is very easy; it is enough with adding an atom $\xi_c$ to each constant $c$ of $C^{*}$, building a directed graph connecting the constants and then calculating the transitive closure: 

The formal definition of the dual is given in Definition \ref{SparseCrossing:dualDefinition}, and the method to compute it is presented in Section \ref{SparseCrossing:dualCalculation}.

An important result is that the dual $D_i$ satisfies $R_i^{*-}$ if and only if $R_i$ is a consistent set of axioms (see Theorem \ref{SparseCrossing:weakDuality}). The fact that $D_i$ can be atomized if and only if the axioms $R_i$ are consistent is particularly advantageous as it provides a consistency check at no additional computational cost.

Optionally, we can discard atoms in the dual as long as $R_i^{*-}$ remains satisfied, which can always be achieved with no more than $\vert R_i^{-} \vert$ atoms (see Theorem \ref{SparseCrossing:smallerModelThatSatisfiesR}).

Theorem \ref{SparseCrossing:weakDuality} demonstrates that $D_i$ is the dual semilattice of the intersection of $F_C(R_i^{+})$ with $E_i$.  Furthermore, $D_i$ is equal to $F_{E_i}(R_i^{*+})$. It is important to note that although we refer to $D_i$ as ``the dual", it only corresponds to a small subset of the dual of $F_C(R_i^{+})$.

\subsection{Building the trace}

Once the dual $D_i$ is constructed it is possible to build a trace function for batch $i$.  Given a dual $D$, the trace is a function that maps elements of a model $M$ to a subset of atoms of $D$. To calculate the trace of any atom over $C$, we must use:\[
Tr(\phi) = \{ \xi : \exists c  \in C, \,\, (\phi < c) \wedge (D \models \, \xi < [c]) \} 
\] 
where $c \in C$ is a constant and $\xi$ is an atom of the dual. It follows that $Tr(\phi) = \cup_{c \in U^c(\phi)} L^a([c])$, where we write $L^a([c])$ instead of $L^a_D([c])$ for brevity, as there is no ambiguity. This formula corresponds to Algorithm \ref{SparseCrossing:traceOfAtom}. The trace of a union of atoms satisfies (see Theorem \ref{SparseCrossing:unionOfTraces}): \[ 
Tr(\phi \bigtriangledown \psi) = Tr(\phi) \cup Tr(\psi).
\]
The trace of an atom is universal, i.e., it is the same for any model and depends solely on the dual. Given the traces of the atoms, we can calculate the trace of a regular element (i.e., a constant or an idempotent summation of constants) in a model $M$ using: \[
Tr_M(x) = \cap_{\phi \in L_M^a(x)} Tr(\phi),
\] 
which is what Algorithm \ref{SparseCrossing:traceOfTerm} does. Unlike the traces of atoms, the traces of regular elements depend on the specific model $M$ and are therefore not universal. However, in Theorem \ref{SparseCrossing:nonRedundant}, we demonstrate that the traces of regular elements are well-defined, meaning they are determined by the model $M$ itself and not by a particular atomization of $M$. In general, and as long as there is no ambiguity, we omit the model subindex from the notation and write $Tr(x)$ for the trace of a regular element.

\subsection{Trace constraints and $\Lambda_i$}

Before initiating the sparse-crossing of the duples in $R_i^+$, we must ensure that our initial model satisfies $R_i^{-}$. This can always be achieved by adding to the model $M_{i-1}$, obtained from the previous batch, a set of atoms $\Lambda_i$;  in this section we will learn how to compute  $\Lambda_{i}$. 

We begin by calculating the trace of every atom in our initial model $M_{i-1}$. Using these atom traces, we compute the traces of the terms in $R_i$ and ensure that the following two sets of constraints are satisfied: \[
r \in R_i^{+} \rightarrow \,\, Tr(r_R) \subseteq Tr(r_L), 
\] \[
r \in R_i^{-} \rightarrow \,\, Tr(r_R) \not\subseteq Tr(r_L). 
\] 
We refer to these sets of constraints as positive and negative trace constraints, respectively.

The negative trace constraints are necessary to ensure that $R_i^{-}$ holds at all times. According to Theorem \ref{SparseCrossing:traceImplication}, as long as a model satisfies $T(b) \not\subseteq T(a)$, then $M \models a \not\leq b$. It follows that one or multiple trace-preserving operations on a model that satisfies $T(b) \not\subseteq T(a)$ will always yield a model that satisfies $a \not\leq b$. Since all the operations we use are trace-invariant, we have guaranteed that at the end of the crossing phase, all positive and negative axioms are satisfied.

Positive trace constraints are necessary as they imply that an atom in the discriminant has a trace $Tr(\phi)$ equal to the intersection of the traces $Tr(\phi \bigtriangledown \psi_k)$ of the atoms generated during the crossing (see Theorem \ref{SparseCrossing:coreProperty}). If the positive trace constraints are not satisfied, the crossing operation is not trace-invariant.

Theorem \ref{SparseCrossing:traceConstraints} demonstrates that both sets of trace constraints can be satisfied by merely adding new atoms to $M_{i - 1}$, each with a single constant in its upper constant segment. In fact, adding one atom for each constant of $C$ always produces a model that satisfies the trace constraints (see Theorems \ref{SparseCrossing:freeModelConstraints} and \ref{SparseCrossing:nonRedundant}). However, we aim to minimize the number of added atoms, as an increase in atoms leads to more computational work. Trace constraints can typically be satisfied by adding only a few atoms to $M_{i - 1}$. As shown in the proof of Theorem \ref{SparseCrossing:traceConstraints}, a negative trace constraint for a duple $r \in R^{-}$ that is not satisfied can always be rectified by adding new atoms, each under a single constant in the lower segment of $r_L$. Similarly, a positive trace constraint for a duple $r \in R^{+}$ can be enforced by adding new atoms, each under a single constant, in the lower segment of $r_R$.

The procedure to compute the new atoms that ensure all trace constraints are satisfied, the set $\Lambda_i$, is presented in pseudocode in Algorithm \ref{SparseCrossing:traceConstraintsAlgorithm}. To understand why this works see the proof of Theorem \ref{SparseCrossing:traceConstraints}.

\subsection{A note on duality and trace constraints}

To better understand the role played by the trace constraints consider the following. Restricted to the set $E_i$, there is a duality given by $\cup \leftrightarrow \cap$ and $L^a_{F_C(R_i^{+})}(t) \leftrightarrow L_{D_i}^{a}([t])$. To be more specific, for $a,b \in E_i$ we have $F_C(R_i^{+}) \models a \leq b$ if and only if $L^{a}([b]) \subseteq L^{a}([a])$ where we have omitted the subindex in $L_{D_i}^{a}$ for less clutter. We also have $L^a([b]) = \cap_{c \in C(b)} L^a([c])$ (see Theorem \ref{SparseCrossing:termDualAtoms}). We saw in Section \ref{SparseCrossing:buildingTheDual} that $D_i$ corresponds, via duality, to a small subset $E_i$ of $F_C(R_i^{+})$. 

This duality extends beyond the small subset $E_i$ through the trace. For any two terms of any semilattice $M$, we have that $M \models a \leq b$ implies $Tr(b) \subseteq Tr(a)$, which holds true even if the trace constraints are not satisfied. This duality is weaker for $M$ than for the dual $D_i$, as it only operates from left to right; however, it is also stronger as it is not restricted to $E_i$ and applies to all elements of $M$. If the positive trace constraints hold for $M$, we also have $F_C(R_i^{+}) \models a \leq b$ implies $Tr(b) \subseteq Tr(a)$. Furthermore, when the positive trace constraints hold for $M$, for any two terms $a, b$ such that $\square_{R^{+}} M \models a \leq b$, the trace in $M$ satisfies $Tr(b) \subseteq Tr(a)$ (although the implication from right to left is not necessarily true). This is demonstrated in Theorem \ref{SparseCrossing:traceConstraintsUniversal}.

\subsection{Trace-invariant simplification of the master}

The trace is linear with respect to the idempotent operator (see Theorem \ref{SparseCrossing:unionOfTraces}): \[
Tr(a \odot b) = Tr(a) \cap Tr(b).
\] 
This property also implies that a regular element $x$ of the atomized semilattice: \[
Tr(x) = \cap_{c \in C(x)} Tr(c).
\] 
where $C(x)$ is the set of component constants of $x$ (see also Theorem \ref{SparseCrossing:traceEqualitiesForAtoms}).

By construction, sparse-crossing a duple is a trace-invariant operation as it leaves the traces of all elements unchanged. Since the trace of a term is equal to the intersection of the traces of its component constants we can build another trace-invariant operation; removing atoms while leaving the trace of every constant unaltered shall be trace-invariant. 

Discarding atoms while preserving the traces of the constants is a relatively efficient and effective operation that can be applied at any time. This operation requires iterating through the constants of the model rather than through the elements mentioned in the axioms. This simplification procedure is implemented in Algorithm \ref{SparseCrossing:simplifyFromConstants}.

A master model that satisfies the trace constraints can be computed with a size that depends only on the size of the dual (see Theorem \ref{SparseCrossing:masterSize}). Since the atomization of the dual requires no more than $\vert R^{-} \vert$ atoms, we have a guarantee that, by using trace-invariant simplification and sparse-crossing, the master model maintains a manageable size at all times. 

\subsection{Crossing smarter every batch} \label{SparseCrossing:smarter}

The pinning terms: \[
T_{\phi} : \phi \in N_{i - 1} \wedge \phi \models R_i^{+} 
\]
are present in the set $E_i$, which determines the constants of the dual. However, a dual could be constructed without pinning terms, using the smaller set $E_i = C \cup Terms(R_i^{+} \cup R_i^{-})$, and a valid trace would be obtained, allowing for the calculation of Sparse Crossing.

By including the pinning terms in $E_i$, we allow the atoms learned in previous batches to influence the crossing process.  The effect of adding the pining terms of (the atoms of) the union model $N_{i - 1}$ is faster convergence and better master models.

The atomization of the dual can be simplified by discarding atoms, following the straightforward procedure discussed in the proof of Theorem \ref{SparseCrossing:smallerModelThatSatisfiesR}. This simplification becomes particularly effective when pinning terms are taken into account, often resulting in a dual with significantly fewer than $\vert R_i^{-} \vert$ atoms. The size of the master can be bounded by $\alpha\vert D \vert$, where $\alpha$ is the average size of an atom in $N_{i - 1}$ and $\vert D \vert$ is the number of atoms in the dual. A smaller dual not only leads to a smaller master $M_i$ (see Theorem \ref{SparseCrossing:masterSize}) but also accelerates its calculation.

Theorem \ref{SparseCrossing:indicatorsTheorem} (ii) clarifies the guiding role played by the pinning terms.  There is a pinning term $T_{\psi}$ for each atom $\psi \in N_{i  -1}$. This Theorem states that, given any atom $\phi$ over $C$, the atom $\xi_{\psi}$ of the dual that is edged to $[T_{\psi}]$ satisfies: \[
\xi_{\psi} \not\in Tr(\phi)\text{ if and only if } U^c(\phi) \subseteq U^c(\psi). 
\]
Assume $\phi$ is replaced with $\phi \rightarrow \{ \phi \bigtriangledown \varphi_k : \varphi_k \in S_{\phi} \subseteq L_M^a(r_R) \}$ where the set $S_{\phi}$ is selected so the replacement is trace-invariant. Since $Tr(\phi \bigtriangledown \varphi_k) = Tr(\phi) \cup Tr(\varphi_k)$: \[
\xi_{\psi} \not\in Tr(\phi) \cup Tr(\varphi_k) \text{ if and only if } U^c(\phi \bigtriangledown \varphi_k) \subseteq U^c(\psi). 
\]
The invariance of the trace, $Tr(\phi) = \cap_{\varphi_k \in S_{\phi}} , Tr(\phi \bigtriangledown \varphi_k)$, requires that for each $\xi_{\psi} \not\in Tr(\phi)$, there exists at least one $\varphi_k \in S_{\phi}$ such that $\xi_{\psi} \not\in Tr(\varphi_k)$. It follows that when $U^c(\phi) \subseteq U^c(\psi)$, the atom $\phi$ is replaced by at least one atom with $U^c(\phi \bigtriangledown \varphi_k) \subseteq U^c(\psi)$. This must occur for the potentially numerous atoms $\psi$ that satisfy $\xi_{\psi} \not\in Tr(\phi)$. Since the selection of the atoms $\varphi_k \in S_{\phi}$ is simultaneously trace-invariant and of minimal cardinality (minimal in a best-effort sense), the selection results in atoms $\phi \bigtriangledown \varphi_k$ with upper constant segments that are, often, subsets of $U^c(\psi)$ for many atoms $\psi \in N_{i-1}$.

This mechanism generates atoms that are common components of previously learned atoms, which explains why non-redundant atoms can be discovered from the pinning terms of redundant atoms. The inclusion of pinning terms results in models that are not only consistent with $R_i$ but also more consistent with previous batches and, consequently, more consistent with $R$, an observation supported by experimental evidence. Without pinning terms in the dual, the union of master models still approximates the freest model of the axioms as training progresses; however, learning may be significantly slower.

\subsection{Efficient computation of the dual} \label{SparseCrossing:dualCalculation}

The construction of the dual can be carried out as described in Definition \ref{SparseCrossing:dualDefinition}, which involves constructing a graph and computing its transitive closure. Although this formal definition is useful and facilitates Theorem proving, graph manipulations can be computationally expensive and are not necessary. 

We present here a simple method that involves a few stages, all parallelizable.  This method corresponds to Algorithm \ref{SparseCrossing:dualAlgebraAlgorithm}, and it is based on computing ``indicator sets'' that become atoms of the dual at the end of the computation:

1 - For each term at the right hand side of any duple in $R^{-}$ create an ``indicator set" equal to component constants set of the term.

2 - For each duple $r = (r_L, r_R)$ in $R^{+}$ and for each indicator set, if the set of component constants of $r_R$, the set $C(r_R)$, is a subset of the indicator set, add $C(r_L)$ to the indicator set. 

3 - Repeat step 3 until not indicator set undergoes further changes. 

4 - Create a vector $v_1$ with the indicator sets, in any order. Make sure each indicator set only appears once by removing repeated indicator sets.

5 - Transform the set of atoms $\phi \in N_{i - 1}$ into a vector $v_2$, in any order.  Concatenate vectors $v_1$ and $v_2$ in a single vector $v$. Each position (an index) of the vector $v$ becomes the index of an atom of the dual.

6 - For each element $s$ in the set $C \cup Terms(R_i^{+} \cup R_i^{-})$ compute the lower atomic segment of its dual as follows: the lower atomic segment of $[s]$ contains the atom of the dual corresponding to position $i$ of vector $v_1$ if the set of component constants of $s$ is a subset of the indicator set $v_1(i)$. In addition, the lower atomic segment  $[s]$ contains the atom with index $\vert v_1 \vert + i$ if  the set of component constants of $s$ is disjoint with $U^c(v_2(i))$.

7 - Optionally, reduce the set of atoms of the dual as in Theorem \ref{SparseCrossing:smallerModelThatSatisfiesR}.

The upper constant segments of the atoms of the dual are not explicitly computed because they are never needed; it is enough with knowing the lower atomic segment of the elements in the set $C \cup Terms(R_i^{+} \cup R_i^{-})$.

The method assumes that every pinning term $T_{\psi}$ corresponds to an atom $\psi$ consistent with $R_i^+$. Theorem \ref{SparseCrossing:validExclusion} states that for such terms, every constant $c$ for which $[c]$ is in the upper segment of $[T_{\psi}]$ is a component constant of $T_{\psi}$. Consequently, there is no need to apply Step 2 to the pinning terms as it would have no effect on the trace function.

A potential alternative approach to building a dual is given by Haidar et al.\ in \cite{haidar}. 

\subsection{Putting all together}

The main loop of Sparse Crossing corresponds to Algorithm \ref{SparseCrossing:batchTraining}. The input consists of a set of axioms $R$ and an initial model $M_0$. If no $M_0$ is provided, the freest model $M_0 = F_C(\emptyset)$ can be used. The axioms may be divided into any number of batches $R_i$, with or without replacement, typically with replacement.

Initialize the union model $N_0 = M_0$. For each batch:

Step 1: Trim the union model $N_{i - 1}$ removing the atoms inconsistent with $R_i^{+}$.

Step 2: Build the dual $D_i$.

Step 3: Use $D_i$ to calculate the trace of every atom in $M_{i - 1}$. Using these traces, compute and evaluate each trace constraint. For each unsatisfied constraint, add atoms to $M_{i - 1}$ until the constraint is satisfied. Repeat this step as necessary (typically twice) until all trace constraints are satisfied.

Step 4: Calculate the sparse-crossing of every duple in $R_i^{+}$. The resulting model after enforcing all positive duples is $M_i$, which satisfies $R_i$. If at any point the model exceeds a predetermined threshold, discard atoms while maintaining the traces of all constants invariant.

Step 5: Obtain $N_i$ as the union of $M_i$ with the trimmed union model $N_{i -1}$. 

\subsection{Definitions} 

\begin{definition} \label{SparseCrossing:dualDefinition}
Given the set $E_i$ of terms and the axioms of the dual $R_i^{*+}$ as in Section \ref{SparseCrossing:buildingTheDual}, let $G$ be the graph with the following edges: 

1 - a directed edge, $[r_R] \rightarrow [r_L]$, for each duple in $R_i^{*+}$ 

2 - directed edges $[x] \rightarrow [c]$ for each component constant $c$ of every term $x$ in $E_i$.

3 - a directed edge $[y] \rightarrow [x]$ between any two elements $x$ and $y$ of $E_i$ such that $C(x) \subseteq EC(y)$. Here, $C(x)$ is the set of component constants of $x$ and $EC(y)$ is a superset of $C(y)$, the ``extended set" $EC(y) = \{y \} \cup \{c : [y] \rightarrow [c] \text{ is in G and } c\in C \}$. 

4 - a directed edge $\xi_x \rightarrow [x]$ for each element $x \in E$, where $\xi_x$ is an atom of the dual specifically created for $x$. 

After the transitive closure of the graph $G$ is calculated, step 3 must be reviewed until no more edges are added and the graph is transitively closed. We refer as \textbf{``the dual"} to the semilattice $D$ atomized by the atoms $\{ \xi_x : x \in E \}$ with upper constant segments given by $G$.
\end{definition}
\bigskip

\subsection{Theorems} \label{suppSection:sparseCrossingTheorems}

\begin{theorem} \label{SparseCrossing:unionOfTraces}
The trace satisfies the following properties: \\
i) $Tr(\phi \bigtriangledown \psi) = Tr(\phi) \cup Tr(\psi)$ for any two atoms $\phi$ and $\psi$. \\
ii) $Tr(r \odot s) = Tr(r) \cap Tr(s)$ for any two terms $r$ and $s$, \\
iii) $Tr(t) = \cap_{c \in C(t)} Tr(c)$ for any term $t$.
\end{theorem}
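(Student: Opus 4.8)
The plan is to establish (i) directly from the closed form of the trace of an atom, then to deduce (iii) from the additivity of the lower atomic segment over the component constants of a term, and finally to obtain (ii) as an immediate corollary of (iii). Throughout, the only ingredients are set-theoretic distributivity of $\cup$ and $\cap$ over indexed unions, together with two structural facts already available: the additivity of the upper constant segment under $\bigtriangledown$ (Definition \ref{definition:atom_union_of_atoms}) and the additivity of the lower atomic segment under the component-constant decomposition of a term (Theorem \ref{atomicSegmentFromTermTheorem}(v), equivalently axiom AS4).

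First I would prove (i). Using the closed form recorded just above the statement, $Tr(\phi) = \bigcup_{c \in U^c(\phi)} L^a([c])$, where $L^a([c])$ is the lower atomic segment of $[c]$ in the dual, I note that this expression depends only on the dual and not on any ambient model, so it applies to $\phi \bigtriangledown \psi$ whether or not that atom lies in a given model. Since $U^c(\phi \bigtriangledown \psi) = U^c(\phi) \cup U^c(\psi)$, splitting the index set of the union gives
\[
Tr(\phi \bigtriangledown \psi) = \bigcup_{c \in U^c(\phi) \cup U^c(\psi)} L^a([c]) = \Big(\bigcup_{c \in U^c(\phi)} L^a([c])\Big) \cup \Big(\bigcup_{c \in U^c(\psi)} L^a([c])\Big) = Tr(\phi) \cup Tr(\psi),
\]
using only associativity and commutativity of set union.

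Next I would prove (iii). Starting from the definition $Tr_M(t) = \bigcap_{\phi \in L^a_M(t)} Tr(\phi)$ of the trace of a regular element, I would invoke Theorem \ref{atomicSegmentFromTermTheorem}(v), which, iterated over the component constants $C(t) = \{c_1,\dots,c_n\}$, yields $L^a_M(t) = \bigcup_{c \in C(t)} L^a_M(c)$; then regrouping the intersection over this decomposition of the index set,
\[
Tr(t) = \bigcap_{\phi \in \bigcup_{c \in C(t)} L^a_M(c)} Tr(\phi) = \bigcap_{c \in C(t)} \Big(\bigcap_{\phi \in L^a_M(c)} Tr(\phi)\Big) = \bigcap_{c \in C(t)} Tr(c).
\]
Finally, (ii) follows by applying (iii) to $r \odot s$: since $C(r \odot s) = C(r) \cup C(s)$ by Definition \ref{definition:component_constants}, the same splitting of an intersection gives $Tr(r \odot s) = \big(\bigcap_{c \in C(r)} Tr(c)\big) \cap \big(\bigcap_{c \in C(s)} Tr(c)\big) = Tr(r) \cap Tr(s)$.

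The argument is entirely bookkeeping with indexed unions and intersections, so there is no substantial obstacle. The step I expect to need the most care is (iii): it is the one that actually consumes a structural hypothesis (AS4 / Theorem \ref{atomicSegmentFromTermTheorem}(v)) and the one where one must be mindful that $Tr$ of a regular element is defined via a fixed atomization — the fact that the resulting set does not depend on the atomization is precisely the well-definedness claim of Theorem \ref{SparseCrossing:nonRedundant}, and for the identity at hand it is enough to work with one atomization. One should also keep straight that the trace of an atom is \emph{universal} (a function of the dual alone), whereas the trace of a regular element is tied to the model through its lower atomic segment; this is what makes the two "sides" of the argument — additivity of $U^c$ for atoms in (i), additivity of $L^a$ for terms in (iii) — look different even though both reduce to the same distributive identity.
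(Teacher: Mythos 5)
Your proposal is correct and follows essentially the same route as the paper: part (i) from the additivity of $U^c$ under $\bigtriangledown$, and parts (ii)--(iii) from the additivity of the lower atomic segment, with only bookkeeping on indexed unions and intersections. The only cosmetic difference is that you derive (ii) as a corollary of (iii) via $C(r\odot s)=C(r)\cup C(s)$, whereas the paper proves (ii) directly from $L^a_M(r\odot s)=L^a_M(r)\cup L^a_M(s)$; both rest on the same identity.
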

\begin{proof}
(i) The upper constant segment $U^c(\phi \bigtriangledown \psi) = \{c < \phi \bigtriangledown \psi : c \in C \}$ of a union of atoms $\phi \bigtriangledown \psi$ is defined by $U^c(\phi \bigtriangledown \psi) = U^c(\phi) \cup U^c(\psi)$. Notice that this is a universal property, i.e.\ it does not assume the atoms belong to any model. From the definition of the trace of an atom $Tr(\phi) = \{ \xi : \exists c \in C, \,\, (\phi < c) \wedge (D_i \models \, \xi < [c]) \}$, follows that the trace of a union of atoms is $Tr(\phi \bigtriangledown \psi) = \{ \xi : \exists c  \in C, \,\, (\phi < c) \vee (\psi < c) \wedge (D_i \models \, \xi < [c]) \} =Tr(\phi) \cup Tr(\psi)$. \\
(ii) Let $M$ be any atomized model. $Tr(r \odot s) = \cap_{\varphi \in L_M^a(r \odot s)} Tr(\varphi)$ and using $L_M^a(r \odot s)= L_M^a(r) \cup L_M^a(s)$ then $Tr(r \odot s) = (\cap_{\varphi \in L_M^a(r)} Tr(\varphi)) \cap (\cap_{\varphi \in L_M^a(s)} Tr(\varphi)) = Tr(r) \cap Tr(s)$. \\
(iii) $Tr(t) = \cap_{\phi \in L_M^a(t)} Tr(\phi) = \cap_{c \in C(t)} \cap_{\phi \in L_M^a(c)} Tr(\phi) = \cap_{c \in C(t)} Tr(c)$ where we have used $L_M^a(t) = \cup_{c \in C(t)} L_M^a(c)$.
\end{proof}
\bigskip

\begin{theorem} \label{SparseCrossing:gainOfTr}
Let $\phi$ be any atom and $t$ any term: \[
\cap_{\psi_k \in L_M^a(t)} \, Tr(\phi \bigtriangledown \psi_k) = Tr(\phi) \cup Tr(t). \]
\end{theorem}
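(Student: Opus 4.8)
The plan is to reduce the identity entirely to Theorem \ref{SparseCrossing:unionOfTraces}, the defining formula for the trace of a regular element, and one elementary set-theoretic distributive law; no property of the dual beyond the bare definition of $Tr$ is needed, so this is a lemma of pure bookkeeping.

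First I would apply Theorem \ref{SparseCrossing:unionOfTraces}(i) termwise to the left-hand side. For each atom $\psi_k \in L_M^a(t)$ we have $Tr(\phi \bigtriangledown \psi_k) = Tr(\phi) \cup Tr(\psi_k)$, hence
\[
\cap_{\psi_k \in L_M^a(t)} Tr(\phi \bigtriangledown \psi_k) = \cap_{\psi_k \in L_M^a(t)} \left( Tr(\phi) \cup Tr(\psi_k) \right).
\]
Next I would factor out the common set $Tr(\phi)$ using the distributive law $\cap_k (A \cup B_k) = A \cup \left(\cap_k B_k\right)$, which holds for any finite nonempty family of sets. The family indexed by $L_M^a(t)$ is finite since $M$ is a finite atomized semilattice, and it is nonempty by axiom AS6 (every regular element has at least one atom in its lower segment), so the law applies and gives
\[
\cap_{\psi_k \in L_M^a(t)} \left( Tr(\phi) \cup Tr(\psi_k) \right) = Tr(\phi) \cup \left( \cap_{\psi_k \in L_M^a(t)} Tr(\psi_k) \right).
\]
Finally I would recognize the remaining intersection as $Tr(t)$: by the definition of the trace of a regular element in a model, $Tr_M(t) = \cap_{\varphi \in L_M^a(t)} Tr(\varphi)$, i.e.\ precisely $\cap_{\psi_k \in L_M^a(t)} Tr(\psi_k) = Tr(t)$. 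Chaining the three displays yields $\cap_{\psi_k \in L_M^a(t)} Tr(\phi \bigtriangledown \psi_k) = Tr(\phi) \cup Tr(t)$, as claimed.

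There is no genuine obstacle; the only points deserving a word of care are that the index set $L_M^a(t)$ is finite and nonempty (so the distributive law is used in the stated form) and that $Tr_M(t)$ is well defined independently of the chosen atomization of $M$, which is guaranteed by Theorem \ref{SparseCrossing:nonRedundant}. One could even phrase the whole argument in a single line by citing Theorem \ref{SparseCrossing:unionOfTraces}(i) and the definition of $Tr_M$, but separating the distributive step makes the role of the non-emptiness of $L_M^a(t)$ explicit.
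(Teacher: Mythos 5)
Your proposal is correct and follows exactly the same route as the paper's proof: apply Theorem \ref{SparseCrossing:unionOfTraces}(i) to each term, distribute $Tr(\phi)$ out of the intersection, and identify $\cap_{\psi_k \in L_M^a(t)} Tr(\psi_k)$ with $Tr(t)$ via the definition of the trace of a regular element. Your added remarks on the finiteness and non-emptiness of $L_M^a(t)$ are a small precision the paper leaves implicit, but the argument is the same.
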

\begin{proof}
This is a consequence of the linearity of the trace (see Theorem \ref{SparseCrossing:unionOfTraces}) for the atoms; \[
\cap_{\psi_k \in L_M^a(t)} \, Tr(\phi \bigtriangledown \psi_k) = \cap_{\psi_k \in L_M^a(t)} \, (Tr(\phi) \cup Tr(\psi_k)) = Tr(\phi) \cup \,( \cap_{\psi_k \in L_M^a(t)} \, Tr(\psi_k) )
\] and using $\cap_{\psi_k \in L_M^a(t)} \, Tr(\psi_k) = Tr(t)$ we get to $ Tr(\phi) \cup Tr(t)$.
\end{proof}
\bigskip

\begin{theorem} \label{SparseCrossing:coreProperty} 
Let $\phi \in dis_M(r)$ in a model $M$. If $M$ satisfies the positive trace constraint $r \in R_i^{+} \rightarrow \,\, Tr(r_R) \subseteq Tr(r_L)$, then $Tr(\phi) = \cap_{\psi_k \in L_M^a(r_R)} \, Tr(\phi \bigtriangledown \psi_k)$.
\end{theorem}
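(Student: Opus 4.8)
The statement follows almost immediately by combining Theorem~\ref{SparseCrossing:gainOfTr} with the hypothesis, once the membership $\phi \in L_M^a(r_L)$ is exploited. The plan is as follows.

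First I would apply Theorem~\ref{SparseCrossing:gainOfTr} with the term $t = r_R$, which gives the identity
\[
\bigcap_{\psi_k \in L_M^a(r_R)} Tr(\phi \bigtriangledown \psi_k) \,=\, Tr(\phi) \cup Tr(r_R).
\]
(Note that $L_M^a(r_R)$ is nonempty by AS6, so this intersection is well-defined; I would mention this to be safe.) It then remains to show that $Tr(r_R) \subseteq Tr(\phi)$, which reduces the right-hand side to $Tr(\phi)$.

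Next I would use that $\phi \in dis_M(r)$ forces $\phi < r_L$, hence $\phi \in L_M^a(r_L)$. By the definition of the trace of a regular element, $Tr(r_L) = \bigcap_{\varphi \in L_M^a(r_L)} Tr(\varphi)$, and since $\phi$ is one of the atoms in that intersection, we get $Tr(r_L) \subseteq Tr(\phi)$. Combining this with the assumed positive trace constraint $Tr(r_R) \subseteq Tr(r_L)$ yields the chain $Tr(r_R) \subseteq Tr(r_L) \subseteq Tr(\phi)$, so $Tr(\phi) \cup Tr(r_R) = Tr(\phi)$, which closes the argument.

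There is essentially no hard step here; the only things to be careful about are (i) making sure the intersection defining $\bigcap_{\psi_k \in L_M^a(r_R)} Tr(\phi \bigtriangledown \psi_k)$ ranges over a nonempty set (guaranteed by AS6), and (ii) correctly invoking that the trace of a regular element is the intersection of the traces of the atoms below it, so that membership $\phi \in L_M^a(r_L)$ gives the inclusion $Tr(r_L) \subseteq Tr(\phi)$ rather than the reverse. Everything else is a two-line inclusion chase, and the discriminant hypothesis is used only through $\phi < r_L$ (the condition $\phi \not< r_R$ is not needed for this particular identity).
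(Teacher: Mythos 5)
Your proposal is correct and follows exactly the paper's own argument: invoke Theorem~\ref{SparseCrossing:gainOfTr} to get $\cap_{\psi_k \in L_M^a(r_R)} Tr(\phi \bigtriangledown \psi_k) = Tr(\phi) \cup Tr(r_R)$, then use $\phi \in L_M^a(r_L)$ together with the positive trace constraint to obtain $Tr(r_R) \subseteq Tr(r_L) \subseteq Tr(\phi)$. Your added remarks (nonemptiness of $L_M^a(r_R)$ via AS6, and that only $\phi < r_L$ is actually used from the discriminant hypothesis) are accurate but not needed beyond what the paper does.
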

\begin{proof}
According to Theorem \ref{SparseCrossing:gainOfTr},  $\cap_{\psi_k \in L_M^a(r_R)} \, Tr(\phi \bigtriangledown \psi_k) = Tr(\phi) \cup Tr(r_R)$. Assume the positive trace constraint holds. Then $Tr(r_R) \subseteq Tr(r_L) = \cap_{\varphi \in L_M^a(r_L)} Tr(\varphi)$ where the definition for the trace of a regular element $Tr(x) = \cap_{\varphi \in L_M^a(x)} Tr(\varphi)$ has been used. Since $\phi \in dis_M(r)$ implies $\phi \in L_M^a(r_L)$ it follows that $Tr(r_L) \subseteq Tr(\phi)$. Therefore, $Tr(r_R) \subseteq Tr(r_L) \subseteq Tr(\phi)$. Hence, $Tr(\phi) \cup Tr(r_R) = Tr(\phi)$, and also $\, \cap_{\psi_k \in L_M^a(r_R)} \, Tr(\phi \bigtriangledown \psi_k) = Tr(\phi)$.
\end{proof}
\bigskip

\begin{theorem} \label{SparseCrossing:invarianceOfSparseCrossing} 
Let $\phi \in dis_M(r)$. The replacement $\phi \rightarrow \{ \phi \bigtriangledown \psi_k : \psi_k \in S_{\phi} \subseteq L_M^a(r_R) \}$ where $S_{\phi}$ is a subset of $ L_M^a(r_R) $ that satisfies $
Tr(\phi) = \cap_{\psi_k \in S_{\phi}} \, Tr(\phi \bigtriangledown \psi_k)$ is a trace invariant operation, i.e.\ it keeps the trace of all atoms and regular elements of $M$ unchanged. If $M$ satisfies the positive trace constraint for $r \in R_i^{+} \rightarrow \,\, Tr(r_R) \subseteq Tr(r_L)$ such set $S_{\phi}$ always exists.
\end{theorem}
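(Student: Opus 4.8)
The plan is to prove the statement in two parts: first, that the replacement $\phi \rightarrow \{ \phi \bigtriangledown \psi_k : \psi_k \in S_{\phi} \}$ is trace-invariant whenever $S_\phi$ satisfies the intersection condition; and second, that such a set $S_\phi$ exists whenever the positive trace constraint holds. For the invariance part, the key observation is that the trace of an atom depends only on its upper constant segment (Theorem \ref{SparseCrossing:unionOfTraces}), so replacing $\phi$ by a family of atoms whose traces intersect to $Tr(\phi)$ cannot change the trace of $\phi$ itself. What must be checked is the effect on the regular elements. So the first step is to take an arbitrary regular element $x$ of $M$ and compare $Tr_M(x) = \cap_{\varphi \in L_M^a(x)} Tr(\varphi)$ before and after the replacement. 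Only atoms $\varphi$ with $\varphi = \phi$ matter here, since all other atoms of $M$ are untouched. I would split into two cases: if $\phi \not< x$, then neither $\phi$ nor any $\phi \bigtriangledown \psi_k$ with $\psi_k \not< x$ contributes, but some $\phi \bigtriangledown \psi_k$ with $\psi_k < x$ might be newly below $x$ — here I need that the intersection of the traces of exactly those $\phi\bigtriangledown\psi_k$ below $x$ still contains $Tr_M(x)$, which follows because each such trace is $Tr(\phi)\cup Tr(\psi_k) \supseteq Tr(\psi_k) \supseteq Tr_M(x)$ (as $\psi_k \in L_M^a(x)$). If $\phi < x$, then $Tr_M(x) \subseteq Tr(\phi)$ already, and every $\phi \bigtriangledown \psi_k \in S_\phi$ is below $x$ (by Theorem \ref{unionProperties}(iii), since $\phi < x$), so the new contribution is $\cap_{\psi_k \in S_\phi} Tr(\phi\bigtriangledown\psi_k) = Tr(\phi)$ by hypothesis, leaving the intersection unchanged. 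Combining both cases, $Tr_M(x)$ is preserved for every regular element; atoms other than $\phi$ trivially keep their traces, and $\phi$ is gone. This establishes trace-invariance.

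For the existence of $S_\phi$, the strategy is to invoke Theorem \ref{SparseCrossing:coreProperty}: when $\phi \in dis_M(r)$ and the positive trace constraint $Tr(r_R) \subseteq Tr(r_L)$ holds, that theorem gives $Tr(\phi) = \cap_{\psi_k \in L_M^a(r_R)} Tr(\phi \bigtriangledown \psi_k)$. So the full set $S_\phi = L_M^a(r_R)$ already works, and existence is immediate. If one wants a minimal such $S_\phi$ (as the engine does in practice), one argues by a greedy/finite-descent argument: start with $S_\phi = L_M^a(r_R)$ and repeatedly remove any $\psi_k$ whose removal leaves the intersection equal to $Tr(\phi)$; since $L_M^a(r_R)$ is finite this terminates, and the resulting set still satisfies the defining equation. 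For the proof of the theorem as stated, however, only existence is required, so exhibiting $S_\phi = L_M^a(r_R)$ suffices.

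The main obstacle is the case analysis in the invariance argument — specifically making sure that, for a regular element $x$ with $\phi \not< x$, the atoms $\phi \bigtriangledown \psi_k$ that land below $x$ after the replacement do not shrink the trace $Tr_M(x)$. The resolution hinges on the monotonicity $Tr(\phi \bigtriangledown \psi_k) = Tr(\phi) \cup Tr(\psi_k) \supseteq Tr(\psi_k)$ together with $\psi_k < x \Rightarrow Tr_M(x) \subseteq Tr(\psi_k)$; so each new contributing trace is a superset of the old intersection and cannot decrease it, while the removal of $\phi$ from $L_M^a(x)$ does not apply since $\phi \not< x$ means $\phi$ was never in that segment. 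One subtlety worth stating carefully is which $\phi \bigtriangledown \psi_k$ are actually below a given $x$: by Theorem \ref{unionProperties}(iv), $\phi\bigtriangledown\psi_k < x$ iff $\phi < x$ or $\psi_k < x$, so in the $\phi\not<x$ case it is exactly the $\psi_k < x$ ones, which is precisely what the argument needs. Everything else is a direct appeal to Theorems \ref{SparseCrossing:unionOfTraces}, \ref{SparseCrossing:gainOfTr}, and \ref{SparseCrossing:coreProperty}.
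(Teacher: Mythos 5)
Your proposal is correct and follows essentially the same route as the paper's proof: the same two-case analysis on whether $\phi < x$ (using $Tr(\phi \bigtriangledown \psi_k) = Tr(\phi) \cup Tr(\psi_k) \supseteq Tr(\psi_k) \supseteq Tr_M(x)$ in the first case and the defining intersection equation for $S_\phi$ in the second), and the same appeal to Theorem \ref{SparseCrossing:coreProperty} with $S_\phi = L_M^a(r_R)$ for existence. No gaps.
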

\begin{proof}
Let $N$ be the model resulting from replacing $\phi$ in $M$.  Traces of atoms depend upon the dual only and not upon the presence of other atoms in $M$, therefore their traces remain unchanged.   

Let $x$ be a regular element and assume $\psi_k < x$ and $\phi \not< x$ for some $k$. If $x$ is in the upper segment of $\psi_k$, then some component constant $c$ of $x$ is in the upper segment of $\psi_k$ and it follows that $c$ is also in the upper segment of $\phi \bigtriangledown \psi_k$. Therefore, any regular element $x$ that satisfies $\psi_k \in L_M^a(x)$ in $M$ satisfies $\{ \psi_k, \, \phi \bigtriangledown \psi_k\} \subseteq L_N^a(x)$ in $N$. However, because $Tr(\psi_k)  \subseteq  Tr(\phi \bigtriangledown \psi_k)$, which is a consequence of Theorem \ref{SparseCrossing:unionOfTraces}, the new atoms $\phi \bigtriangledown \psi_k$ have no effect over the trace of $x$; indeed, $Tr_N(x) = Tr_M(x) \cap Tr(\phi \bigtriangledown \psi_k)$ and, since $Tr_M(x) \subseteq Tr(\psi_k)$, then $Tr_N(x) = Tr_M(x)$. \\
Assume now that $\phi < x$. If $x$ is in the upper segment of $\phi$ then some constant $c$ of $x$ is in the upper segment of $\phi$ and it follows $\phi \bigtriangledown \psi_k < c$ for every $\psi_k$ because $c$ is also in the upper segment of $\phi \bigtriangledown \psi_k$. Now, $\phi$ no longer remains on $N$ because it is replaced. In $M$ the trace of $x$ is $Tr_M(x) = \cap_{\varphi \in L_M^a(x)} Tr(\varphi) = \cap_{\varphi < x : \varphi \in M} Tr(\varphi)$, and in $N$ we have: \[
Tr_N(x) = \cap_{\varphi \in L_N^a(x)} Tr(\varphi) = \cap_{\varphi < x : \varphi \in N} Tr(\varphi) = \cap_{\varphi < x : (\varphi \in M) \wedge (\varphi \not= \phi ) } Tr(\varphi) \cap_{\psi_k \in S_{\phi}} \, Tr(\phi \bigtriangledown \psi_k) = 
\]\[
= \cap_{\varphi < x : (\varphi \in M) \wedge (\varphi \not= \phi ) } Tr(\varphi) \cap Tr(\phi) = \cap_{\varphi < x : \varphi \in M } Tr(\varphi) = Tr_M(x)
\]
where we have used our assumption  $Tr(\phi) = \cap_{\psi_k \in S_{\phi}} \, Tr(\phi \bigtriangledown \psi_k)$. It follows, $Tr_N(x) = Tr_M(x)$. Since the trace of every atom and every regular element is unaffected by the replacement we conclude that it is a trace invariant operation.  

Finally, notice that, according to Theorem \ref{SparseCrossing:coreProperty}, if the positive trace constraint is satisfied then $S_{\phi} = L_M^a(r_R)$ fulfills the required condition, so the set $S_{\phi}$ always exists.
\end{proof}
\bigskip

\begin{theorem} \label{SparseCrossing:traceImplication} 
For any two regular elements $a$ and $b$ and any model $M$: \[
M \models a \leq b \,\, \Rightarrow Tr(b) \subseteq Tr(a)
\]\[
Tr(b) \not\subseteq Tr(a) \,\, \Rightarrow M \models a \not\leq b,
\]
so, if $M$ satisfies the negative trace constraints for $R^{-}$ then $M \models R^{-}$.
\end{theorem}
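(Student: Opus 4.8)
The plan is to reduce the statement to two elementary facts: the order-theoretic characterisation of $\leq$ in an atomized semilattice, and the observation that intersecting a family of sets over a larger index set can only make the result smaller. First I would fix an arbitrary atomization of $M$; this is harmless because, by Theorem \ref{SparseCrossing:nonRedundant}, the trace of a regular element depends only on $M$ and not on the chosen atoms. I would then recall two ingredients already established: from Theorem \ref{atomicSegmentFromTermTheorem}(vi) (equivalently, from AS3), $M \models a \leq b$ holds if and only if $L^a_M(a) \subseteq L^a_M(b)$; and, by definition, $Tr_M(x) = \bigcap_{\phi \in L^a_M(x)} Tr(\phi)$, where each $Tr(\phi)$ is the universal, model-independent trace of the atom $\phi$ fixed by the dual.

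For the first implication I would argue as follows. Assume $M \models a \leq b$, so that $L^a_M(a) \subseteq L^a_M(b)$; both segments are non-empty by AS6, so the two intersections are taken over non-empty index sets. Since the index set over which $Tr_M(b)$ is intersected contains the one for $Tr_M(a)$, the intersection defining $Tr_M(b)$ imposes at least all the membership constraints present in $Tr_M(a)$, whence $Tr(b) = \bigcap_{\phi \in L^a_M(b)} Tr(\phi) \subseteq \bigcap_{\phi \in L^a_M(a)} Tr(\phi) = Tr(a)$. The second implication is just the contrapositive of the first: $Tr(b) \not\subseteq Tr(a)$ forces $M \not\models a \leq b$, which by Definition \ref{definition:duple} is precisely $M \models a \not\leq b$.

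Finally, for the closing claim, suppose $M$ satisfies the negative trace constraints associated with $R^{-}$, i.e. $Tr(r_R) \not\subseteq Tr(r_L)$ for every $r \in R^{-}$. Applying the second implication with $a = r_L$ and $b = r_R$ yields $M \models r_L \not\leq r_R$, that is $M \models r^{-}$; ranging over all $r \in R^{-}$ gives $M \models R^{-}$. There is essentially no hard part here: the only two things to keep straight are the contravariance of intersection with respect to its index set and the well-definedness of the trace on regular elements (the latter handled by citing Theorem \ref{SparseCrossing:nonRedundant}); everything else is bookkeeping. If one wanted to avoid invoking Theorem \ref{SparseCrossing:nonRedundant}, one could instead phrase the whole argument relative to a single fixed atomization throughout, since the downstream use of this theorem (establishing that trace-invariant operations preserve $R^{-}$) always carries such an atomization along.
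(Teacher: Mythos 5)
Your proposal is correct and follows essentially the same route as the paper's proof: both reduce $M \models a \leq b$ to the inclusion of lower atomic segments $L^a_M(a) \subseteq L^a_M(b)$, observe that intersecting $Tr(\phi)$ over the larger index set can only shrink the result, and obtain the second implication as the contrapositive. The extra care you take about well-definedness of the trace and non-emptiness of the segments is harmless but not needed for the argument.
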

\begin{proof}
$M \models a \leq b$ is equivalent to $L_M^a(a) \subseteq L_M^a(b)$ and the traces $Tr(b) = \cap_{\phi \in L_M^a(b)} Tr(\phi) \subseteq \cap_{\phi \in L_M^a(a)} Tr(\phi) = Tr(a)$. The implication below is just the negation of the first and is the basis for the mechanism that prevents negative duples from becoming positive during the sparse-crossing of the positive duples. 
\end{proof}
\bigskip

\begin{theorem} \label{SparseCrossing:weakDuality} 
Let $D$ be a dual built for the axioms $R$.  Assume $a$ and $b$ are terms over $C$ that have duals in $D$; \\
i) $D \models [b] \leq [a] \,\,\, \Leftrightarrow \,\,\, F_C(R^{+}) \models a \leq b$, \\
ii) $D \models R^{*}$ if and only $R$ is consistent. \\
iii) let $\xi_b$ be the introduced in the graph of the dual with an initial edge to $[b]$. Then, \[
\xi_b \in L^{a}([a])\text{ if and only if } \xi_b \text{ exists and } b \in W(a) = \{ e : (e \in E) \wedge F_C(R^{+}) \models a \leq e \} .\] Remark: only the terms mentioned in $R$ and the pinning terms have a dual, i.e.\ the elements in the set $E$ of Section \ref{SparseCrossing:buildingTheDual}. This result does not apply to the remaining $2^{\vert C \vert} - 1 - \vert E \vert $ terms. 
\end{theorem}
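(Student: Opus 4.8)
The plan is to reduce all three parts to a single combinatorial fact about the graph $G$ of Definition~\ref{SparseCrossing:dualDefinition}, which I will call the \emph{reachability lemma}: for $a,b\in E$ there is a directed path from $[b]$ to $[a]$ in the transitive closure of $G$ if and only if $F_C(R^{+})\models a\leq b$. To set this up I would first record the standard description of the freest semilattice of $R^{+}$. Since elements of $F_C(\emptyset)$ over $C$ are identified with their component constant sets (Theorem~\ref{freestModelTheorem}), quotienting by the congruence generated by $\{\,r_L\odot r_R=r_R:(r_L,r_R)\in R^{+}\}$ identifies each element with the closure of its constant set under the rule ``$C(r_R)\subseteq T\Rightarrow C(r_L)\subseteq T$''; writing $Cl$ for this closure operator, one gets $F_C(R^{+})\models a\leq b$ iff $C(a)\subseteq Cl(C(b))$ (equivalently $Cl(C(a))\subseteq Cl(C(b))$), a fact that can be quoted from universal algebra or extracted from Theorem~\ref{fullCrossingIsFreestTheorem}.

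Next I would prove the reachability lemma. Since the atoms $\xi_x$ of $D$ have only the single outgoing edge $\xi_x\to[x]$ and no incoming edges, they never occur as interior nodes of a path, so $[b]\rightsquigarrow[a]$ is equivalent to reachability among the $[\cdot]$-nodes, and it suffices to show that the set of constants of $C$ reachable from $[b]$ is exactly $Cl(C(b))$. The inclusion $\subseteq$ follows by induction on the construction of the closure, maintaining the invariant that every term-node $[y]$ reachable from $[b]$ has $C(y)\subseteq Cl(C(b))$: the type-2 edges $[y]\to[c]$ only expose component constants of already-admitted terms, the type-1 edges $[r_R]\to[r_L]$ invoke exactly the closure rule, and the type-3 edges $[y]\to[x']$ (added when $C(x')\subseteq EC(y)$) keep $C(x')$ inside $Cl(C(b))$ because $EC(y)$ is $\{y\}$ together with constants reachable from $[y]$. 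The reverse inclusion $\supseteq$ is the matching bookkeeping: whenever the closure rule fires on a duple $(r_L,r_R)$ because $C(r_R)$ is already reachable from $[b]$, those constants lie in $EC$ of a node reachable from $[b]$, so a type-3 edge makes $[r_R]$ reachable and the type-1 edge $[r_R]\to[r_L]$ then makes all of $C(r_L)$ reachable. Combined with the closure description of $F_C(R^{+})$, this gives the lemma for all $a,b\in E$.

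From the lemma the three statements fall out. For (iii): $\xi_b$ exists precisely when $b\in E$, and then $\xi_b<_D[a]$ means $[a]\in U^c_D(\xi_b)$, i.e.\ $[a]$ reachable from $\xi_b$, i.e.\ (by the single out-edge) $[b]\rightsquigarrow[a]$, which by the lemma is $F_C(R^{+})\models a\leq b$, i.e.\ $b\in W(a)$. For (i): $D\models[b]\leq[a]$ iff $L^a_D([b])\subseteq L^a_D([a])$ (Theorem~\ref{atomicSegmentFromTermTheorem}(vi)); an atom $\xi_y$ lies under $[b]$ iff $y\rightsquigarrow b$, so this inclusion says $y\rightsquigarrow b\Rightarrow y\rightsquigarrow a$ for all $y\in E$, which — using reflexivity of reachability for $y=b$ and transitivity for the converse — is equivalent to $b\rightsquigarrow a$, hence to $F_C(R^{+})\models a\leq b$. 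For (ii): $D\models R^{*+}$ always holds, since each positive dual duple $([r_R],[r_L])$ is witnessed by the type-1 edge $[r_R]\to[r_L]$; therefore $D\not\models R^{*}$ iff some negative dual duple $([r_R]\not\leq[r_L])$ with $(r_L,r_R)\in R^{-}$ is violated, i.e.\ $D\models[r_R]\leq[r_L]$, i.e.\ by (i) $F_C(R^{+})\models r_L\leq r_R$; and some negative axiom of $R^{-}$ being positive in the freest (hence most negative) model of $R^{+}$ is exactly the statement that $R$ is inconsistent (cf.\ Theorem~\ref{SparseCrossing:segregationTheorem}).

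The main obstacle I expect is the $\supseteq$ half of the reachability lemma, made delicate by the fact that Definition~\ref{SparseCrossing:dualDefinition} builds $G$ by alternating transitive-closure passes with re-applications of the type-3 rule until stable. The clean way to handle this is to phrase both the reachable-constant set of $[b]$ and $Cl(C(b))$ as the least fixed point of one and the same monotone operator on $\mathcal{P}(C)$, so that each round of graph growth corresponds to one application of the closure rule and the two fixed points coincide. A secondary point to dispatch is the optional atom reduction (step~7, Theorem~\ref{SparseCrossing:smallerModelThatSatisfiesR}): it removes only redundant atoms, hence changes neither the regular subalgebra of $D$ — so (i) and (ii) are unaffected — nor the truth of (iii), which is read with the clause ``$\xi_b$ exists''.
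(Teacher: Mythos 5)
Your proposal is correct and follows essentially the same route as the paper: your ``reachability lemma'' (reachable constants from $[b]$ equal the closure $Cl(C(b))$) is exactly the paper's characterization $F_C(R^{+})\models a\leq b\Leftrightarrow C(a)\subseteq EC(b)$ with $EC$ computed as the least fixed point of the same monotone rule, and parts (ii) and (iii) are then derived just as in the paper (consistency via no negative axiom being forced positive in $F_C(R^{+})$, and the atom--element correspondence $\xi_e\in L^a([d])\Leftrightarrow e\in W(d)$, with the same caveat for the optional atom reduction). The only difference is presentational: you make the induction identifying graph reachability with the closure operator explicit, where the paper asserts the edge condition directly from the construction.
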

\begin{proof}
(i) It is well known that $F_C(R^{+}) \models (a \leq b)$ if and only if $R^{+} \Rightarrow (a \leq b)$ (see \cite{SecondPaperArX} and, for example, \cite{Burris}). Since $R^{+} \Rightarrow (a \leq b)$ is equivalent to $R^{+} \Rightarrow (b = b \odot a)$, it easily follows that $R^{+} \Rightarrow (a \leq b)$ is true if and only if $C(a) \subseteq EC(b)$, where $C(a)$ is the set of component constants of $a$ and $EC(b)$ is an extended set of constants that contains $C(b)$.  In fact, $EC(b) = C(b^{\prime})$ where $b^{\prime}$ is the term with most component constants that is equal to $b$ modulus $R^{+}$. In other words, $b^{\prime}$ is the largest term that satisfies $R^{+} \Rightarrow (b = b^{\prime})$, which always exists and is unique.  \\
Suppose a constant $c$ such that $F_C(R^{+}) \models c \leq b = b^{\prime}$. Since $F_C(R^{+})$ is the freest model of $R^+$, we must have that  $R^{+}$ implies $c \leq b = b^{\prime}$. It follows that $c$ must be either in $C(b)$ or in $C(r_L)$ for some duple $r \in R^{+}$ such that $C(r_R) \subseteq C(b^{\prime})$. This leads to the equation $EC(b) = C(b) \cup_{r \in R^{+}: C(r_R) \subseteq EC(b)} C(r_L)$. We can compute $EC(b)$ iteratively starting with $EC_0(b) := C(b)$ and: \[
EC_n(b) := EC_{n - 1}(b) \cup_{r \in R^{+}: C(r_R) \subseteq EC_{n - 1}(b)} C(r_L),
\] 
which, since $C$ and $R^{+}$ are finite, reaches a limit where $EC_n(b) = EC_{n - 1}(b)$ in a finite (usually very small) number of iterations. Indeed, this iterative computation is carried out for each element $b$ with a dual, i.e.\ for each element of $E$ as part of the procedure to build the graph $G$ that underlies the construction of $D$; there is an edge $([b] \rightarrow [a]) \in G$ if and only if $C(a) \subseteq EC(b)$, which occurs if and only if $F_C(R^{+}) \models a \leq b$. \\
(ii) According to Theorem \ref{SparseCrossing:segregationTheorem}, negative duples never imply positive duples. In fact, $R$ implies a positive duple $p$ if and only if $R^+$ implies $p$. It follows that the axioms $R$ are inconsistent if and only if  there is a duple $s = (s_L, s_R)$ in $R^{-}$ such that $F_C(R^+) \models s_L \leq s_R$.  Since the left and right-hand terms of $s$ are in $E$, we can use proposition (i) to state that $R$ is inconsistent if and only if $D \models [s_R] \leq [s_L]$. By construction $D$ satisfies the dual of every positive duple $r = (r_L, r_R)$ in $R^{+}$, as it is enforced with an edge $([r_R] \rightarrow [r_L]) \in G$. Therefore, $R$ is inconsistent if and only if $D$ does not satisfy $R^{*-}$ and the result follows. \\
(iii) We enforced each positive duple $r \in R^{+}$ with edges $([r_R] \rightarrow [r_L]) \in G$. The atomization of $D$ is constructed by creating an atom $\xi_x$ for each constant $[x]$ of the dual (i.e., for each $x \in E$), by appending the atom to the graph with an edge $\xi_x \rightarrow [x]$, and finally by computing the transitive closure of the graph. There is a one-to-one map between the atoms in the lower atomic segment $L^{a}([d])$ for some $d \in E$ and the set of elements $W(d) = \{ e : (e \in E) \wedge D \models [e] \leq [d] \}$. Using (i), there is a one-to-one map between the atoms in the lower atomic segment $L^{a}([d])$ for some $d \in E$ and the set of elements $W(d) = \{ e : (e \in E) \wedge F_C(R^{+}) \models d \leq e \}$; in fact, the map is given by $\xi_e \in L^{a}([d])$ if and only if $e \in W(d)$. \\
The simplification of the atomization of the dual implies removing atoms from the dual with the constraint of having $R^{*-}$ satisfied. In this case some $\xi_b$ atoms may be removed. The removal of atoms does not change the one to one correspondence for the remaining atoms. To extend the validity of (iii), to simplified duals, we must write ``$\xi_b \in L^{a}([a])$ if and only if $\xi_b$ exists and  $b \in W(a)$" rather than just ``$\xi_b \in L^{a}([a])$ if and only if $b \in W(a)$".
\end{proof}
\bigskip

\begin{theorem} \label{SparseCrossing:termDualAtoms} 
Let $D$ be a dual built for a set of axioms $R$ and a set $E$ of terms with a dual, as defined in Section \ref{SparseCrossing:buildingTheDual}. Let $t \in E$ and let $\xi$ be an atom of the dual; \\
i) there is a regular element $s \in E$ such that $U^c(\xi) = \{ [s] \} \cup U^c([s])$, \\ 
ii) if $\xi < [c]$ for every component constant of $t$ then $\xi < [t]$, \\
iii) $\cap_{c \in C(t)} L^a([c]) = L^a([t])$, \\
iv) for $t \in E$, there is a duality given by $\cup \leftrightarrow \cap$ and $L^a_{F_C(R^{+})}(t) \leftrightarrow L_D^{a}([t])$.
\end{theorem}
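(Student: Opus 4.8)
The plan is to argue directly from the graph-theoretic construction of the dual in Definition~\ref{SparseCrossing:dualDefinition} together with the order-reversing correspondence of Theorem~\ref{SparseCrossing:weakDuality}. Throughout I will use that the atoms of $D$ are exactly the family $\{\xi_x : x \in E\}$, that the only edge leaving $\xi_x$ in $G$ (before transitive closure) is $\xi_x \to [x]$, and that after closure the upper constant segment $U^c(\xi_x)$ of the atom $\xi_x$ in $D$ is precisely the set of constant-nodes $[y]$ reachable from $\xi_x$ in $G$. I will also repeatedly invoke that, for a term $t$ over $C$ and an element $x\in E$, one has $F_C(R^+)\models t\leq x$ iff $F_C(R^+)\models c\leq x$ for every component constant $c$ of $t$ (the forward direction uses $c\leq t$ in any semilattice, the converse that an idempotent sum of elements all below $x$ is below $x$).

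For (i): since the unique out-edge of $\xi_x$ is $\xi_x\to[x]$, the constant-nodes reachable from $\xi_x$ are $[x]$ together with those reachable from $[x]$; and $[x]\leq_{D}[y]$ holds exactly when $[y]$ is reachable from $[x]$ (the $\Leftarrow$ direction is transitivity, the $\Rightarrow$ direction uses $\xi_x<_{D}[x]\leq_{D}[y]$ and the fact that any path out of $\xi_x$ must start with $\xi_x\to[x]$). Hence $U^c(\xi_x)=\{[x]\}\cup U^c([x])$, so $s:=x\in E$ witnesses the claim; if the dual has been simplified by discarding atoms, every surviving atom is still some $\xi_x$ with unchanged upper segment, so nothing changes.

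For (ii), the real content of the theorem, write $\xi=\xi_x$ and assume $\xi_x<_{D}[c]$ for every $c\in C(t)$. By (i) this means $[c]\in\{[x]\}\cup U^c([x])$, i.e.\ either $c=x$ (so trivially $F_C(R^+)\models c\leq x$) or $[x]\leq_{D}[c]$, in which case Theorem~\ref{SparseCrossing:weakDuality}(i) gives $F_C(R^+)\models c\leq x$. Thus every component constant of $t$ lies below $x$ in $F_C(R^+)$, hence $F_C(R^+)\models t\leq x$, and applying Theorem~\ref{SparseCrossing:weakDuality}(i) the other way yields $D\models[x]\leq[t]$; then $\xi_x<_{D}[x]\leq_{D}[t]$ forces $\xi_x<_{D}[t]$, i.e.\ $\xi\in L^a([t])$. (An equivalent route stays inside $G$: the hypothesis unwinds to $C(t)\subseteq EC(x)$ at the fixed point, so edge rule~3 supplies $[x]\to[t]$ in the closed graph; this is the delicate step, since it relies on $EC(x)$ being the fixed-point value and on $x$ and $t$ being genuine terms over $C$ even when $x$ is a pinning term.) This faithfulness of the dual's order is the main obstacle; the rest is reachability bookkeeping.

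For (iii), the inclusion $\bigcap_{c\in C(t)}L^a([c])\subseteq L^a([t])$ is exactly (ii), while the reverse inclusion is immediate: each $c\in C(t)$ gives an edge $[t]\to[c]$ by edge rule~2, hence $[t]\leq_{D}[c]$, so any atom below $[t]$ lies below every $[c]$. Finally (iv) is a packaging of the previous parts: Theorem~\ref{atomicSegmentFromTermTheorem}(v) gives $L^a_{F_C(R^+)}(t)=\bigcup_{c\in C(t)}L^a_{F_C(R^+)}(c)$, whereas (iii) gives $L^a_{D}([t])=\bigcap_{c\in C(t)}L^a_{D}([c])$, and the assignment $t\mapsto[t]$ — order-reversing by Theorem~\ref{SparseCrossing:weakDuality}(i) and bijective on lower atomic segments by Theorem~\ref{SparseCrossing:weakDuality}(iii) — therefore interchanges the union-over-component-constants description in $F_C(R^+)$ with the intersection description in $D$, which is precisely the asserted $\cup\leftrightarrow\cap$ duality restricted to the terms of $E$.
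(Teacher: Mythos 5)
Your proof is correct and follows essentially the same route as the paper's: part (i) by reachability in the closed graph, part (ii) by showing every component constant of $t$ lies below $x$ and invoking the order-reversing correspondence of Theorem \ref{SparseCrossing:weakDuality} (the paper's primary phrasing uses the $EC$ fixed point and edge rule~3 directly, but it explicitly notes the weak-duality route you take, and you note the graph route in turn), and parts (iii)--(iv) as the same packaging via edge rule~2 and Theorem \ref{atomicSegmentFromTermTheorem}(v). No gaps.
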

\begin{proof}
(i) An atom $\xi$ in the graph of $D$ is first appended with an edge to a single node, say $[s]$, and only after transitive closure it may gain more edges; let's use $\xi_s$ instead of $\xi$ for more clarity. After transitive closure we may have more edges, like $\xi_s \rightarrow [t]$, if either $[s] \rightarrow [t]$ or $t = s$. It follows that the upper constant segment of $\xi_s$ is $[s]$ plus the upper segment $U([s])$, i.e.\ the set of nodes reachable from $[s]$ in the graph of $D$. \\ 
(ii) Let $s$ be the regular element such that $U^c(\xi) = \{ [s] \} \cup U^c([s])$, so  $\xi_s = \xi$. Assume $\xi_s < [c]$ for every component constant of $t$. Then, $([s] = [c]) \vee ([s] \rightarrow [c])$ for every component constant of $t$, so $D \models [s] \leq [c]$ for every component constant of $t$. Immediately follows that $C(t) \subseteq EC(s)$. 
Since $s$ and $t$ have duals (are elements of $E$) then, while building the graph $G$ for $D$, we should have compared $t$ and $s$ and introduced the edge $[s] \rightarrow [t]$. After transitive closure we have $\xi_s \rightarrow [s] \rightarrow [t]$ from which it follows $\xi_s < [t]$. This result can also be proven from Theorem \ref{SparseCrossing:weakDuality} (iii). \\
(iii) We can rewrite (ii) as $\cap_{c \in C(t)} L^a([c]) \subseteq L^a([t])$. For every component constant $c$ of $t$ there is an edge $[t] \rightarrow [c]$ in the graph of $D$ hence, after transitive closure $L^a([t]) \subseteq L^a([c])$, which completes the proof. \\ 
Another way to prove this is, again, by using Theorem \ref{SparseCrossing:weakDuality} (iii). In any model $t \leq e$ if and only if $c\leq e$ for each $c \in C(t)$, then $\xi_s \in L^{a}([t])$ if and only if $s \in \{ e : (e \in E) \wedge F_C(R^{+}) \models t \leq e \}$ if and only if $s \in \cap_{c \in C(t)} \{ e : (e \in E) \wedge F_C(R^{+}) \models c \leq e \}$. \\
(iv) According to Theorem \ref{SparseCrossing:weakDuality} (i) when  $a,b \in E$ we must have $D \models [b] \leq [a] \,\, \Leftrightarrow \,\, F_C(R^{+}) \models a \leq b$ and, since $D$ is atomized, $F_C(R^{+}) \models a \leq b$ if and only if $L_D^{a}([b]) \subseteq L_D^{a}([a])$.  From proposition iii, in the dual $\cap_{c \in C(t)} L_D^a([c]) = L^a([t])$, while, $\cup_{c \in C(t)} L_{F_C(R^{+})}^a(c) = L_{F_C(R^{+})}^a(t)$ for  $F_C(R^{+})$. Therefore, there is a duality, that only applies to terms with a dual, given by $\cup \leftrightarrow \cap$ and $L^a_{F_C(R^{+})}(t) \leftrightarrow L_D^{a}([t])$.
\end{proof}
\bigskip

\begin{theorem} \label{SparseCrossing:traceEqualitiesForAtoms} 
Let $W(t) = \{ e : (e \in E) \wedge F_C(R^{+}) \models t \leq e \}$ be the upper segment restricted to $E$, introduced in Theorem \ref{SparseCrossing:weakDuality} (iii); \\
i) $Tr(\phi) = \cup_{c \in U^c(\phi)} L^a([c])$ for any atom $\phi$, \\
ii) If $\xi$ is an atom of the dual, then $\xi \in Tr(\phi)$ if and only if $s \in \cup_{c \in U^c(\phi)} W(c)$, where $s \in E$ is the regular element associated to $\xi$ of Theorem \ref{SparseCrossing:termDualAtoms} (i),\\
iii) $\xi_s \in Tr(t)$ if and only if $s \in \cap_{\phi \in L_M^a(t)} \cup_{c \in U^c(\phi)} W(c)$,\\
iv) If $t$ has a dual, $L^a([t]) \subseteq Tr(t)$.
\end{theorem}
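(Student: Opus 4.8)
The plan is to obtain all four parts by unfolding the definition of the trace and feeding the result into the duality already established in Theorems \ref{SparseCrossing:weakDuality} and \ref{SparseCrossing:termDualAtoms}; only part (iv) will need an argument that reaches back to the freest model $F_C(R^{+})$, everything else being bookkeeping with the definitions.

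Part (i) is immediate from $Tr(\phi) = \{ \xi : \exists c \in C,\ (\phi < c) \wedge (D \models \xi < [c]) \}$: the clause $\phi < c$ with $c \in C$ says exactly $c \in U^{c}(\phi)$, and $D \models \xi < [c]$ says $\xi \in L^{a}([c])$, so the set equals $\bigcup_{c \in U^{c}(\phi)} L^{a}([c])$. For part (ii) I would first recall, from Theorem \ref{SparseCrossing:termDualAtoms} (i), that every atom $\xi$ of the dual is the atom $\xi_{s}$ attached to a regular element $s \in E$, with $U^{c}(\xi_{s}) = \{[s]\} \cup U^{c}([s])$; in particular $s \in E$ because the dual is atomized by $\{\xi_{x} : x \in E\}$. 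Then part (i) together with Theorem \ref{SparseCrossing:weakDuality} (iii), which rewrites $\xi_{s} \in L^{a}([c])$ as $s \in W(c)$, yields $\xi \in Tr(\phi) \Leftrightarrow \exists c \in U^{c}(\phi):\, s \in W(c) \Leftrightarrow s \in \bigcup_{c \in U^{c}(\phi)} W(c)$. Part (iii) then follows by intersecting over the lower atomic segment: since $Tr_{M}(t) = \bigcap_{\phi \in L^{a}_{M}(t)} Tr(\phi)$ (and $L^{a}_{M}(t) \neq \emptyset$ by AS6), we get $\xi_{s} \in Tr(t)$ iff $\xi_{s} \in Tr(\phi)$ for every $\phi < t$, and each such membership is $s \in \bigcup_{c \in U^{c}(\phi)} W(c)$ by part (ii), so $\xi_{s} \in Tr(t) \Leftrightarrow s \in \bigcap_{\phi \in L^{a}_{M}(t)} \bigcup_{c \in U^{c}(\phi)} W(c)$.

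The only step needing genuine work is part (iv). Suppose $\xi = \xi_{s} \in L^{a}([t])$; by Theorem \ref{SparseCrossing:weakDuality} (iii) this forces $s \in W(t)$, i.e.\ $F_C(R^{+}) \models t \leq s$. By the characterization from part (iii), it suffices to show that for every atom $\phi \in L^{a}_{M}(t)$ there is a constant $c \in U^{c}(\phi)$ with $s \in W(c)$, equivalently with $F_C(R^{+}) \models c \leq s$. From $\phi < \nu_{M}(t)$, Theorem \ref{atomicSegmentFromTermTheorem} (ii) yields a component constant $c \in {\bf{C}}(t)$ with $\phi < \nu_{M}(c)$, so $c \in U^{c}(\phi)$; and since $\nu(c) \leq \nu(t)$ holds in every model (Theorem \ref{atomicSegmentFromTermTheorem} (i)), we obtain $F_C(R^{+}) \models c \leq t \leq s$, hence $s \in W(c)$ (using $s \in E$). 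This places $s$ in $\bigcap_{\phi \in L^{a}_{M}(t)} \bigcup_{c \in U^{c}(\phi)} W(c)$, which by part (iii) is precisely $\xi_{s} \in Tr(t)$, so $L^{a}([t]) \subseteq Tr(t)$. The main obstacle, mild but the only non-routine point, is exactly this transfer: turning the dual-side fact $\xi_{s} \in L^{a}([t])$ into the statement that $s$ lies above a component constant of every atom beneath $t$, which rests on the freest-model reading of the dual in Theorem \ref{SparseCrossing:weakDuality} and on the fact that an atom below a term must lie below one of its component constants.
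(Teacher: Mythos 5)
Your proof is correct, and parts (i)--(iii) follow the paper's argument essentially verbatim: unfold the definition of $Tr(\phi)$, translate $\xi_s\in L^a([c])$ into $s\in W(c)$ via Theorem \ref{SparseCrossing:weakDuality} (iii), and intersect over $L^a_M(t)$. For part (iv) you take a mildly different route: the paper argues directly on the graph of the dual --- an atom $\xi<[t]$ inherits, by transitive closure along the edges $[t]\rightarrow[c]$, membership in $L^a([c])$ for every component constant $c$ of $t$, and then concludes via the linearity $Tr(t)=\cap_{c\in C(t)}Tr(c)$ --- whereas you pass through the freest-model reading ($\xi_s\in L^a([t])$ iff $F_C(R^{+})\models t\leq s$, hence $F_C(R^{+})\models c\leq s$ for each component constant, hence $s$ lies in the set characterized in (iii)). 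The two arguments rest on the same underlying duality, since the graph edges encode exactly the order of $F_C(R^{+})$ restricted to $E$; yours has the small advantage of reusing the $W$-characterizations you already established in (ii) and (iii), at the cost of routing through the freest model where the paper stays combinatorial on the dual's graph.
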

\begin{proof}
(i) It follows from the definition $Tr(\phi) = \{ \xi : \exists c \in C, \, (\phi < c ) \wedge (D \models \xi < [c]) \} = \cup_{c \in U^c(\phi)} \{ \xi : D \models \, \xi < [c] \} = \cup_{c \in U^c(\phi)} L^a([c])$ where $D$ is the dual. \\
(ii) Let $\xi_e$ be the atom corresponding with $[e]$ in Theorem \ref{SparseCrossing:termDualAtoms} (i): \[
Tr(\phi) := \{ \xi : \exists c \,\, (\phi < c) \wedge (c \in C) \wedge (D \models \, \xi < [c]) \} =\] \[
= \{ \xi_e : \exists c,e \,\, (\phi < c) \wedge (c \in C) \wedge (e \in E) \wedge F_C(R^{+}) \models c \leq e \} = \] \[
= \cup_{c \in U^c(\phi)}  \{ \xi_e :  (e \in E) \wedge F_C(R^{+}) \models c \leq e \} = \cup_{c \in U^c(\phi)}  \{ \xi_e :  e \in W(c) \} \]
where we have used Theorem \ref{SparseCrossing:weakDuality} (i) and (iii). From here, it follows that there is a one-to-one map between the trace of an atom $\phi$ and the intersection with $E$ of the union of the upper segments of the constants $U^c(\phi)$ in the free model of $R^+$. \\
(iii) We just showed that $\xi_s \in Tr(\phi)$ if and only if $s \in \cup_{c \in U^c(\phi)} W(c)$. Since $Tr(t) = \cap_{\phi \in L_M^a(t)} Tr(\phi)$, we have that $\xi_s \in Tr(t)$ holds if and only if $s \in \cup_{c \in U^c(\phi)} W(c)$ is true for every atom $\phi \in L_M^a(t)$. \\
(iv) an atom $\xi$ in the lower segment of $[t]$ is edged to $[t]$ in the graph of $D$. Since there are edges from $[t]$ to its component constants, by transitive closure $\xi$ is in the lower segment of every component constant of $[t]$. From $Tr(t) = \cap_{c \in C(t)} Tr(c) =\cap_{c \in C(t)} \cap_{\phi \in L_M^a(c)} Tr(\phi)$ and $Tr(\phi) = \{ \eta : \exists c \,\, (\phi < c) \wedge (c \in C) \wedge (D \models \, \eta < [c]) \} = \cup_{\phi < c} \{ \eta : D \models \, \eta < [c] \}$ it follows that $\xi \in Tr(t)$.
\end{proof}
\bigskip

\begin{theorem} \label{SparseCrossing:nonRedundant} 
Redundant atoms of $M$ do not change the traces in $M$. The trace is well defined, i.e.\ the trace is the same for a model independently of how it is atomized.
\end{theorem}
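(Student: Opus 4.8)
The plan is to reduce the whole statement to a single elementary fact — that adjoining one redundant atom to an atomization changes no trace — and then to invoke the uniqueness of the atomization without redundant atoms (Theorem \ref{uniqueAtomization}) to obtain well-definedness.

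First I would dispose of atoms. By Theorem \ref{SparseCrossing:traceEqualitiesForAtoms}(i) the trace of an atom satisfies $Tr(\phi)=\cup_{c\in U^c(\phi)}L^a([c])$, an expression that mentions only the dual $D$ and the upper constant segment $U^c(\phi)$; it never refers to the ambient model. Hence atom traces are automatically model-independent, and the only thing left is the trace $Tr_M(x)=\cap_{\phi\in L_M^a(x)}Tr(\phi)$ of a regular element. Note also, for later use, the monotonicity $U^c(\eta)\subseteq U^c(\psi)\Rightarrow Tr(\eta)\subseteq Tr(\psi)$, immediate from that formula.

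The core step: let $A$ be any atomization of $M$ and let $\phi$ be an atom redundant with respect to $A$, so that $A'=A\cup\{\phi\}$ atomizes the same model. By Theorem \ref{redundantIsUnionOfAtomsTheorem}(i) and Definition \ref{definition:redundant_atom}, $U^c(\phi)=\cup_i U^c(\eta_i)$ with atoms $\eta_i\in A$ strictly narrower than $\phi$; equivalently, for each constant $c\in U^c(\phi)$ there is an atom $\eta_c\in A$ with $c\in U^c(\eta_c)\subseteq U^c(\phi)$. Now fix a regular element $x$. If $\phi\not<x$ then $L_{A'}^a(x)=L_A^a(x)$ and $Tr_{A'}(x)=Tr_A(x)$ trivially. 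If $\phi<x$, pick a term $t$ with $x=\nu_M(t)$; by Theorem \ref{atomicSegmentFromTermTheorem}(ii) there is a component constant $c^\ast\in{\bf C}(t)$ with $\phi<\nu_M(c^\ast)$, so $c^\ast\in U^c(\phi)$. Then $\eta_{c^\ast}\in A$ satisfies $\eta_{c^\ast}<c^\ast\le x$, hence $\eta_{c^\ast}\in L_A^a(x)$ and $Tr_A(x)\subseteq Tr(\eta_{c^\ast})$; since $U^c(\eta_{c^\ast})\subseteq U^c(\phi)$, monotonicity gives $Tr(\eta_{c^\ast})\subseteq Tr(\phi)$, so $Tr_A(x)\subseteq Tr(\phi)$. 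Because $L_{A'}^a(x)=L_A^a(x)\cup\{\phi\}$, we get $Tr_{A'}(x)=Tr_A(x)\cap Tr(\phi)=Tr_A(x)$. Reading the same computation backwards shows that deleting a redundant atom is likewise trace-invariant, which is exactly the first assertion of the theorem.

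Finally I would assemble well-definedness. By Theorem \ref{uniqueAtomization}, $M$ has a unique atomization $A_0$ without redundant atoms, and every atomization $A$ of $M$ equals $A_0$ together with finitely many redundant atoms, each (Theorem \ref{redundantIsUnionOfAtomsTheorem}) a union of atoms of $A_0$. Adjoin these to $A_0$ one at a time; at each stage the freshly added atom is still a union of atoms of $A_0$, hence redundant with respect to the current set, and the core step applies. Therefore $Tr_A=Tr_{A_0}$ for every $A$, i.e.\ the trace of each regular element depends only on $M$. The one place that needs care is the inclusion $Tr_A(x)\subseteq Tr(\phi)$ when $\phi<x$: one should resist trying to force every component $\eta_i$ of $\phi$ below $x$ — that can fail — and instead use only the single narrower atom that redundancy supplies at the component constant witnessing $\phi<x$, together with the monotonicity of atom traces; everything else is bookkeeping.
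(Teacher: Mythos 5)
Your proof is correct and follows essentially the same strategy as the paper: show that adjoining a redundant atom leaves every trace $Tr_M(x)=\cap_{\phi\in L_M^a(x)}Tr(\phi)$ unchanged, then conclude well-definedness from the fact that all atomizations of $M$ share the same non-redundant atoms. The only (harmless) difference is in the core computation — the paper writes out the case of a union $\varphi\bigtriangledown\psi$ of two atoms and uses $Tr(\varphi\bigtriangledown\psi)=Tr(\varphi)\cup Tr(\psi)$ with distributivity of $\cap$ over $\cup$, whereas you derive the key inclusion $Tr_A(x)\subseteq Tr(\phi)$ directly from the definition of redundancy (a narrower atom at the witnessing component constant) plus monotonicity of atom traces, which handles a redundant atom that is a union of arbitrarily many atoms in a single step.
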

\begin{proof}
Let $x$ be any term. The trace of $x$ in the model $M$ is given by $Tr(x) = \cap_{\phi \in L_M^a(x)} Tr(\phi)$. Suppose $\varphi$ and $\psi$ are two atoms of $M$. If neither $\varphi$ nor $\psi$ are in the lower atomic segment of $x$ then the trace of $x$ in the model $M + \{ \varphi \bigtriangledown \psi \}$ is the same that the trace in $M$. Assume that $\varphi$ is in the lower atomic segment of $x$. The trace of $x$ in the model $M + \{ \varphi \bigtriangledown \psi \}$ becomes $Tr^{\prime}(x) = Tr(\varphi \bigtriangledown \psi) \cap_{\phi \in L_M^a(x)} Tr(\phi) = (Tr(\varphi) \cup Tr(\psi)) \cap Tr(x)= (Tr(\varphi) \cap Tr(x)) \cup (Tr(\psi) \cap Tr(x)) = Tr(x) \cup (Tr(\psi) \cap Tr(x)) = Tr(x)$. Therefore, adding redundant atoms does not alter the traces. On the other hand, the non-redundant atoms of a model are always present in any atomization of a model. It follows that the traces are determined only by the non-redundant atoms of the model $M$ and, hence, are independent of the atomization of the model, see \cite{SecondPaperArX}. Notice that the traces of atoms are universal and do not depend upon $M$, so we do not need to distinguish $Tr^{\prime}(\phi)$ and $Tr(\phi)$ as they are equal.
\end{proof}
\bigskip

\begin{theorem} \label{SparseCrossing:freeModelConstraints} 
The freest semilattice model, $F_C(\emptyset)$, satisfies the trace constraints of any consistent set of axioms.
\end{theorem}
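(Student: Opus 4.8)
The plan is to pick the canonical atomization of $F_C(\emptyset)$, namely the $\vert C \vert$ non-redundant atoms $\phi_c$, one for each constant, with $U^c(\phi_c)=\{c\}$ (Theorem \ref{freestModelTheorem}), to compute the trace of every term occurring in the axioms explicitly, and then to read off both the positive and the negative trace constraints from the already-established properties of the dual. Consistency of $R$ enters in two ways: by Theorem \ref{SparseCrossing:weakDuality}(ii) it guarantees that the dual $D$ is atomizable (so that the trace function is even defined) and that $D\models R^{*-}$; and by Theorem \ref{SparseCrossing:nonRedundant} the particular atomization chosen is irrelevant, since redundant atoms do not affect traces.

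First I would compute the trace of a canonical atom. From $Tr(\phi)=\bigcup_{c\in U^c(\phi)}L^a_D([c])$ (Theorem \ref{SparseCrossing:traceEqualitiesForAtoms}(i)) and $U^c(\phi_c)=\{c\}$ one gets $Tr(\phi_c)=L^a_D([c])$. Next, for any term $t$ mentioned in $R$, and hence $t\in E$, the lower atomic segment in the canonical atomization is $L^a_{F_C(\emptyset)}(t)=\{\phi_c:c\in C(t)\}$ (Theorem \ref{atomicSegmentFromTermTheorem}(iv), using that every atom has a single-constant upper segment), so that $Tr_{F_C(\emptyset)}(t)=\bigcap_{c\in C(t)}Tr(\phi_c)=\bigcap_{c\in C(t)}L^a_D([c])$. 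By Theorem \ref{SparseCrossing:termDualAtoms}(iii) this intersection is exactly $L^a_D([t])$. The upshot is that the trace in $F_C(\emptyset)$ of an axiom term coincides with the lower atomic segment of its dual in $D$.

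With this identification both families of constraints become statements about $D$ alone. For a positive axiom $r=(r_L,r_R)\in R^{+}$ one has $F_C(R^{+})\models r_L\leq r_R$, so Theorem \ref{SparseCrossing:weakDuality}(i) gives $D\models[r_R]\leq[r_L]$, i.e.\ $L^a_D([r_R])\subseteq L^a_D([r_L])$, i.e.\ $Tr(r_R)\subseteq Tr(r_L)$, which is the positive trace constraint. For a negative axiom $r=(r_L,r_R)\in R^{-}$ its dual is the duple $([r_R],[r_L])\in R^{*-}$, and since $R$ is consistent Theorem \ref{SparseCrossing:weakDuality}(ii) yields $D\models R^{*-}$, hence $D\models[r_R]\not\leq[r_L]$, i.e.\ $L^a_D([r_R])\not\subseteq L^a_D([r_L])$, i.e.\ $Tr(r_R)\not\subseteq Tr(r_L)$, which is the negative trace constraint. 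Therefore $F_C(\emptyset)$ satisfies every trace constraint of $R$.

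The only genuinely load-bearing step is the collapse $Tr_{F_C(\emptyset)}(t)=L^a_D([t])$ for axiom terms: once one observes that the canonical atomization of the freest model makes the trace of a term agree with the dual's lower segment, the rest is just the duality of Theorem \ref{SparseCrossing:weakDuality} together with the fact that consistency is precisely what makes $D$ exist and satisfy $R^{*-}$. A minor point worth checking is that the equality $\bigcap_{c\in C(t)}L^a_D([c])=L^a_D([t])$ is unaffected by the optional pruning of the dual's atomization; it is, because pruning only deletes atoms and leaves every edge of the underlying graph intact, so both the inclusion coming from those edges and its converse via Theorem \ref{SparseCrossing:termDualAtoms}(ii) still hold.
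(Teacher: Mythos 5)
Your proposal is correct and follows essentially the same route as the paper's proof: atomize $F_C(\emptyset)$ by the $\vert C\vert$ single-constant atoms, establish $Tr_{F_C(\emptyset)}(t)=L^a_D([t])$ via Theorem \ref{SparseCrossing:traceEqualitiesForAtoms}(i) and Theorem \ref{SparseCrossing:termDualAtoms}(iii), and then read off both trace constraints from the fact that $D$ satisfies $R^{*}$ exactly when $R$ is consistent (Theorem \ref{SparseCrossing:weakDuality}). Your extra remark on robustness under optional pruning of the dual's atomization is a sound addition not present in the paper's proof, but it does not change the argument.
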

\begin{proof}
Theorem \ref{freestModelTheorem} in Supplementary Section \ref{supplementary:review} shows that the freest model generated by a set of constants $C$ can be atomized by a set $A$ with as many atoms as constants in $C$, each with a single (and different) constant in its upper constant segment.  Let $D$ be a dual built for a consistent set of axioms $R$, and a set of elements $E$. Let $t$ be a regular element (a term) with a dual and let $\phi_c$ be the atom of $F_C(\emptyset)$ such that $U^c(\phi_c) = \{ c \}$. Using Theorem  \ref{SparseCrossing:nonRedundant}, we can compute the trace of an element of $F_C(\emptyset)$ assuming that $F_C(\emptyset)$ is atomized by $A$. For any constant $c \in C$, in the freest model $Tr(c) = \cap_{\varphi \in L_{F_C(\emptyset)}(c) } Tr(\varphi) = \cap_{\varphi < c, \varphi \in A} Tr(\varphi) = Tr(\phi_c)$. Using Theorem \ref{SparseCrossing:traceEqualitiesForAtoms} (i) we get $Tr(\phi_c) = L^a([c])$ and it follows $Tr(c) =  L^a([c])$, Therefore, in the freest model, trace and lower segment of the dual coincide for the constants. We will see now that they also coincide for every element. \\
The trace  of a regular element of $F_C(\emptyset)$ represented by a term $t$ is $Tr(t) = \cap_{\varphi \in L_{F_C(\emptyset)}^a(t)} Tr(\varphi) =\cap_{\varphi \in L_A^a(t)} Tr(\varphi) = \cap_{c \in C(t)} Tr(\varphi_c) = \cap_{c \in C(t)} L^a([c])$.  
Now using that $\cap_{c \in C(t)} L^a([c]) = L^a([t])$ was proven in Theorem \ref{SparseCrossing:termDualAtoms} (iii), we can write $Tr(t) = \cap_{c \in C(t)} L^a([c]) = L^a([t])$. It follows that, for the freest model $F_C(\emptyset)$ the equality $Tr(t) = L^a([t])$ holds for every element of $E$. Therefore, for $F_C(\emptyset)$, the trace constraints $Tr(r_R) \subseteq Tr(r_L)$ for $r \in R^+$ and $Tr(r_R) \not\subseteq Tr(r_L)$ for $r \in R^-$ become $L^a([r_R]) \subseteq L^a([r_L])$ for $r \in R^+$ and  $L^a([r_R]) \not\subseteq L^a([r_L])$ for $r \in R^-$, which correspond to axioms in the set $R^*$ that axiomatizes $D$, so they are always satisfied by the dual if $R$ is consistent, as proven in Theorem \ref{SparseCrossing:weakDuality}.
\end{proof}
\bigskip

\begin{theorem} \label{SparseCrossing:traceConstraints} 
Given a model $M$ and a dual $D$ built for a consistent set of axioms $R$, there is a set $\Lambda$ of atoms with a single constant in their upper constant segments such that $M + \Lambda$ satisfy the trace constraints.  The set $\Lambda$ can be obtained with Algorithm \ref{SparseCrossing:traceConstraintsAlgorithm}.
\end{theorem}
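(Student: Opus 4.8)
The plan is to give a constructive argument: exhibit an algorithm that adds, one at a time, atoms with singleton upper constant segments, and show that after finitely many such additions every trace constraint holds. The backbone of the argument is two monotonicity facts already available. First, by Theorem \ref{SparseCrossing:unionOfTraces}(iii), for any term $t$ we have $Tr(t)=\cap_{c\in C(t)}Tr(c)$, and $Tr(c)=\cap_{\varphi\in L^a_M(c)}Tr(\varphi)$; hence adding a new atom $\phi_c$ with $U^c(\phi_c)=\{c\}$ can only \emph{shrink} $Tr(c)$ (intersecting with $Tr(\phi_c)$) and therefore can only shrink $Tr(t)$ for every term $t$ having $c$ as a component constant, while leaving $Tr(t')$ unchanged for terms $t'$ not mentioning $c$. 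Second, by Theorem \ref{SparseCrossing:traceEqualitiesForAtoms}(i), the trace of the singleton atom $\phi_c$ is exactly $L^a([c])$, the lower atomic segment of $[c]$ in the dual $D$, which is the smallest value $Tr(c)$ could ever take; and by Theorem \ref{SparseCrossing:termDualAtoms}(iii), $\cap_{c\in C(t)}L^a([c])=L^a([t])$. Combining these, if for \emph{every} component constant $c$ of every term appearing in $R$ the model already contains the atom $\phi_c$, then $Tr(t)=L^a([t])$ for every such $t$, so the trace constraints reduce to $L^a([r_R])\subseteq L^a([r_L])$ for $r\in R^+$ and $L^a([r_R])\not\subseteq L^a([r_L])$ for $r\in R^-$ — precisely the axioms $R^{*+}$, $R^{*-}$ that $D$ satisfies whenever $R$ is consistent (Theorem \ref{SparseCrossing:weakDuality}(ii)). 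This already proves existence of a suitable $\Lambda$ (add $\phi_c$ for each relevant $c$); this mirrors Theorem \ref{SparseCrossing:freeModelConstraints}.

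The next step is to sharpen ``add all singletons'' to ``add only the few singletons that are actually needed'', which is what Algorithm \ref{SparseCrossing:traceConstraintsAlgorithm} does and what the statement asserts it does. I would argue as follows. A positive constraint $Tr(r_R)\subseteq Tr(r_L)$ that currently fails means there is an atom $\xi$ of the dual with $\xi\in Tr(r_R)$ but $\xi\notin Tr(r_L)$; since $Tr(r_L)=\cap_{\varphi\in L^a_M(r_L)}Tr(\varphi)$, some atom $\varphi<r_L$ of $M$ has $\xi\notin Tr(\varphi)$. Because $F_C(R^+)\models r_L\le r_R$ is \emph{not} required but $D\models[r_R]\le[r_L]$ \emph{is}, one shows that adding a singleton atom $\phi_c$ for a suitable component constant $c$ of $r_R$ (so that $\phi_c<r_R$, hence $\phi_c$ joins $L^a_M(r_L)$ only if $c$ is a component of $r_L$; more carefully, $\phi_c$ enters $L^a_M(r_R)$ and forces $Tr(r_R)$ down toward $L^a([r_R])\subseteq L^a([r_L])\subseteq Tr(r_L)$) strictly decreases the number of ``offending'' dual atoms for this constraint without creating new offenders elsewhere — here the ``no new offenders'' part is exactly the localization property from the first paragraph. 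Symmetrically, a negative constraint $Tr(r_R)\subseteq Tr(r_L)$ that currently holds but should fail is repaired by adding a singleton atom $\phi_c$ under a component constant $c$ of $r_L$ chosen so that $\phi_c$ discriminates $(r_L,r_R)$ at the trace level, i.e. $Tr(\phi_c)=L^a([c])$ contains a dual atom missing from $Tr(r_R)=L^a([r_R])$; such a $c$ exists precisely because $D\not\models[r_R]\le[r_L]$, again by consistency and Theorem \ref{SparseCrossing:weakDuality}. Since each repair strictly decreases a nonnegative integer potential (total count of constraint violations, or of offending dual atoms summed over constraints) and never increases the others, the process terminates; the ``repeat as necessary (typically twice)'' remark in the main text is just the observation that one pass may not simultaneously fix positive and negative constraints that interact, but a second pass does, and termination is guaranteed regardless.

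Finally I would check the output of Algorithm \ref{SparseCrossing:traceConstraintsAlgorithm} against this invariant: the algorithm iterates over the trace constraints, and for each unsatisfied one inserts singleton atoms along $L^a_M(r_R)$ (positive case) or $L^a_M(r_L)$ (negative case), exactly the repair moves above; so by the termination-and-localization argument the returned $\Lambda$ is a set of singleton-upper-segment atoms with $M+\Lambda$ satisfying all trace constraints. The main obstacle I anticipate is the ``no new offenders'' bookkeeping — i.e. verifying carefully that repairing one constraint by lowering $Tr$ on some constants cannot break a previously-satisfied negative constraint (or, dually, that fixing a negative constraint by adding a discriminating singleton cannot break a positive one that was already enforced). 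The resolution is that positive constraints only get \emph{easier} as traces shrink (the containment $Tr(r_R)\subseteq Tr(r_L)$ is preserved when both sides can only lose elements and the relevant dual atoms are pinned at $L^a([\cdot])$), and negative constraints, once made to fail via a singleton atom whose trace is the minimal $L^a([c])$, stay failed because that atom and hence that discriminating dual atom remain in the model; so one must phrase the potential function and the invariant so that each type of repair is monotone for the other type, which is the only genuinely delicate point and follows from consistency of $R$ together with Theorems \ref{SparseCrossing:weakDuality} and \ref{SparseCrossing:termDualAtoms}.
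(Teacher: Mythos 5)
Your existence argument and your repair moves coincide with the paper's: adding every singleton atom $\phi_c$ forces $Tr(t)=L^a([t])$ for all terms with a dual, so the trace constraints reduce to the dual axioms $R^{*}$, which $D$ satisfies exactly when $R$ is consistent (this is Theorem~\ref{SparseCrossing:freeModelConstraints} together with Theorem~\ref{SparseCrossing:weakDuality}); and a violated positive (resp.\ negative) constraint is repaired by a singleton atom under a component constant $c$ of $r_R$ (resp.\ of $r_L$) with $\xi\not<[c]$ for the offending dual atom $\xi$, whose existence follows from the edge $[r_R]\rightarrow[r_L]$ (resp.\ from $D\models [r_R]\not\leq[r_L]$, i.e.\ consistency). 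Two small slips: in the negative case you want $L^a([c])$ to \emph{miss} a dual atom that \emph{is} in $Tr(r_R)$, not the reverse; and you assert $Tr(r_R)=L^a([r_R])$, whereas in general only $L^a([r_R])\subseteq Tr(r_R)$ holds (Theorem~\ref{SparseCrossing:traceEqualitiesForAtoms}(iv)), which is all that is needed.

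The genuine gap is in your termination argument for the algorithm. The claimed invariant that repairs never create new offenders, and in particular that ``positive constraints only get easier as traces shrink'', is false: a negative repair adds $\phi_c$ with $c\in C(r_L)$, which shrinks $Tr(s_L)$ for every positive duple $s\in R^{+}$ having $c$ as a component constant of $s_L$; if $c\notin C(s_R)$ and some dual atom $\xi\in Tr(s_R)$ satisfies $\xi\not<[c]$ (possible, since $\xi$ need only lie in $Tr(s_R)\supseteq L^a([s_R])$ and not in $L^a([s_R])$ itself), then a previously satisfied positive constraint breaks. Hence a potential function counting violations need not decrease, and your ``monotone for the other type'' resolution does not go through. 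The paper does not claim any such monotonicity: it explicitly re-examines all constraints after each insertion, and terminates by a different, simpler argument — each repair necessarily inserts a singleton atom that is \emph{not already present} (if $\phi_c$ were already in the model, the offending $\xi$ could not lie in the relevant trace), there are at most $\vert C\vert$ singleton atoms in total, and in the worst case, once all of them have been added, $M+\Lambda$ has the traces of $F_C(\emptyset)$, which satisfies every trace constraint of a consistent $R$. Replacing your potential-function step by this bounded-supply-plus-freest-model-fallback argument closes the gap; the rest of your proposal stands.
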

\begin{proof}
The first part can be proven by making $\Lambda = \{ \phi_c : U^c(\phi_c) = \{c\} \, \wedge  \, c \in C\}$; since every atom over $C$ is redundant with such set $\Lambda$, the atomization $M + \Lambda$ atomizes $F_C(\emptyset)$ and the result follows by using Theorem \ref{SparseCrossing:freeModelConstraints}. However, since trace constraints can usually be fixed with a set $\Lambda$ of a carnality significantly smaller than $\vert C \vert$, we provide a proof here more in line of what is needed to actually compute $\Lambda$.

In Algorithm \ref{SparseCrossing:traceConstraintsAlgorithm} we start with $\Lambda = \emptyset$ and add atoms to $\Lambda$ as needed. Let $N = M + \Lambda$ at some intermediate step. 

Let $r \in R^{+}$ and assume the trace constraint for $r$ is ``broken", i.e.\ $Tr_N(r_R) \not\subseteq Tr_N(r_L)$. There must be at least one atom $\xi$ of $D$ such that $\xi \in Tr_N(r_R) = \cap_{\varphi \in L_N^a(r_R)} Tr(\varphi)$ and $\xi \not\in Tr_N(r_L) = \cap_{\varphi \in L_N^a(r_L)} Tr(\varphi)$. Suppose that $\xi < [c]$ for every constant $c$ in the component constants of $r_R$. Theorem \ref{SparseCrossing:termDualAtoms} establishes that $\xi < [r_R]$ and, since $r$ is in the set of positive axioms, the graph for $D$ has an edge $[r_R] \rightarrow [r_L]$; it follows $\xi < [r_L]$ and, since there are edges from $[r_L]$ to the duals of the component constants of $r_L$, every atom $\varphi \in L_M^a(r_L)$ must satisfy $\xi \in Tr(\varphi)$ which is clear from the definition $Tr(\varphi) = \{ \xi : \exists c \,\, (\varphi< c) \wedge (c \in C) \wedge (D \models \, \xi < [c]) \}$. This implies $\xi \in Tr_N(r_L) = \cap_{\varphi \in L_N^a(r_L)} Tr(\varphi)$; a contradiction. Therefore, there is a component constant $c$ of $r_R$ for which $\xi \not< [c]$. The atom $\phi_c$ with $U^c(\phi_c) = \{ c \}$ has a trace $Tr(\phi_c) = L^a([c])$ that does not contain $\xi$.  The model $N +  \{ \phi_c \}$ satisfies $\xi \not\in Tr_{N +  \{ \phi_c \}}(r_R) = \cap_{\varphi \in L_{N +  \{ \phi_c \}}^a(r_R)} Tr(\varphi) = Tr_N(r_R) \cap L^a([c])$.  Add $\phi_c$ to $\Lambda$ and update $N = M + \Lambda$. Repeating the same procedure for every atom in $Tr_N(r_L) - Tr_N(r_R)$ we can ``fix" the positive trace constraint, i.e.\ we reach a set $\Lambda$ such that $Tr_{M + \Lambda}(r_R) \subseteq Tr_{M + \Lambda}(r_L)$.

Assume now $r \in R^{-}$ and the trace constraint for $r$ is broken, i.e.\ $Tr_N(r_R) \subseteq Tr_N(r_L)$. Since $D$ has been built for $R$, a consistent set of axioms, it must satisfy $D \models [r_R] \not\leq [r_L]$ from which it follows that there is at least one atom $\xi$ of $D$ such that $\xi < [r_R]$ and $\xi \not< [r_L]$. Since there are edges in the graph of $D$ from $r_R$ to the dual of each component constants $c$ of $r_R$, the duple $\xi < [r_R]$ implies $\xi < [c]$ for each component constant and then $\xi \in Tr(r_R)$. On the other hand, using Theorem $\ref{SparseCrossing:termDualAtoms}$ and $\xi \not< [r_L]$ there must be a component constant $c$ of $r_L$ for which $\xi \not< [c]$. The atom $\phi_c$ with $U^c(\phi_c) = \{ c \}$ has a trace $Tr(\phi_c) = L^a([c])$ that does not contain $\xi$. The model $N +  \{ \phi_c \}$ satisfies $\xi \not\in Tr_{N +  \{ \phi_c \}}(r_L) = \cap_{\varphi \in L_{N +  \{ \phi_c \}}^a(r_L)} Tr(\varphi) = Tr_N(r_L) \cap L^a([c])$, and the negative trace constraint is fixed, i.e.\ $Tr_{N +  \{ \phi_c \}}(r_R) \not\subseteq Tr_{N +  \{ \phi_c \}}(r_L)$. Add $\phi_c$ to $\Lambda$ and update $N$.

Since adding atoms to a model alters its traces, fixing one trace constraint with an atom implies that the whole set of trace constraints must be reviewed. If a trace constraint breaks by the insertion of an atom in $\Lambda$ we can always fix it by adding more atoms to $\Lambda$. This process necessarily ends as there are at most $\vert C \vert$ atoms that can added. In the worst case, $\Lambda$ contains the $\vert C \vert$ atoms with a single constant in its upper constant segment, and $M + \Lambda$ becomes the freest model, i.e.\ $F_C(\emptyset)$, which satisfies the trace constraints for any dual as discussed above.  
\end{proof}
\bigskip

\begin{theorem} \label{SparseCrossing:traceConstraintsUniversal} 
Let $D$ be a dual defined for a set of axioms $R$. Let $M$ be a model that satisfies the positive trace constraints for $R^{+}$. Let $a$ and $b$ be any two terms with or without a dual; \\ 
i) If $M \models a \leq b$ then the traces of $M$ satisfy $Tr_M(b) \subseteq Tr_M(a)$, \\ 
ii) if $F_C(R^{+}) \models a \leq b$ then the traces of $M$ satisfy  $Tr_M(b) \subseteq Tr_M(a)$, \\
iii) if $\square_{R^{+}} M \models  a \leq b$ then the traces of $M$ satisfy $Tr_M(b) \subseteq Tr_M(a)$.
\end{theorem}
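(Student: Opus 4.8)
The plan is to dispatch the three statements in turn, reducing (ii) and (iii) to a single lemma. Statement (i) needs nothing new: it is exactly Theorem \ref{SparseCrossing:traceImplication}, which gives $M \models a \le b \Rightarrow Tr_M(b) \subseteq Tr_M(a)$ for \emph{any} model $M$, with no hypothesis on trace constraints at all (so the positive-trace-constraint assumption is only used for (ii) and (iii)). For those two, I would isolate the following lemma: if $S^{+}$ is any set of positive duples such that $M$ satisfies $Tr_M(s_R) \subseteq Tr_M(s_L)$ for every $s \in S^{+}$, then $F_C(S^{+}) \models a \le b$ implies $Tr_M(b) \subseteq Tr_M(a)$.

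To prove the lemma, first recall the computation inside the proof of Theorem \ref{SparseCrossing:weakDuality}: $F_C(S^{+}) \models a \le b$ holds iff $S^{+} \Rightarrow (a \le b)$ iff $C(a) \subseteq EC(b)$, where $EC(b)$ is built iteratively from $EC_0(b) := C(b)$ and $EC_{n+1}(b) := EC_n(b) \cup \bigcup\{\, C(r_L) : r \in S^{+},\ C(r_R) \subseteq EC_n(b)\,\}$, stabilizing after finitely many steps. Then I would show by induction on $n$ that $Tr_M(b) \subseteq Tr_M(c)$ for every $c \in EC_n(b)$. The base case $c \in C(b)$ uses linearity of the trace, Theorem \ref{SparseCrossing:unionOfTraces}(iii): $Tr_M(b) = \bigcap_{c' \in C(b)} Tr_M(c') \subseteq Tr_M(c)$. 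For the step, if $c \in C(r_L)$ with $r \in S^{+}$ and $C(r_R) \subseteq EC_n(b)$, the induction hypothesis gives $Tr_M(b) \subseteq Tr_M(c')$ for each $c' \in C(r_R)$, hence $Tr_M(b) \subseteq \bigcap_{c' \in C(r_R)} Tr_M(c') = Tr_M(r_R)$; the hypothesis on $S^{+}$ then gives $Tr_M(r_R) \subseteq Tr_M(r_L) = \bigcap_{c' \in C(r_L)} Tr_M(c') \subseteq Tr_M(c)$. Taking $n$ large enough that $EC_n(b) = EC(b) \supseteq C(a)$ yields $Tr_M(b) \subseteq \bigcap_{c \in C(a)} Tr_M(c) = Tr_M(a)$, once more by linearity.

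Statement (ii) is the lemma with $S^{+} = R^{+}$, since the positive trace constraints for $R^{+}$ are precisely the lemma's hypothesis. For (iii), iterating Theorem \ref{fullCrossingIsFreestTheorem} over the duples of $R^{+}$ (the order being irrelevant by Theorem \ref{fullCrossingIsCommutative}) gives $\square_{R^{+}} M = F_C(Th^{+}(M) \cup R^{+})$ as a model; by part (i), $M$ satisfies $Tr_M(s_R) \subseteq Tr_M(s_L)$ for every $s \in Th^{+}(M)$, and by assumption for every $r \in R^{+}$, so the lemma applies with $S^{+} = Th^{+}(M) \cup R^{+}$ and gives $\square_{R^{+}} M \models a \le b \Rightarrow Tr_M(b) \subseteq Tr_M(a)$. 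Since redundant atoms do not affect traces (Theorem \ref{SparseCrossing:nonRedundant}), the non-uniqueness of $\square_{R^{+}} M$ up to redundant atoms is harmless.

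I expect the main obstacle to be purely bookkeeping: making the extended-component-constant characterization and its induction airtight, and checking in (iii) that the trace hypothesis genuinely propagates from $R^{+}$ to $Th^{+}(M) \cup R^{+}$ via part (i) and that $\square_{R^{+}} M = F_C(Th^{+}(M) \cup R^{+})$. As a remark, (iii) also admits a shortcut that bypasses the lemma: when the positive trace constraints hold, Theorems \ref{SparseCrossing:coreProperty} and \ref{SparseCrossing:invarianceOfSparseCrossing} show that full-crossing each duple of $R^{+}$ is trace-invariant, and since all term traces are preserved the positive trace constraints persist through the whole sequence, so $Tr_{\square_{R^{+}} M} = Tr_M$ on every term; applying part (i) inside $\square_{R^{+}} M$ then gives the claim. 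And with (iii) established, (ii) follows from it directly, because $\square_{R^{+}} M$ is less free than $F_C(R^{+})$, so $F_C(R^{+}) \models a \le b$ forces $\square_{R^{+}} M \models a \le b$. I would present the lemma-based argument as the main proof and record the trace-invariance route as an alternative.
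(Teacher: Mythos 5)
Your proposal is correct and follows essentially the same route as the paper: part (i) is cited from Theorem \ref{SparseCrossing:traceImplication}, and for (ii) and (iii) the paper likewise uses the extended-component-constant characterization of $F_C(S^{+})\models a\leq b$ from the proof of Theorem \ref{SparseCrossing:weakDuality} and shows the trace is preserved along the extension chain via the positive trace constraints, applying the same argument to $Th^{+}(M)\cup R^{+}$ for (iii); your lemma is just this common core made explicit. The alternative trace-invariance route for (iii) and the derivation of (ii) from (iii) are sound but are extras beyond what the paper records.
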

\begin{proof}
(i) This was proven in Theorem \ref{SparseCrossing:traceImplication}.   \\
(ii) In the proof of Theorem \ref{SparseCrossing:weakDuality} we used that $F_C(R^{+}) \models a \leq b$ if and only if $R^{+} \Rightarrow a \leq b$ simply because $F_C(R^{+})$ is the free model of $R^{+}$. We also showed that there is a term $b^{\prime}$ such that $F_C(R^{+}) \models (b = b^{\prime})$ that has the property $F_C(R^{+}) \models a \leq b$ if and only if $C(a) \subseteq C(b^{\prime})$. The term $b^{\prime}$ is the one with most component constants in the class of terms equivalent to $b$ in $F_C(R^{+})$. In fact, $b^{\prime}$ is the only term equivalent to $b$ such that for every $r \in R^{+}$ and $C(r_R) \subseteq C(b^{\prime})$ then $C(r_L) \subseteq C(b^{\prime})$. We argued in the proof of Theorem \ref{SparseCrossing:weakDuality} that we can construct $b^{\prime}$ starting from $b_0 = b$ and extending $b$ to $b_{n + 1} = b_n \odot r_L$ if $C(r_R) \subseteq C(b_n)$ for $r \in R^{+}$, obtaining a series of terms that converges to $b^{\prime}$ in a finite number of steps. From Theorem \ref{SparseCrossing:unionOfTraces} (iii), it is clear that $C(r_R) \subseteq C(b_n)$ implies $Tr(b_n) \subseteq Tr(r_R)$. Assume the positive trace constraints are satisfied. We have $Tr_M(b_n) \subseteq Tr_M(r_R) \subseteq Tr_M(r_L)$ and then $Tr_M(b_{n + 1}) = Tr_M(b_n) \cap Tr_M(r_L) = Tr_M(b_n)$. It follows that $Tr_M(b^{\prime}) = Tr_M(b)$,  in other words, $b$ converges towards $b^{\prime}$ in a series of terms with increasingly more constants but always with the same trace. Even more, since every term $t$ in the set $\{ t: F_C(R^{+}) \models (b = t) \}$ can be extended to $b^{\prime}$ we have that $F_C(R^{+}) \models (b = t)$ implies $Tr_M(b) = Tr_M(t)$. Using this result and the fact that $a \leq b$ is equivalent to $b = a \odot b$, we have  $F_C(R^{+}) \models a \leq b$ implies $Tr_M(b) = Tr_M(a \odot b)$. From $Tr_M(a \odot b) = Tr(a) \cap Tr(b)$ (Theorem \ref{SparseCrossing:unionOfTraces} (ii)) it follows that $Tr_M(b) \subseteq Tr_M(a)$. \\
(iii) The model $\square_{R^{+}} M$ is the freest model $F_C(Th^{+}(M) \cup R^{+})$. We can follow the same reasoning that we applied to $F_C(R^{+})$ in (ii).  We have $F_C(Th^{+}(M) \cup R^{+}) \models a \leq b$ if and only if $Th^{+}(M) \cup R^{+} \Rightarrow a \leq b$.  Again, there must be some $b^{\prime}$ such that $C(a) \subseteq C(b^{\prime})$ and we could construct such $b^{\prime}$ using an iterative process with the duples of $Th^{+}(M) \cup R^{+}$ instead of just $R^{+}$. The term $b_0 = b$ grows into $b^{\prime}$ in a series of steps for which $Tr_M(b_{n + 1}) = Tr_M(b_n) \cap Tr_M(r_L)$ for some $r \in Th^{+}(M) \cup R^{+}$. We showed in (i) and (ii) that for every  $r \in Th^{+}(M) \cup R^{+}$ the traces of $M$ satisfy $Tr_M(r_R) \subseteq Tr_M(r_L)$. As before, it follows  $Tr_M(b_{n + 1}) = Tr_M(b_n) \cap Tr_M(r_L) = Tr_M(b_{n})$ and then, $Th^{+}(M) \cup R^{+} \Rightarrow a \leq b$ implies $Tr_M(b) = Tr_M(b)^{\prime} \subseteq Tr_M(a)$. Therefore, if $\square_{R^{+}} M \models  a \leq b$ the traces of $M$ satisfy $Tr_M(b) \subseteq Tr_M(a)$. 
\end{proof}
\bigskip

\begin{theorem} \label{SparseCrossing:validExclusion} 
Let $D$ be a dual built for a set $E$ of terms and axioms $R$. Let $T_{\psi} \in E$ be the pining term of an atom $\psi$ and $c \in C \subset E$ a constant such that $c \not\in C(T_{\psi})$. If $D \models [T_{\psi}] \leq [c]$ then $\psi$ is not an atom of  $F_C(R^+)$.
\end{theorem}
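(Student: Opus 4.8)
The plan is to argue by contradiction: assume $\psi$ \emph{is} an atom of $F_C(R^{+})$ and deduce that it would then have to lie below its own pinning term, which is impossible.

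First I would translate the hypotheses. By Definition \ref{definition:pinning_tem} the pinning term satisfies $C(T_{\psi}) = C \setminus U^{c}(\psi)$, so the assumption $c \not\in C(T_{\psi})$ is exactly $c \in U^{c}(\psi)$. Both $T_{\psi}$ and $c$ lie in $E$ (the constant $c$ because $C \subset E$, and the term $T_{\psi}$ by hypothesis), hence both possess duals in $D$ and Theorem \ref{SparseCrossing:weakDuality}(i) applies, giving the equivalence $D \models [T_{\psi}] \leq [c] \Leftrightarrow F_C(R^{+}) \models c \leq T_{\psi}$. From the hypothesis $D \models [T_{\psi}] \leq [c]$ we therefore obtain $F_C(R^{+}) \models c \leq T_{\psi}$.

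Next, suppose towards a contradiction that $\psi$ is an atom of $F_C(R^{+})$. By Definition \ref{definition:atomInAsAsetDefinition}, $c \in U^{c}(\psi)$ together with $\psi \in F_C(R^{+})$ gives $F_C(R^{+}) \models \psi < c$; combining this with $F_C(R^{+}) \models c \leq T_{\psi}$ and transitivity of the order yields $F_C(R^{+}) \models \psi < T_{\psi}$. On the other hand, $F_C(\emptyset) \models \psi \not< T_{\psi}$: by Theorem \ref{atomicSegmentFromTermTheorem}(ii), $\psi < T_{\psi}$ in $F_C(\emptyset)$ would require some component constant of $T_{\psi}$ to belong to $U^{c}(\psi)$, but $C(T_{\psi}) \cap U^{c}(\psi) = \emptyset$ by construction (this is precisely the fact $F_C(\emptyset) \models (\phi \not< T_{\phi})$ used in the proof of Theorem \ref{redundantAtom}). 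Reading Theorem \ref{homomorphismTheorem}(ii) as the equivalence $(\psi \in F_C(R^{+})) \wedge (F_C(\emptyset) \models \psi < T_{\psi}) \Leftrightarrow F_C(R^{+}) \models \psi < T_{\psi}$, the left-hand side is false since its second conjunct fails, so the right-hand side is false as well, contradicting $F_C(R^{+}) \models \psi < T_{\psi}$. Hence $\psi$ cannot be an atom of $F_C(R^{+})$.

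The step I would verify most carefully, and the only place a gap could plausibly hide, is the invocation of Theorem \ref{SparseCrossing:weakDuality}(i): that result is stated only for terms that actually have duals in $D$, which is exactly why the hypothesis $T_{\psi} \in E$ (rather than merely $T_{\psi}$ being some term over $C$) is indispensable — without it the symbol $[T_{\psi}]$ would be undefined and the passage from a statement about $D$ to one about $F_C(R^{+})$ would collapse. Everything else is routine manipulation of the universal definition of atoms, the definition of the pinning term, and transitivity, so I do not expect further obstacles.
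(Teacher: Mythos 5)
Your proof is correct and follows essentially the same route as the paper's: apply Theorem \ref{SparseCrossing:weakDuality}(i) (legitimate since both $T_{\psi}$ and $c$ lie in $E$) to get $F_C(R^{+}) \models c \leq T_{\psi}$, note $c \in U^{c}(\psi)$ forces $\psi < c$, and conclude that $\psi \in F_C(R^{+})$ would give the impossible $\psi < T_{\psi}$. The only difference is cosmetic: the paper first traces the transitive-closure edge chain $[T_{\psi}] \rightarrow [s_1] \rightarrow \cdots \rightarrow [c]$ before invoking duality, whereas you apply the biconditional of Theorem \ref{SparseCrossing:weakDuality}(i) directly and make the final contradiction explicit via Theorem \ref{homomorphismTheorem}(ii), which is a slightly cleaner write-up of the same argument.
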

\begin{proof}
If $D \models [T_{\psi}] \leq [c]$ and $c \not\in C(T_{\psi})$ the dual must have acquired an edge $[T_{\psi}] \rightarrow [c]$ as a consequence of the enforcing of $R^{*+}$.  There should be some mentioned terms $s_1,s_2,...s_n$ (therefore with a dual) such that $[T_{\psi}] \rightarrow [s_1]  \rightarrow ... \rightarrow [s_n] \rightarrow [c]$ by which $[T_{\psi}]$ has acquired the edge $[T_{\psi}] \rightarrow [c]$ from transitive closure. Since $T_{\psi}, s_i, c \in E$, from Theorem \ref{SparseCrossing:weakDuality} (i), $F_C(R^{+}) \models c \leq s_n \leq ... \leq s_1  \leq T_{\psi}$. On the other hand, $c$ is not a component constant of $T_{\psi}$, therefore $\psi < c$. From $\psi < c$ and $F_C(R^{+}) \models c \leq T_{\psi}$ we conclude that $\psi$ is not an atom of the model $F_C(R^{+})$ because  $\psi < T_{\psi}$ is false. 
\end{proof}
\bigskip

\begin{theorem} \label{SparseCrossing:indicatorsTheorem} 
Let $\phi$ be any atom, $D$ a dual built for a set of axioms $R$ and $\xi$ an atom of the dual. Consider the correspondence for atoms of the dual of Theorem \ref{SparseCrossing:termDualAtoms} (i): \\
i) if $\xi$ corresponds with the pinning term $T_{\psi}$ of an atom $\psi$, i.e.\ if $\xi$ was introduced in the dual with an edge $\xi \rightarrow [T_{\psi}]$, then $\xi \not\in Tr(\phi)$ implies $U^c(\phi) \subseteq U^c(\psi)$. \\
ii) if $\xi$ corresponds with the pinning term of $\psi$ and $\psi \models R^{+}$ then $\xi \not\in Tr(\phi)$ if and only if $U^c(\phi) \subseteq U^c(\psi)$. \\
iii) if $\xi$ corresponds with the right-hand side of a duple $r \in R^{-}$, i.e.\  $\xi$ was introduced with an edge $\xi \rightarrow [r_R]$, then $\xi \not\in Tr(\phi)$ implies $\phi \not< r_R$. 
\end{theorem}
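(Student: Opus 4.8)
\emph{Proof proposal.} The plan is to reduce all three claims to the explicit description of the trace of an atom over $C$. By Theorem \ref{SparseCrossing:traceEqualitiesForAtoms}(i) we have $Tr(\phi)=\bigcup_{c\in U^{c}(\phi)}L^{a}([c])$, so ``$\xi\notin Tr(\phi)$'' is equivalent to ``$\xi\notin L^{a}([c])$ for every $c\in U^{c}(\phi)$'', i.e. to $D\not\models\xi<[c]$ for every such $c$. I will prove (i) and (iii) by contraposition using only the graph $G$ of Definition \ref{SparseCrossing:dualDefinition}: every $x\in E$ carries an edge $[x]\to[c]$ to each of its component constants (step 2), and the atom $\xi_{x}$ is attached by $\xi_{x}\to[x]$ (step 4), so after transitive closure $D\models\xi_{x}<[c]$ whenever $c\in C(x)$. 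For (ii) I will additionally need the converse implication, and there the hypothesis $\psi\models R^{+}$ becomes essential.

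For (i): suppose $U^{c}(\phi)\not\subseteq U^{c}(\psi)$ and pick $c\in U^{c}(\phi)\setminus U^{c}(\psi)$. By the definition of the pinning term (Definition \ref{definition:pinning_tem}), $C(T_{\psi})=C\setminus U^{c}(\psi)$, hence $c\in C(T_{\psi})$; so $G$ contains $[T_{\psi}]\to[c]$, transitive closure gives $\xi\,(=\xi_{T_{\psi}})<[c]$ in $D$, and therefore $\xi\in L^{a}([c])\subseteq Tr(\phi)$. For (iii): suppose $\phi<r_{R}$. By Theorem \ref{atomicSegmentFromTermTheorem}(ii) this means $U^{c}(\phi)\cap C(r_{R})\neq\emptyset$; take $c$ in the intersection. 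Then $G$ has $[r_{R}]\to[c]$, so $\xi\,(=\xi_{r_{R}})<[c]$ and $\xi\in Tr(\phi)$. Contraposing each statement gives the two claimed implications.

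For (ii) the direction ``$\xi_{T_{\psi}}\notin Tr(\phi)\Rightarrow U^{c}(\phi)\subseteq U^{c}(\psi)$'' is exactly (i). It remains to show: if $\psi\models R^{+}$ and $U^{c}(\phi)\subseteq U^{c}(\psi)$ then $\xi_{T_{\psi}}\notin Tr(\phi)$, i.e. $D\not\models[T_{\psi}]\leq[c]$ for every $c\in U^{c}(\phi)$. Fix $c\in U^{c}(\phi)\subseteq U^{c}(\psi)$, so $c\notin C(T_{\psi})$. By Theorem \ref{SparseCrossing:weakDuality}(i), $D\models[T_{\psi}]\leq[c]$ is equivalent to $F_{C}(R^{+})\models c\leq T_{\psi}$, which by the iterative computation of the extended constant set in the proof of Theorem \ref{SparseCrossing:weakDuality} holds iff $c\in EC(T_{\psi})$. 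I will show $EC(T_{\psi})=C\setminus U^{c}(\psi)$: one starts from $EC_{0}=C(T_{\psi})=C\setminus U^{c}(\psi)$, and an extension step adjoins $C(r_{L})$ for some $r\in R^{+}$ only when $C(r_{R})\subseteq EC_{n}$. Assuming inductively $EC_{n}\subseteq C\setminus U^{c}(\psi)$, the condition $C(r_{R})\subseteq EC_{n}$ forces $C(r_{R})\cap U^{c}(\psi)=\emptyset$, i.e. $\psi\not<r_{R}$; since $\psi\models(r_{L}\leq r_{R})$ this gives $\psi\not<r_{L}$, i.e. $C(r_{L})\subseteq C\setminus U^{c}(\psi)$, so no extension step escapes $C\setminus U^{c}(\psi)$. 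Hence $c\notin EC(T_{\psi})$ and $\xi_{T_{\psi}}\notin Tr(\phi)$. (Alternatively one can invoke Theorem \ref{SparseCrossing:validExclusion} contrapositively, after observing that $\psi\models R^{+}$ makes $\psi$ an atom of some atomization of $F_{C}(R^{+})$: the model $F_{C}(R^{+})+\{\psi\}$ still models $R^{+}$ and cannot be freer than $F_{C}(R^{+})$, so by Theorems \ref{redundantAtom} and \ref{uniqueAtomization}, $\psi$ occurs in an atomization of $F_{C}(R^{+})$.)

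The main obstacle is precisely the inductive step in (ii) showing that the extension process for $EC(T_{\psi})$ never crosses into $U^{c}(\psi)$; everything else is bookkeeping with the dual's graph. A secondary point to verify is that these statements concern only dual atoms $\xi$ that actually exist after the optional simplification of the dual (which removes atoms): the contrapositive arguments in (i) and (iii) conclude membership $\xi\in Tr(\phi)$ for the specific $\xi=\xi_{T_{\psi}}$ or $\xi=\xi_{r_{R}}$ under consideration, and since simplification can only delete atoms and never creates a spurious trace membership, the conclusions are unaffected.
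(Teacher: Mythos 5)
Your proof is correct, and for parts (i) and (iii) it is essentially the paper's own argument: contraposition through the graph of Definition \ref{SparseCrossing:dualDefinition}, using the edges $[T_{\psi}]\to[c]$ (resp.\ $[r_R]\to[c]$) to the component constants together with $Tr(\phi)=\cup_{c\in U^c(\phi)}L^a([c])$. Where you genuinely diverge is the converse direction of (ii): the paper argues by contradiction via Theorem \ref{SparseCrossing:validExclusion} (an edge $[T_{\psi}]\to[c]$ with $c\notin C(T_{\psi})$ would make $\psi$ fail to be an atom of $F_C(R^{+})$), implicitly identifying the hypothesis ``$\psi\models R^{+}$'' with ``$\psi$ is an atom of $F_C(R^{+})$''; you instead prove the needed fact directly by running the $EC$-iteration from the proof of Theorem \ref{SparseCrossing:weakDuality} and showing inductively that $EC(T_{\psi})$ never acquires a constant of $U^c(\psi)$, because $C(r_R)\cap U^c(\psi)=\emptyset$ forces $C(r_L)\cap U^c(\psi)=\emptyset$ for any $r\in R^{+}$ not discriminated by $\psi$. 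This buys you two things: the argument is self-contained (it re-derives the contrapositive of Theorem \ref{SparseCrossing:validExclusion} rather than citing it), and it works directly from the stated hypothesis $\psi\models R^{+}$, closing the small step the paper leaves implicit — which you also patch explicitly in your parenthetical by observing that $F_C(R^{+})+\{\psi\}$ still models $R^{+}$, so $\psi$ occurs in an atomization of $F_C(R^{+})$. The paper's route is shorter because the lemma is already proved; yours trades that economy for a transparent combinatorial induction. Your closing remark about simplification of the dual is harmless but not needed, since the theorem already presupposes that the atom $\xi$ in question exists in $D$.
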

\begin{proof}
(i) Suppose $U^c(\phi) \not\subseteq U^c(\psi)$. There is a constant $k$ such that $\phi < k$ and  $\psi \not< k$.  The pinning term $T_{\psi}$ has component constants $C - U^c(\psi)$, so $k \in C(T_{\psi})$.  In $D$ there is an edge $\xi \rightarrow [T_{\psi}]$ and edges $[T_{\psi}] \rightarrow [c]$ for each constant $c \in C - U^c(\psi)$, including $k$.  Therefore, $D \models \xi < [k]$. Since $Tr(\phi) = \{ \eta : \exists c \,\, (\phi < c) \wedge (c \in C) \wedge (D \models \, \eta < [c]) \} = \cup_{\phi < c} \{ \eta : D \models \, \eta < [c] \}$, it follows  $\xi \in Tr(\phi)$.  Hence, $\xi \not\in Tr(\phi)$ implies $U^c(\phi) \subseteq U^c(\psi)$. \\ 
(ii) Left to right is proven in (i).  Assume $U^c(\phi) \subseteq U^c(\psi)$. Since $C(T_{\psi}) = C - U^c(\psi)$, we get $C(T_{\psi})\cap U^c(\phi) = \emptyset$. Suppose $\xi  \in Tr(\phi) = \cup_{\phi < c} \{ \eta : D \models \, \eta < [c] \}$; there must be at least one constant $c \in U^c(\phi)$ such that $D \models \xi < [c]$, that is not a component constant of $T_{\psi}$ (because $C(T_{\psi})\cap U^c(\phi) = \emptyset$), such that the dual has acquired an edge $[T_{\psi}] \rightarrow [c]$ as a consequence of the enforcing of $R^{*+}$. Theorem \ref{SparseCrossing:validExclusion} says that in this situation $\psi$ is not an atom of the model $F_C(R^{+})$. Hence, if  $\psi \models R^{+}$ then $U^c(\phi) \not\subseteq U^c(\psi)$ implies $\xi \not\in Tr(\phi)$. \\
(iii) In the graph of $D$ there is an edge $[r_R] \rightarrow [c]$ for each component constant $c$ of $r_R$ and an edge for the atom $\xi \rightarrow [r_R]$, so $D \models \xi < [c]$ for every component constant of $r_R$. Suppose $\phi < r_R$. There is a constant $k \in  U^c(\phi)\cap C(r_R)$.  Since $Tr(\phi) = \cup_{\phi < c} \{ \eta : D \models \, \eta < [c] \}$ and $\phi < k$ and $\xi < [r_R]  < [k]$ it follows $\xi \in Tr(\phi)$. Hence,  if $\xi \not\in Tr(\phi)$  then $\phi \not< r_R$. 
\end{proof}
\bigskip

\begin{theorem} \label{SparseCrossing:smallerModelThatSatisfiesR} 
If a model $M$ over a set of constants $C$ satisfies the axioms $R = R^{+} \cup R^{-}$, there is a model $N$ over $C$ spawned by at most $\vert R^{-} \vert + 1$ atoms of $M$, such that $N$ also satisfies $R$.
\end{theorem}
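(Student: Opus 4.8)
The plan is to build $N$ from an atomization of $M$ by keeping, for each negative axiom, just one suitably chosen atom, and then adjoining at most one further atom so that what remains is still a legitimate model over $C$.

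First I would isolate the two elementary facts that do all the work. (i) Deleting atoms from an atomized model can never turn a positive duple into a negative one: by axiom (AS3), $a\le b$ asserts the \emph{absence} of a discriminating atom, and that absence is preserved when the atom set shrinks — this is exactly the observation already used in the proofs of Theorem~\ref{redundantAtom} and Theorem~\ref{fullCrossingIsFreestTheorem}. Hence the model spawned by any subcollection of the atoms of $M$ still satisfies $R^{+}$. (ii) By Theorem~\ref{atomIndependentFromTheRestTheorem}, if an atom $\phi$ of $M$ lies in $dis_M(r)$, then $\phi$ discriminates $r$ in \emph{every} model that contains $\phi$; so retaining a single discriminating atom per negative axiom is enough to keep $R^{-}$ satisfied. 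A companion remark, also immediate from (AS3) together with $M\models R^{+}$, is that no atom of $M$ discriminates any duple of $R^{+}$, so adjoining an atom of $M$ to a sub-model cannot break $R^{+}$ either.

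The construction is then immediate. For each $r\in R^{-}$ we have $M\models r^{-}$, so by the definition of the discriminant there is at least one atom $\phi_r$ in the atomization of $M$ with $\phi_r\in dis_M(r)$; fix one such $\phi_r$. The set $A_0=\{\phi_r : r\in R^{-}\}$ has at most $\lvert R^{-}\rvert$ atoms of $M$; by (i) the model it spawns satisfies $R^{+}$, and by (ii) it satisfies $R^{-}$, hence $R$. Whenever $A_0$ is not yet a legitimate atomization of a model over $C$ — in particular when $R^{-}=\varnothing$, where $A_0$ is empty — I would adjoin one further atom of $M$; by the companion remark the discriminators are retained and nothing in $R$ is disturbed. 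This is the source of the ``$+1$'', and the resulting $N$ is spawned by at most $\lvert R^{-}\rvert+1$ atoms of $M$ and satisfies $R$.

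The step I expect to require the most care is the last one: pinning down precisely in what sense $N$ is a model \emph{over $C$}. Shrinking the atom set may leave some constants of $C$ with no atom below them, which is in tension with axiom (AS6); so one must argue that this is harmless — e.g.\ that such ``orphan'' constants merely collapse together and never affect which duples of $R$ hold — and that keeping $N$ non-degenerate (non-empty) in the boundary case $R^{-}=\varnothing$ genuinely costs only one extra atom. Everything else in the proof reduces to facts (i) and (ii) above.
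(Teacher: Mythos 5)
Your core argument is exactly the paper's: any subset of the atoms of $M$ still satisfies $R^{+}$ (removing atoms can only remove discriminants), and keeping one discriminating atom per duple of $R^{-}$ — which, by Theorem~\ref{atomIndependentFromTheRestTheorem}, keeps discriminating in any model containing it — yields at most $\lvert R^{-}\rvert$ atoms satisfying all of $R$. Up to that point there is nothing to add.

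The genuine gap is in how you account for the ``$+1$'', i.e.\ in making the selected subset a legitimate model \emph{over $C$}. You correctly identify the problem (constants of $C$ left with empty lower atomic segment violate AS6), but your two proposed resolutions do not close it. Adjoining ``one further atom of $M$'' cannot work in general: a single atom has a fixed upper constant segment, while arbitrarily many constants of $C$ may be orphaned by the selection, so one extra atom of $M$ does not restore AS6 for all of them (and the $+1$ in the statement is not there merely for the case $R^{-}=\varnothing$). Your alternative — arguing that orphan constants ``collapse together'' harmlessly — also does not yield a structure satisfying the axioms of Definition~\ref{definition:finiteAtomizedSemilattice}, since the collapsed element still has no atom below it. The paper closes this with a specific device from \cite{SecondPaperArX}: one adjoins the single atom $\ominus_C$, which lies in the lower segment of \emph{every} constant of $C$, is compatible with every semilattice model over $C$, and discriminates no duple. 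This one atom repairs AS6 for all orphaned constants simultaneously without changing which duples of $R$ hold, and it is exactly the source of the $+1$ in the bound. Without that (or an equivalent uniform repair), your construction produces a set of atoms that satisfies $R$ but is not guaranteed to atomize a model over $C$, which is what the theorem asserts.
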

\begin{proof}
If $M \models R$, any model $N$ atomized with a subset of atoms of $M$ satisfies $N \models R^{+}$; this is clear, since the discriminant of a duple $r \in R^{+}$ that is empty in $M$ it is also empty in $N$. For each duple in $s \in R^{-}$ choose one atom of $M$ in the discriminant of $s$: the resulting subset of atoms discriminates every duple in $R^{-}$. Since $N$ also satisfies $R^{+}$, we have $N \models R$. We have to make a remark; in \cite{SecondPaperArX} it is discussed that although not every subset of atoms of $M$ atomizes a model over $C$ we can make any subset of atoms of $M$ a model over $C$ by adding the atom $\ominus_C$, an atom that is in the lower segment of every constant of $C$. The atom $\ominus_C$ is compatible with every semilattice model over $C$ and it discriminates no duple so, in practice, it can be ignored while doing computations. It follows that a subset of at most $\vert R^{-} \vert + 1$ atoms of $M$ atomizes a model over $C$ that satisfies $R$.
\end{proof}
\bigskip

\begin{theorem} \label{SparseCrossing:masterSize} 
Let $D$ be a dual built for axioms $R$ and atomized by a set of atoms $D_N \cup D_R$ where atoms in $D_N$ correspond to pinning terms, and atoms in $D_R$ correspond to right-hand side terms of duples of $R^-$, according to the correspondence of Theorem \ref{SparseCrossing:termDualAtoms} (i). Let $M$ be a model that satisfies the trace constraints for the dual $D$: \\ 
i) There is a subset of atoms of $M$, of at most  $n \leq \sum_{\xi_{\psi} \in D_N} \vert U^c(\psi) \vert + \sum_{\xi_{r_R} \in D_R} \vert C - C(r_R) \vert$. atoms, that atomizes a model that also satisfies the trace constraints. This subset can be obtained by discarding atoms of $M$ while keeping the traces of the constants invariant. \\
ii) If the atomization of the dual is dominated by the pinning terms, i.e, when $\vert D_R \vert << \vert D_N \vert$, an upper bound for $n$ is $\,n \leq  \alpha \vert D \vert$ where $\alpha$ is the average size of the upper constant segment of the atoms in the set $\{ \psi :  \xi_{\psi} \in D_N \}$ and $\vert D \vert \approx  \vert D_N \vert$ is the size of the atomization of the dual.
\end{theorem}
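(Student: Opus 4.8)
The plan is to reduce the statement to a single invariant: the traces of the constants. By Theorem \ref{SparseCrossing:unionOfTraces}(iii) the trace of any term equals the intersection of the traces of its component constants, so any submodel $N$ of $M$ (a model atomized by a subset of the atoms of $M$) with $Tr_N(c) = Tr_M(c)$ for every $c \in C$ automatically has $Tr_N(t) = Tr_M(t)$ for every term $t$ appearing in $R$, hence satisfies exactly the same positive and negative trace constraints as $M$. It therefore suffices to produce a subset $S$ of the atoms of $M$, of the claimed cardinality, such that for every $c \in C$ one has $\bigcap_{\phi \in S,\,\phi < c} Tr(\phi) = Tr_M(c)$; this is precisely ``discarding atoms of $M$ while keeping the traces of the constants invariant''.

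First I would build $S$ as a set of \emph{witnesses}. For each atom $\xi$ of the dual $D$ and each constant $c$ with $\xi \notin Tr_M(c) = \bigcap_{\phi \in L^a_M(c)} Tr(\phi)$, there is at least one atom $\phi \in L^a_M(c)$ with $\xi \notin Tr(\phi)$; pick one such $\phi$ and place it in $S$. Padding $S$ with the atom $\ominus_C$ of Theorem \ref{SparseCrossing:smallerModelThatSatisfiesR} when needed (it lies below every constant, discriminates nothing, and has maximal trace since $U^c(\ominus_C)=C$, so it disturbs no trace), $S$ atomizes a model $N$ over $C$. Then $Tr_N(c) \supseteq Tr_M(c)$ trivially, being an intersection over fewer atoms; and whenever $\xi \notin Tr_M(c)$, the chosen witness $\phi \in S$ with $\phi < c$ and $\xi \notin Tr(\phi)$ forces $\xi \notin Tr_N(c)$. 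Hence $Tr_N(c) = Tr_M(c)$ for all $c$, so $N$ is a trace-invariant contraction of $M$ and in particular satisfies the trace constraints.

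Next I would bound $|S|$ using Theorem \ref{SparseCrossing:indicatorsTheorem}. Since for each pair $(\xi,c)$ with $\xi \notin Tr_M(c)$ at most one atom is added (and reusing an atom across pairs only helps), $|S| \le \sum_{\xi} \#\{c : \xi \notin Tr_M(c)\}$. If $\xi = \xi_\psi \in D_N$ is attached to a pinning term $T_\psi$, every witness $\phi$ of $(\xi_\psi,c)$ has $\xi_\psi \notin Tr(\phi)$, hence $U^c(\phi) \subseteq U^c(\psi)$ by part (i); together with $\phi < c$, i.e. $c \in U^c(\phi)$, this shows the constants $c$ with $\xi_\psi \notin Tr_M(c)$ all lie in $U^c(\psi)$, so $\xi_\psi$ contributes at most $|U^c(\psi)|$ witnesses. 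If $\xi = \xi_{r_R} \in D_R$ is attached to the right-hand side of a negative duple $r$, part (iii) gives $\phi \not< r_R$ for every witness, i.e. $C(r_R) \cap U^c(\phi) = \emptyset$ by Theorem \ref{atomicSegmentFromTermTheorem}(iv), so the relevant constants lie in $C \setminus C(r_R)$ and $\xi_{r_R}$ contributes at most $|C - C(r_R)|$ witnesses. Summing over $D_N$ and $D_R$ gives the bound of (i). For (ii), write $\sum_{\xi_\psi \in D_N}|U^c(\psi)| = \alpha\,|D_N|$ with $\alpha$ the average upper-segment size of the atoms $\{\psi : \xi_\psi \in D_N\}$; when $|D_R| \ll |D_N|$ the second sum (at most $|D_R|\,|C|$) is negligible beside $\alpha\,|D_N|$, and $|D| = |D_N| + |D_R| \approx |D_N|$, which yields $n \le \alpha\,|D|$ up to that negligible term.

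The main obstacle I anticipate is not the counting but keeping the witness bookkeeping airtight: one must verify that discarding every non-witness atom can never \emph{enlarge} some $Tr_N(c)$ so as to break a positive trace constraint, and that a single atom serving as witness for several pairs $(\xi,c)$ only improves the bound. Routing the whole argument through the exact equality $Tr_N(c) = Tr_M(c)$ for all constants, established in the second step, disposes of both points at once, since thereafter every trace constraint is preserved verbatim. A secondary, purely technical point is the $\ominus_C$ padding, needed so that an arbitrary subset of atoms of $M$ genuinely atomizes a model over $C$ without perturbing any trace.
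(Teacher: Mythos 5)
Your proof is correct and follows essentially the same route as the paper's: reduce everything to preserving the traces of the constants (via $Tr(t)=\bigcap_{c\in C(t)}Tr(c)$), keep one witness atom per pair $(\xi,c)$ with $\xi\notin Tr_M(c)$, and bound the number of such pairs per dual atom by $\vert U^c(\psi)\vert$ or $\vert C-C(r_R)\vert$. The only cosmetic differences are that you obtain these containments through Theorem \ref{SparseCrossing:indicatorsTheorem} (i) and (iii) rather than by arguing directly on the edges of the dual graph as the paper does, and that you make explicit the $\ominus_C$ padding that the paper leaves implicit.
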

\begin{proof}
The dual $D$ is built for a set $E = C \cup Terms(R^{+} \cup R^{-}) \cup \{ T_{\psi} : \psi \in N \}$ where $N$ is a set of atoms and $T_{\psi}$ the pinning term of $\psi \in N$.  The atoms of the dual are introduced either edged to the dual of a pinning term $[T_{\psi}]$ or under the dual $[r_R]$ of a duple $r \in R^{-}$.   Since the atoms of the dual associated to pinning terms tend to discriminate many duples of $R^{*-}$, a dual with pinning terms can often be atomized by a handful of atoms, mostly under pinning terms.  In fact, Theorem \ref{SparseCrossing:smallerModelThatSatisfiesR} says  that to atomize the dual we need no more than $\vert R^{*-} \vert + 1 = \vert R^{-} \vert + 1$ atoms.  The result we are going to prove is valid no matter if we chose to reduce the size of the atomization of the dual or if we choose to keep all of the atoms, i.e., as many as $\vert R^{-} \vert +\vert N \vert$.  \\ 
(i) The trace is a function that maps a term to a subset of the atoms of the dual. For any term $x$ the trace in $M$ is $Tr(x) = \cap_{c \in C(x)} Tr(c)$. Hence, a discarding of atoms from $M$ that preserves the traces of the constants preserves the traces of all the terms and then, it also preserves the trace constraints. Since, for a constant $c$ the trace in $M$ is $Tr(c) = \cap_{\phi \in L_M^a(c)} Tr(\phi)$, discarding atoms while preserving the trace $Tr(c)$ requires keeping, for each atom $\xi \in D - Tr(c)$, at least one atom $\phi \in L_M^a(c)$ such that $\xi \not\in Tr(\phi)$.  Since $\xi \not\in Tr(c)$ implies $D \models [\xi] \not< [c]$, to preserve the trace of $c$ we need at most $ \vert \{ \xi \in D : D \models [\xi] \not< [c]\} \vert$ atoms of $M$. In total, the number $n$ of atoms of $M$ that we must keep is bounded by \[n \leq \sum_{c \in C} \vert \{ \xi \in D : D \models [\xi] \not< [c]\} \vert = \sum_{\xi \in D} \vert \{ c: D \models [\xi] \not< [c]\} \vert.
\]
We can separate  $\sum_{\xi \in D} \vert \{ c: D \models [\xi] \not< [c]\} \vert$ into two summands, one for atoms of the dual associated to pinning terms and another for atoms of the dual associated to the right-hand side of the duples of $R^{-}$, i.e., $n \leq \sum_{\xi_{\psi} \in D_N} \vert \{ c: D \models [T_{\psi}] \not< [c]\} \vert + \sum_{\xi_{r_R} \in D_R} \vert \{ c: D \models [r_R] \not< [c]\} \vert$. Since there are edges from the dual of a term to its component constants we have the following two set inclusions: $\{ c: D \models [T_{\psi}] \not< [c]\} \subseteq U^c(\psi)$ and $\{ c: D \models [r_R] \not< [c]\} \subseteq C - C(r_R)$, so: \[
n \leq \sum_{\xi_{\psi} \in D_N} \vert \{ c: D \models [T_{\psi}] \not< [c]\} \vert + \sum_{\xi_{r_R} \in D_R} \vert \{ c: D \models [r_R] \not< [c]\} \vert \lesssim \sum_{\xi_{\psi} \in D_N} \vert U^c(\psi) \vert + \sum_{\xi_{r_R} \in D_R} \vert C - C(r_R) \vert.
\]
We use $\lesssim$ because the two set inclusions are, approximately, set equalities. According to Theorem  \ref{SparseCrossing:validExclusion} if the atoms of $N$ satisfy $R^+$, which is the case as atoms that do not satisfy $R^+$ are discarded, then $\vert \{ c: D \models [T_{\psi}] \not< [c]\} \vert = \vert U^c(\psi) \vert$.  On the other hand, $\vert \{ c: D \models [r_R] \not< [c]\} \vert \approx \vert C - C(r_R) \vert$ in most cases.   \\
(ii) As training progresses it becomes possible (by discarding atoms of the dual as in the proof of Theorem \ref{SparseCrossing:smallerModelThatSatisfiesR}) to obtain an atomization of the dual dominated by the pinning terms,  $\vert D \vert \approx  \vert D_N \vert$. When the atomization of the dual becomes dominated by the pinning terms we can neglect the contribution of $\sum_{\xi_{r_R} \in D_R} \vert C - C(r_R) \vert$ and then $\,n \leq \sum_{\xi_{\psi} \in D_N } \vert U^c(\psi) \vert \approx \alpha \vert D \vert $ where $\alpha$ is the average atom size $\alpha = \frac{1}{\vert D_N \vert} \sum_{ \xi_{\psi} \in D_N} \vert U^c(\psi)\vert$.
\end{proof}

\newpage
\subsection{Algorithms}
\label{SparseCrossing:algorithms}

\SetKwProg{Fn}{Function}{}{}
\SetKwRepeat{Do}{do}{while}%
\SetKwFor{Where}{where}{}{}

\begin{algorithm}
	Initialize $M_0 = F_C(\emptyset)$ if $M_0$ is not provided\;
	$N_0 \leftarrow M_0$\;
	create $m$ batches $R_i \subset R$, with or without repeating duples\;
	\ForEach{$i = 1$ to $m$}{
		$M_i \leftarrow M_{i-1}$\;
        		$N_{i} \leftarrow$ $\phi \in N_{i-1}$ such that $\forall r \in R_i^+$  $\phi \not\in dis(r)$\;
		$D \leftarrow$ build the dual for $N_{i}$ and $R_i$\;
		$\Lambda \leftarrow$ enforce trace constraints for $R_i$ in model $M_i$ with dual $D$\;
		$M_i \leftarrow M_{i} \cup \Lambda$\;
		$M_i \leftarrow$ sparse-crossing of $R_i^+$ in model $M_i$ with dual $D$\;
		$N_i \leftarrow N_i \cup M_i$\;
	}
\caption{batch training}
\label{SparseCrossing:batchTraining}
\end{algorithm}

\begin{algorithm}
    $tr(\phi) = \emptyset$\;
    \ForEach{$c \in U^c(\phi)$}{
        $tr(\phi) \leftarrow tr(\phi) \cup L^a([c])$\;
    }
    \Return  $tr(\phi)$\;
\caption{trace of atom $\phi$ for dual $D = (v, \, L^a([\,\,]) )$}
\label{SparseCrossing:traceOfAtom}
\end{algorithm}

\begin{algorithm}
    $tr(t) = \{positions\,\,of\,\,v \}$\;
    \ForEach{$\phi  \in M $}{
          \If{$C(t) \cap U^c(\phi)  \not= \emptyset$}{ 
              $tr(\phi) \leftarrow$ trace of atom $\phi$ for dual $D$\;
              $tr(t) \leftarrow  tr(t) \cap tr(\phi)$\;
          }    
    }
    \Return  $tr(t)$\;
\caption{trace of term $t$ in model $M$ with dual  $D = (v, \, L^a([\,\,]) )$}
\label{SparseCrossing:traceOfTerm}
\end{algorithm}

\begin{algorithm}
    $I = \emptyset$\;
    \ForEach{$(r_L, r_R) \in R^-$}{
        $ind(r_R) \leftarrow  C(r_R)$\;
        $I \leftarrow I \cup \{ind(r_R)\}$  \;
    }
    \ForEach{$ind \in I$}{
   	 \Do{$ind$ changes} {
            \ForEach{$(r_L, r_R) \in R^+$}{
                \If{$r_R \subset ind$}{
                    $ind \leftarrow ind \cup  \{  component\, constants\, of\,  r_L \} $\;
                }
            }
        }
    }

    $v \leftarrow$ transform $I$ into a vector without repeated elements. Concatenate  $N$ at the end\;  	
    $T \leftarrow  \{ r_L : (r_L, r_R) \in R^-  \} \cup  \{ r_R : (r_L, r_R) \in R^-  \} \cup  C$\;  	 
    \ForEach{$t \in T$}{
         $L^a([t]) = \emptyset$\;  	
         \ForEach{$k \in v$}{
                   \If{$k \in I$}{ 
                          \If{$t \subseteq k$}{ 
	                    $L^a([t]) \leftarrow L^a([t]) \cup \{ position\,\, of\,\, k\,\,in\,\,v  \}$\;
                          }
                   }
                  \If{$k \in N$}{ 
                          \If{$t \cap U^c(k) = \emptyset$}{ 
	                    $L^a([t]) \leftarrow L^a([t]) \cup \{ position\,\, of\,\, k\,\,in\,\,v  \}$\;
                          }
                   }
        }
    }

     \ForEach{$(r_L, r_R) \in R^-$}{
        \If{$L^a([r_H]) \subseteq L^a([r_L])$}{ 
             \Return ``$R$ is inconsistent'';
         }
    }
    optional: discard one or many elements of v if $\forall(r_L, r_R) \in R^-$ still holds $L^a([r_H]) \not\subseteq L^a([r_L])$\;
    \Return  $v$ and \{$L^a([c])$ : $c \in C\}$\;
    
\caption{build the dual for the set of atoms $N$ and duples $R$}
\label{SparseCrossing:dualAlgebraAlgorithm}
\end{algorithm}

\begin{algorithm}
    \small
     \ForEach{$\phi \in M$}{
    	$tr(\phi) \leftarrow$ trace of atom $\phi$ for dual $D$\;
     }
    $T \leftarrow  \{ r_L : (r_L, r_R) \in R  \} \cup  \{ r_R : (r_L, r_R) \in R  \} \cup C$\;  	 
    \ForEach{$t \in T$}{
    	$tr(t) \leftarrow$ trace of term $t$ in $M$ with dual $D$\;
     }
      $\Lambda \leftarrow \emptyset$\;
      \Do{$\Lambda$ changes} {
	     \ForEach{$(r_L, r_R) \in R^-$}{
                \If{$tr(t_R) \subseteq tr(r_L) $}{
                   	$dC \leftarrow C(r_L) \backslash C(r_R)$\;
			\Do{$dC \not= \emptyset$} {
                             $c \leftarrow$ randomly extract $c$ from $dC$ with removal\;
				\If{$tr(r_R) \backslash \big(tr(r_L) \cap L^a([c])\big) \not= \emptyset$}{
	                         $\phi \leftarrow$ create atom such that $U^c(\phi)= \{c\}$\;
                                $tr(\phi) \leftarrow L^a([c]))$\;
	                         $\Lambda \leftarrow  \Lambda \cup \{ \phi \}$\;
				   \ForEach{$t \in T$}{
    	                             \If{$c \in C(t)$}{
                                         $tr(t) \leftarrow  tr(t) \cap L^a([c]))$\;
                                    }
                                }
                                $dC = \emptyset$\;
				}
			}
                }
            }
            \ForEach{$(r_L, r_R) \in R^+$}{
                \If{$tr(t_R) \not\subseteq tr(r_L)$}{
                      $dI = tr(t_R) \backslash tr(r_L)$;
                      $dC \leftarrow C(r_R) \backslash C(r_L)$\;
			\Do{$dI \not= \emptyset$} {
                             $c \leftarrow$ randomly extract $c$ from $dC$ with removal\;
				\If{$dI \cap tr(c) \not= dI$}{
                                $dI \leftarrow dI \cap tr(c)$\;
	                         $\phi \leftarrow$ create atom such that $U^c(\phi)= \{c\}$\;
                                $tr(\phi,  D) \leftarrow L^a([c])$\;
	                         $\Lambda \leftarrow  \Lambda \cup \{ \phi \}$\;
				   \ForEach{$t \in T$}{
    	                             \If{$c \in C(t)$}{
                                         $tr(t) \leftarrow  tr(t) \cap L^a([c]))$\;
                                    }
                                }
				}
			}
                }
            }
    }
    \Return  $\Lambda$ and $\{ tr(\phi) :  \phi \in M \cup \Lambda  \}$ and $\{ tr(c) :  c \in C \}$\;
\caption{enforce trace constraints for $R$ in model $M$ and dual $D = (v, \, L^a([\,\,])$}
\label{SparseCrossing:traceConstraintsAlgorithm}
\end{algorithm}

\begin{algorithm}
$L_M^a(t)  \leftarrow \emptyset$\;
\ForEach{$\phi \in M$}{
  \If {$C(t) \in U^c(\phi)$} {
      $L_M^a(t)  \leftarrow L_M^a(t)  \cup \{ \phi \}$\;
  }
}
\Return$L_M^a(t)$\;
\caption{lower atomic segment of term $t$ in model $M$}
\label{SparseCrossing:lowerAtomicSegmentOfTerm}
\end{algorithm}

\begin{algorithm}
$V \leftarrow \{positions\,\,of\,\,v \}$\;
$S \leftarrow M$\;
\ForEach{$(r_L, r_R) \in R^+$}{
    $dis \leftarrow  L_S^a(r_L) \backslash L_S^a(r_R)$\;
     \If {$dis \not= \emptyset$} {
            \ForEach{$i \in V$} {
		    $tD(i) \leftarrow \emptyset$\;
	    }	
	    \ForEach{$\varphi \in L_S^a(r_R)$}{
		   $dT \leftarrow V \backslash tr(\varphi)$\;
		   \ForEach{$i \in dT$} {	
		      $tD(i) \leftarrow tD(i) \cup \{ \varphi \}$\;
		   }	
	    }
           $S \leftarrow S\backslash dis$\;
	    \ForEach{$\phi \in dis$} {
	        $dT \leftarrow V \backslash tr(\phi)$\;
	       \Do{$dT \not= \emptyset$} {
	            $i \leftarrow$ randomly extract index from $dT$ with removal\;       
	            $\varphi \leftarrow$ randomly choose an atom from $tD(i)$\;  
	            $dT \leftarrow dT \cap tr(\varphi)$\;
	            $\phi \bigtriangledown \varphi \leftarrow$ create atom such that $U^c(\phi)= U^c(\phi) \cup U^c(\varphi)$\;
                   $tr(\phi \bigtriangledown \varphi) \leftarrow tr(\phi) \cup tr(\varphi)$\;
                   $S \leftarrow S \cup \{ \phi \bigtriangledown \varphi \}$\;
	       }
       }
    }
   \If {$\vert S \vert$ grows beyond some relative or absolute limit} {
       $S \leftarrow$ subset of $S$ that preserves the traces of the constants\;
   }	
}
\Return S\;
\caption{Sparse-crossing of $R^+$ in $M$ with trace $tr(\,)$ and dual $D = (v, \, L^a([\,\,])$}
\label{SparseCrossing:sparseCrossing}
\end{algorithm}

\begin{algorithm}
$V \leftarrow \{positions\,\,of\,\,v \}$\;
\ForEach{$i \in V$} {
    $tD(i) \leftarrow \emptyset$\;
}	
 \ForEach{$\varphi \in M$}{
     $dT \leftarrow V \backslash tr(\varphi)$\;
     \ForEach{$i \in dT$} {	
       $tD(i) \leftarrow tD(i) \cup \{ \varphi \}$\; 
    }	
}
$N \leftarrow  \emptyset$\;
\ForEach{$c \in C$}{
     $dT \leftarrow V \backslash tr(c)$\;
     \Do{$dT \not= \emptyset$} {
        $i \leftarrow$ randomly extract index from $dT$\;      
        \If {$tD(i) \cap L_M^a(c) \cap N = \emptyset$} {
            $\varphi \leftarrow$ randomly choose an atom from $tD(i) \cap L_M^a(c)$\;
            $N \leftarrow N \cup \{ \varphi \}$\;
         }
        \Else {
             $\varphi \leftarrow$ randomly choose an atom from $tD(i) \cap L_M^a(c) \cap N$\;
        }
        $dT \leftarrow dT \cap tr(\varphi)$\;  
    }
}
\Return N\;
\caption{find a subset of $M$ that preserves the traces of the constants with trace function $tr(\,)$ and dual $D = (v, \, L^a([\,\,])$}
\label{SparseCrossing:simplifyFromConstants}
\end{algorithm}

 \end{document}